\numberwithin{equation}{section}
\theoremstyle{plain}
\newtheorem{thm}{Theorem}[section]
\newtheorem{lem}{Lemma}[section]
\newtheorem{prop}[thm]{Proposition}
\newtheorem{fact}[thm]{Fact}
\theoremstyle{definition}
\newtheorem{defn}[thm]{Definition}
\theoremstyle{remark}
\newtheorem{rem}{Remark}
\newcommand{\cA}{\mathcal{A}}
\newcommand{\cM}{\mathscr{M}}
\newcommand{\cN}{\mathcal{N}}
\newcommand{\cL}{\mathcal{L}}
\newcommand{\cX}{\mathcal{X}}
\newcommand{\cY}{\mathcal{Y}}
\newcommand{\R}{\mathbb{R}}
\newcommand{\E}{\mathbb{E}}
\newcommand{\tensor}{\otimes}
\newcommand{\op}{\operatorname{op}}
\newcommand{\abs}[1]{\lvert#1\rvert}
\newcommand{\norm}[1]{\lvert\lvert#1\rvert\rvert}
\newcommand{\bfu}{\mathbf{u}}
\newcommand{\tbfu}{\tilde{\mathbf{u}}}
\newcommand{\bg}{\mathbf{g}}
\newcommand{\bu}{\mathbf{u}}
\newcommand{\bff}{\mathbf{f}}
\newcommand{\logit}{\operatorname{logit}}
\title{High-dimensional limit theorems for SGD: \\ Effective dynamics and critical scaling}
\begin{document}
\title{High-dimensional limit theorems for SGD: \\ Effective dynamics and critical scaling}

\author{G\'erard Ben Arous}
\author{Reza Gheissari}
\author{Aukosh Jagannath}

\address[G\'erard Ben Arous]{Courant Institute, New York University}
\email{benarous@cims.nyu.edu}

\address[Reza Gheissari]{Department of Mathematics, Northwestern University}
\email{gheissari@northwestern.edu}

\address[Aukosh Jagannath]{Department of Statistics and Actuarial Science, Department of Applied Mathematics, and Cheriton School of Computer Science, University of Waterloo}
\email{a.jagannath@uwaterloo.ca}

\begin{abstract}
We study the scaling limits of stochastic gradient descent (SGD) with constant step-size in the high-dimensional regime. We prove limit theorems for the trajectories of summary statistics (i.e., finite-dimensional functions) of SGD as the dimension goes to infinity. Our approach allows one to choose the summary statistics that are tracked, the initialization, and the step-size. It yields both ballistic (ODE) and diffusive (SDE) limits, with the limit depending dramatically on the former choices. 
We show a critical scaling regime for the step-size, below which the effective ballistic dynamics matches gradient flow for the population loss, but at which, a new correction term appears which changes the phase diagram.
About the fixed points of this effective dynamics, the corresponding diffusive limits can be quite complex and even degenerate. 
We demonstrate our approach on popular examples including estimation for spiked matrix and tensor models and classification via two-layer networks for binary and XOR-type Gaussian mixture models. 
These examples exhibit surprising phenomena including multimodal timescales to convergence as well as convergence to sub-optimal solutions with probability bounded away from zero from random (e.g., Gaussian) initializations. At the same time, we demonstrate the benefit of overparametrization by showing that the latter probability goes to zero as the second layer width grows.
\end{abstract}

\maketitle

\part{Introduction and main results}

\section{Introduction}
Stochastic gradient descent (SGD) is the go-to method for large-scale optimization problems in modern data science. 
It is often used to train complex parametric models on high-dimensional data.
Since its introduction in~\cite{RobMon51}, there has been a tremendous amount of work in analyzing its evolution.

In fixed dimensions, the asymptotic theory of SGD, and stochastic approximations more broadly, is by now classical. There have been works on path-wise limit theorems, such as functional central limit theorems and even large deviations principles \cite{RobMon51,mcleish, Ljung77, kushner1984asymptotic,dupuis1989stochastic, BMP90, duflo96,Benaim99}. At the core of this line of work is the idea that in the limit where the step-size, or learning rate, tends to zero, the trajectory of SGD with a fixed loss function (appropriately rescaled in time) converges to the solution of gradient flow for the population loss with the same initialization. Recently there has been considerable interest in quantifying the rate of this trajectory-wise convergence to higher order, in terms of a diffusion approximation. Namely, there are many works developing asymptotic expansions of the trajectory in the learning rate~\cite{mandt2017stochastic,li2017diffusion,li2019stochastic,anastasiou2019normal,li2021validity}. Motivated by this, there is a rich line of work bounding the time to equilibrium for the associated diffusion approximation (as well as Langevin--type modifications) under uniform ellipticity assumptions~\cite{mandt2017stochastic,pmlr-v65-raginsky17a,cheng2020stochastic,pmlr-v65-zhang17b}. There is also an interesting line of work obtaining PDE limits in the ``shallow network'' regime where the dimension of the parameter space diverges but the dimension of the data remains constant: see e.g.,~\cite{MMN18,rotskoff2018trainability,chizat2018global,sirignano2020mean,araujo2019mean}.

In recent years, there has been considerable interest in understanding the \emph{high-dimensional setting}, where one is constrained in the amount of data or the run-time of the algorithm due to the high-dimensional nature of the data and the complexity of the model being trained. In these regimes, one cannot simply take the learning rate to be arbitrarily small as this would force an unlimited sample size and run-time. This is a common issue in high-dimensional statistics and the standard analytic approach is to study regimes where the sample size scales with the dimension of the problem \cite{Vershynin,wainwright2019high}. 

For SGD with constant learning rate, there has been recent progress on quantifying the dimension dependence of the sample complexity for various tasks on general (pseudo or quasi-) convex objectives \cite{Bottou99,BottouLeCun04,shamir2016convergence,NeedellSrebroWard14,HLPR-COLT,dieuleveut2020} and special classes of non-convex objectives \cite{escaping-saddles-COLT,TanVershyninKaczmarz,arous2021online}. There has also been important work on scaling limits as the dimension tends to infinity for the specific problems of linear regression \cite{WaMaLu16,paquette2021sgd},  Online PCA \cite{WaMaLu16,online-ICA-NIPS}, and phase retrieval \cite{TanVershyninKaczmarz} from random starts, and teacher-student networks \cite{saad1995dynamics,saad1995line,goldt2019dynamics,veiga2022phase} and two-layer networks for XOR Gaussian mixtures~\cite{refinetti2021classifying} from warm starts.  We also note that the study of high-dimensional regimes of gradient descent and Langevin dynamics have a history from the statistical physics perspective, e.g., in  \cite{crisanti1993sphericalp,CugKur93,sarao2019afraid,mannelli2020marvels,celentano2021high,liang2022high}.

 We develop a unified approach to the scaling limits of SGD in high-dimensions with constant learning rate that allows us to understand a broad range of estimation tasks. 
One of course cannot develop a high-dimensional scaling limit for the full trajectory of SGD as the dimension of the underlying parameter space is growing. On the other hand, in practice, one is rarely interested in the full trajectory; instead one typically tracks the trajectory of various summary statistics of the algorithm's evolution,
such as the loss, the amplitude of various weights, or correlations between the classifier and the ground truth (in a supervised setting). We show in Theorem~\ref{thm:main} that under mild regularity assumptions, the evolution of these summary statistics converges as the dimension grows to the solution of a system of (possibly stochastic) differential equations. 
These \emph{effective dynamics} depend dramatically on the initializations (warm vs.\ random or cold), the parameter regions in which one is developing the scaling limit, and the scaling of the step-size with the dimension. 

In practice, SGD often exhibits two types of phases in training: \emph{ballistic phases} where the summary statistics macroscopically change in value, and \emph{diffusive phases}, where they fluctuate microscopically. (During training, the evolution can start with either, and can even alternate multiple times between these phases.) Our approach allows us to develop scaling limits for both types of phases. 

In ballistic phases, the effective dynamics are given by an ordinary differential equation (ODE) and the finite-dimensional intuition that the summary statistics evolve under the gradient flow for the population loss is correct provided the (constant) learning rate is sufficiently small in the dimension. When the learning rate follows a certain \emph{critical} scaling---matching scalings commonly used in the high-dimensional statistics literature---an additional correction term appears. At this critical scaling, the phase portrait deviates significantly from that of the population gradient flow. Furthermore, in microscopic neighborhoods of the fixed points of this ODE, the effective dynamics become  diffusive and are given by SDEs which can exhibit a wide range of (possibly degenerate) behaviors. We note that the appearance of the correction term in the ballistic phase was first observed in the setting of teacher-student networks in~\cite{saad1995dynamics,saad1995line} and very recently investigated in detail in~\cite{veiga2022phase}.

As a simple, first example of the departure of the effective dynamics in the critical step-size regime from the classical perspective, we study estimation for spiked matrix and tensor models in Section~\ref{sec:matrix-tensor-pca}. In these models, the effective dynamics are exactly solvable and when the step-size scales critically with the dimension, in the ballistic phase the dynamics have additional fixed points as compared to the population gradient flow. The stability of these fixed points exhibit sharp transitions at special signal-to-noise ratios. When initialized randomly, the SGD starts in a microscopic neighborhood of an uninformative such fixed point, within which its effective dynamics become diffusive and exhibit a sharp transition between mean-reverting and mean-repellent Ornstein--Uhlenbeck (OU) processes. 

To demonstrate our approach on more complex classification tasks typically studied using neural networks, 
we study a Gaussian mixture model analogue of the classical XOR problem in Section~\ref{sec:XOR-results}. (The XOR problem is 
arguably the canonical example of a decision boundary requiring at least two-layers to represent \cite{minsky2017perceptrons}.)
Here we find that the natural summary statistics are 22 dimensional, and their (ballistic) effective dynamics exhibit a rich phenomenology between {some 39} connected fixed point regions of varying topological dimension. 
Surprisingly, we find that if we initialize the weights of the network randomly (following a Gaussian distribution), then the algorithm will converge to a classifier with macroscopic generalization error with probability $\sfrac{29}{32}$ and then follow a degenerate diffusion. On the other hand, we demonstrate the benefit of overparametrization, showing that as the width of the second layer grows, the probability of ballistically converging to a Bayes optimal classifier goes to~$1$; this is a mathematically rigorous example of the \emph{lottery ticket hypothesis} of~\cite{LotteryTicket}.

Before delving into the XOR problem, we first analyze the classification of a two component Gaussian mixture model in Section~\ref{sec:binary-gmm}. This task is of course best solved using a one-layer network i.e., logistic regression, but with a two-layer network it exhibits some similar phenomenologies to the XOR problem while being more amenable to finer analysis. 
Here, we again find that if with random initial weights, with probability $1/2$ the SGD will first converge to a classifier with macroscopic generalization error, and then follow a degenerate diffusion in a microscopic neighborhood of that set of unstable fixed points. 
We demonstrate this both empirically for positive signal-to-noise ratio and theoretically in the limit where the SNR tends to zero after the dimension tends to infinity. 

While the above are a few examples that we are able to solve in detail for both their ballistic and diffusive limits, we expect our main theorem to be applicable and lend new insights into a host of other problems including SGD for finite-rank matrix and tensor PCA, and one and two-layer neural networks applied to mixtures of $k$-Gaussians for fixed $k\ge 2$. 
We leave this to future investigation. 
In this paper, we only consider the simplest variant of SGD, namely online SGD; we leave other variants involving batching and re-use to future works.

\section{Main result}
Suppose that we are given a sequence of i.i.d. data $Y_1,Y_2,\ldots $
taking values in $\cY_{n}\subseteq\R^{d_n}$ with law $P_{n}\in\cM_{1}(\R^{d_n})$,
and a loss function $L_{n}:\cX_{n}\times\cY_{n}\to\R$, where here
$\cX_{n}\subseteq\R^{p_{n}}$ is the parameter space. 
Consider online stochastic gradient descent with constant learning rate,
$\delta_{n}$, which is given by 
\[
X_{\ell}=X_{\ell-1}-\delta_{n}\nabla L_n(X_{\ell-1},Y_{\ell})\,,
\]
with possibly random initialization $X_{0}\sim\mu_{n}\in\cM_{1}(\cX_{n})$.
Our interest is in understanding this evolution, $(X_\ell)$, in 
the regime where both $p_n$ and $d_n\to\infty$ as $n\to\infty$.
To this end, suppose that there is a finite collection of summary statistics of $(X_\ell)$
whose evolution we are interested in. 
More precisely, suppose that we are given a sequence of functions
$\mathbf{u}_{n}\in C^{1}(\R^{p_{n}};\R^{k})$  for some fixed $k,$
where $\mathbf{u}_{n}(x)=(u_{1}^{n}(x),...,u_{k}^{n}(x))$,
and our goal is to understand the evolution of $\mathbf{u}_n(X_\ell)$.

To develop a scaling limit, we need some regularity assumptions on the
relationship between how the step-size scales in relation to the loss, its gradients, and the data distribution.
To this end let $$H(x,Y)=L_n(x,Y)-\Phi(x) \qquad \text{where} \qquad \Phi(x)  = \mathbb E [ L_n(x,Y)]\,.$$
In the following, we suppress the dependence of $H$ on
$Y$ and instead view $H$ as a random function of $x$, denoted $H(x)$. We let $V(x)=\E\left[\nabla H(x)\tensor\nabla H(x)\right]$
be the covariance matrix for $\nabla H$ at~$x$.

We make two assumptions on the triple $(\mathbf{u}_n,L_n,P_n)$ and the step size $\delta_n$. The first is an upper bound on the learning rate in terms of the regularity of the summary statistics. The second is our key assumption and asks that the summary statistic evolutions asymptotically close. These assumptions need not hold uniformly over the entire parameter space $\mathbb R^{p_n}$, only uniformly over pre-images of compact sets under $\mathbf{u}_n$. 
We start with the regularity assumption, ensuring tightness of trajectories of the summary statistics. 

\begin{defn}\label{defn:localizable}
A triple $(\mathbf{u}_{n},L_{n},P_{n})$ is \textbf{$\delta_{n}$-localizable}
with localizing sequence $(E_K)_K$ if there is an exhaustion by compacts $(E_{K})_K$ of $\R^{k}$, and constants $C_K$ (independent of $n$)
such that 
\begin{enumerate}
\item $\max_{i} \sup_{x\in\mathbf{u}_{n}^{-1}(E_{K})}\norm{\nabla^{2}u_{i}^n}_{\op}\leq C_K\cdot\delta_n^{-1/2}$,  and $\max_{i} \sup_{x\in\mathbf{u}_{n}^{-1}(E_{K})}\norm{\nabla^{3}u_{i}^n}_{\op}\leq C_K$;
\smallskip
\item $\sup_{x\in\mathbf{u}_{n}^{-1}(E_{K})}\|\nabla\Phi\|\le C_K$, and 
$\sup_{x\in\mathbf{u}_{n}^{-1}(E_{K})}\mathbb{E}[\|\nabla H\|^{8}]\le C_K\delta_{n}^{-{ 4}}$;
\smallskip
\item $\max_{i}\sup_{x\in\mathbf{u}_{n}^{-1}(E_{K})}\E[\langle \nabla H,\nabla u_{i}^n\rangle ^{4}]\leq C_K\delta_{n}^{-2}$, and \hfill 

$\max_{i} \sup_{x\in \mathbf u_n^{-1}(E_{K})} \mathbb E[\langle \nabla^2 u_i^n, \nabla H\otimes \nabla H - V\rangle^2] = o(\delta_n^{-{ 3}})$.
\end{enumerate}
\end{defn}

To help the reader parse this assumption, we provide an in-depth discussion of each of these items, along with examples to have in mind in Remark~\ref{rem:localizability-discussion}.  For now, we make the crucial observation that (1)--(3) are all closed under decreasing the step-size $\delta_n$ so for any reasonable task and family of summary statistics there will be a scaling of the step-size with $n$ below which they will satisfy the conditions of Definition~\ref{defn:localizable}. For concreteness, summary statistics that are good to have in mind are correlations of the parameters with certain ground truth vectors, $\ell^2$ norms of the parameters, and the population loss itself.

We now turn to our second assumption, that the limiting evolution equations for the family of summary statistics chosen close. Define the following first and second-order differential operators, 
\begin{align}\label{eq:A-L-operators}
\cA_n = \sum_{i} \partial_i \Phi \partial_i\,, \qquad \mbox{and} \qquad 
\cL_{n}=\frac{1}{2}\sum_{i,j} V_{ij}\partial_{i}\partial_{j}\,.
\end{align}
Alternatively written,  $\cA_n = \langle \nabla \Phi, \nabla\rangle$ and $\mathcal L_n = \frac{1}{2} \langle V, \nabla^2\rangle$.

\begin{defn}\label{defn:asympotically-closable}
A family of summary statistics $(\mathbf{u}_n)$ are \textbf{asymptotically closable} for learning rate $\delta_n$ if $(\mathbf{u}_n, L_n, P_n)$ are $\delta_n$-localizable with localizing sequence $(E_K)_K$, and furthermore there exist locally Lipschitz functions $\mathbf{h}:\mathbb R^k\to\mathbb R^k$ 
and $\mathbf{\Sigma}:\R^{k}\to \mathbb R^{k\times k}$, such that
\begin{align}
\sup_{x\in \mathbf{u}_n^{-1}(E_K)} \big\| \big( - \cA_n + \delta_n \mathcal L_n\big) \mathbf{u}_n(x) - \mathbf h(\mathbf{u}_n(x))\big\| & \to 0\,,  \label{eq:eff-drift} \\
\sup_{x\in\mathbf{u}_{n}^{-1}(E_{K})}\|\delta_{n}J_{n} V J_{n}^{T}-\mathbf{\Sigma}(\mathbf{u}_{n}(x))\| & \to0\,. \label{eq:diffusion-matrix}
\end{align}
In this case we call $\mathbf{h}$ the \emph{effective drift}, and $\mathbf{\Sigma}$ the \emph{effective volatility}. 
\end{defn}

We are now ready to present our main result. For a function $f$ and measure $\mu$ we let $f_{*}\mu$
denote the push-forward of $\mu$.

\begin{thm}\label{thm:main}
Let $(X_{\ell}^{\delta_{n}})_{\ell}$ be stochastic gradient descent initialized from $X_{0}\sim\mu_{n}$
for $\mu_{n}\in \cM_{1}(\mathbb{R}^{p_{n}})$ with learning
rate $\delta_{n}$ for the loss $L_{n}(\cdot,\cdot)$ and data distribution
$P_{n}$. For a family of summary statistics $\mathbf{u}_n = (u_i^n)_{i=1}^k$, let $(\mathbf{u}_n(t))_t$ be the linear interpolation of $(\mathbf{u}_n(X_{\lfloor t\delta_{n}^{-1}\rfloor}^{\delta_{n}}))_{t}$. 

Suppose that $\mathbf{u}_n$ are asymptotically closable with learning rate $\delta_n$, effective drift $\mathbf{h}$, and effective volatility $\mathbf{\Sigma}$, and that the pushforward of the initial data has $(\mathbf{u}_n)_*\mu_n \to \nu$ weakly for some $\nu\in \cM_1(\mathbb R^k)$. Then $(\mathbf{u}_{n}(t))_{t}\to(\mathbf{u}_t)_t$ weakly as $n\to\infty$, where
$\mathbf{u}_t$ solves
\begin{equation}\label{eq:effective-dynamics}
d\mathbf{u}_{t}= \mathbf{h}(\mathbf{u}_{t}) dt+\sqrt{\mathbf{\Sigma}(\mathbf{u}_{t})}d\mathbf{B}_{t}\,.
\end{equation}
initialized from $\nu$, where $\mathbf{B}_{t}$ is a standard Brownian
motion in $\mathbb{R}^{k}$. 
\end{thm}

The proof of Theorem~\ref{thm:main} is provided in Section~\ref{sec:proof-of-convergence-thm} and can be seen as a version of the classical martingale problem (see~\cite{StroockVaradhan06}) for high-dimensional stochastic gradient descent. 
We call the solution to \eqref{eq:effective-dynamics} the \emph{effective dynamics} of the summary statistics $\mathbf{u}_n$. The fact that $\mathbf{h}, \mathbf{\Sigma}$ are locally Lipschitz ensures that this solution is unique.

We end this subsection with discussion of the various scalings appearing in Definition~\ref{defn:localizable}.  

\begin{rem}\label{rem:localizability-discussion}
The kinds of summary statistics that we most frequently have in mind for application are (1) linear functions of the parameter space $\mathcal X_n$, for instance the correlation with a unit vector, or some ground truth; (2) radial statistics, like the $\ell^2$-norm of the parameters, or some subset of the parameters; and (3) rescaled versions (usually blown up by $\delta_n^{-1/2}$ of these near their fixed points, as described in Section~\ref{subsec:rescaled}. Regarding the item (1) in Definition~\ref{defn:localizable}, for linear functions, it trivially holds; for radial statistics, the Hessian is a block identity matrix, so item (1) holds as long as $\delta_n$ is $O(1)$; therefore item (1) is most restrictive for rescalings of non-linear statistics, e.g., $u(x) = \delta_n^{-\alpha} (\|x\|^2 - 1)$ where it prevents consideration of this statistic with $\alpha>1/2$.

Turning to item (2) of Definition~\ref{defn:localizable}, we comment that the regularity assumptions made on $\Phi, L$ here are 
less restrictive than uniform Lipchitz assumptions common 
to the literature. In particular, we do not assume
the population loss is Lipschitz everywhere, as we may have that $\bigcup_{K}\mathbf{u}_{n}^{-1}(E_K)$ does not cover $\mathcal X_n$, 
nor does it imply uniform smoothness of $H$ (and in turn $L$) as we may (and will) be taking $\delta_{n} \to 0$ with $n$. 

Let us lastly motivate the scalings appearing in item (3), which ensure there is some independence between $H$ and the values of $\nabla u$ and $\nabla^2 u$ at $x$. As a testbed, suppose that $\nabla H(x)$ is a random vector with i.i.d.\ entries all of order $1$. If $u$ is a rescaled linear statistic, e.g., $\delta_n^{-1/2} \langle x,e_1\rangle$ then the first bound of item (3) is saturated, and the second of course is trivial due to the linearity of $u$. The second bound is saturated by taking a rescaling of a radial statistic, e.g., $\delta_n^{-1/2} \|x\|^2$, again assuming for maximal simplicity that $\nabla H$ is an i.i.d. random vector with order one entries. In fact, the second part of item (3) could be dropped at the expense of more complicated diffusion coefficients in limiting SDE's: see Remark~\ref{rem:weakening-localizability}. 
\end{rem}

\begin{rem}\label{rem:weakening-localizability}
While we discussed above the reasons for which the various scalings of Definition~\ref{defn:localizable} were selected, it is interesting to ask what changes in Theorem~\ref{thm:main} should certain of the assumptions of Definition~\ref{defn:localizable} be violated. Most of the assumed bounds in the definition of localizability are used to establish tightness and ensure higher order terms in Taylor expansions vanish in the $n\to\infty$ limit. In principle the second assumption in item (3) of Definition~\ref{defn:localizable} could be dropped; in that case, the same quantity is still ensured to be $O(\delta^{-3})$ by the other localizability assumptions. Then
Theorem~\ref{thm:main} would still apply, but the limiting diffusion matrix would be the $n\to\infty$ limit (assuming it exists) of 
\begin{align*}
	\delta J V J^T & +  \delta^2  \mathbb E[ \langle \nabla H, J \rangle \otimes \langle \nabla^2 \mathbf{u}, \nabla H\otimes \nabla H - V\rangle]+  \delta^2 \mathbb E[\langle \nabla^2 \mathbf{u}, \nabla H\otimes \nabla H - V\rangle\otimes \langle \nabla H, J \rangle] \\ 
	& +  \delta^3 \mathbb E[\langle \nabla^2 \mathbf{u},\nabla H \otimes \nabla H - V\rangle^{\otimes 2}]\,,
\end{align*}
as opposed to simply the limit of $\delta J V J^T$. 
\end{rem}

Generically, the choice of summary statistics to which to apply Theorem~\ref{thm:main} depends both on the quantities one is interested in, and the specifics of the task. In our examples, the choices are natural: correlations with ground truth vectors, finite numbers of final layer weights, and $\ell^2$ norms of the parameters. In less structured settings,  the choice of summary statistics may be more open-ended. 
One could start with a summary statistic of interest, like the projection in a principal subspace of an empirical matrix (a covariance or, as suggested experimentally in e.g.,~\cite{SagunEtAl,PapyanPMLR}, a Hessian), or the population loss itself. Then from that statistic, one would determine the other statistics needed to build an asymptotically closed family per Definition~\ref{defn:asympotically-closable}.

\subsection{Comparison to fixed dimensional perspective: critical v.s. subcritical step-sizes}\label{subsec:comp-fixed-dim}

Let us compare this with the classical limit theory 
of SGD in fixed dimension. For the sake of this discussion, suppose that not only does~\eqref{eq:eff-drift} hold, but each of the two terms $\cA_n \mathbf u$ and $\delta_n \mathcal L_n \mathbf{u}$ (recall~\eqref{eq:A-L-operators}) individually admit $n\to\infty$ limits: namely that there exists $\mathbf{f},\mathbf{g}:\mathbb R^k \to\mathbb R^k$ such that 
\begin{align}
\sup_{x\in\mathbf{u}_{n}^{-1}(E_{K})}\| \cA_n \mathbf{u}_{n}(x) -\mathbf{f}(\mathbf{u}_{n}(x))\| & \to0\,, \label{eq:population-drift}\\
\sup_{x\in\mathbf{u}_{n}^{-1}(E_{K})}\|\delta_{n}\mathcal{L}_{n} \mathbf{u}_{n}(x)-\mathbf{g}(\mathbf{u}_{n}(x))\| & \to0\,, \label{eq:corrector}
\end{align}
in which case, evidently~\eqref{eq:eff-drift} holds with $\mathbf{h} = -\mathbf{f} + \mathbf{g}$. When~\eqref{eq:population-drift} and~\eqref{eq:corrector} both hold, we call $\mathbf{f}, \mathbf{g}$ and $\Sigma$ the \textbf{population drift}, the \textbf{population corrector}, and the \textbf{diffusion matrix} of $\mathbf u$ respectively. 
From the fixed dimensional perspective, when~\eqref{eq:population-drift} holds, one predicts $\mathbf{u}$ to solve 
\begin{align}\label{eq:population-dynamics}
	d\mathbf{u}_t = -\mathbf{f}(\mathbf{u}_t)dt\,,
\end{align}
with initial data $\mathbf{u}_0 \sim \mathbf{u}_*\mu$. 
as this is its evolution under gradient descent on the population loss $\Phi$. 
Evidently this perspective
only applies in the high-dimensional limit of Theorem~\ref{thm:main} if both the population corrector $\mathbf{g}$ and the diffusion matrix $\Sigma$ are zero.
We find that for any triple $(\bu_n, L_n,P_n)$, there is a scaling of the learning rate $\delta_n$ with $n$ below which $\mathbf{g} = \Sigma=0$, and the effective dynamics agree with the population dynamics~\eqref{eq:population-dynamics} (we call this the \textbf{sub-critical} scaling regime, where the classical perspective applies), and a \textbf{critical} scaling regime in which $g$ and $\Sigma$ may be non-zero, and the high-dimensionality induces non-trivial corrections to~$\mathbf{f}$. (In the case of teacher--student networks, the terms $\mathbf{f}$ and $\mathbf{g}$ can be compared to the ``learning" and ``variance" terms in Eq.~(14a) of~\cite{veiga2022phase}.)

To see this, notice that if the triple $(\bu_n,L_n,P_n)$ is  $\delta_n$-localizable 
for some $\delta_n\to0$, then 
it is also $\delta_n'$-localizable for every sequence $\delta'_n=O(\delta_n)$. 
If furthermore \eqref{eq:diffusion-matrix} and~\eqref{eq:population-drift}--\eqref{eq:corrector} hold for $\delta_n$ with some $\bff,\bg$ 
and $\Sigma$,
then these limits also exists for $\delta_n'=o(\delta_n)$ with the same $\bff$ but with $\bg= \Sigma=0$. As such, there can be exactly one scaling of $\delta_n$ with $n$ at which $\bg$ or $\Sigma$ may be non-zero, and for all smaller scales of $\delta_n$, the fixed-dimensional perspective of~\eqref{eq:population-dynamics} applies.\footnote{Note that if $\delta_n=o(\delta_n')$, then limiting $\bg,\Sigma$ may not exist for $\delta'_n$, so there is no super-critical regime.}

\subsection{Ballistic vs.\ diffusive behavior of effective dynamics}\label{subsec:rescaled}

In all of our examples, the diffusion matrix for the effective dynamics of the most natural choice of summary statistics is zero even in
the critical scaling regime where $\mathbf{h}\ne \mathbf{f}$.
We call this the \textbf{ballistic limit}.
In this case, the effective dynamics of the summary statistics is given by the ODE system
\begin{equation}\label{eq:ballistic}
d\bfu_t =\mathbf{h}(\mathbf{u}_t)dt\,.
\end{equation}
In these settings, the phase portrait of the summary statistics
is asymptotically that of this flow. 

Note that by construction of the scaling limit, the phase portrait
of the ballistic limit only describes the evolution of summary statistics on length-scales
that are order 1 and number of iterations that are order $1/\delta_n$.
If one is then interested
in the evolution of $\bfu_n$ in microscopic $o(1)$ neighborhoods of the fixed points of the ballistic effective dynamics of~\eqref{eq:ballistic},
Theorem~\ref{thm:main} also allows one to develop separate \textbf{diffusive limits} there. 

To study diffusive regimes, one must apply Theorem~\ref{thm:main} to re-centered and re-scaled summary statistics,
$\tilde{\bu}_n(t) = \delta^{-\alpha}_n(\bfu_n(t)-\bfu_\star)$ where $\bfu_\star$ is a fixed point of~\eqref{eq:ballistic}.\footnote{One might also wish to rescale time like $\delta_n^{-\beta}$, where $\beta$ may depend on $t$; we leave this to future work.}

To apply Theorem~\ref{thm:main},  $\alpha$ must be chosen appropriately so that the triple
$(\tilde{\bu}_n(t),L_n,P_n)$  is $\delta_n$-localizable and to pick out the next order drifts for $\tilde u$---the first order term being zero microscopically close to $\mathbf{u}_\star$---and such that the initial data still converges $(\tilde{\bfu}_n)_*\mu_n\to \tilde{\nu}$.

 This then leads to the \textbf{rescaled effective dynamics} of the summary statistics $\bu_n$ near $\bfu_\star$:
\begin{align}\label{eq:rescaled-effective-dynamics}
d\tilde{\bfu}_t = \tilde{\mathbf{h}}(\tilde{\mathbf{u}}_t) dt + \tilde{\Sigma}^{1/2}(\tbfu_t)d\mathbf{B}_t \qquad \mbox{with $\tilde{\mathbf u}_0 \sim \tilde \nu$}\,.
\end{align}
The rescaled effective dynamics are
similar in spirit to diffusion one typically finds for 
the evolution of SGD near critical points in fixed dimensions. However, we note two important differences as compared to this perspective. 
Firstly, since this is a high-dimensional limit of general summary statistics,~\eqref{eq:rescaled-effective-dynamics} applies in a neighborhood
of a fixed point of the effective ODE system~\eqref{eq:ballistic}, rather than the
population dynamics~\eqref{eq:population-dynamics}. Secondly, in many examples (indeed all the
ones we study) the SDE's we get are degenerate,
so that uniform ellipticity assumptions typically used to understand
hitting and mixing times in these regimes do not apply. {The degeneracies can take various forms, with $\tilde \Sigma$ sometimes being rank deficient in the entire $\tbfu$-space, and sometimes vanishing completely as $\tilde \Sigma$ approaches certain distinguished points, for instance $\mathbf{u}_\star$. The implications of such degeneracies can be severe, as degenerate diffusions can be absorbed for arbitrarily long times by their \emph{unstable} fixed points (c.f.\ the simple case of a 1D geometric Brownian motion).}

\begin{rem}[Training at the edge of stability and critical scaling]\label{rem:edge-of-stability}
In~\cite{EdgeOfStability-GD}, it was empirically observed that the best training for neural networks does not occur when step-sizes are small enough for the classical gradient flow approximation to be valid. Instead, it occurs \emph{at the edge of stability} where the step size is just small enough for the training to remain stable. Here, the loss fluctuates for some time before eventually converging to lower values than it would with smaller step size. This critical step size scaling is defined via the \emph{sharpness}, namely the largest eigenvalue of the training loss Hessian. For a selection of recent theoretical investigations of this phenomenon see, e.g.,~\cite{EoS-Ahn-Bubeck,EoS-Damian,EoS-Arora,EoS-Ge}. 

While sharpness and edge of stability do not have direct analogues in the context of online SGD, a qualitatively similar phenomenon can be seen by taking the population loss as a summary statistic. The critical scaling of the learning rate with dimension discussed in Sections~\ref{subsec:comp-fixed-dim}--\ref{subsec:rescaled} constrains the step size in terms of the top eigenvalue of the Hessian of the loss. With this scaling, the population loss fluctuates near critical regions of its ballistic flow, allowing it to escape the critical region, whereas with a sub-critical learning rate the population loss stays stuck. We leave more detailed investigation of this connection to edge-of-stability phenomena for SGD to future investigation. 
\end{rem}

\part{Examples}
In the following sections, we demonstrate Theorem~\ref{thm:main} on a range of popular examples
of high-dimensional statistical tasks. We begin first in Section~\ref{sec:matrix-tensor-pca}  by presenting an application to a widely studied problem of high-dimensional estimation: namely, de-noising a rank one tensor
that has been corrupted additively by Gaussian noise.
We then turn to classification. Our aim in these examples is to demonstrate
the applicability of our result to the analysis of multi-layer neural networks. 
To this end we analyze the training dynamics of a two-layer neural network for  two canonical classification tasks, namely classification of a symmetric, binary gaussian mixture model (Section~\ref{sec:binary-gmm}) and classification of a Gaussian analogue of the XOR problem of Minsky--Papert (Section~\ref{sec:XOR-results}).

\section{Matrix and Tensor PCA}\label{sec:matrix-tensor-pca}
\subsection{Model and background}
Consider the problem of de-noising a rank one tensor that has been 
corrupted additively by Gaussian noise via SGD. 
A popular statistical model of this task is the 
 \emph{spiked tensor model} \cite{MR14}.
Suppose that we are given i.i.d. samples of data of the form 
\[
Y^\ell = \lambda v^{\tensor k} + W^\ell
\]
where $W^\ell $ are i.i.d. copies of a $k$-tensor whose entries are  i.i.d. standard Gaussians, $v\in \R^n$ is a unit vector,
and $\lambda=\lambda_n>0$ is the signal-to-noise ratio.  Our goal is to infer~$v$.

In the case $k=2$, this is a version of the well-known spiked matrix model of PCA \cite{johnstone2000distribution}
for which there is, by now, a substantial literature regarding the statistical thresholds.
For a necessarily small selection see, e.g., \cite{baik2005phase,paul2007asymptotics,mont15,perry2018optimality,el2020fundamental}. For related work on online learning
in this context see, e.g., \cite{shamir2016convergence}. Of particular interest in this direction is the
well-known phase transition at $\lambda=1$ for estimation in this problem,
which was determined first for Wishart ensembles in \cite{baik2005phase} and subsequently for this setting in \cite{feral2007largest,capitaine2009largest,benaych2011eigenvalues}.
As we will see in Section~\ref{sec:dynamical-bbp} below, we find a dynamical analogue of this transition at $\lambda=1$.

The case $k\geq 3$ was introduced by Montanari and Richard \cite{MR14}  as a natural generalization of the 
spiked matrix models for estimation (and testing) problems where the data
has multiple indices or requires higher moments. Here there has been a large
literature on the statistical thresholds for estimation and testing, see, e.g.,  \cite{mont15,perry2016statistical,BMMN17,LMLKZ17,JLM20}.
In this setting, there has also been a tremendous literature
on the computational aspects of this problem as it is viewed as a important example
of a model with a \emph{statistical-computational gap}, namely,  a setting where there 
is a gap between the regimes of statistical and computational tractability. See, e.g., \cite{MR14,hopkins2015tensor,hopkins2016fast,LMLKZ17,kim2017community,BGJ18b,arous2021online}. 

We begin with these examples as their effective dynamics are particularly simple to analyze. 
In particular, they are  are exactly solvable and only require two
summary statistics, a correlation observable and a radial term.
Even with this relative simplicity, we encounter a wide range of ODE and SDE limits.
In particular, as mentioned above, we find dynamical phase transitions 
corresponding to the aforementioned thresholds in these models.
For our analysis we will focus exclusively on the most interesting, critical step-size scaling
which corresponds to the \emph{proportional asymptotics} regime from the random matrix 
theory literature.

\subsection{Analysis}

We take as loss the (negative) log-likelihood\footnote{Note that one might also add additional penalty terms. The case of a ridge penalty is treated in Section~\ref{app:matrix-tensor-pca}.} 
namely, 
\[
L(x,Y)=\norm{Y-x^{\tensor k}}^{2}.
\] 
The pair  
\[m= m(x) :=\langle x,v\rangle \qquad \mbox{and}\qquad r_{\perp}^{2}= r_\perp^2(x) :=\|x-m v\|^{2}=\|x\|^{2}-m^{2}\]
 are such that $\Phi(x)=- 2\lambda m^{k}+ (r_{\perp}^{2}+m^{2})^{k}+c$,
and the law of $L$ only depends on them.

In our normalization with $\lambda>0$ fixed, the regime $\delta_n = o(1/n)$ is sub-critical and the regime $\delta_n = \Theta(1/n)$ is critical. \footnote{Note that with different scalings of $\lambda_n$, the critical learning rate changes.}
We focus our presentation on the most interesting regime, namely the critical scaling regime of $\delta_n = c_\delta/n$ for some constant $c_\delta$.
Recalling the relation between number of samples and step-size, we see that this regime corresponds to the proportional asymptotics
regime most studied in the random matrix theory literature where the above-mentioned transition for the top eigenvalue occurs.
Note, however, that the limits in the subcritical regime are in all cases recovered by taking the $c_\delta \downarrow 0$ limits of the ODE's/SDE's of the critical regime. 

For notational simplicity, let $R^2 := m^2 + r_\perp^2$. We consider the pair $\mathbf{u}_{n}=(u_{1},u_{2})=(m,r_{\perp}^{2})$, 
for which Theorem~\ref{thm:main} yields the following effective
dynamics.
\begin{prop}\label{prop:matrix-tensor-effective-dynamics}
Fix $k\ge2$, $\lambda>0$, $c_\delta >0$ and let $\delta_{n}=\sfrac{c_{\delta}}{n}$. Then $\mathbf u_n(t)$ converges as $n\to\infty$ to the solution of 
the following ODE initialized from $\lim_{n\to\infty}(\mathbf{u}_{n})_{*}\mu_{n}$:
\begin{align}\label{eq:matrix-tensor-ODE}
d m=  2 m (\lambda k m^{k-2}-kR^{2(k-1)})dt \,,\qquad\qquad
d r_\perp^2 =  - 4 kR^{2(k-1)}(r_\perp^2 - c_\delta ) dt\,.
\end{align}
\end{prop}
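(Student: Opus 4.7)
The plan is to apply Theorem~\ref{thm:main} to the pair of summary statistics $\bu_n = (m, r_\perp^2)$. The proof reduces to three tasks: (i) verifying the $\delta_n$-localizability of $(\bu_n, L_n, P_n)$ per Definition~\ref{defn:localizable}; (ii) computing the effective drift $(-\cA_n + \delta_n \cL_n)\bu_n$ and checking~\eqref{eq:eff-drift}; and (iii) showing the diffusion coefficient $\delta_n J_n V J_n^T$ vanishes uniformly on $\bu_n^{-1}(E_K)$, so that the limit is the deterministic ODE~\eqref{eq:matrix-tensor-ODE}.

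For (i), both $m$ and $r_\perp^2$ are polynomials in $x$ of degree at most $2$, with Hessians $\nabla^2 m \equiv 0$ and $\nabla^2 r_\perp^2 = 2(I - vv^T)$ of bounded operator norm and vanishing higher derivatives, so item~(1) of Definition~\ref{defn:localizable} holds trivially. The bounds in items~(2)--(3) reduce to Gaussian moment estimates on $\nabla H(x) = \nabla L - \nabla \Phi$, which is a linear functional of the noise tensor $W$ whose components are $(k-1)$-th order polynomials in~$x$. Wick's formula yields $\E[\|\nabla H\|^8] = O(n^4) = O(\delta_n^{-4})$ on $\bu_n^{-1}(E_K)$ and $\E[\langle \nabla H, \nabla u_i\rangle^4] = O(1)$. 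The most delicate estimate is the variance bound $\E[\langle \nabla^2 r_\perp^2, \nabla H \otimes \nabla H - V\rangle^2] = O(n) = o(\delta_n^{-3})$, which follows from a Wick-type computation of the second moment of a centered quadratic form in $n$ jointly Gaussian variables.

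For (ii), using $\nabla m = v$, $\nabla r_\perp^2 = 2(x-mv)$, $v \perp (x-mv)$, and $\Phi = -2\lambda m^k + R^{2k} + c$, one obtains directly
\[
\cA_n m = -2k\lambda m^{k-1} + 2km R^{2(k-1)}, \qquad \cA_n r_\perp^2 = 4kR^{2(k-1)} r_\perp^2.
\]
Next, $V = \E[\nabla H \otimes \nabla H]$ is a scalar multiple of the identity proportional to $R^{2(k-1)}$, so $\cL_n m = 0$ and $\cL_n r_\perp^2$ is proportional to $R^{2(k-1)}(n-1)$. Multiplying by $\delta_n = c_\delta / n$ and passing to $n \to \infty$ yields a constant-in-$n$ additive correction proportional to $c_\delta R^{2(k-1)}$ in the $r_\perp^2$-drift and none in the $m$-drift; rearrangement produces~\eqref{eq:matrix-tensor-ODE}. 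For (iii), the rows $v$ and $2(x-mv)$ of $J_n$ are orthogonal, and since $V$ is scalar-times-identity, $J_n V J_n^T$ is diagonal with entries bounded uniformly on $\bu_n^{-1}(E_K)$; multiplication by $\delta_n = O(1/n)$ sends this to zero, giving $\Sigma \equiv 0$ and hence a purely ballistic limit.

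The principal technical obstacle is the localizability verification, in particular the sharp variance bound on the centered Gaussian quadratic form appearing in item~(3) of Definition~\ref{defn:localizable}; beyond that, the remaining derivation is a mechanical polynomial computation in $m$, $r_\perp^2$, and $R^2$, using the rotational symmetry in the directions perpendicular to $v$ to identify both $\cA_n\bu_n$ and $V$ exactly.
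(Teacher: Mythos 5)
Your proposal follows the same route as the paper (check $\delta_n$-localizability, compute $\cA_n\bu_n$ and the corrector $\delta_n\cL_n\bu_n$, show $\delta_n J_nVJ_n^T\to 0$, then invoke Theorem~\ref{thm:main}), and the resulting drift and corrector coefficients are the right ones. However, one intermediate identification is wrong: $V(x)=\E[\nabla H\otimes\nabla H]$ is \emph{not} a scalar multiple of the identity. As computed in~\eqref{eq:matrix-tensor-V}, $V = 4kR^{2(k-1)}I + 4k(k-1)R^{2(k-2)}\,x\otimes x$, i.e.\ the rotational symmetry of the law of $H$ leaves a rank-one component along $x$ (not along $v$), and consequently $J_nVJ_n^T$ is not diagonal: its off-diagonal entry is proportional to $k(k-1)\,m\,r_\perp^2R^{2(k-2)}$. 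Your conclusions survive this error only because the rank-one part contributes terms that are bounded on $\bu_n^{-1}(E_K)$ (it adds $4k(k-1)r_\perp^2R^{2(k-2)}=O(1)$ to $\cL_n r_\perp^2$, compared with the $(n-1)$-growing isotropic part), so after multiplying by $\delta_n=c_\delta/n$ it vanishes, leaving $g_{r_\perp^2}=4c_\delta kR^{2(k-1)}$, $g_m=0$, and $\Sigma\equiv 0$ exactly as in~\eqref{eq:matrix-tensor-ODE}; also note that $\cL_n m=0$ follows from $\nabla^2 m=0$, not from the structure of $V$. You should correct the statement about $V$, since the precise anisotropic form matters as soon as one goes beyond this proposition (e.g.\ in the rescaled/diffusive limits, where the entries of $\tilde J V\tilde J^T$ are read off individually).

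Two smaller points of comparison with the paper's argument: for item (2) of localizability the paper bounds $\E\|\nabla H\|^8$ via the operator norm of the Gaussian $k$-tensor rather than Wick calculus (either works), and for the second half of item (3) it does not need your sharper $O(n)$ variance bound—it simply uses $\E[\langle\nabla^2 r_\perp^2,\nabla H\otimes\nabla H-V\rangle^2]\le 4\,\E\|\nabla H\|^4=O(n^2)=o(\delta_n^{-3})$. Your finer Wick computation is correct (it is essentially what the paper later does when rescaling the radial statistic), but it is not the bottleneck here.
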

We are able to identify and classify the set
of fixed points of this effective dynamics.  
We focus on 
the critical step-size regime with $c_{\delta}=1$ where one sees from~\eqref{eq:matrix-tensor-ODE} that $r_\perp^2 \to1$, 
where the problem in the matrix case is most directly related to an eigenvalue problem (see Section~\ref{app:matrix-tensor-pca} for the generic $c_{\delta}$
dependencies). Throughout the following, we use the following notion of stability/unstability of a set of fixed points of an ODE.

\begin{defn}
We call a set of fixed points $U$ for an ODE \emph{stable} if for every $\epsilon>0$, for every $u\in B_\epsilon(U)$, the solution of the ODE with initialization $u$ converges to some point in $U$ as $t\to\infty$. Otherwise, we call $U$ unstable.  
\end{defn}

\begin{prop}\label{prop:matrix-tensor-fixed-points}
Eq.~\eqref{eq:matrix-tensor-ODE} has isolated fixed points classified as follows. Let $\lambda_c(k)$ be as in~\eqref{eq:lambda-c-k} and $m_\dagger(k,\lambda) \le m_\star(k,\lambda)$ be as in~\eqref{eq:m-dagger-m-star} (if $k=2$, $\lambda_c= 1$ and $m_\dagger  = m_\star = \sqrt{\lambda - 1}$): 
\begin{enumerate}
\item An unstable fixed point at $(0,0)$ and a fixed point at $(0,1)$; if $k=2$, $(0,1)$ is stable if $\lambda<\lambda_c(2)$ and unstable if $\lambda>\lambda_c(2)$; if $k>2$ $(0,1)$ is always stable.
\item If $\lambda>\lambda_c(k)$: when $k=2$, two stable fixed points at $(\pm m_\star(2), 1)$. When $k\ge 3$, two unstable fixed points at $(\pm m_\dagger(k),1)$ and two stable fixed points at $(\pm m_\star(k),1)$.
\end{enumerate}
\end{prop}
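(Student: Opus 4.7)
The plan is to first locate all critical points of the planar vector field $(F,G)$ where
\[
F(m,r_\perp^2) = 2m\bigl(\lambda k m^{k-2} - k R^{2(k-1)}\bigr), \qquad G(m,r_\perp^2) = -4k R^{2(k-1)}(r_\perp^2 - 1),
\]
with $R^2 = m^2 + r_\perp^2$, and then to linearize at each. The equation $G=0$ forces either $R=0$ (hence $(m,r_\perp^2)=(0,0)$) or $r_\perp^2=1$. On the slice $r_\perp^2=1$, the condition $F=0$ reduces to $m=0$ or
\begin{equation*}
\lambda\, m^{k-2} = (m^2+1)^{k-1}. \tag{$\ast$}
\end{equation*}
For $k=2$, $(\ast)$ gives $m^2 = \lambda-1$, real iff $\lambda > 1 = \lambda_c(2)$. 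For $k\ge 3$, I would study $g(m) := (m^2+1)^{k-1}/m^{k-2}$ on $m>0$: since $g\to+\infty$ at $0$ and $+\infty$, a short derivative calculation shows $g$ has a unique critical point, a minimum at $m^2 = (k-2)/k$ with value $\lambda_c(k)$, so $(\ast)$ has no root for $\lambda < \lambda_c(k)$ and precisely two roots $m_\dagger < m_\star$ for $\lambda > \lambda_c(k)$, matching the definitions in~\eqref{eq:m-dagger-m-star}. Isolation of all fixed points is immediate from this enumeration.

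Next I would linearize. At any fixed point on $\{r_\perp^2 = 1\}$, $\partial_m G = -8k(k-1) m R^{2(k-2)}(r_\perp^2 - 1)$ vanishes, so the Jacobian is triangular with the always-negative eigenvalue $\partial_{r_\perp^2}G = -4k R^{2(k-1)}$; stability is therefore governed by $\partial_m F$. At $(0,1)$, direct substitution yields $\partial_m F = 2\lambda k(k-1)\,0^{k-2} - 2k$, which equals $-2k$ for $k\ge 3$ (always stable) and $4(\lambda-1)$ for $k=2$ (stable iff $\lambda<\lambda_c(2)$). At a nonzero root of $(\ast)$, I would eliminate $\lambda$ using $\lambda m^{k-2} = R^{2(k-1)}$ to collapse the messy expression into the clean form
\[
\partial_m F \;=\; 2k R^{2(k-2)}\bigl[(k-2) - k m^2\bigr],
\]
which is negative precisely when $m^2 > (k-2)/k$. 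Since the minimizer of $g$ sits exactly at $m^2 = (k-2)/k$, the larger root $m_\star$ satisfies $m_\star^2 > (k-2)/k$ (stable) and $m_\dagger$ satisfies $m_\dagger^2 < (k-2)/k$ (unstable); for $k=2$ the threshold $m^2>0$ is automatic when $\lambda>\lambda_c(2)$, giving two stable points $\pm m_\star(2)$.

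The remaining case is $(0,0)$. For $k=2$ the Jacobian is $\mathrm{diag}(4\lambda, 8)$, so the origin is a source. For $k\ge 3$ the full Jacobian vanishes and linearization is inconclusive, so I would argue directly: along the ray $\{m=0,\, r_\perp^2 = \epsilon\}$ the dynamics gives $\dot r_\perp^2 = 4k\epsilon^{k-1}(1-\epsilon) > 0$ for all small $\epsilon > 0$, so trajectories leave every neighborhood of the origin, ruling out stability. Combining the cases yields the proposition. The only genuinely delicate step is this degenerate $k\ge 3$ analysis at $(0,0)$, where linear stability theory is unavailable and one must exploit the explicit form of the vector field on an invariant ray; everything else reduces to elementary calculus on $(\ast)$ and $2\times 2$ linear algebra on a triangular Jacobian.
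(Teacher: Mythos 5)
Your proposal is correct, and on the enumeration side it is essentially the paper's argument in different clothing: the paper sets the drift to zero, splits into $m=0$ and $m\neq0$, and (working in the variable $\rho=R^2$) reduces the nonzero branch to real solvability of $\lambda^{-2/(k-2)}\rho^{2(k-1)/(k-2)}-\rho+1=0$, which is exactly your equation $(\ast)$ after the substitution $\rho=m^2+1$; your one-variable study of $g(m)=(m^2+1)^{k-1}/m^{k-2}$, with unique minimum at $m^2=(k-2)/k$ of value $\lambda_c(k)$, is the same threshold computation. Where you genuinely add something is the stability analysis: the paper's written proof stops at locating the fixed points and deriving $\lambda_c$, $m_\dagger$, $m_\star$, leaving the stable/unstable labels unargued, whereas you supply the linearization (observing $\partial_m G=0$ on $\{r_\perp^2=1\}$ so the Jacobian is triangular with the eigenvalue $-4kR^{2(k-1)}<0$, and the clean identity $\partial_m F=2kR^{2(k-2)}[(k-2)-km^2]$ at roots of $(\ast)$, which ties stability to which side of the minimizer of $g$ the root lies on), together with the correct observation that at $(0,0)$ the Jacobian degenerates for $k\ge3$ and must be replaced by the direct argument on the invariant line $\{m=0\}$, where $\dot r_\perp^2=4k(r_\perp^2)^{k-1}(1-r_\perp^2)>0$ pushes trajectories to $(0,1)$. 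These computations check out (including $\partial_m F=-2k$ at $(0,1)$ for $k\ge3$ and $4(\lambda-1)$ for $k=2$ under the convention $0^0=1$), so your write-up is if anything more complete than the paper's on this point.

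One small caveat, which you inherit from the statement rather than introduce: your analysis of $(\ast)$ is on $m>0$, and the mirror fixed points $(-m_\dagger,1)$, $(-m_\star,1)$ only follow by the evenness of the drift in $m$, which holds when $k$ is even; for odd $k\ge3$ the equation $\lambda m^{k-2}=R^{2(k-1)}$ has no negative root, so the ``$\pm$'' in item (2) should be read with that proviso. The paper's own proof (which writes $m_\dagger=\sqrt{\rho_\dagger-c_\delta}$ without discussing signs) has the same gap, so this is a remark about the proposition, not a defect of your argument; if you want to be thorough, add one sentence invoking the $m\mapsto-m$ symmetry for even $k$.
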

\begin{rem}
The presence of \emph{two} pairs of fixed points when $k\ge 3$ with non-zero correlation with $v$ may seem surprising---indeed it indicates that even some warm starts will fail to attain good correlation with the signal when $\lambda$ is finite. This is an interesting consequence of the corrector in~\eqref{eq:matrix-tensor-ODE} and if one tracks the $c_\delta$ dependence in the above, the fixed point $m_\dagger$ goes to zero as $c_\delta \to 0$ and this barrier to recovery from warm starts vanishes as one approaches sub-critical step-sizes.
\end{rem}

\subsection{A dynamical analogue of the BBP transition}\label{sec:dynamical-bbp}
Let us now consider a rescaling of $\mathbf u_n$
in a microscopic neighborhood of
the saddle set $m=0$. This captures the initial phase from a random start: 
if $\mu_n \sim \cN(0,I_n/n)$, then $(\mathbf u_n)_*\mu_n\to\delta_{(0,1)}$ weakly.
Now rescale and let $\tilde{\mathbf{u}}_{n}= (\tilde m, r_\perp^2) = (\sqrt{n}m,r_{\perp}^{2})$.
Evidently, $\tilde \nu = \lim_{n}(\tilde{\mathbf{u}}_{n})_{*}\mu_{n} = \mathcal{N}(0,1)\otimes \delta_1$. 
\begin{prop}\label{prop:matrix-tensor-rescaled-effective-dynamics}
Fix $k\ge 2$, $\lambda>0$ and $\delta_{n}=1/n$. Then $\tilde{\mathbf{u}}_n(t)$ converges as $n\to\infty$ to the solution of the following SDE initialized from $\tilde\nu$:
\begin{equation}\label{eq:rescaled-tensor-pca}
d\tilde{m} = 2 \tilde m (2\lambda\mathbf{1}_{k=2}-k r_\perp^{2(k-1)}) dt + 2(k r_\perp^{2(k-1)})^{\sfrac{1}{2}}dB_{t}
\quad\qquad
 d r_\perp^2   = - 4 k r_\perp^{2(k-1)} (r_\perp^2 -1)dt\,.
\end{equation}
\end{prop}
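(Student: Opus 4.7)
The plan is to apply Theorem~\ref{thm:main} to the rescaled summary statistics $\tilde{\mathbf u}_n = (\tilde m, r_\perp^2) := (\sqrt n\, m, r_\perp^2)$ with step-size $\delta_n = 1/n$. Three tasks must be carried out: verify $\delta_n$-localizability of $(\tilde{\mathbf u}_n, L_n, P_n)$; identify a limiting drift $\tilde{\mathbf h}$ and diffusion matrix $\tilde \Sigma$ satisfying \eqref{eq:eff-drift}--\eqref{eq:diffusion-matrix}; and check convergence of the initial pushforward. The last is immediate: under $\mu_n = \mathcal N(0, I_n/n)$ one has $\tilde m = \sqrt n\langle x,v\rangle \sim \mathcal N(0,1)$ exactly, and $r_\perp^2 = \|x\|^2 - m^2 \to 1$ almost surely by the law of large numbers, so $(\tilde{\mathbf u}_n)_* \mu_n \to \mathcal N(0,1)\otimes \delta_1 = \tilde \nu$.

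The central calculation, and also the main obstacle, is the covariance $V(x) = \E[\nabla H \otimes \nabla H]$. Writing $H(x) = -2\langle W, x^{\otimes k}\rangle + \text{const}$ and decomposing $\nabla_x \langle W, x^{\otimes k}\rangle = \sum_{\ell=1}^{k} W_\ell[x^{\otimes(k-1)}]$, where $W_\ell$ denotes the contraction of $W$ against $x$ in every slot except the $\ell$-th, a direct second-moment computation using i.i.d.\ Gaussianity of $W$ yields
\begin{equation*}
V(x) = 4k\|x\|^{2(k-1)} I + 4k(k-1)\|x\|^{2(k-2)} x x^{T},
\end{equation*}
the first piece arising from matched $\ell = \ell'$ terms and the second from cross terms. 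Getting both pieces with the correct constants is essential since they determine the surviving Brownian coefficient and the corrector in~\eqref{eq:rescaled-tensor-pca}. With $V$ in hand, $\delta_n J_n V J_n^{T}$ is computed entrywise using $\nabla \tilde m = \sqrt n\, v$ and $\nabla r_\perp^2 = 2(x - mv)$: on $\tilde{\mathbf u}_n^{-1}(E_K)$, where $m = \tilde m/\sqrt n \to 0$ and $\|x\|^2 \to r_\perp^2$, the $(1,1)$ entry $v^{T} V v$ tends to $4k r_\perp^{2(k-1)}$; the $(2,2)$ entry is $O(1/n)$; and the $(1,2)$ entry is $O(\tilde m/n)$. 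This gives $\tilde \Sigma = \operatorname{diag}(4k r_\perp^{2(k-1)}, 0)$.

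For the drift, $\tilde m$ is linear so $\cL_n \tilde m = 0$; using $\nabla \Phi = -2\lambda k m^{k-1} v + 2k\|x\|^{2(k-1)} x$ one gets $-\cA_n \tilde m = 2\lambda k\, n^{-(k-2)/2}\, \tilde m^{k-1} - 2k\|x\|^{2(k-1)} \tilde m$, whose $n\to\infty$ limit on $\tilde{\mathbf u}_n^{-1}(E_K)$ is $2\tilde m(2\lambda\mathbf 1_{k=2} - k r_\perp^{2(k-1)})$, matching the drift of $\tilde m$ in \eqref{eq:rescaled-tensor-pca}. For $r_\perp^2$, the identity $\langle v, x-mv\rangle = 0$ forces $-\cA_n r_\perp^2 = -4k\|x\|^{2(k-1)} r_\perp^2$, and $\nabla^2 r_\perp^2 = 2(I - vv^{T})$ gives $\delta_n \cL_n r_\perp^2 = n^{-1}(\operatorname{tr} V - v^{T} V v) \to 4k r_\perp^{2(k-1)}$, so the sum converges to $-4k r_\perp^{2(k-1)}(r_\perp^2 - 1)$.

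Finally, localizability is routine on $\tilde{\mathbf u}_n^{-1}(E_K)$, which pins $\|x\|^2 = r_\perp^2 + \tilde m^2/n$ to a bounded set. Item~(1) is trivial since $\tilde m$ is linear and $r_\perp^2$ quadratic. Item~(2) and the first half of item~(3) follow from Gaussian hypercontractivity applied to $\nabla H$, which is a centered Gaussian in $W$ with $\E[\|\nabla H\|^2] = \operatorname{tr} V = O(n)$: in particular $\E[\|\nabla H\|^8] = O(n^4) = O(\delta_n^{-4})$, while $\langle \nabla H, \nabla \tilde m\rangle$ and $\langle \nabla H, \nabla r_\perp^2\rangle$ are Gaussians with variances $O(n)$ and $O(1)$. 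The second half of item~(3) only needs $r_\perp^2$, for which $\langle \nabla^2 r_\perp^2, \nabla H\otimes \nabla H - V\rangle$ is a centered Gaussian quadratic form with variance $O(\operatorname{tr} V^2) = O(n) = o(n^3)$, as required, so Theorem~\ref{thm:main} applies.
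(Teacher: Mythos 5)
Your proposal is correct and follows essentially the same route as the paper: apply Theorem~\ref{thm:main} to $\tilde{\mathbf u}_n=(\sqrt n\, m, r_\perp^2)$, use the explicit covariance $V$ (your formula matches~\eqref{eq:matrix-tensor-V}), and read off the drift from $-\cA_n+\delta_n\cL_n$ and the surviving $(1,1)$ entry of $\delta_n J_nVJ_n^T$, exactly as in Section~\ref{app:diff-equator}. The only cosmetic difference is in the moment bounds for localizability, where you exploit Gaussianity of $\nabla H$ and trace estimates (e.g.\ the quadratic-form variance $O(\operatorname{tr}(V^2))$) instead of the paper's operator-norm bound on the Gaussian $k$-tensor; both are valid and yield the same conclusion.
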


\begin{figure}
    \centering
   \subfigure[]{     \includegraphics[width=.23\textwidth]{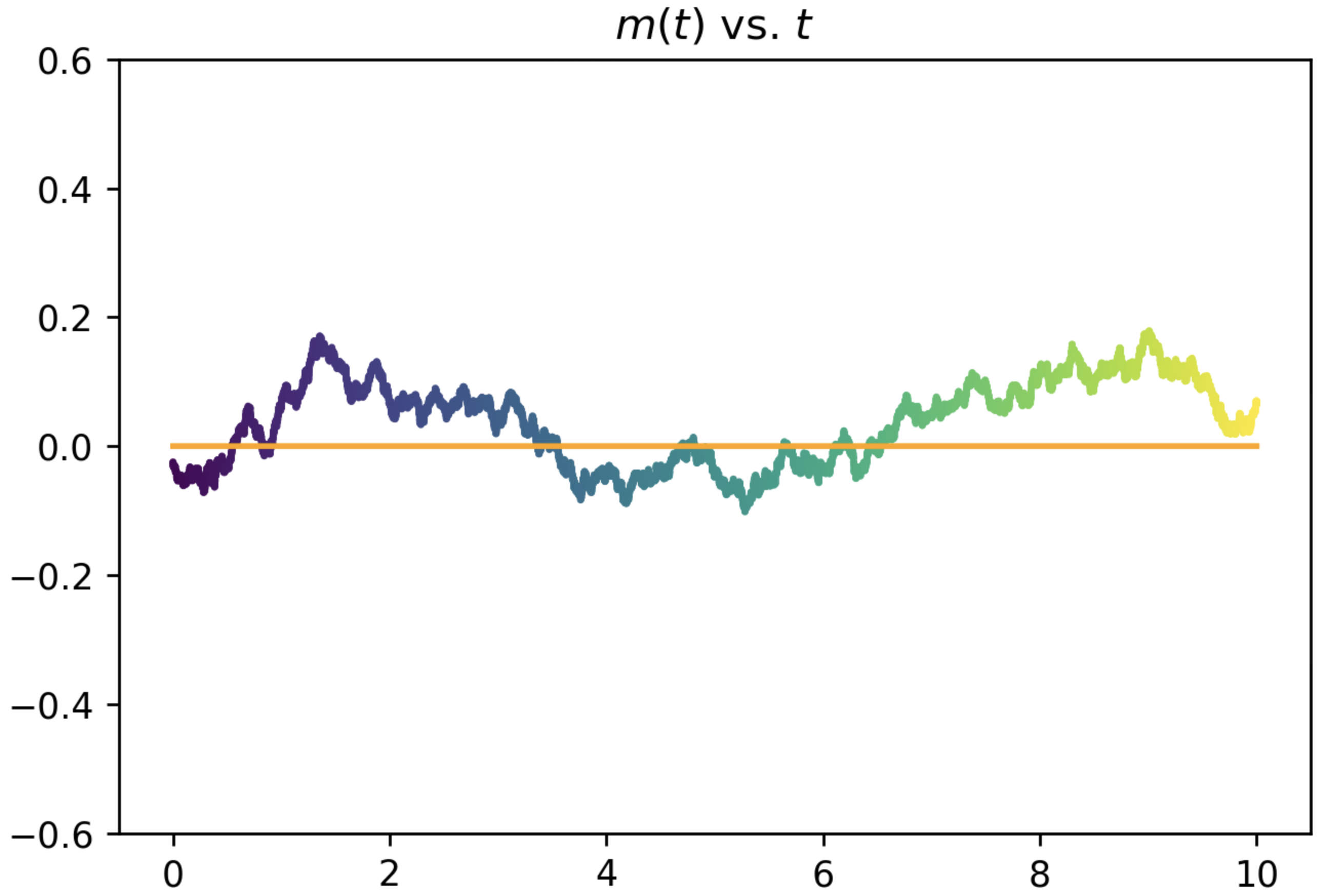}}
\subfigure[]{        \includegraphics[width=.23\textwidth]{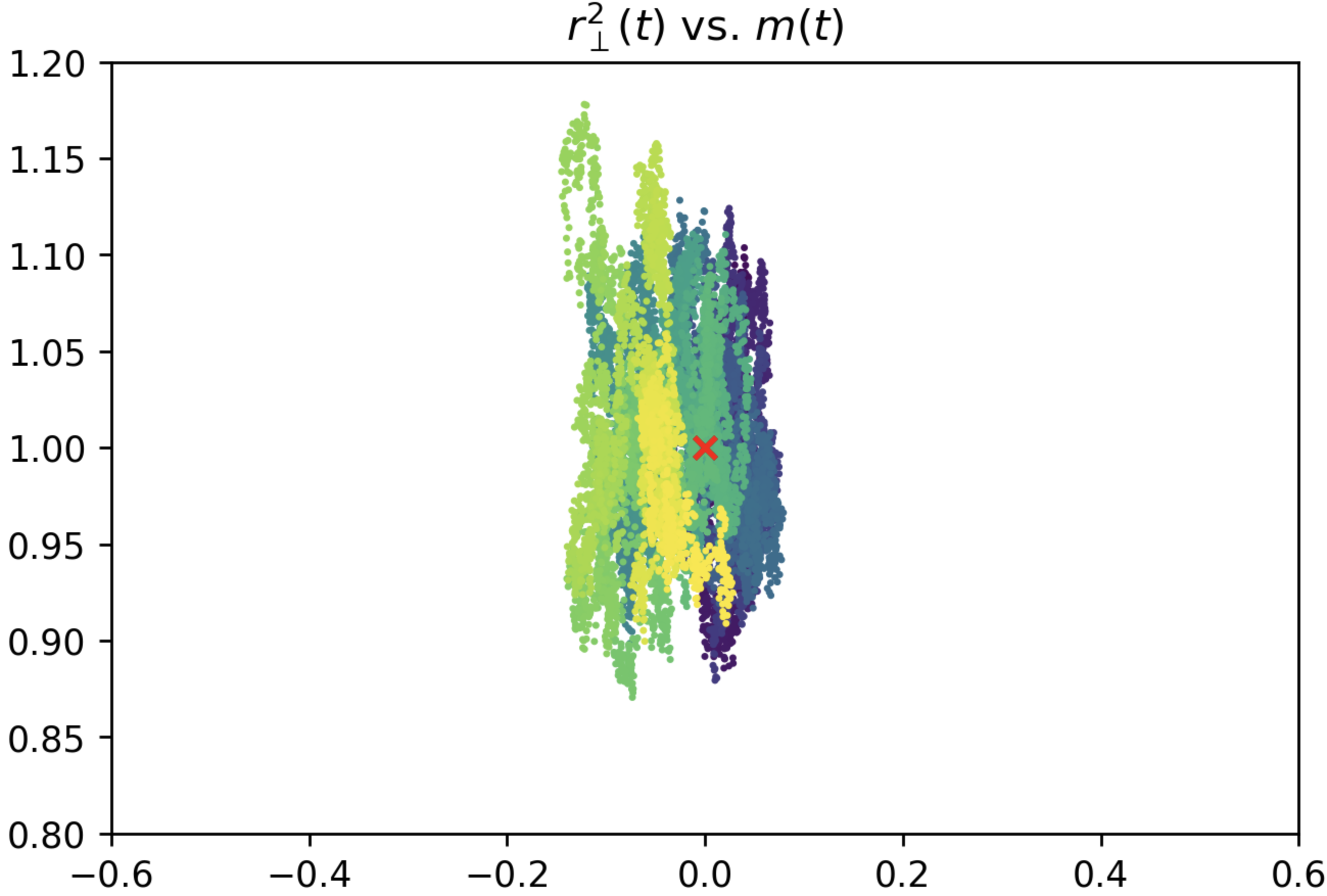}}
\subfigure[]{        \includegraphics[width=.23\textwidth]{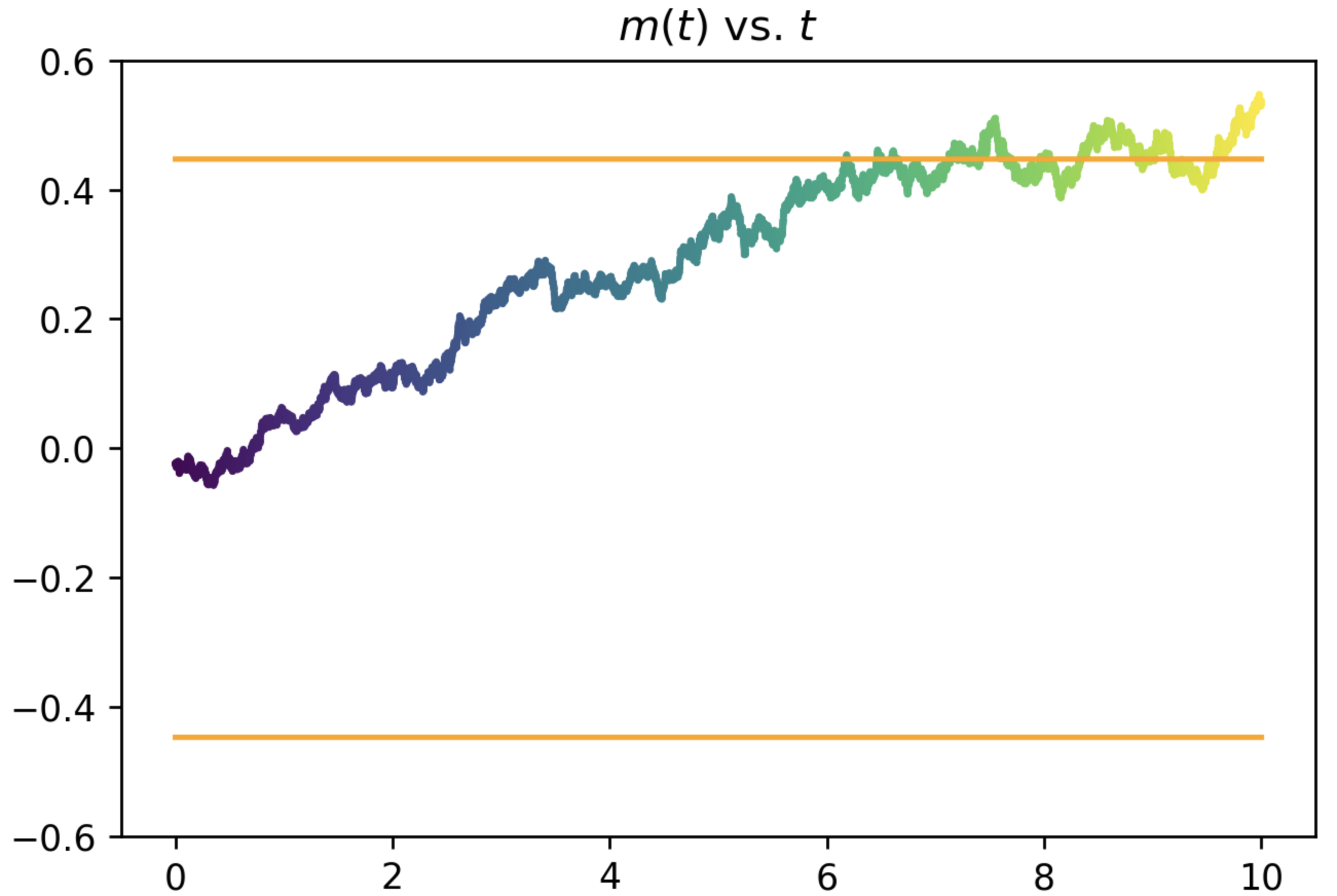}}
       \subfigure[]{ \includegraphics[width=.23\textwidth]{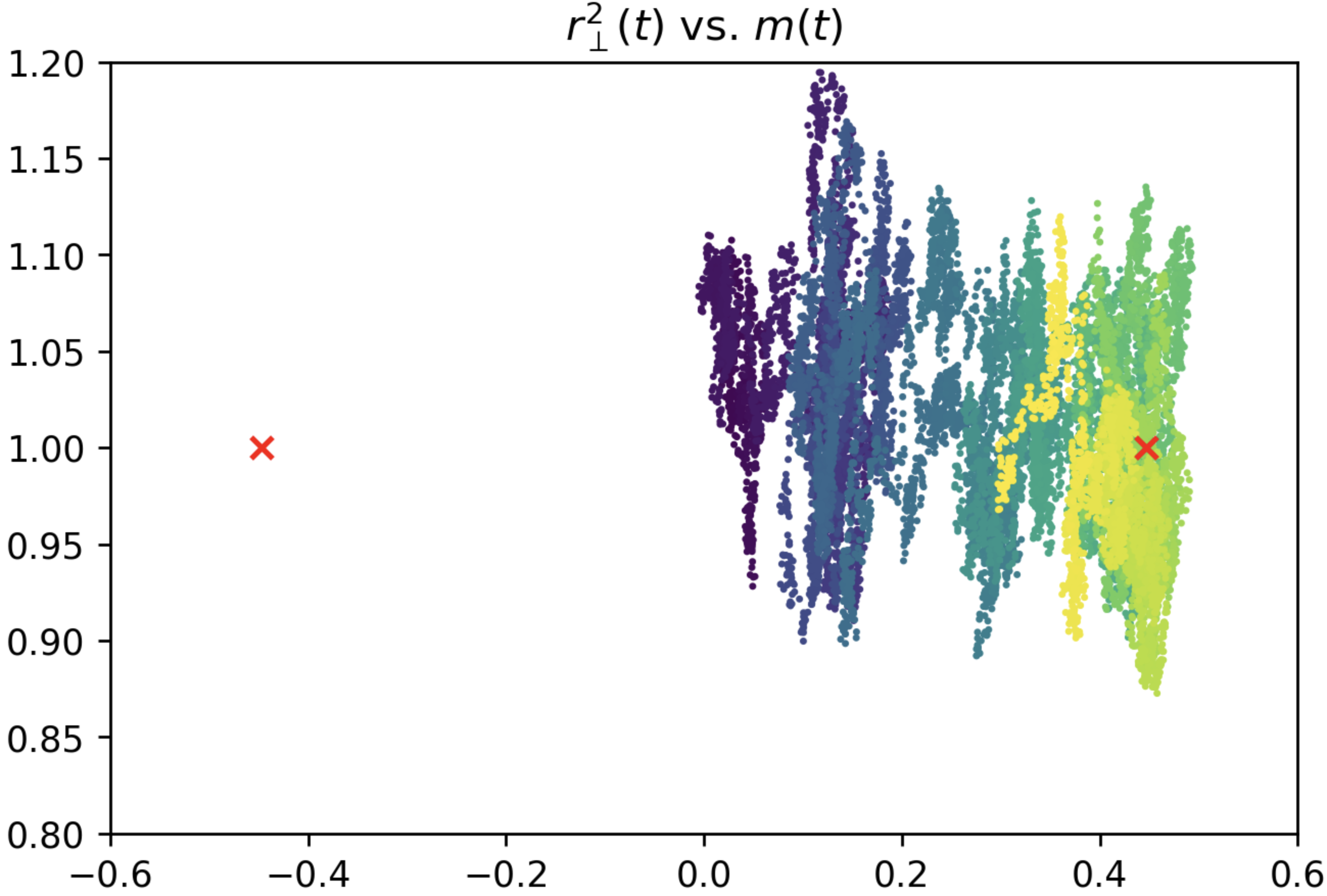}}
       \vspace{-.1cm}
      \caption{Matrix PCA summary statistics in dim.\ $n = 1500$ run for $10n$ steps at $\lambda = 0.8<\lambda_c$ in (a)--(b) and $\lambda = 1.2 >\lambda_c$ in (c)--(d). Here, {\color{red}$\times$} and ${\color{orange}-}$ mark the stable fixed points of the systems. (a) and (c) demonstrate the mean-reverting and mean-repellent OU processes that arise as diffusive limits of the $m$ variable, and (b) and (d) depict the trajectories in $(m,r_\perp^2)$ space.}
    \label{fig:matrixPCA}
\end{figure}

We see that $r_\perp^2$ now solves an autonomous ODE which converges exponentially to $1$. When $k=2$, as $t$ tends to $\infty$, the equation for $\tilde m$ behaves like  
$$d\tilde m = 4(\lambda-1)\tilde m dt+2\sqrt{2}dB_{t}\,.$$ 
This is an OU process which is mean-reverting when $\lambda<1$ and mean-repellent when $\lambda>1$. By stitching together the prelimits of these OU processes at a sequence of scales interpolating between that of $\tilde{\mathbf{u}}_n$ and $\mathbf u_n$, we expect that one could show that for any $\lambda>1$, SGD reaches the stable fixed points at $(\pm m_\star(2),1)$ in $O(n\log n)$ steps (with precise asymptotics, etc.), while when $\lambda<1$, the mean-reverting nature of the OU suggests it needs a much larger number of samples in order to correlate with the vector~$v$. See Figure~\ref{fig:matrixPCA} for an overview, and Figures~\ref{fig:MatrixPCA-r-v-m}--\ref{fig:MatrixPCA-m-t} for more refined numerical verification of this intuition. 

\subsection{On the sample complexity of tensor PCA}
When $k\ge 3$, SGD is known to require a polynomially diverging sample complexity or $\lambda$ in order to solve the tensor PCA problem~\cite{arous2021online}. Accordingly, when $\lambda$ is kept finite in $n$,  the expression for $\tilde m$ in \eqref{eq:rescaled-tensor-pca} is \emph{always} a mean-reverting OU-type process.
Interestingly, one can also capture the (diverging) signal-to-noise threshold for SGD to recover $v$ in tensor PCA by our methods. Indeed, for $k\ge 3$ if one considers $\lambda_n = \Lambda n^{(k-2)/2}$ (matching the predicted gradient-based algorithm threshold from~\cite{BGJ18b}), $\tilde{\mathbf u}_n$ would instead converge to the solution of 
\begin{align}\label{eq:diffusive-m-ballistic-r}
d\tilde m  & = 2\tilde m (k \Lambda - k r_\perp^{2(k-1)})dt+2 (k r_\perp^{2(k-1)})^{\sfrac{1}{2}}dB_{t}\quad
& d r_\perp^2 &  = -4   k r_\perp^{2(k-1)} (r_\perp^2-1)dt\,,
\end{align}
which transitions between mean-reverting and mean-repellent at $\Lambda_c(k)=1$, as in $k=2$. 

We only considered a few specific choices of summary statistics in the above, and the strength of Theorem~\ref{thm:main} derives from its general applicability. As demonstrations, let us mention a few other examples that we would expect to be of interest in the study of SGD for matrix and tensor PCA. The first example is a limiting ballistic limit theorem for the evolution of the population loss $\Phi(x)$. {The population loss can be taken added to the family of summary statistics in our $\delta_n$-localizable triple; in the case of $k$-tensor PCA, this yields,
\begin{align}\label{eq:Phi-fixed-points}
	d \Phi = \Big(- 4k^2  m^2 \big( \lambda^2 m^{2(k-2)} - 2\lambda m^{k-2} R^{2k-2} + R^{4k-4}\big) -  4 k^2 R^{4(k-1)} ( r_\perp^2 - c_\delta)\Big) dt\,.\end{align}

\subsection{A finer diffusive limit theorem at a random start}

The second example is a diffusive limit theorem near the fixed point $(m,r_\perp^2) = (0,1)$ (as opposed to~\eqref{eq:diffusive-m-ballistic-r} where we blew up only the $m$ variable about the saddle set $m=0$ and therefore only $\tilde m$ was moving diffusively). 
In order to do so, we consider the scaling limit of the pair $(\tilde{m},\tilde r_\perp)  = (\sqrt n m,\sqrt n(r_\perp^2 -1))$ and find the following limit:  
\begin{align}\label{eq:matrix-tensor-PCA-double-diffusive-limit}
	d \tilde m & = 2 k (\lambda \mathbf 1_{k=2}\tilde m^{k-1} - 1) dt + 2\sqrt{k} dB_t^{(1)}\,, \quad &  d\tilde r_\perp^2 & = -4k \tilde r_\perp^2  dt+ 2\sqrt{k(k-1)} dB_t^{(2)}\,.
\end{align} 
Interestingly, with this double rescaling, the $n\to\infty$ limit yields a pair of OU processes that are decoupled, namely, each of their drifts are autonomous and their stochastic parts independent. This pair of independent OU processes is depicted in Figures~\ref{fig:MatrixPCA-m-t-zoom}--\ref{fig:MatrixPCA-r-t}. 
}

\begin{figure}
    \centering
  \subfigure[$\lambda=0.8$]{   \includegraphics[width=.23\linewidth]{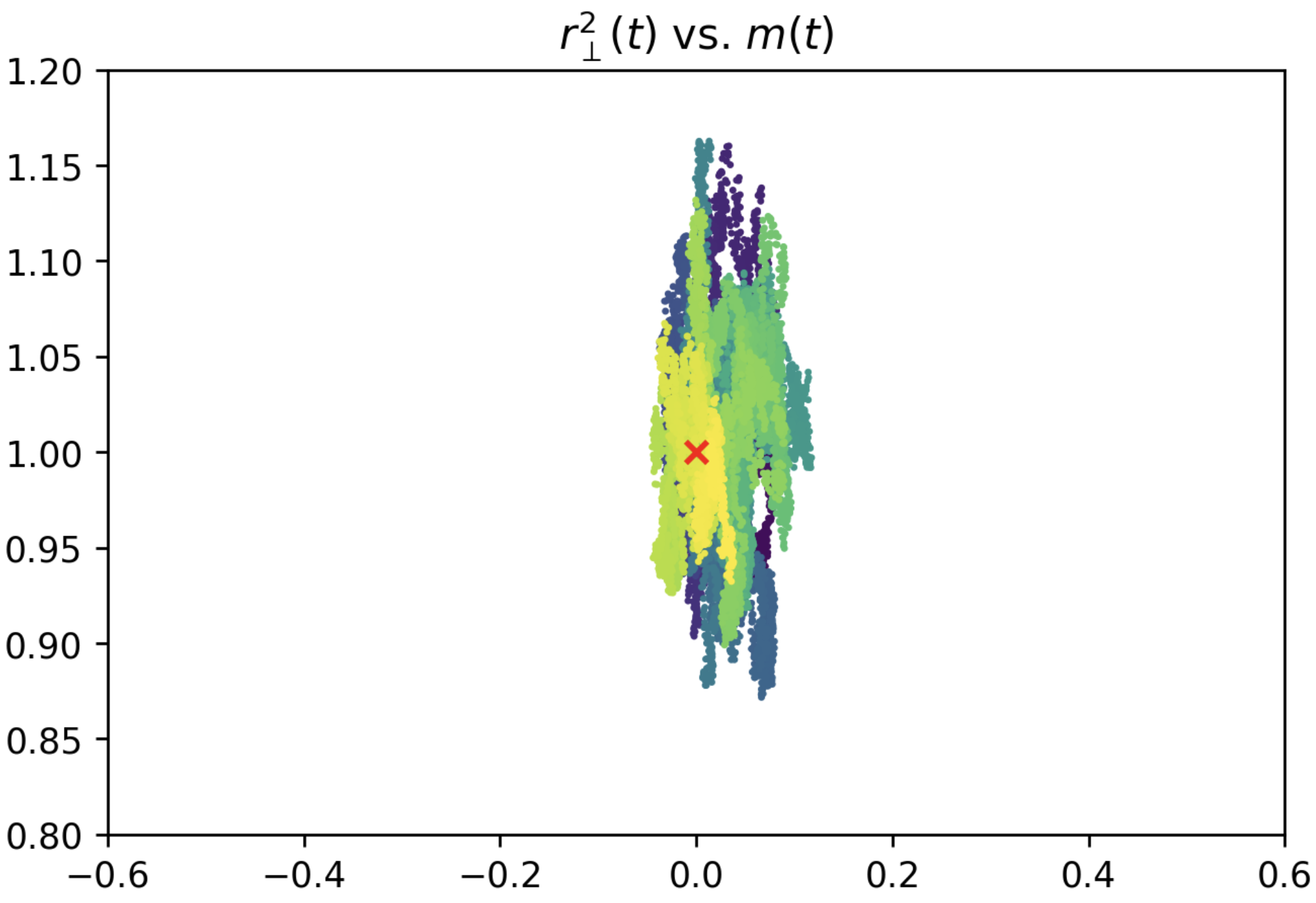}}
  \subfigure[$\lambda =0.9$]{       \includegraphics[width=.23\linewidth]{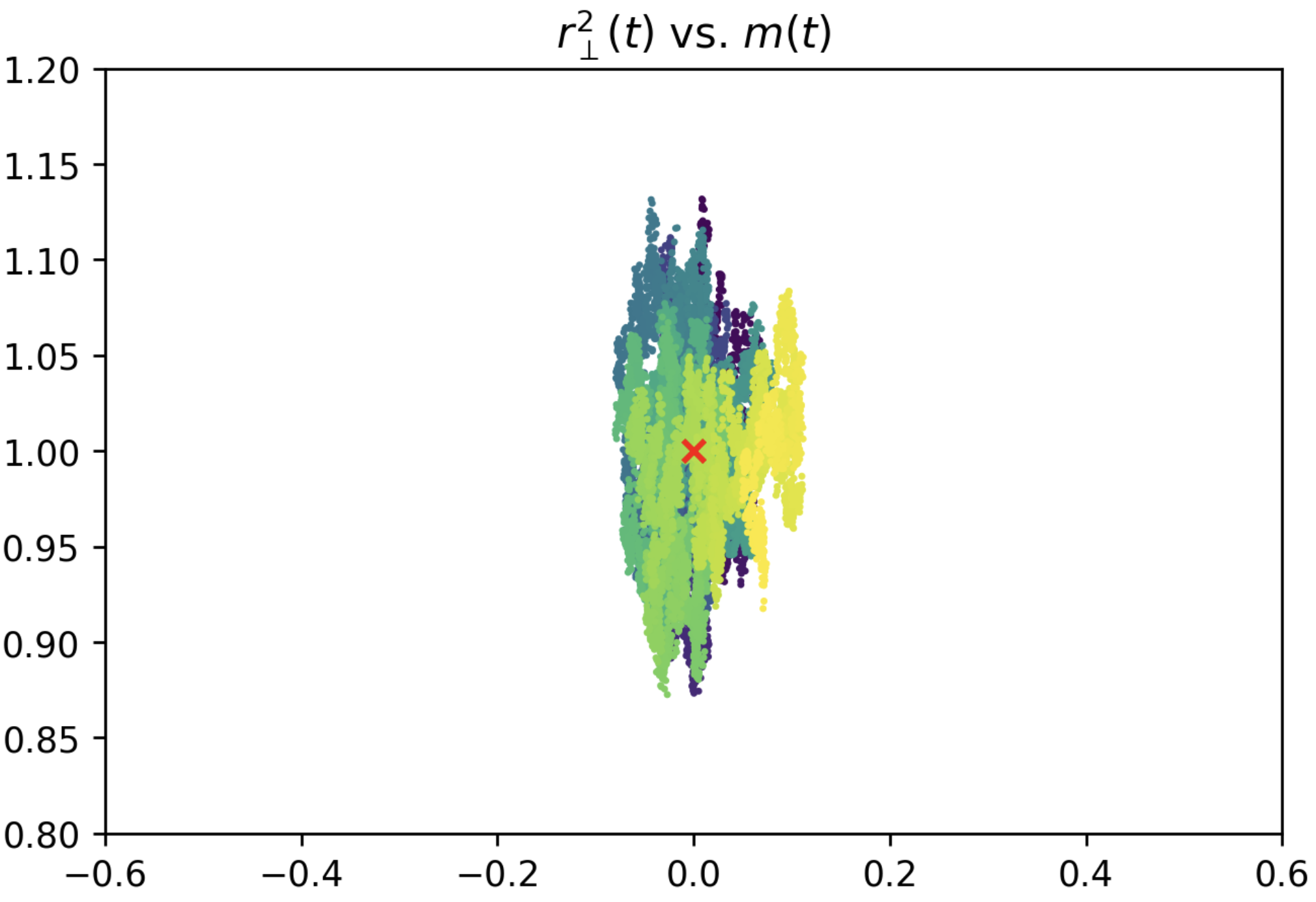}}   
    \subfigure[$\lambda = 1.1$]{       \includegraphics[width=.23\linewidth]{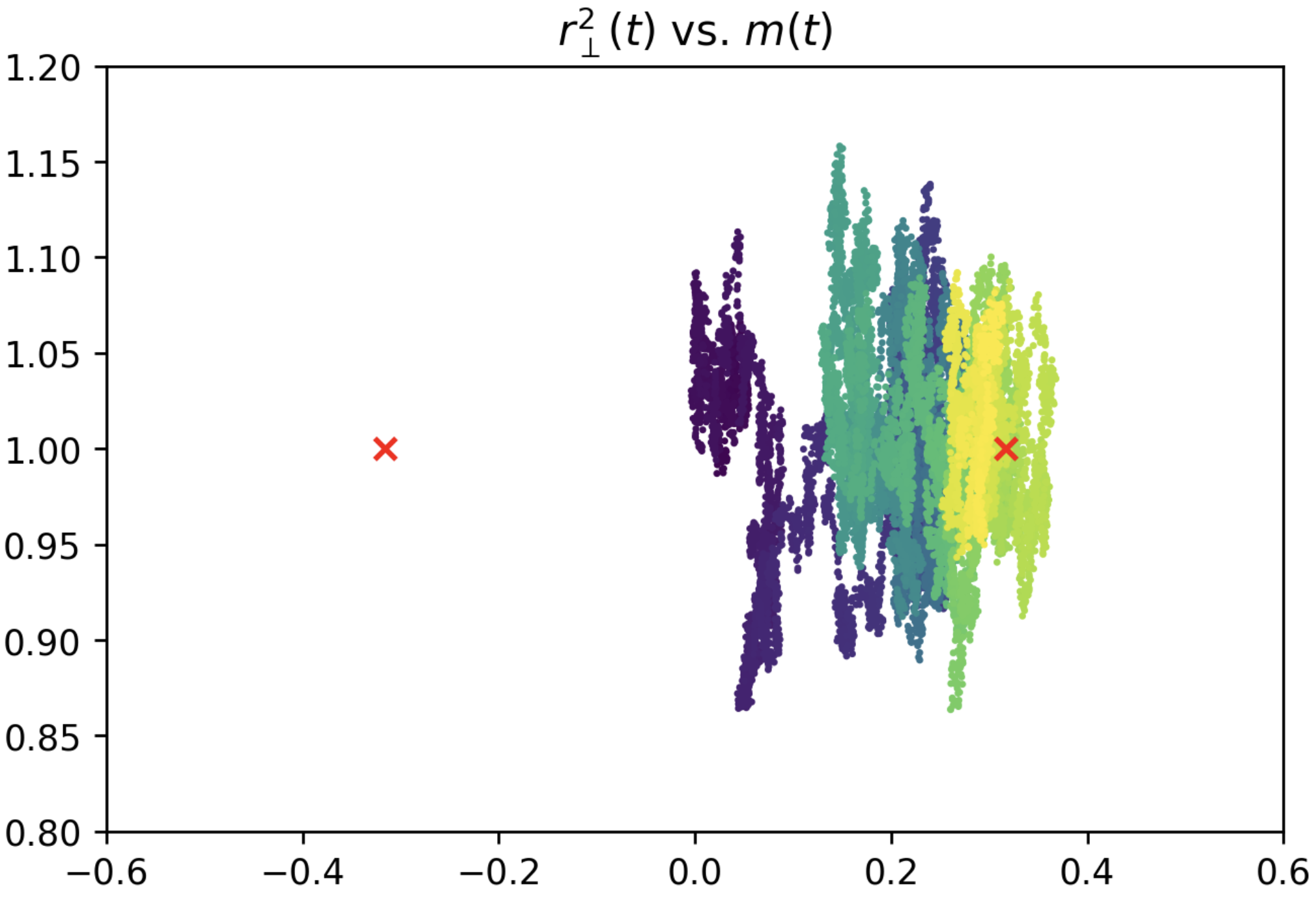}}    
  \subfigure[$\lambda = 1.2$]{       \includegraphics[width=.23\linewidth]{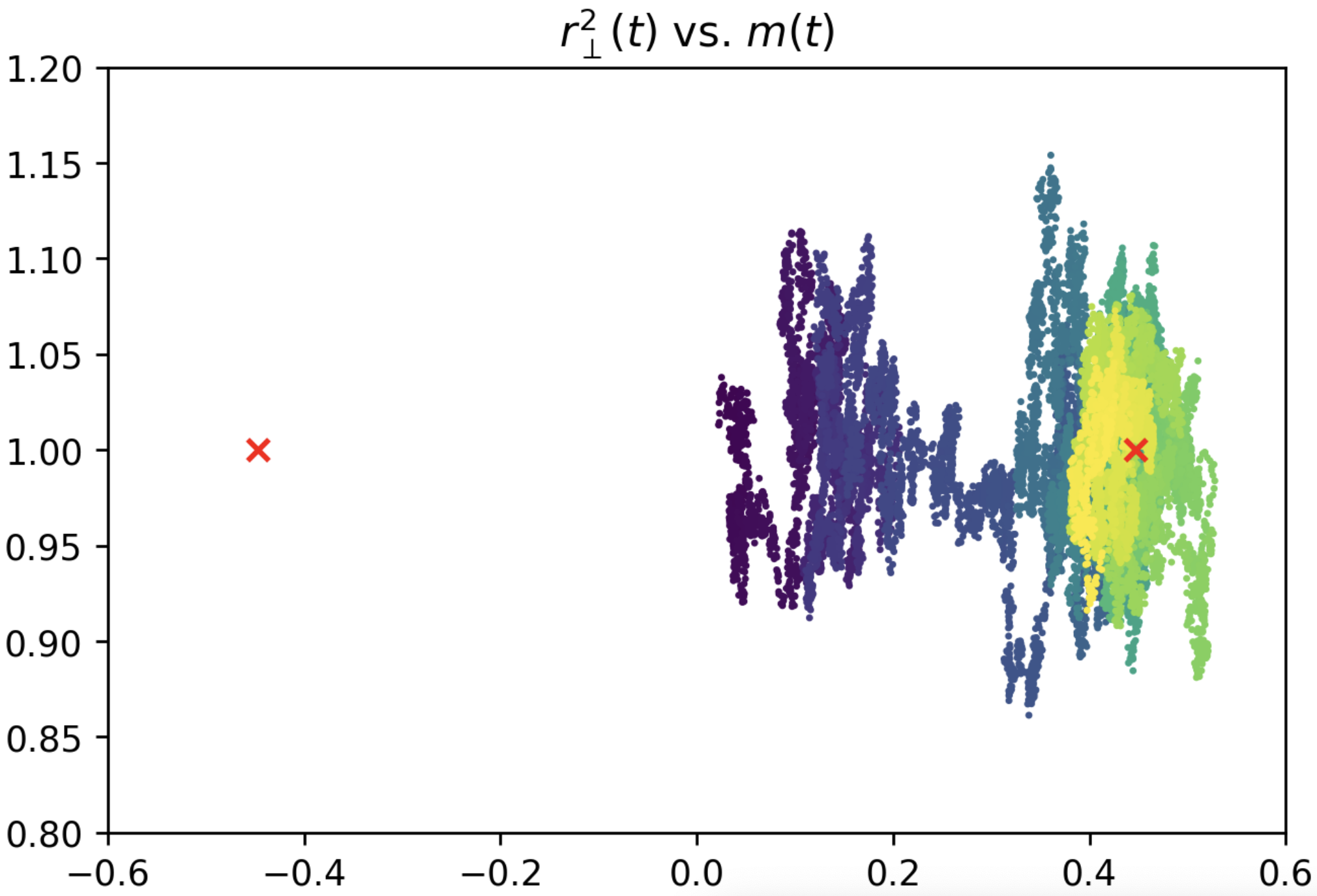}}
  \vspace{-.1cm}
  \caption{Matrix PCA in dimension $n=2000$ with various values of $\lambda$ near the critical $\lambda=1$. Depicted is the evolution of summary statistics $(m,r_\perp^2)$ for $10n$ steps of SGD initialized randomly.}\label{fig:MatrixPCA-r-v-m}
\end{figure}

\vspace{-.2cm}
\begin{figure}
    \centering
  \subfigure[$\lambda=0.8$]{   \includegraphics[width=.23\linewidth]{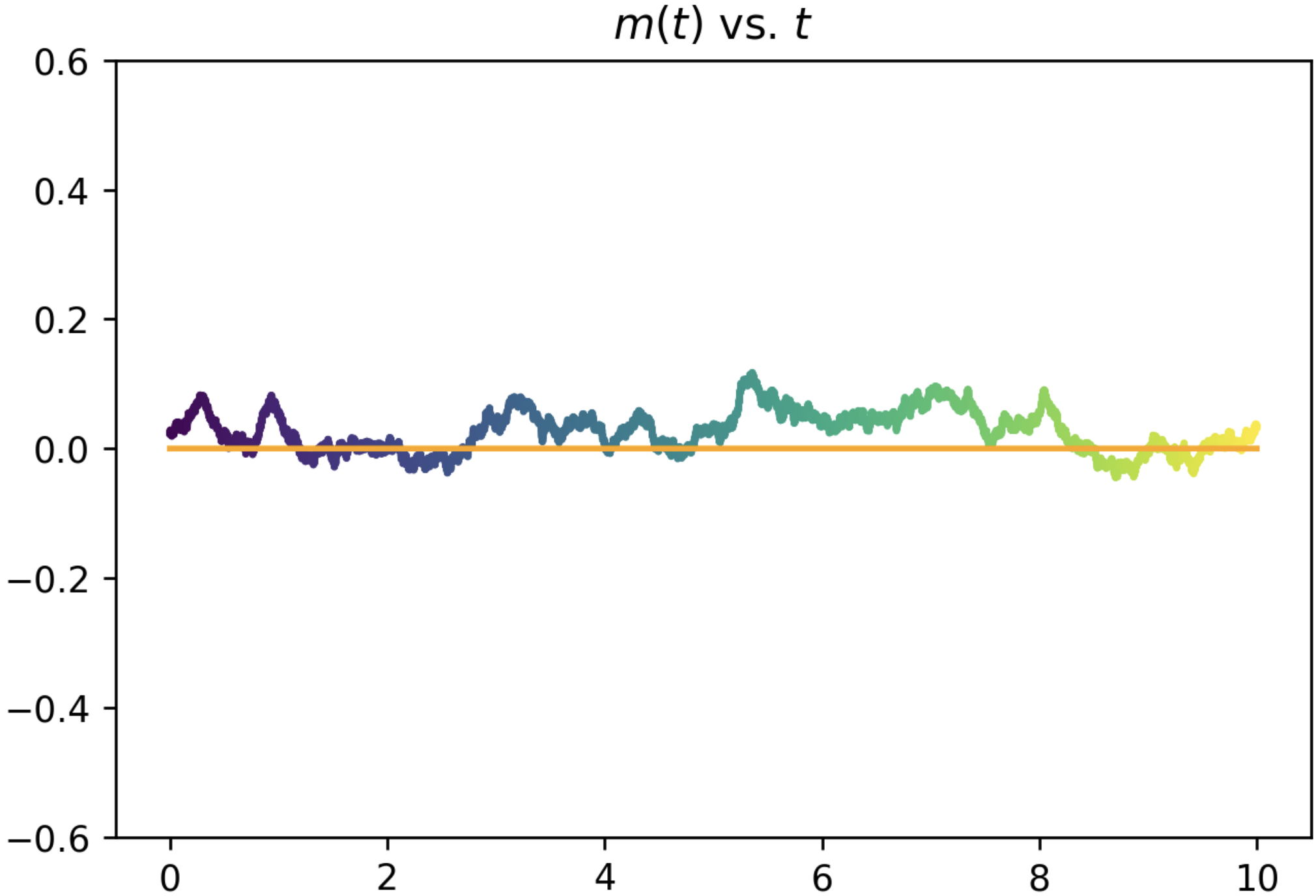}}
  \subfigure[$\lambda =0.9$]{       \includegraphics[width=.23\linewidth]{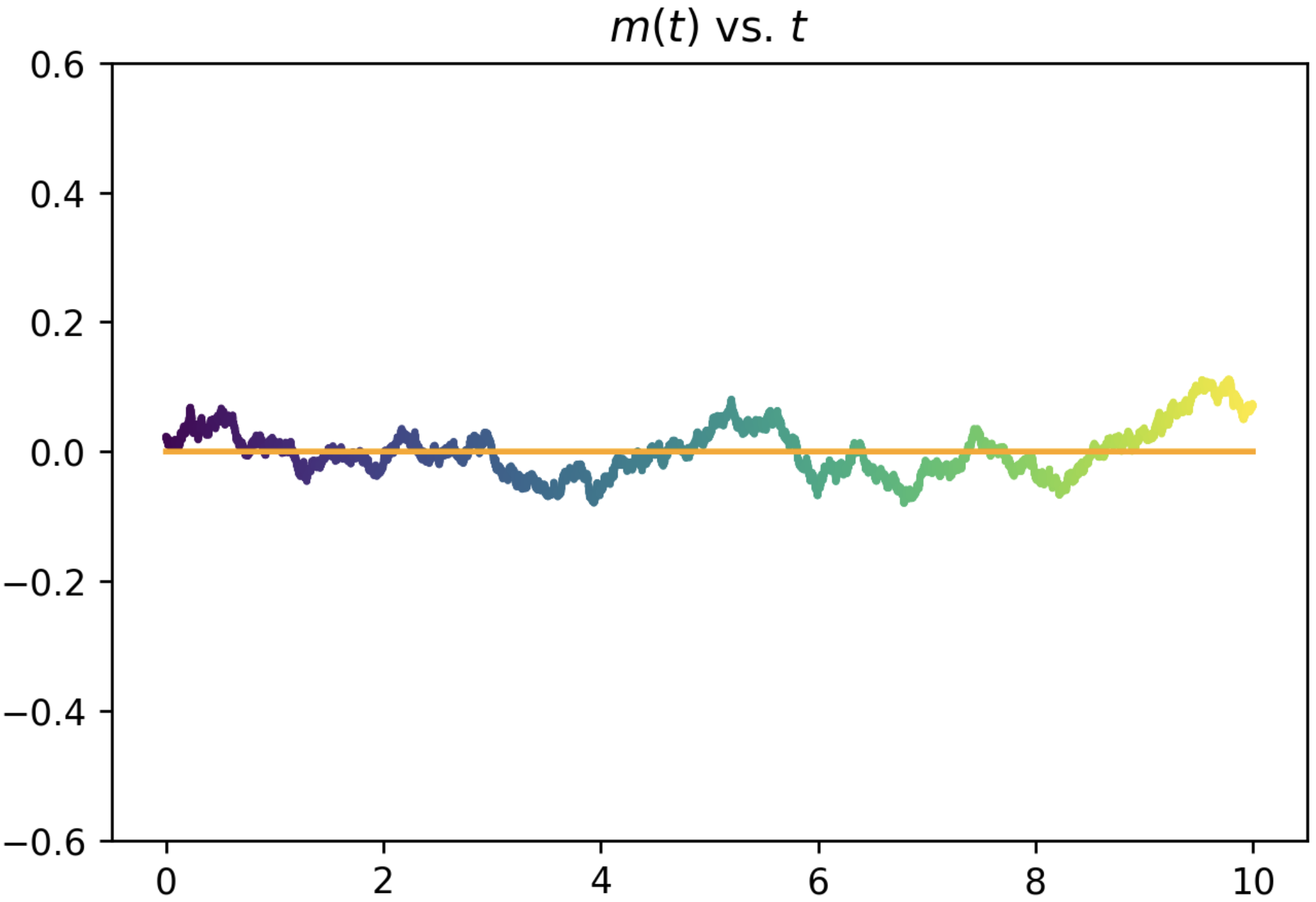}}   
    \subfigure[$\lambda = 1.1$]{       \includegraphics[width=.23\linewidth]{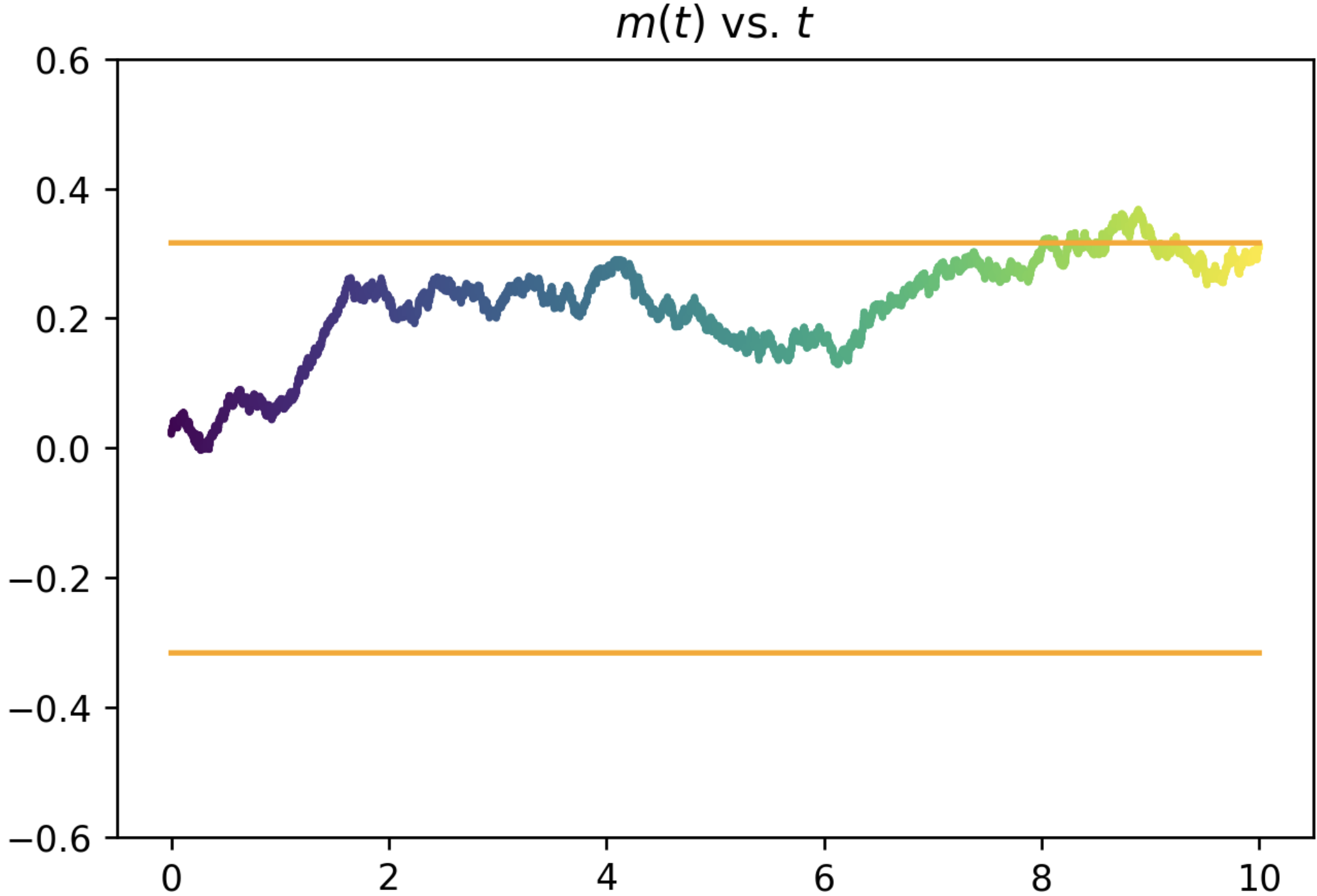}}    
  \subfigure[$\lambda = 1.2$]{       \includegraphics[width=.23\linewidth]{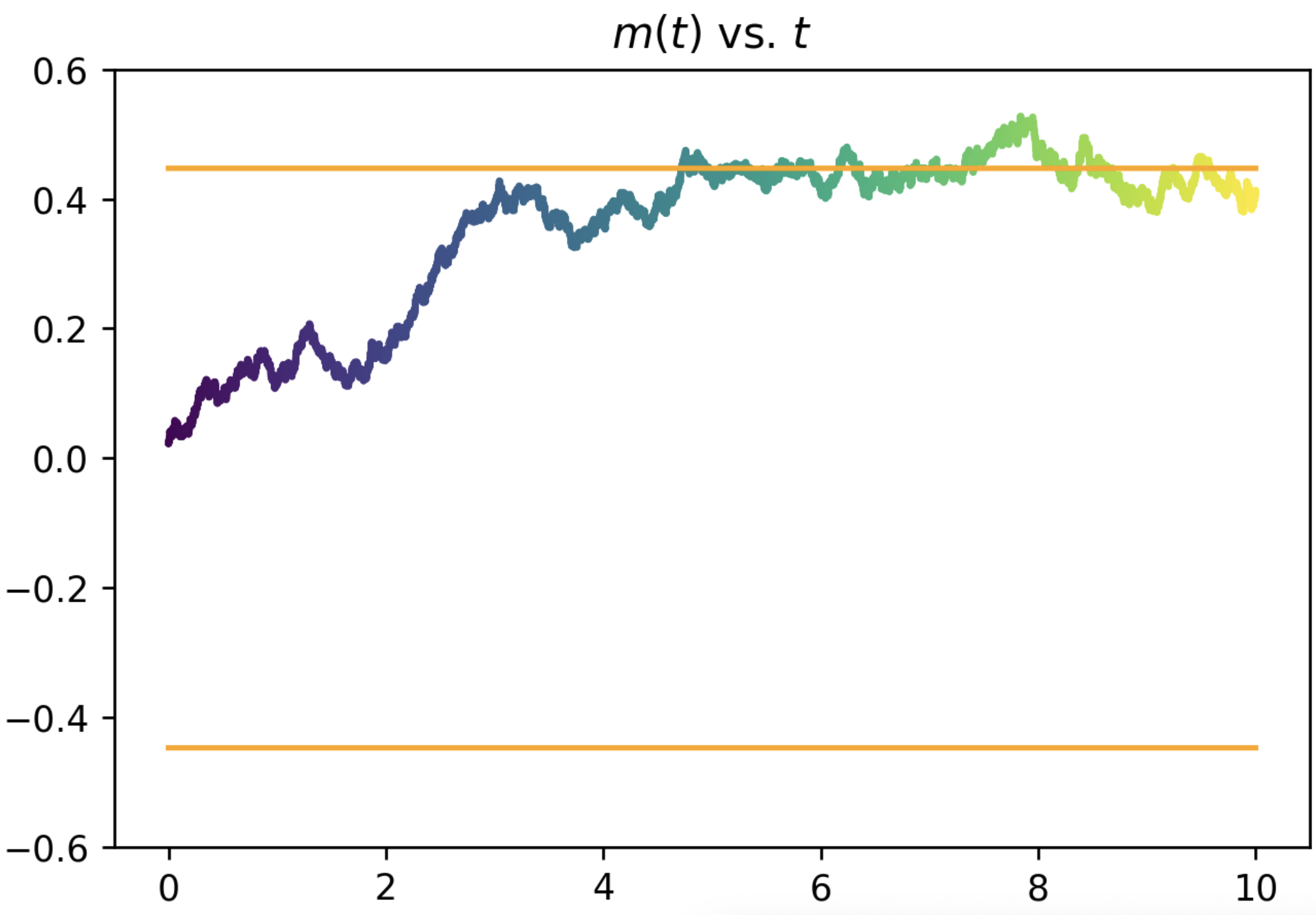}} 
  \vspace{-.1cm}   
  \caption{Matrix PCA in dimension $n=2000$ with various values of $\lambda$ near the critical $\lambda=1$. Depicted is the evolution of $m(t)$ for $10n$ steps of SGD initialized randomly.}\label{fig:MatrixPCA-m-t}
\end{figure}

\begin{figure}
    \centering
  \subfigure[$\lambda=0.8$]{   \includegraphics[width=.23\linewidth]{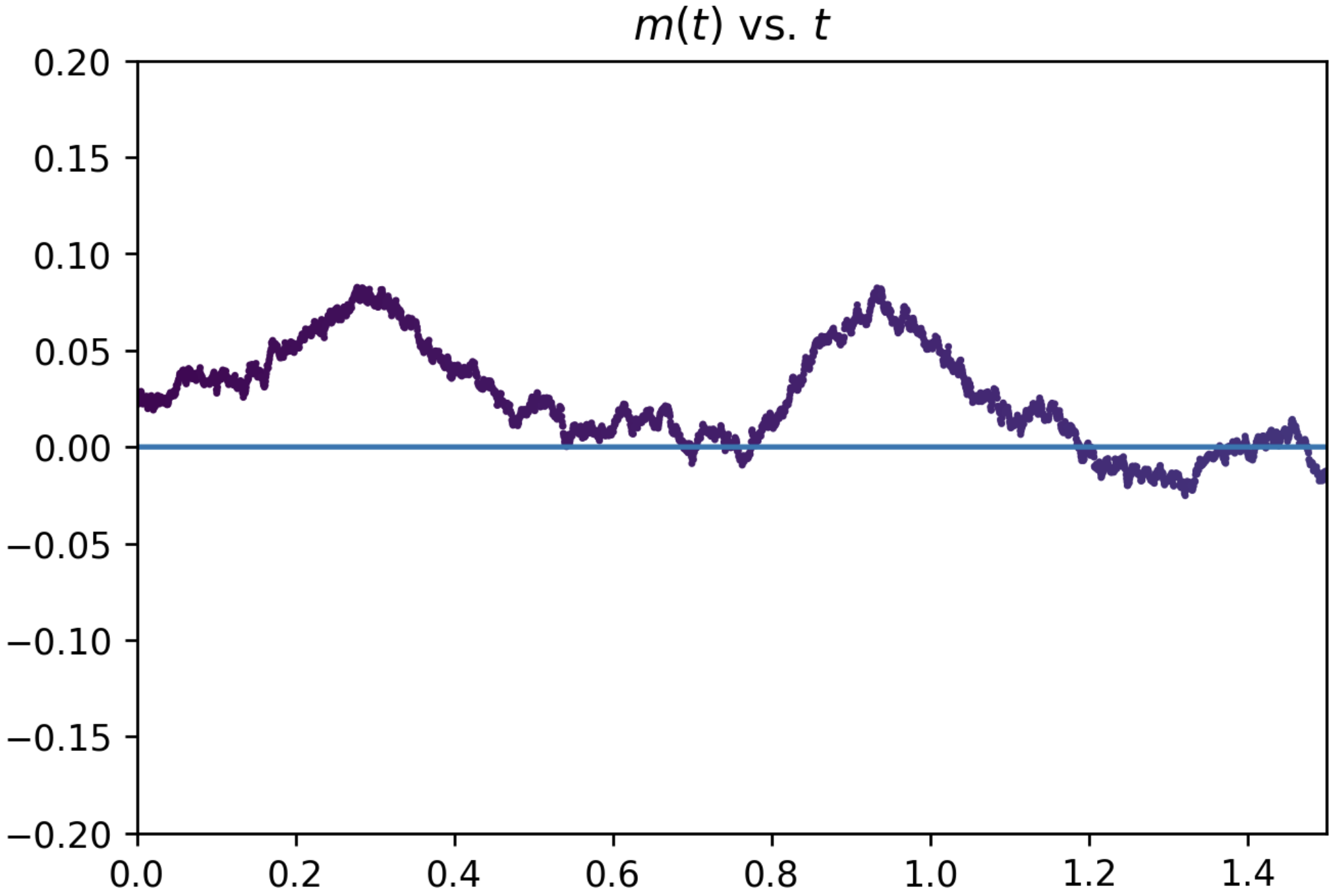}}
  \subfigure[$\lambda =0.9$]{       \includegraphics[width=.23\linewidth]{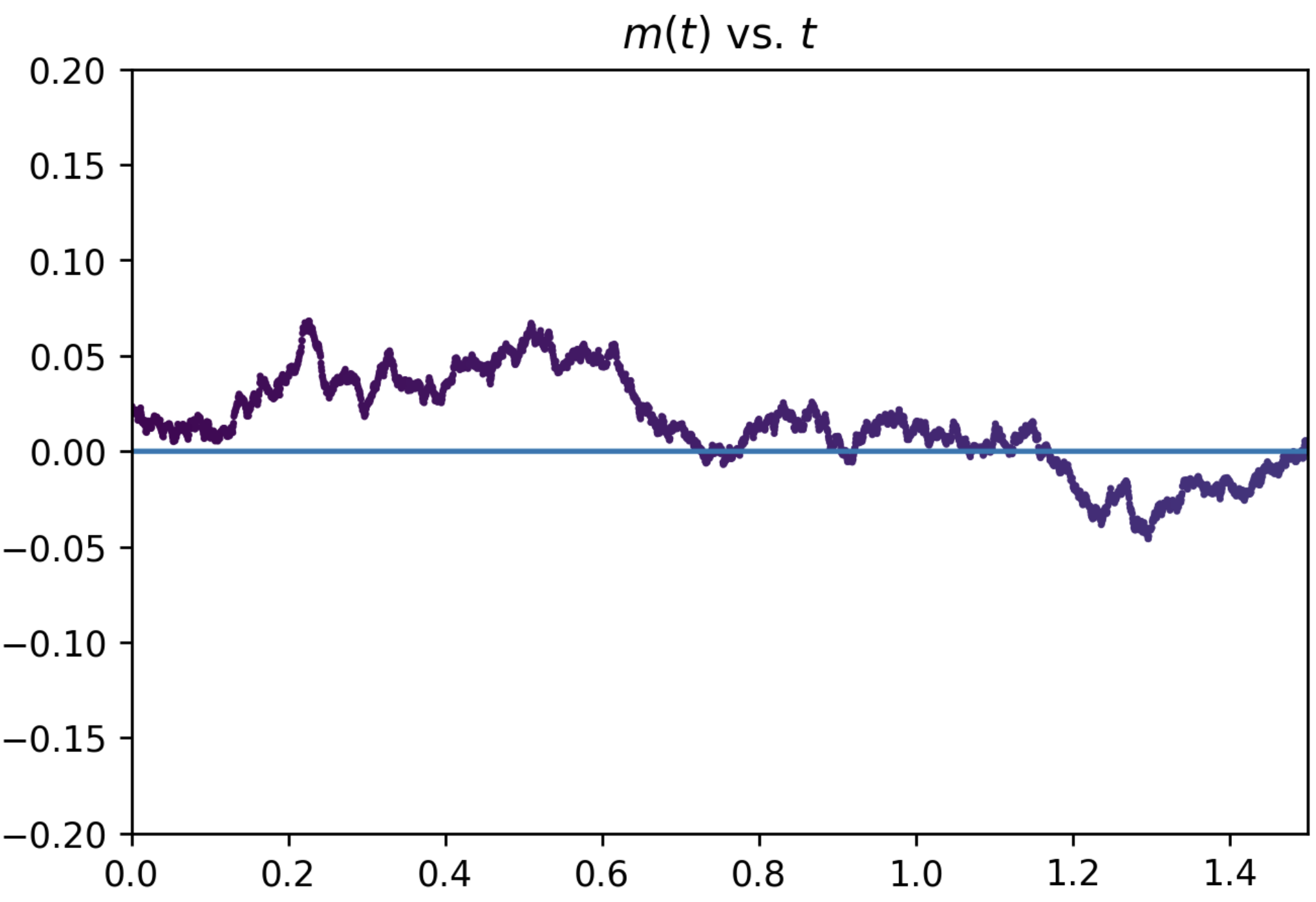}}   
    \subfigure[$\lambda = 1.1$]{       \includegraphics[width=.23\linewidth]{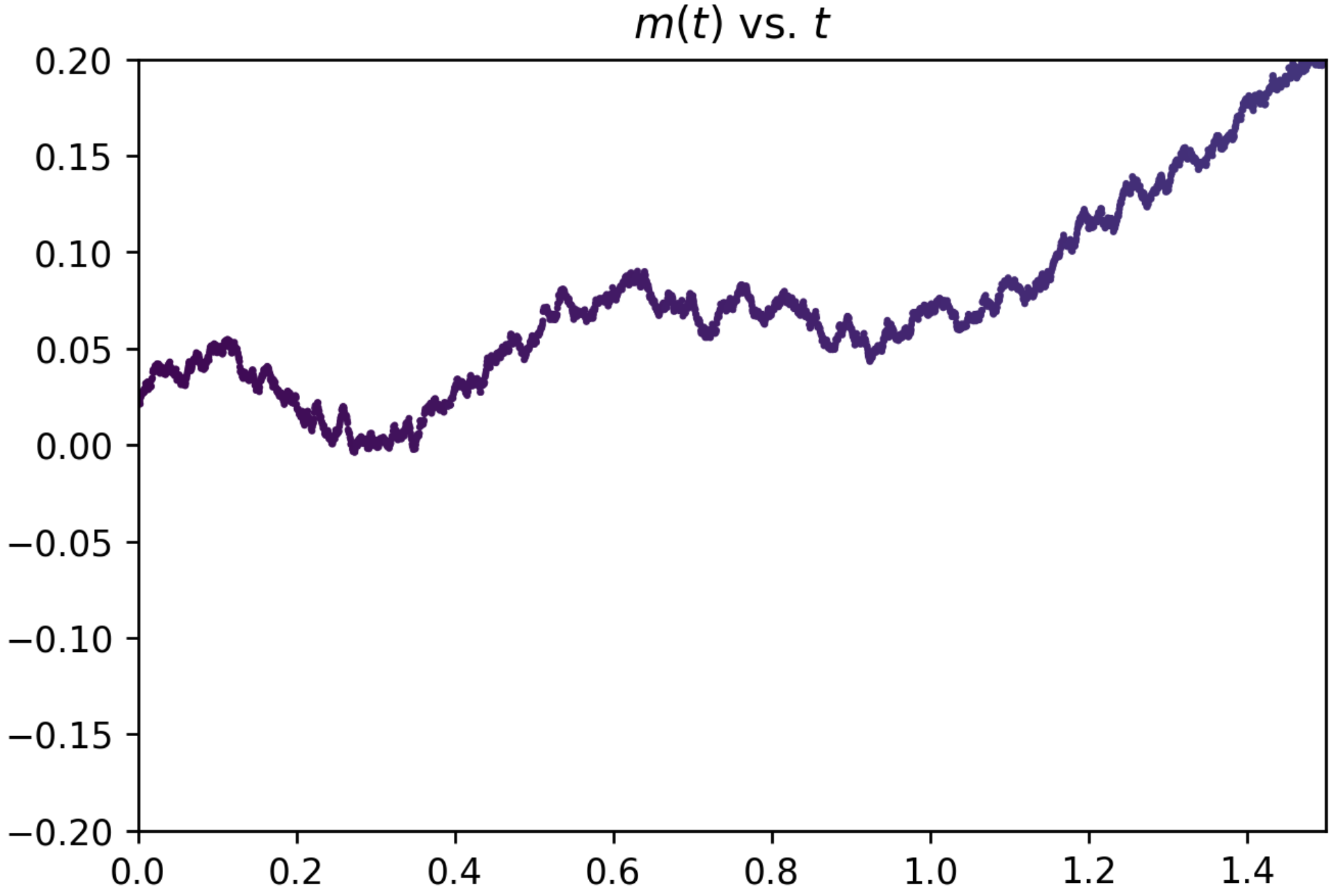}}    
  \subfigure[$\lambda = 1.2$]{       \includegraphics[width=.23\linewidth]{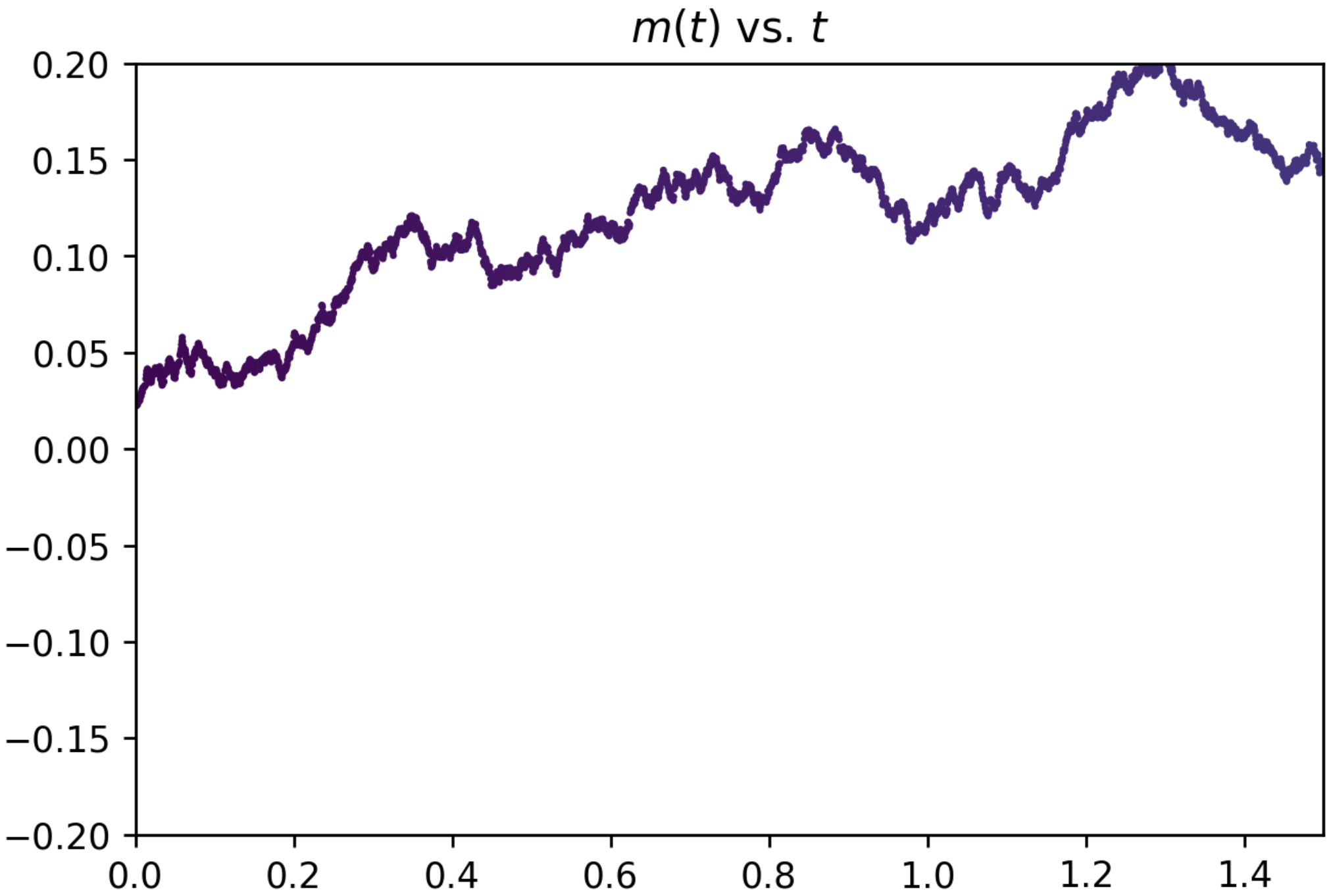}}    
  \caption{Matrix PCA in dimension $n=2000$ with various values of $\lambda$ near the critical $\lambda=1$. Depicted is the evolution of summary statistic $m(t)$ zoomed in about an $O(n^{-1/2})$ window of $m=0$ for $1.5*n$ steps of SGD initialized randomly. In (a)--(b) one sees stable OU processes, and in (c)--(d) one sees unstable OU processes.}\label{fig:MatrixPCA-m-t-zoom}
\end{figure}

\begin{figure}
    \centering
  \subfigure[$\lambda=0.8$]{   \includegraphics[width=.23\linewidth]{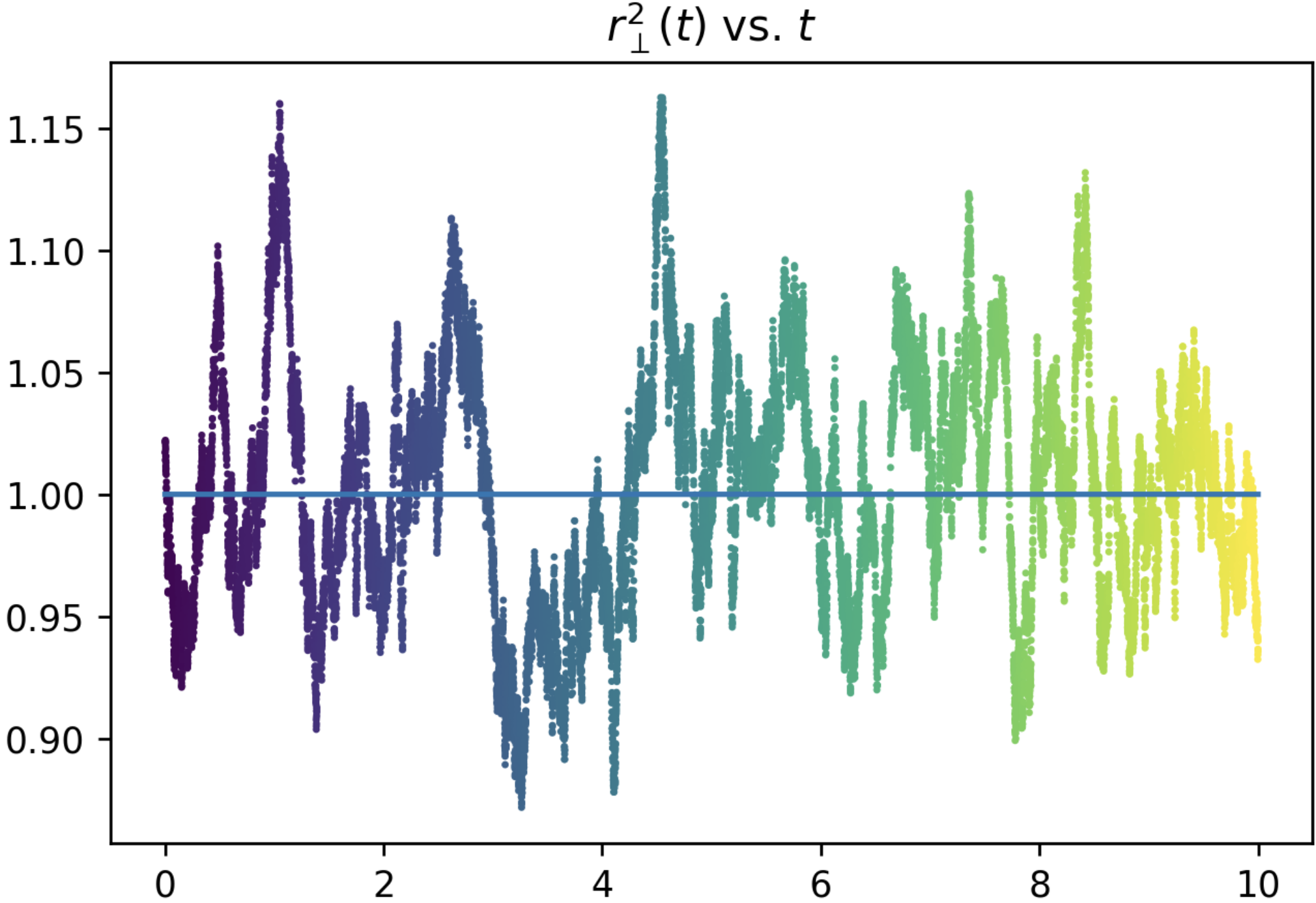}}
  \subfigure[$\lambda =0.9$]{       \includegraphics[width=.23\linewidth]{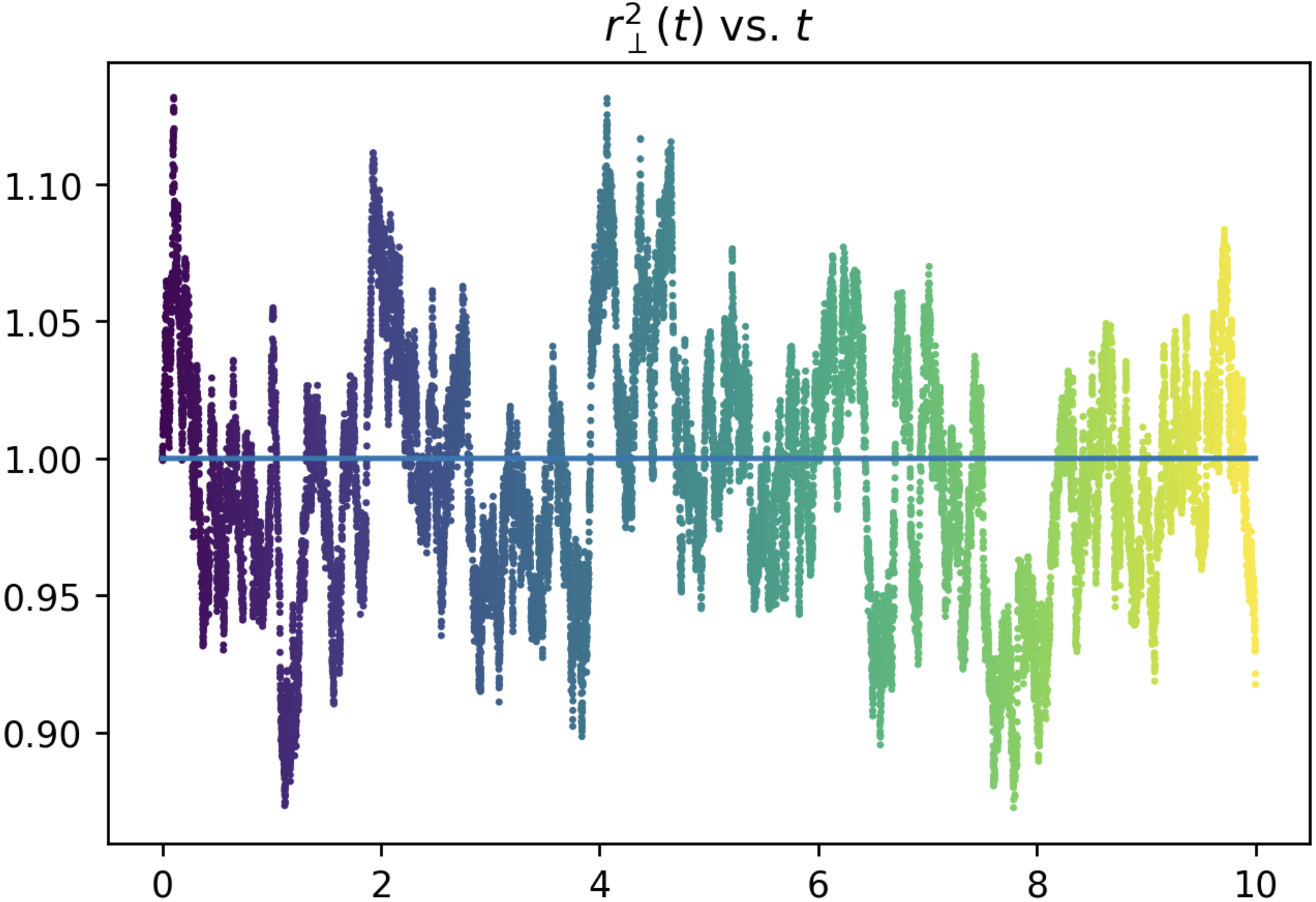}}   
    \subfigure[$\lambda = 1.1$]{       \includegraphics[width=.23\linewidth]{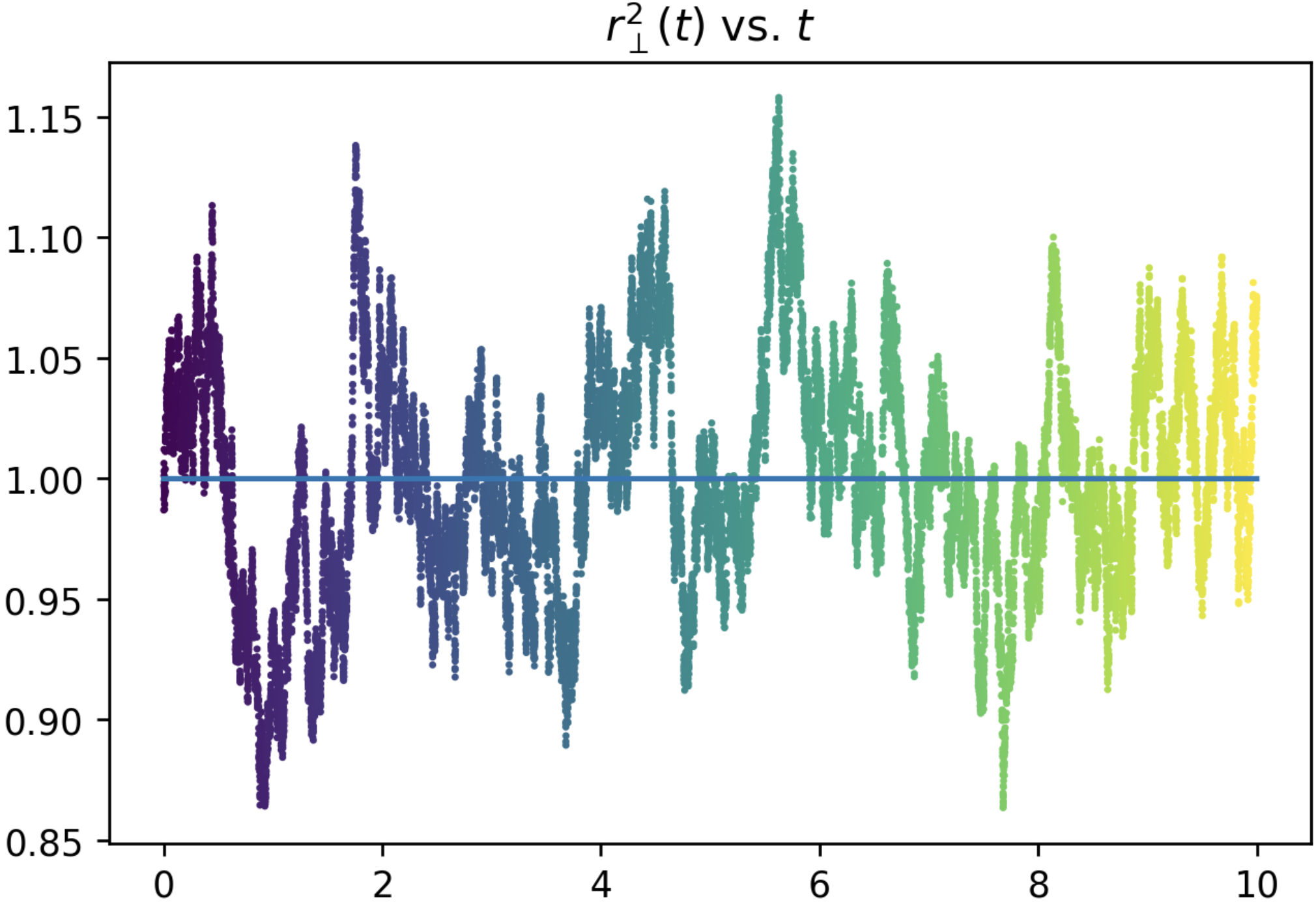}}    
  \subfigure[$\lambda = 1.2$]{       \includegraphics[width=.23\linewidth]{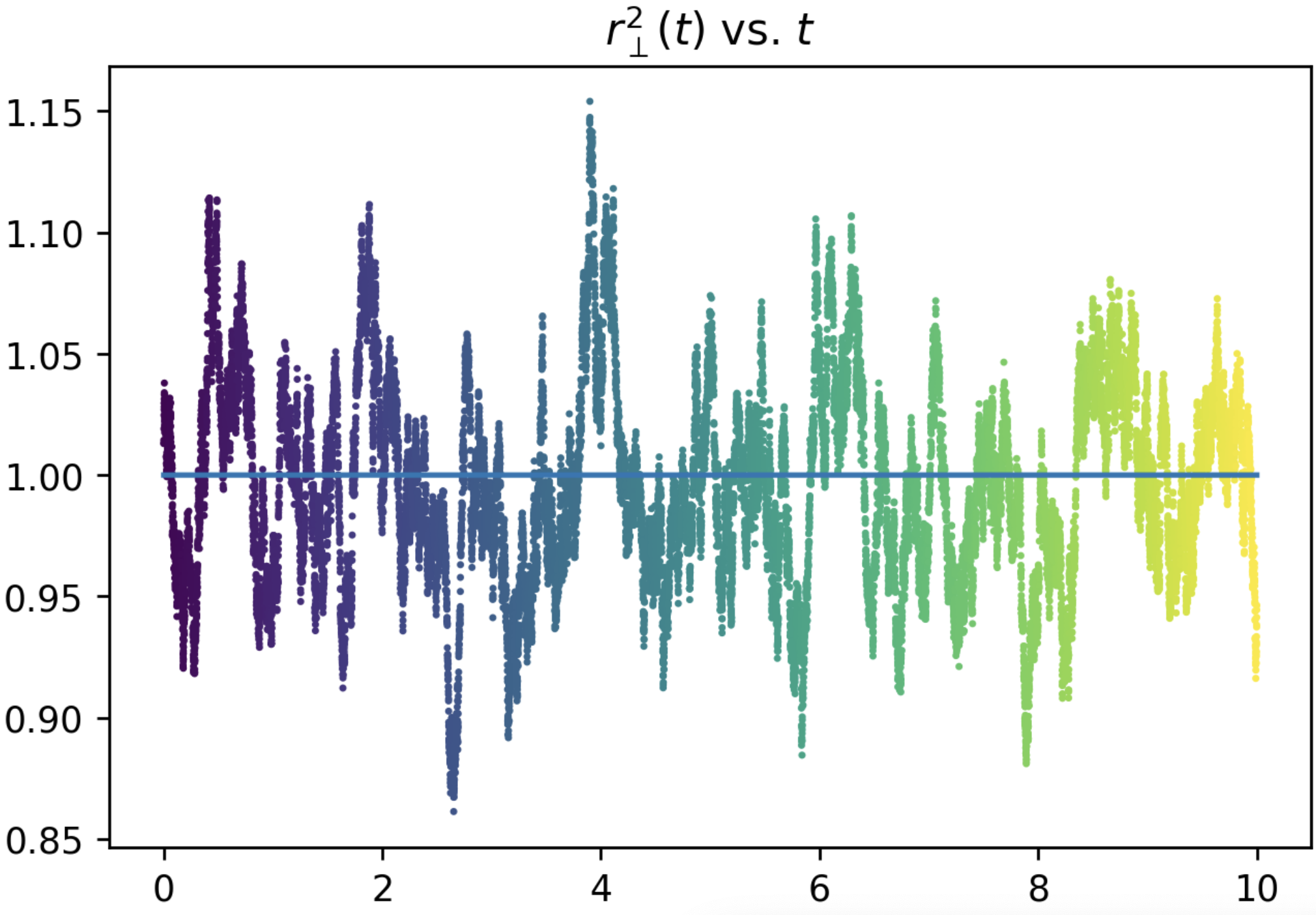}}    
  \caption{Matrix PCA in dimension $n=2000$ with various values of $\lambda$ near the critical $\lambda=1$. Depicted is the evolution of summary statistic $r_\perp^2(t)$ for $10n$ steps of SGD initialized randomly. This follows a stable OU process independent of $\lambda$.}\label{fig:MatrixPCA-r-t}
\end{figure}

\section{Two-layer networks for classifying a binary Gaussian mixture}\label{sec:binary-gmm}

\subsection{Model and Background}
As a warm-up to the XOR problem that we will consider in Section~\ref{sec:XOR-results}, we 
consider the problem of supervised classification of a binary Gaussian mixture model (binary GMM)
which is defined as follows.  Suppose that we are given i.i.d. samples of the form $Y=(y ,X)$, where $y$ is
a $\{0,1\}$-valued $Ber(1/2)$ random variable and, conditionally on $y$, we have 
\[
X \sim \cN((2y-1) \mu, I/\lambda)\,,
\]
where $\mu\in\R^N$ is a fixed unit vector, $I$ is the identity on $\R^N$,
and $\lambda>0$ is the signal-to-noise ratio.  Here, $y$ is called the class label and $X$ is called the data.
Our goal is to construct an estimator, $\hat y = \hat y(X)$, of the class label, $y$, which depends on the data, $X$, alone.

It is classical \cite{anderson1962introduction} that the Bayes optimal estimator in this setting is given by $\hat{y}=\operatorname{sgn}( \mu\cdot x)$.
Furthermore, this estimator can be achieved by (a rounding of) the output of a single layer neural network trained using
the binary-cross-entropy loss \eqref{eq:Loss-bgmm}. This is also called logistic regression. The single-layer setting can be easily analyzed via our framework. Our focus here, however, is to demonstrate our analysis on multi-layer neural networks.

To that end, we consider now the same setting, except that we will estimate the class labels using a simple
two-layer neural network. (Note that the Bayes' optimal estimator is still expressible by this architecture.)  At first glance, this may seem an elementary setting with little to say. However as we will see, even in this simple setting surprising behaviour can occur in the high-dimensional setting which runs counter to common intuition.  Furthermore, as we will see in Section~\ref{sec:XOR-results}, the phenomena occurring here also appear in richer problems such as the XOR problem.

\subsection{Analysis}
For the sake of concreteness, we consider classification via the following architecture
(though our techniques generalize to other settings \emph{mutatis mutandis}):
The first layer has weights $(W_{1},W_{2})\in \R^N\times\R^N$
and ReLu activation, $g(x)=x\vee0$; 
and the second layer has weights
$v_{1},v_{2}\in\mathbb{R}$ and sigmoid activation, $\sigma(x)=1/(1+e^{-x})$. 
The output of the multi-layer network is then $\sigma(v\cdot g(WX))$
Our parameter space is then  $\cX_n=\mathbb{R}^{2N+2}$ and we therefore take $n=2N+2$ when applying
Theorem~\ref{thm:main}. 

As we are interested in supervised classification, we take the usual \emph{binary cross-entropy
loss} with $\ell^{2}$ regularization. In our setting, this reduces to optimizing
\begin{align}\label{eq:Loss-bgmm}
L\big((v_{i},W_{i})_{i\in\{1,2\}};(y,X)\big)=-yv\cdot g(WX)+\log(1+e^{v\cdot g(WX)})+p(v,W)\,,
\end{align}
where $g$ is applied component wise 
and $p(v,W):=(\alpha/2)(\norm{v}^{2}+\norm{W}^{2})$.

It can be shown {(see  Lemma~\ref{lem:summary-stats-bGMM}) 
that the law of the loss at a given point, $(v,W)\in\cX_n$,
depends only on the $7$ summary statistics, 
\begin{equation}\label{eq:u-bgmm}
\bu_n = (v_1,v_2,m_1,m_2, R_{11}^\perp, R_{12}^{\perp}, R_{22}^{\perp}),
\end{equation}
where $m_i = W_i\cdot \mu$ and $R_{ij}^\perp = W_i^\perp\cdot W_j^\perp$ 
with $W_{i}^{\perp}=W_{i}-m_{i}\mu$ denoting the part of $W_{i}$ orthogonal to $\mu$.
For a point, $(v,W)\in \cX_n$, let 
\begin{align}
\mathbf{A}_{i}^\mu=\mathbb{E}[X\!\cdot\!\mu\mathbf{1}_{W_{i}\cdot X\ge0}&(\sigma(v\!\cdot\!g(WX))-y)]\,, \qquad
\mathbf{A}^\perp_{ij}=\mathbb{E}[X\!\cdot\! W_j^\perp\mathbf{1}_{W_{i}\cdot X\ge0}(\sigma(v\!\cdot\!g(WX))-y)]\,, \nonumber\\
\mathbf{B}_{ij}&=\mathbb{E}[\mathbf{1}_{W_{i}\cdot X\ge0} \mathbf{1}_{W_{j}\cdot X\ge0}(\sigma(v\! \cdot \!g(WX))-y)^{2}]. \label{eq:bgmm-A-B}
\end{align}
By similar reasoning to Lemma~\ref{lem:summary-stats-bGMM}, it can be seen that these are functions only of $\bu_n$, and we denote them as such, e.g.,  $\mathbf A_i^\mu= \mathbf A_i^\mu(\bu_n)$. See Section~\ref{sec:binary-gmm-proofs}.
The critical scaling for $\delta$ is then of order $\Theta(1/n)$ and we obtain the following.

\begin{prop}\label{prop:bgmm-ballistic}
Let $\mathbf{u}_{n}$ be as in \eqref{eq:u-bgmm} and fix any $\lambda>0$ and
$\delta_{n}=\sfrac{c_{\delta}}{N}$. Then 
$\mathbf{u}_{n}(t)$ converges to the solution of the ODE system,
$\dot{\mathbf{u}}_{t}=  -\mathbf{f}(\mathbf{u}_{t})+\mathbf{g}(\mathbf{u}_{t})$,
initialized from $\lim_{n\to\infty}(\mathbf{u}_{n})_{*}\mu_{n}$,
with:
\begin{align*}
f_{v_i} & =m_i \mathbf{A}_{i}^\mu(\bu)+\mathbf{A}^\perp_{ii}(\bu)+\alpha v_{i},  \qquad \qquad \qquad  f_{m_i}  =v_i \mathbf{A}_{i}^\mu(\bu)+\alpha m_{i}, \\
& \qquad \qquad \qquad  f_{R_{ij}^{\perp}}  = v_i \mathbf A_{ij}^\perp (\bu) + v_j \mathbf{A}_{ji}^\perp (\bu)  + 2\alpha R_{ij}^\perp\,, 
\end{align*}
and correctors ${g}_{v_i}=g_{m_i}=0$,
${g}_{R_{ij}^\perp}=c_{\delta}\frac{v_i v_j }{\lambda}\mathbf{B}_{ij}$ for $i,j = 1,2$. 
\end{prop}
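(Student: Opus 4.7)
The strategy is to apply Theorem~\ref{thm:main} to the seven-dimensional summary statistic $\bu_n$ of~\eqref{eq:u-bgmm} at the critical scale $\delta_n = c_\delta/N$, reading off $-\bff$ from $\cA_n \bu_n$ and $\bg$ from $\delta_n \cL_n \bu_n$. Throughout, Lemma~\ref{lem:summary-stats-bGMM} guarantees that $\mathbf{A}_i^\mu, \mathbf{A}_{ij}^\perp, \mathbf{B}_{ij}$ are genuinely functions of $\bu_n$ alone, so the identified drift and corrector are automatically expressed as functions of $\bu_n$. I work on an exhaustion $(E_K)_K$ on which each component of $\bu_n$ lies in a compact set and $R_{ii}^\perp$ is bounded below (so that $\|W_i\|$ is bounded).

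To verify $\delta_n$-localizability, item (1) of Definition~\ref{defn:localizable} is immediate: $v_i, m_i$ are linear, while $R_{ij}^\perp$ are quadratic polynomials with constant Hessians (projection-type matrices of bounded operator norm) and vanishing third derivatives. For item (2), by the $\mu \leftrightarrow -\mu$ symmetry of the data, $\mathbb{E}[X \mathbf{1}_{W_i X \ge 0}(\sigma - y)]$ lies in $\operatorname{span}(\mu, W_1^\perp, W_2^\perp)$ with coefficients that are bounded functions of $\bu_n$ on $E_K$, so $\|\nabla \Phi\|$ is uniformly bounded; since $\|X\|$ has Gaussian tails at scale $\sqrt{N}$, one obtains $\mathbb{E}[\|\nabla H\|^8] = O(N^4) = O(\delta_n^{-4})$. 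For item (3a), $\langle \nabla H, \nabla u_i\rangle$ either equals a coordinate of $\nabla H$ (for $v_i$) or projects $\nabla_{W_i} H$ onto a direction ($\mu$ or $W_j^\perp$) along which $X$ is Gaussian with bounded variance, so the fourth moments are $O(1) \ll \delta_n^{-2}$. Item (3b) is trivial for the linear statistics and for $R_{ij}^\perp$ reduces to bounding $\operatorname{Var}(\|\nabla_{W_i} H\|_{\mu^\perp}^2)$ (or its polarization), which a direct fourth-moment calculation on the noise $Z^\perp/\sqrt\lambda$ shows is $O(N) = o(\delta_n^{-3})$.

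The drift follows from the chain rule $\cA_n u = \langle \nabla \Phi, \nabla u\rangle$: decomposing $g(W_i X) = (m_i X\!\cdot\!\mu + W_i^\perp \!\cdot\! X)\mathbf{1}_{W_i X \ge 0}$ yields $\cA_n v_i = m_i \mathbf{A}_i^\mu + \mathbf{A}_{ii}^\perp + \alpha v_i$; projecting $\nabla_{W_i}\Phi$ onto $\mu$ gives $\cA_n m_i$; projecting onto $W_j^\perp$ and $W_i^\perp$ together with $W_i \cdot W_j^\perp = R_{ij}^\perp$ gives $\cA_n R_{ij}^\perp$. The corrector is the main computation: since $\nabla^2 v_i = \nabla^2 m_i = 0$, only the $R_{ij}^\perp$ contribute, and for $R_{ii}^\perp$ one has $\delta_n \cL_n R_{ii}^\perp = \delta_n \operatorname{tr}(V_{W_iW_i} P_{\mu^\perp})$. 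Writing $V_{W_i W_i} = \mathbb{E}[\nabla_{W_i} L (\nabla_{W_i} L)^T] - \mathbb{E}[\nabla_{W_i} L]^{\otimes 2}$, the subtracted rank-one piece contributes $O(\delta_n)$ since $\|\mathbb{E}[\nabla_{W_i} L]\| = O(1)$; the leading piece equals $v_i^2 \mathbb{E}[\|X^\perp\|^2 \mathbf{1}_{W_i X \ge 0}(\sigma - y)^2]$ (where $X^\perp$ is the $\mu^\perp$-component of $X$). Splitting $X^\perp$ into its components in $\operatorname{span}(W_1^\perp, W_2^\perp)$ and its orthogonal complement, the latter is Gaussian with variance $1/\lambda$ on an $(N - O(1))$-dimensional subspace and is independent of $W_i X, W_j X$, so its squared norm concentrates at $(N/\lambda)(1+o(1))$, while the former contributes $O(1)$. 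Multiplying by $\delta_n = c_\delta/N$ yields $\delta_n \cL_n R_{ii}^\perp \to (c_\delta v_i^2/\lambda) \mathbf{B}_{ii}$ uniformly on $E_K$, and the analogous computation with the cross-block Hessian $\nabla^2_{W_i W_j} R_{ij}^\perp = P_{\mu^\perp}$ gives $g_{R_{ij}^\perp} = (c_\delta v_i v_j/\lambda)\mathbf{B}_{ij}$. Each entry of $\delta_n J_n V J_n^T$ is $\delta_n \cdot O(1)$ by item (3a), so $\Sigma \equiv 0$, and $\mathbf{h} = -\bff + \bg$ is locally Lipschitz since each of $\mathbf{A}, \mathbf{B}$ is a Gaussian integral smoothly depending on $\bu$. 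Theorem~\ref{thm:main} then delivers the claim. The main obstacle is the corrector step: carefully isolating the large-$N$ Gaussian component of $X^\perp$ carrying the factor $N/\lambda$ from its correlations with the ReLU indicators and sigmoid-loss factors, so that the limit is exactly $(c_\delta v_i v_j/\lambda)\mathbf{B}_{ij}$ rather than a perturbation of it.
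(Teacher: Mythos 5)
Your proposal is correct and follows the same overall route as the paper: check $\delta_n$-localizability for $\bu_n$, read off $\mathbf f$ from $\langle\nabla\Phi,\nabla u\rangle$ via the structure of $\mathbf A_i$, identify the corrector from the $R_{ij}^\perp$ Hessians paired with $V_{W_iW_j}$, and show $\delta_n J V J^T\to 0$. The one place you genuinely diverge is the extraction of the corrector limit: the paper centers $\tfrac1N\|X^\perp\|^2$ at $\lambda^{-1}$ and controls the error by Cauchy--Schwarz, and handles the subtracted term $\tfrac1N\langle \mathbf A_i-\mathbf A_i^\mu\mu,\mathbf A_j-\mathbf A_j^\mu\mu\rangle$ by a two-replica Cauchy--Schwarz bound of order $1/(\lambda\sqrt N)$; you instead split $X^\perp$ into its component in $\operatorname{span}(W_1^\perp,W_2^\perp)$ and the $(N-O(1))$-dimensional remainder, which is independent of the indicator/sigmoid factors, so the expectation factorizes exactly and yields $(c_\delta v_iv_j/\lambda)\mathbf B_{ij}$ up to an explicit $O(1/N)$ error, while the subtracted mean term is killed by the direct bound $\|\mathbb E[\nabla_{W_i}L]\|=O(1)$ on $E_K$. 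Your version is slightly cleaner (an exact factorization rather than concentration estimates), the paper's is more robust to settings where exact independence of the bulk directions is unavailable; both give the required uniform convergence on $\bu_n^{-1}(E_K)$. One small inaccuracy: in item (3b) the quantity $\mathbb E[\langle\nabla^2 R_{ij}^\perp,\nabla H\otimes\nabla H-V\rangle^2]$ is not $O(N)$ but $O(N^2)$, since the fluctuation of the indicator/sigmoid factor multiplies the concentrated bulk norm $\|X^\perp\|^2\approx N/\lambda$ and so contributes at order $(N/\lambda)^2$; this is harmless, as only $o(\delta_n^{-3})=o(N^3)$ is needed (the paper simply uses the crude bound $4\,\mathbb E\|\nabla H\|^4=O(N^2)$).
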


\subsection{Low variance asymptotics}
Due to the Gaussian integrals defining $\mathbf f,\mathbf g$, it is difficult to 
analyze the ODE system defined by Proposition~\ref{prop:bgmm-ballistic}, let alone any rescaled effective dynamics. 
For ease of analysis, we next  send $\lambda\to\infty$ corresponding to a small
noise regime for the Gaussian mixture. We emphasize that this limit
is taken after $n\to\infty$ and therefore is still approximately
on the critical scale of $\lambda=\Theta(1)$ at which there is a transition
in the existence of any fixed point which is a good classifier. In particular, if $\lambda = \lambda_n$ is any diverging sequence, then the limiting effective dynamics would exactly match that attained by now sending $\lambda\to\infty$. 
In Figure~\ref{fig:GMM-endpoints-various-lambda}, we demonstrate numerically that the following predicted fixed points from the $\lambda\to\infty$ limit match those arising at finite large $N$ and $\lambda>0$.\footnote{For large $\lambda$, this is indeed a quantitative approximation as $\mathbf f,\mathbf g$ exhibit locally Lipschitz dependence on $\lambda^{-1}$, so
the corresponding dynamics converges as $\lambda\to\infty$ by classical well-posedness results (see, e.g., \cite{teschl2012ordinary})}

\begin{prop}\label{prop:bgmm-ballistic-noiseless}
The $\lambda\to\infty$ limit of the ODE system of Proposition~\ref{prop:bgmm-ballistic} is given by 
\begin{align*}
\dot{m}_{i} & =\begin{cases}
\tfrac{v_{i}}{2}\sigma(-v\cdot m)-\alpha m_{i} & m_{1}m_2 >0\\
\tfrac{v_{i}}{2}\sigma(-v_{i}m_{i})-\alpha m_{i} & else\\
\end{cases}\,, & \dot{v}_{i} & =\begin{cases}
\tfrac{m_{i}}{2}\sigma(-v\cdot m)-\alpha v_{i} &m_{1}m_2>0\\
\tfrac{m_{i}}{2}\sigma(-v_{i}m_{i})-\alpha v_{i} & else
\end{cases}\,,
\end{align*}
and $\dot R_{ij}^\perp = -2\alpha R_{ij}^\perp$. 
The fixed points of this system are classified as follows. All fixed points
have $R_{ij}^\perp=0$ and $m_{i}=v_{i}$ for $i,j=\{1,2\}$. In $(v_{1},v_{2})$, the coordinates are classified by 
\begin{enumerate}
\item A fixed point at $(v_{1},v_{2})=(0,0)$ that is stable if $\alpha>\sfrac{1}{4}$;
\item If $\alpha<\sfrac{1}{4}$, two unstable sets of fixed points at the quarter-circles given by $(v_{1},v_{2})$
having $v_1v_2>0$ such that $v_{1}^{2}+v_{2}^{2}=C_{\alpha}$ for $C_{\alpha}:=\log(1-2\alpha)-\log(2\alpha)$.
\item If $\alpha<\sfrac{1}{4}$, two stable fixed points at $(v_{1},v_{2})$ equals $(\sqrt{C_\alpha},-\sqrt{C_\alpha})$
and $(-\sqrt{C_{\alpha}},\sqrt{C_{\alpha}})$.
\end{enumerate}
If $\mu_{n}$ is e.g., given by $(v_{1},v_{2})\sim\mathcal{N}(0,I_2)$
and $W_{1},W_{2}\sim\mathcal{N}(0,I_{N}/(\lambda N))$ then $\nu :=\lim (\mathbf{u}_{n})_{*}\mu_{n}$ is $\mathcal{N}(0,I_{2})$ in the $v_{1},v_{2}$ coordinates, and is in
the basin of attraction of the quarter-circles of item (2) with probability
$\sfrac 1 2$ and the basin of attraction of the stable fixed points of (3)
with probability $\sfrac 1 2$. 
\end{prop}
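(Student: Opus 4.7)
The plan is in four stages: (i) pass to the $\lambda\to\infty$ limit in the Gaussian integrals $\mathbf A_i^\mu$, $\mathbf A_{ij}^\perp$, $\mathbf B_{ij}$ of \eqref{eq:bgmm-A-B}; (ii) assemble the limiting piecewise ODE; (iii) enumerate and classify the fixed points with their stability; (iv) analyze the basin structure from the prescribed Gaussian initialization.

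For (i), since $X\mid y\sim\mathcal N((2y-1)\mu, I/\lambda)$, $X\to(2y-1)\mu$ in probability as $\lambda\to\infty$. Boundedness of the sigmoid and indicators and dominated convergence give $\mathbf A_{ij}^\perp\to 0$ (as $X\cdot W_j^\perp\to 0$) and $\mathbf B_{ij}/\lambda\to 0$, while
\[
\mathbf A_i^\mu \to \tfrac12\mathbf 1_{m_i\ge 0}\bigl(\sigma(v\cdot m^+)-1\bigr) - \tfrac12\mathbf 1_{m_i\le 0}\sigma(v\cdot m^-)\,,
\]
where $m_j^\pm:=(\pm m_j)_+$. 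A short case analysis on the sign of $m_1 m_2$ reduces this to $-\tfrac12\sigma(-v\cdot m)$ when $m_1 m_2>0$ and to $-\tfrac12\sigma(-v_i m_i)$ otherwise. Substituting into the drifts of Proposition~\ref{prop:bgmm-ballistic} yields the stated piecewise ODE for $(v_i, m_i)$ and $\dot R^\perp_{ij}=-2\alpha R^\perp_{ij}$ (since $\mathbf g_{R^\perp_{ij}}\propto\mathbf B_{ij}/\lambda\to 0$), which decays globally.

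For (iii), the key observation is that in each piecewise regime $\tfrac{d}{dt}(v_i-m_i) = -(v_i-m_i)(\alpha+S/2)$ for a non-negative sigmoid value $S$, so $v_i=m_i$ at every fixed point and $\{v=m\}$ is globally exponentially attracting. Restricted to $\{v=m\}$, case A reads $\dot v_i = v_i\bigl(\tfrac12\sigma(-|v|^2)-\alpha\bigr)$, giving fixed points with $|v|^2\in\{0, C_\alpha\}$; the else case decouples into $\dot v_i = v_i\bigl(\tfrac12\sigma(-v_i^2)-\alpha\bigr)$ with $v_i\in\{0,\pm\sqrt{C_\alpha}\}$. Combined with the case constraints and the condition $C_\alpha>0$ iff $\alpha<\tfrac14$, this produces families (1)--(3). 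Stability at the origin follows from linearizing to two decoupled blocks $\left(\begin{smallmatrix}-\alpha&1/4\\1/4&-\alpha\end{smallmatrix}\right)$ with eigenvalues $-\alpha\pm\tfrac14$. The corners are asymptotically stable because the decoupled 1D dynamics on $\{v=m\}$ has derivative $-C_\alpha\sigma'(-C_\alpha)<0$ at $v_i^2=C_\alpha$. Linearization at the quarter-circle set $U$ is only neutral (contraction in $v-m$ and the radial direction, zero eigenvalue tangent to $U$); however arbitrarily near the endpoints of the open arc an arbitrarily small perturbation of $m_i$ (for the $i$ whose $v_i$ is small) pushes the orbit across $m_i=0$ into the else region, where the decoupled flow drags $v_i$ to $-\sqrt{C_\alpha}$ and the orbit reaches a corner rather than returning to $U$; by the paper's definition $U$ is therefore unstable.

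For (iv), applying an LLN/CLT to $W_1, W_2$ gives $(\mathbf u_n)_*\mu_n\to\nu$ with $(v_1,v_2)\sim\mathcal N(0,I_2)$ independent of $(m_1, m_2, R^\perp_{ij}) = 0$. The main obstacle is ruling out sign crossings of the $v_i$ or $m_i$ during the transient when $v-m$ is not yet negligible and when the dynamics potentially switches between regimes. Since $\dot m_i|_{t=0} = v_i/4$, just after $t=0$ one has $\mathrm{sgn}(m_i) = \mathrm{sgn}(v_i)$. From then on, no coordinate of $(v, m)$ can cross zero: if, say, $v_i = 0$ at some later time with $m_i\ne 0$, the explicit formula in either regime yields $\dot v_i = \mathrm{sgn}(m_i)\cdot(\text{positive})$, repelling the crossing, and analogously for $m_i$. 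Hence trajectories with $v_1 v_2>0$ remain in case A throughout and converge radially along $\{v=m\}$ to the unique quarter-circle point on the corresponding ray; trajectories with $v_1 v_2<0$ enter the else regime and each $v_i$ converges independently to $\mathrm{sgn}(v_i)\sqrt{C_\alpha}$, landing at one of the two corners. Under $(v_1,v_2)\sim\mathcal N(0,I_2)$ each event has probability $\tfrac12$.
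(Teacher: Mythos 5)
Your proposal is correct in substance, and for the derivation of the limiting ODE and the enumeration of fixed points it follows essentially the paper's route: the $\lambda\to\infty$ limits of $\mathbf{A}_i^\mu$, $\mathbf{A}_{ij}^\perp$ and $v_iv_j\mathbf{B}_{ij}/\lambda$ (your concentration/dominated-convergence step is a compressed version of the paper's decomposition into the terms $I,II,III$ using Lemmas~\ref{lem:bgmm-A-bound}--\ref{lem:large-lambda-est}), and your observation that $v_i-m_i$ contracts at rate $\alpha+S/2$ is exactly the paper's $u_i=v_i-m_i$ argument forcing $v=m$ at fixed points. Where you genuinely diverge is the stability analysis: the paper notes the limit flow is the gradient flow of the $\lambda=\infty$ population loss and compares the loss values at the critical sets (plus a repelling direction at an arc endpoint for instability), whereas you linearize at the origin and at the corners and exhibit an explicit escaping perturbation near an arc endpoint; both are valid, and yours is the more self-contained/standard route. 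One imprecision in your instability step: merely pushing $m_2$ across zero near the endpoint of the positive arc is not enough, since for $0<-m_2<v_2$ one has $\dot m_2>0$ and the orbit returns to the arc; you need the perturbation to make $v_2+m_2<0$, which is still an arbitrarily small perturbation near the endpoint because $v_2$ is itself small there, so the conclusion stands (this is the mirror image of the paper's perturbation $v_1=-\sqrt{C_\alpha}$, $v_2=m_2=\epsilon>0$). Your "converge to the unique quarter-circle point on the corresponding ray" is a slight overreach (the transient before $v=m$ can rotate the angle), but only convergence to the arc set is needed.

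For the basin probabilities you supply an argument the paper leaves largely implicit (it is only asserted here and treated carefully in the XOR setting in Section~\ref{subsec:K=4-success-prob}), and your sign-invariance argument is the right mechanism; the one delicate point you gloss over is that the limiting initial law sits exactly on the discontinuity locus $m_1=m_2=0$ of the piecewise drift, where the $\lambda\to\infty$ limit of the drift is \emph{not} obtained by plugging $m_i=0$ into the displayed formula: the remark following the proof of Proposition~\ref{prop:bgmm-ballistic-noiseless} in Section~\ref{sec:binary-gmm-proofs} shows $\mathbf{A}_i^\mu=0$ when $m_i=0$, which would give $\dot m_i=0$ rather than your $\dot m_i|_{t=0}=v_i/4$. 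The clean justification is the paper's device for XOR: treat the prelimit initialization as a mixture over the signs of the infinitesimal $m_i(0)$ and pass to the limit ODE for each signing; in the binary GMM the drift of $m_i$ near $m=0$ is $\tfrac{v_i}{2}\sigma(\cdot)-\alpha m_i\approx \tfrac{v_i}{4}$ under \emph{either} signing, so $\mathrm{sgn}(m_i)$ aligns with $\mathrm{sgn}(v_i)$ immediately regardless, and your subsequent no-sign-crossing argument then correctly yields the two basins with probability $\sfrac{1}{2}$ each. (Also a triviality: under the stated initialization $R_{ii}^\perp\to 1/\lambda$, not $0$; this is immaterial since $R^\perp$ evolves autonomously and decays.)
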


\begin{figure}
    \centering
  \subfigure[$\lambda=1$]{\includegraphics[width=.24\linewidth]{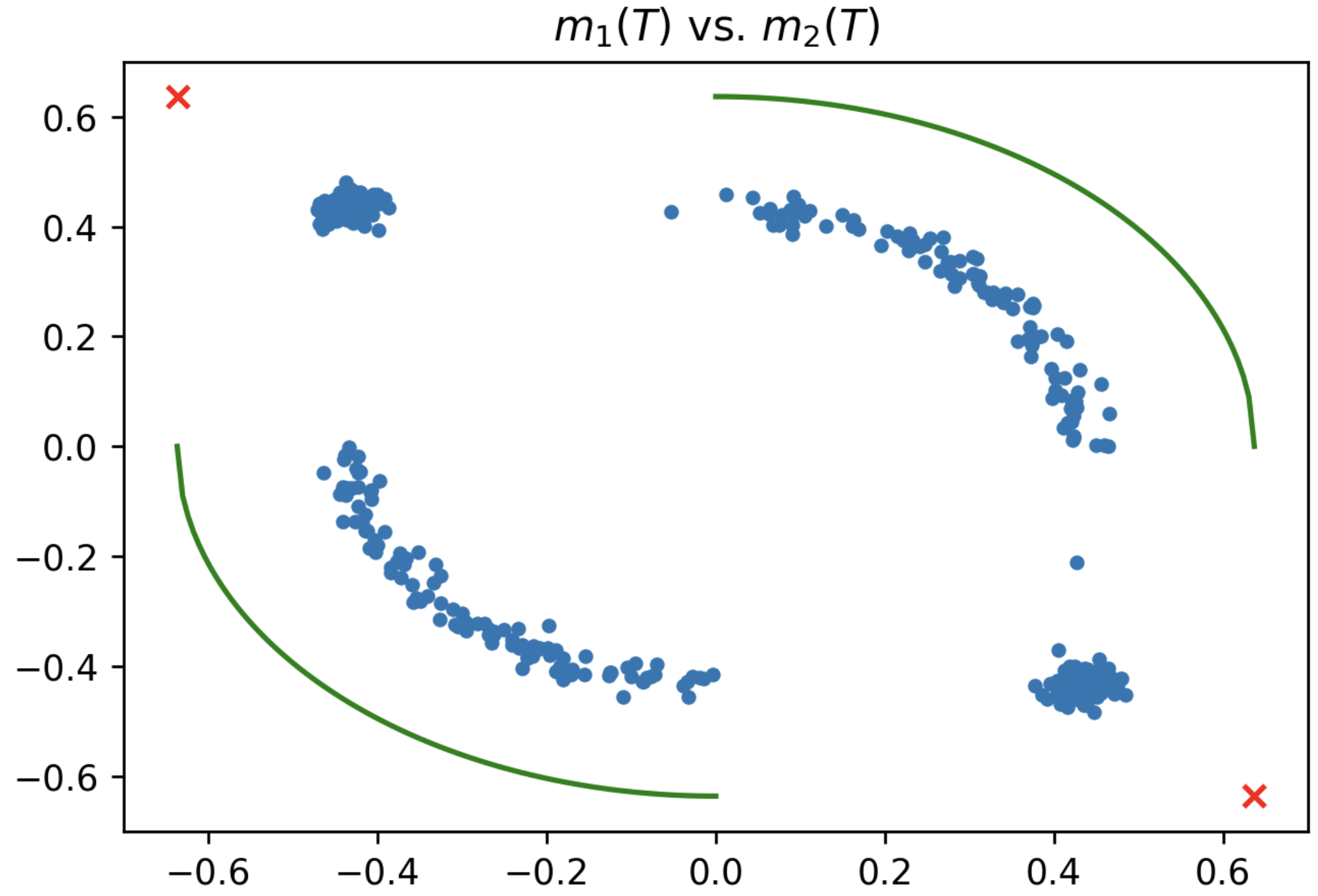}}
  \subfigure[$\lambda =5$]{\includegraphics[width=.24\linewidth]{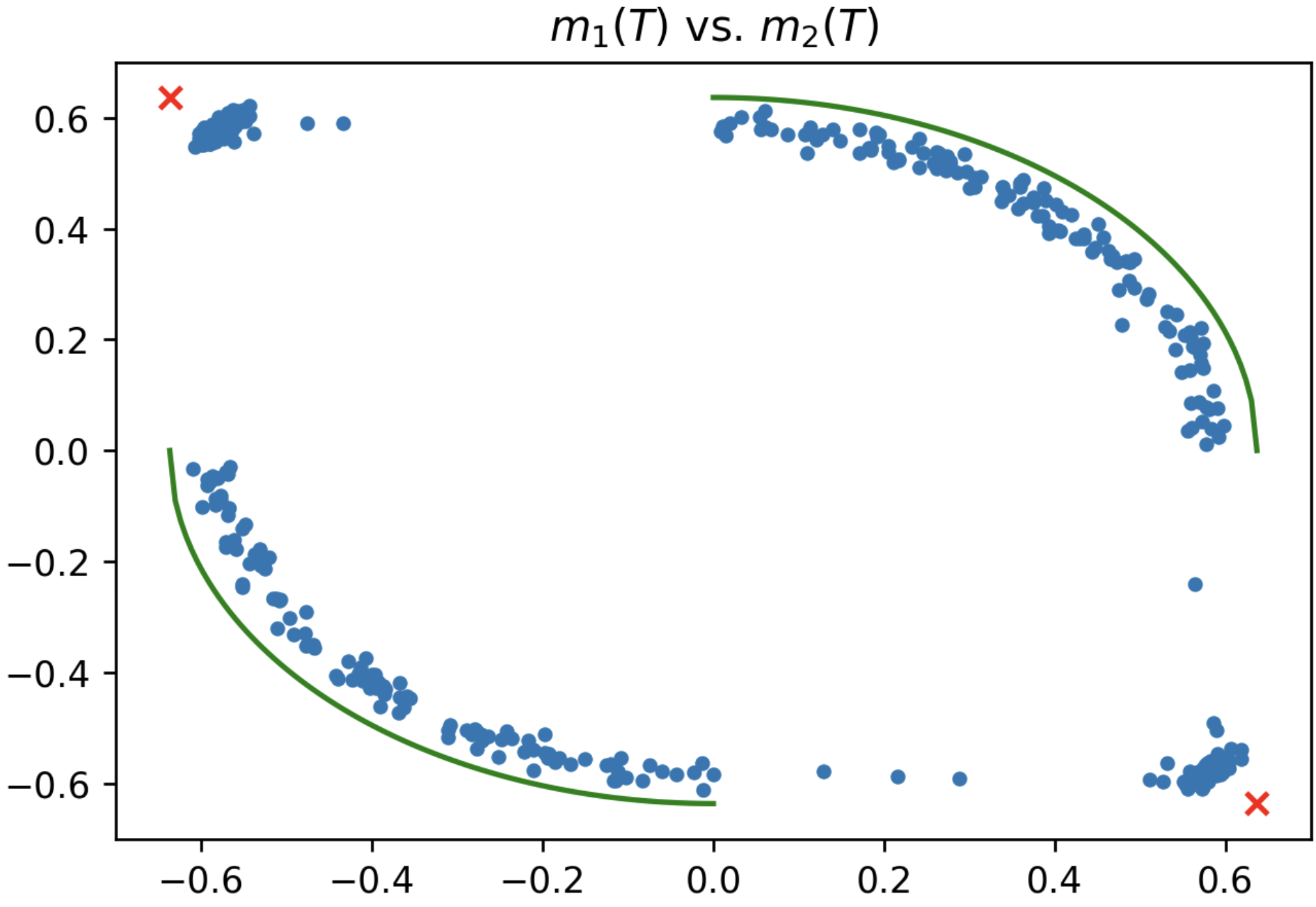}}   
    \subfigure[$\lambda = 10$]{\includegraphics[width=.24\linewidth]{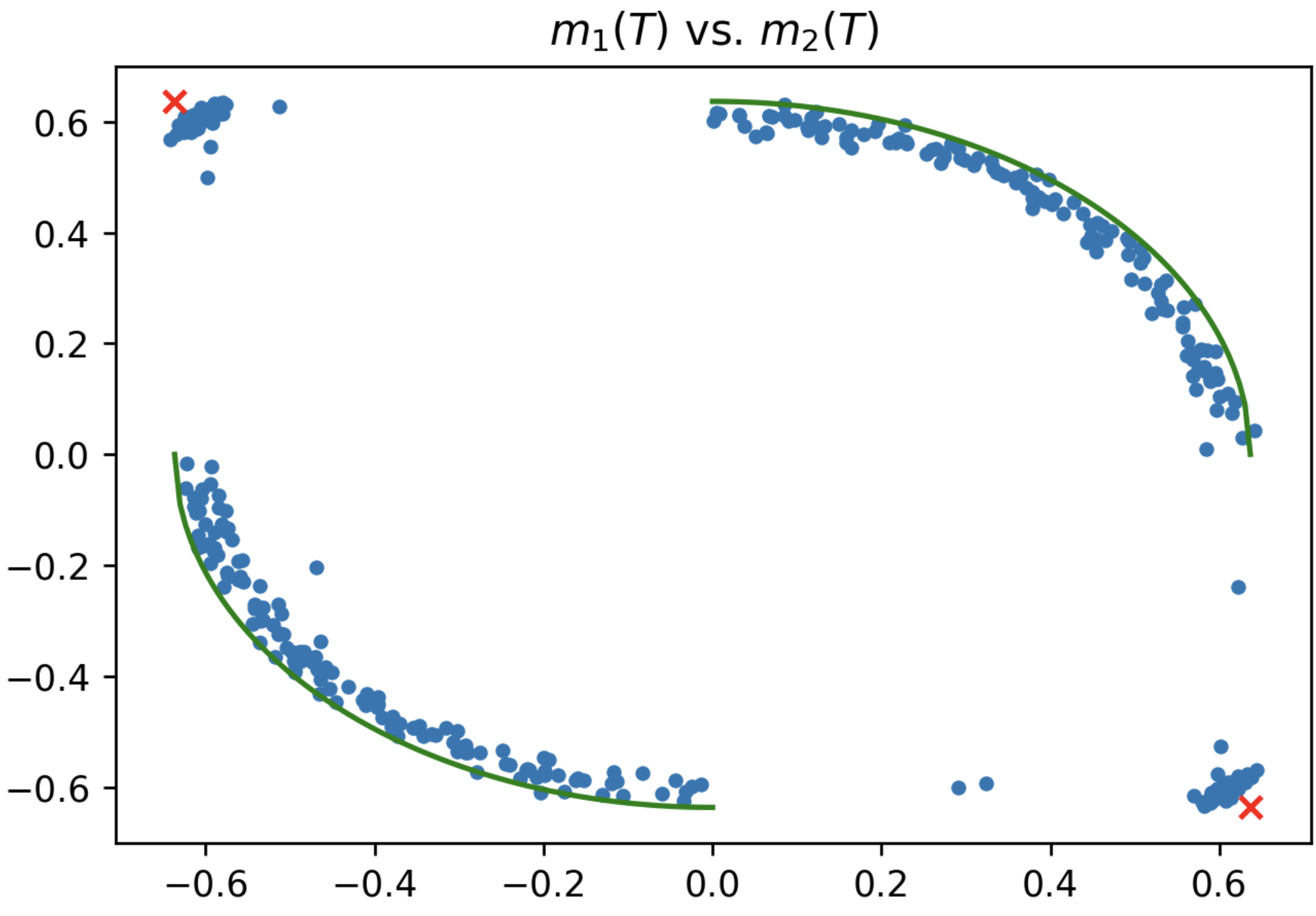}}    
  \subfigure[$\lambda = 100$]{\includegraphics[width=.24\linewidth]{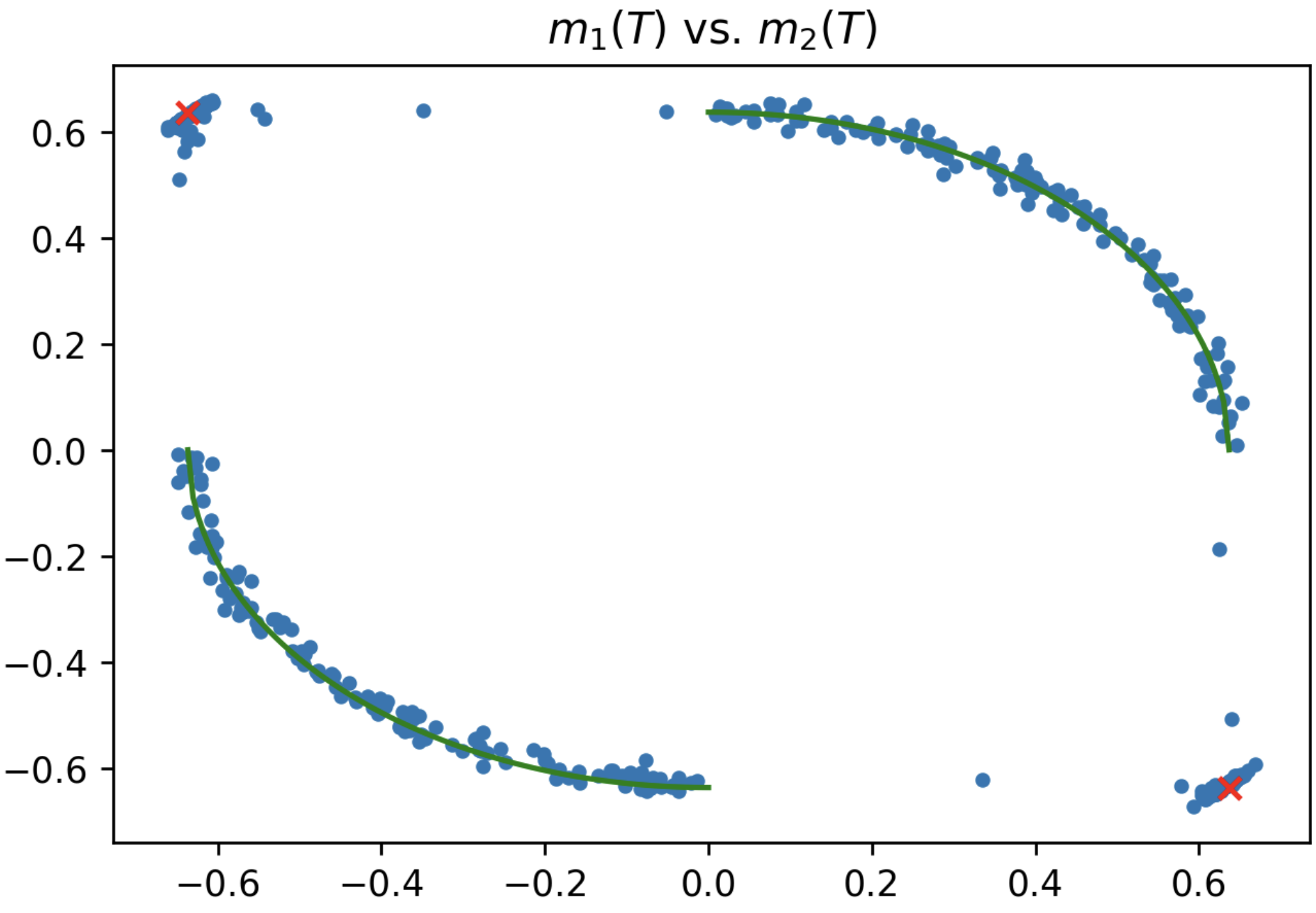}}    
  \caption{GMM in dimension $N=500$ with $\alpha = 0.1$ at various values of $\lambda$. Depicted are $(m_1,m_2)$ values that 500 runs of SGD converge to after $100N$ steps from a random Gaussian initialization. The ${\color{green!50!black}-}$ and ${\color{red}\times}$ are the unstable and stable fixed points of the $\lambda = \infty$ ballistic effective dynamics. The fixed points of the limiting effective dynamics have the same structure at finite $\lambda$ as $\lambda = \infty$, and that as $\lambda$ gets large quantitatively approach the $\lambda=\infty$ ones.}\label{fig:GMM-endpoints-various-lambda}
\end{figure}

\begin{figure}
    \centering
  \subfigure[$\lambda=1$]{   \includegraphics[width=.23\linewidth]{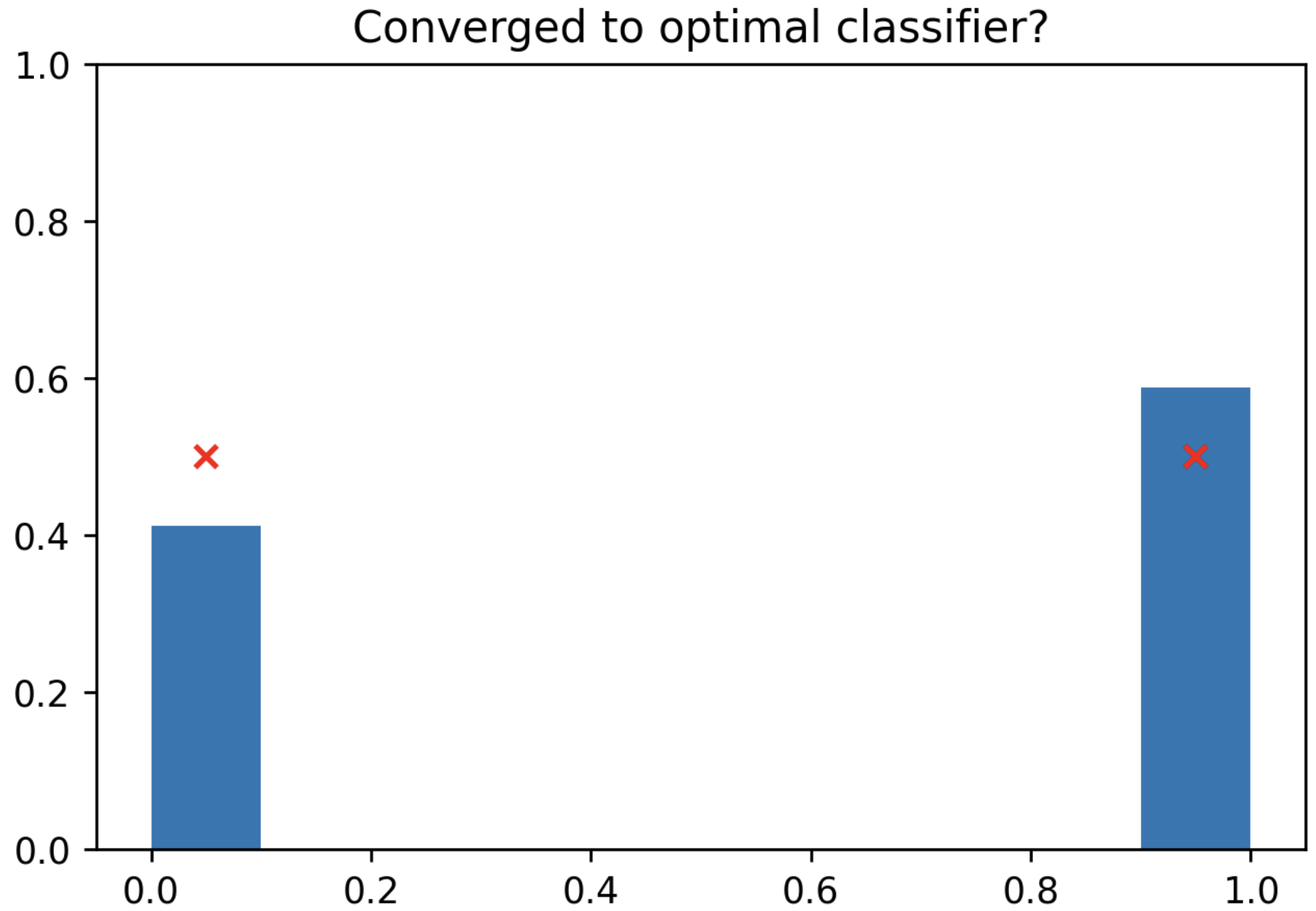}}
  \subfigure[$\lambda =5$]{       \includegraphics[width=.23\linewidth]{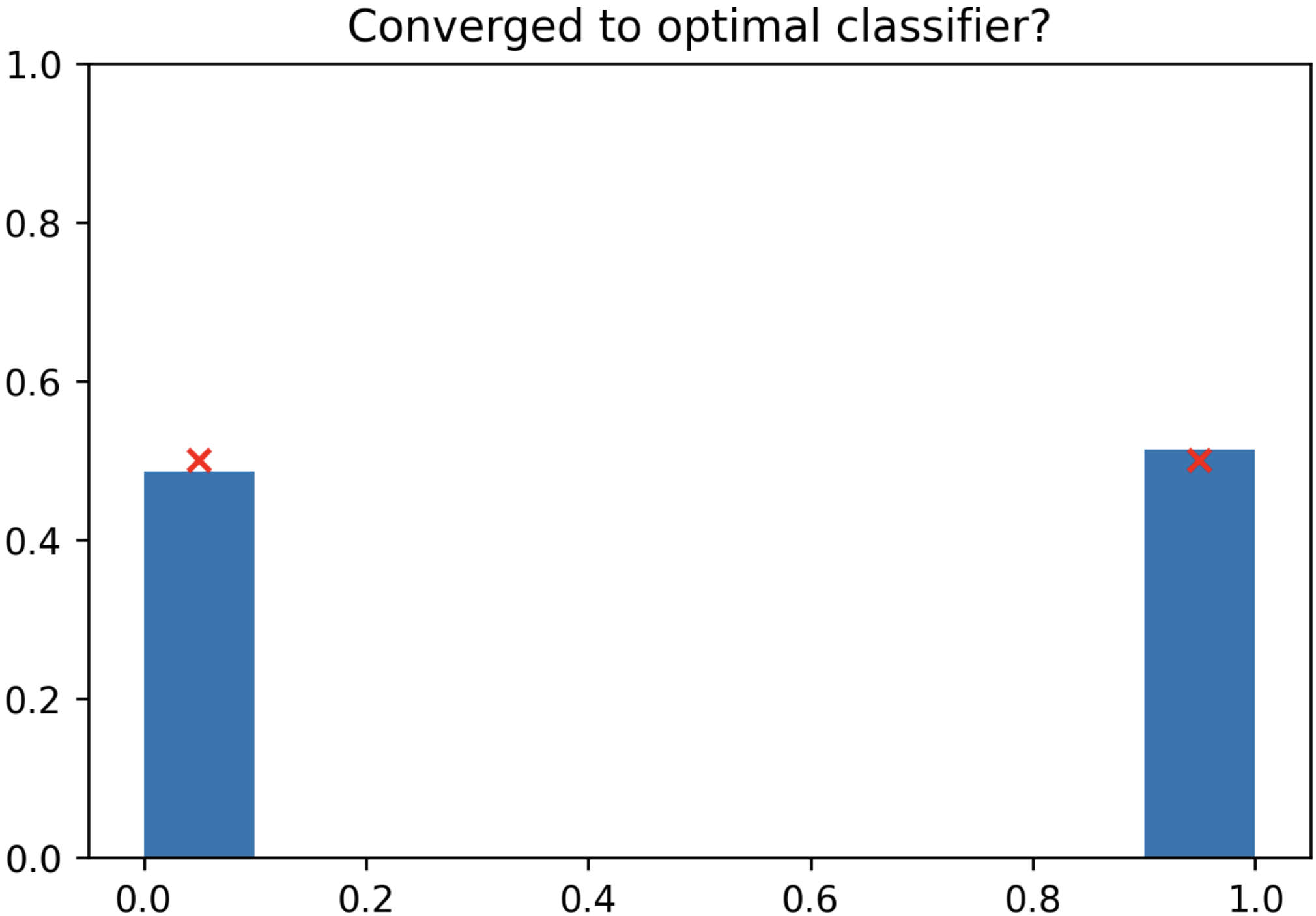}}   
    \subfigure[$\lambda = 10$]{       \includegraphics[width=.23\linewidth]{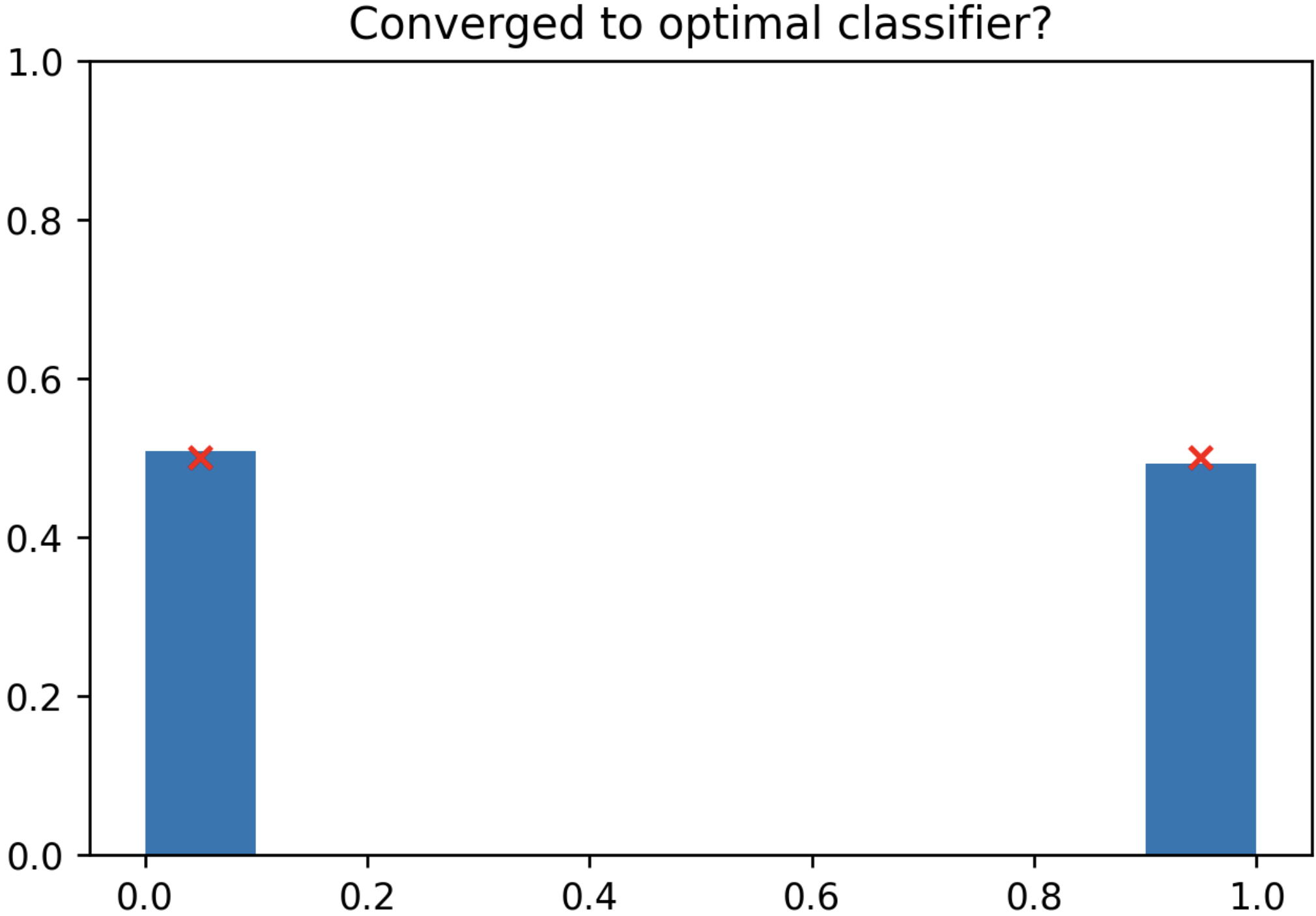}}    
  \subfigure[$\lambda = 100$]{       \includegraphics[width=.23\linewidth]{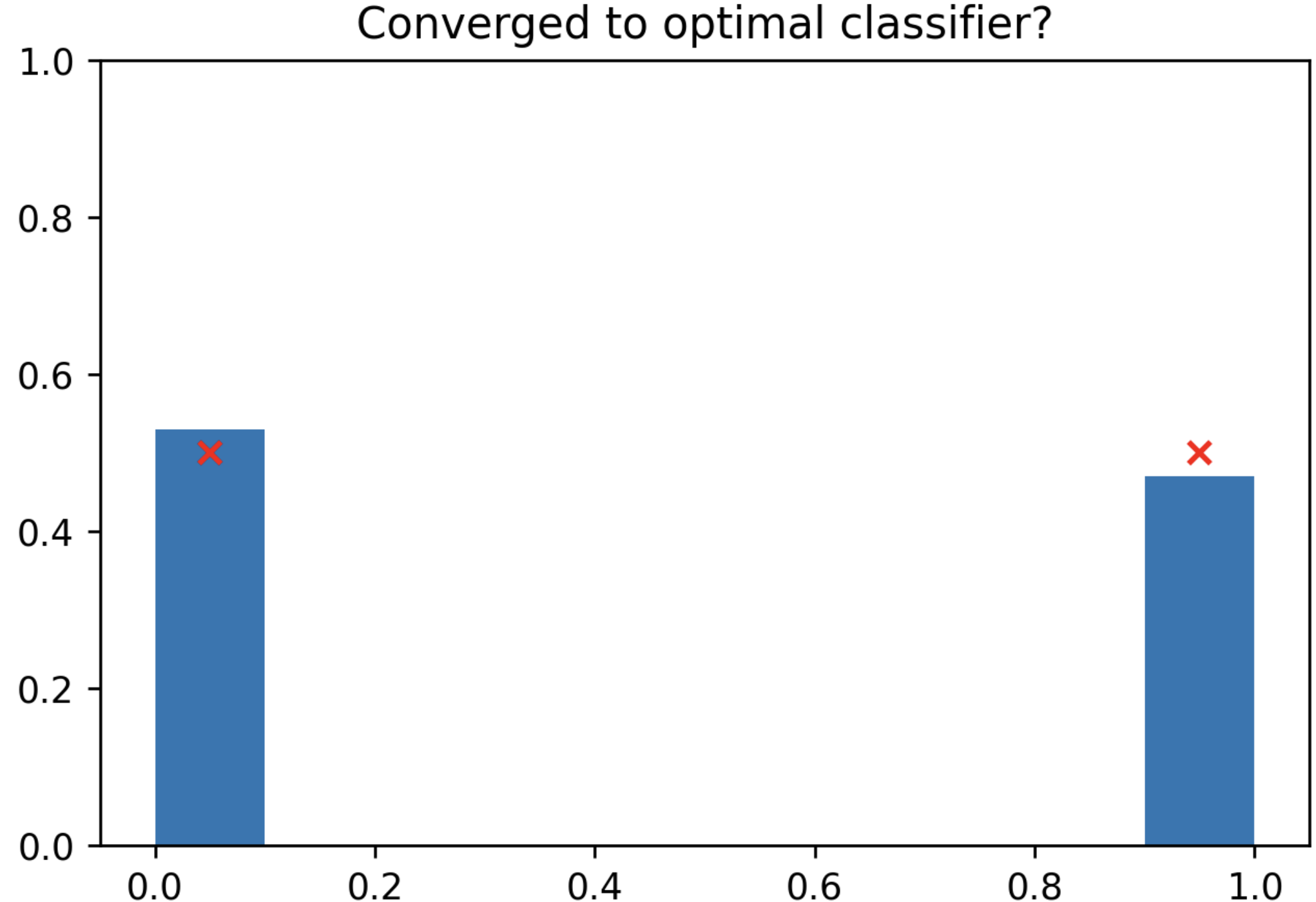}}    
  \caption{GMM in dimension $N=500$ with $\alpha = 0.1$. Depicted is the fraction of endpoints (SGD after $100N$ steps from a random Gaussian initialization) with $m_1 m_2<0$, corresponding to the stable fixed points of the $\lambda = \infty$ dynamics; it matches the predicted $\frac{1}{2}-\frac{1}{2}$ fraction.}\label{fig:GMM-isgood-various-lambda}
\end{figure}

\subsection{Convergence to spurious solutions}
Let us pause to interpret this result.
The stable fixed points when $\alpha<1/4$ are the optimal classifiers,
whereas the unstable set of fixed points given by item (2) misclassify
half of the data. Therefore, the above indicates that
when solving the above task with randomly initialized weights,
one of the following two scenarios occur, each with probability $1/2$
(with respect to the initialization):
the algorithm will converge to the optimal classifier
in linear time or it will appear to have converged to a macroscopically sub-optimal classifier on the same timescale, 
see Figures~\ref{fig:GMM-endpoints-various-lambda}--\ref{fig:GMM-isgood-various-lambda} for numerical verification of this at finite $N$ and $\lambda$.

\subsection{Degeneracy of diffusive limits}
It is then natural to ask about the behaviour of the SGD
in the latter regime, after it converges to the sub-optimal classifiers
which lie on the aforementioned quarter-circles. 
Proposition~\ref{prop:bgmm-ballistic-noiseless} rigorously justified the exchange of $n\to\infty$ and $\lambda\to\infty$ limits in the ballistic phase. 
In the diffusive phase, one could in principle find the quarter circle of fixed points of the ODE in Proposition~\ref{prop:bgmm-ballistic} and consider rescaled observables $\tilde v_i, \tilde m_i$ corresponding to blowing up $v_i,m_i$ in diffusive $O(n^{-1/2})$ neighborhoods about them to get SDE limits from Theorem~\ref{thm:main}. In order to have explicit formulae, in what follows, we consider the diffusive limits obtained when taking $\lambda = \infty$, for which we know the precise locations of these fixed points from Proposition~\ref{prop:bgmm-ballistic-noiseless}. This also captures the limit obtained by taking any $\lambda_n$ diverging faster than $O(n^{1/2})$; the numerics of Figure~\ref{fig:bgmm} demonstrate its qualitative consistency with the behavior in microscopic neighborhoods of fixed points at $\lambda$ finite.

\begin{prop}\label{prop:bgmm-diffusive-noiseless}
Let $\delta_n = \sfrac{1}{N}$,  $(a_{1},a_{2})\in\mathbb{R}_{+}^{2}$ be such that $a_{1}^{2}+a_{2}^{2}=C_{\alpha}$ and let $\tilde v_i = \sqrt{N}(v_i -a_i)$ and $\tilde m_i = \sqrt N(m_i - a_i)$.
When $\lambda = \infty$, the SDE system obtained 
by applying Theorem~\ref{thm:main} to $\tilde{\mathbf{u}}_{n}$
is
\begin{align*}
d\tilde{v}_{i}= & \alpha(\tilde{m}_{i}-\tilde{v}_{i})+a_{i}(\alpha-2\alpha^{2})\sum a_{k}(\tilde{v}_{k}+\tilde{m}_{k})+\tilde{\Sigma}^{\sfrac 1 2}d\mathbf{B}_{t}\cdot e_{v_{i}}\,, && d{R}_{ii}^\perp  =-2\alpha R_{ii}^\perp dt\,,\\
d\tilde{m}_{i}= & \alpha(\tilde{v}_{i}-\tilde{m}_{i})+a_{i}(\alpha-2\alpha^{2})\sum a_{k}(\tilde{v}_{k}+\tilde{m}_{k})+\tilde{\Sigma}^{\sfrac 1 2}d\mathbf{B}_{t}\cdot e_{m_i}\,,&&d {R}_{ij}^{\perp}=-2\alpha R_{ij}^{\perp} dt\,,
\end{align*}
where $\tilde{\Sigma}$ is a matrix whose only non-zero entries are 
$\tilde{\Sigma}_{\tilde v_{i}\tilde v_{j}}=\tilde{\Sigma}_{\tilde m_{i}\tilde m_{j}}=\tilde{\Sigma}_{\tilde v_{i}\tilde m_{j}}=  \alpha^{2}a_{i}a_{j}\,.$
\end{prop}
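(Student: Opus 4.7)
The plan is to apply Theorem~\ref{thm:main} to the hybrid rescaled summary statistic $\tilde{\mathbf u}_n = (\tilde v_1, \tilde v_2, \tilde m_1, \tilde m_2, R_{11}^\perp, R_{12}^\perp, R_{22}^\perp)$ at the critical step-size $\delta_n = 1/N$ in the $\lambda = \infty$ regime. Three tasks must be carried out: first, verifying $\delta_n$-localizability of the rescaled triple on any bounded neighborhood of $\mathbf 0$; second, identifying the limiting drift $\tilde{\mathbf h}$ as the linearization at the fixed point $\mathbf u_\star = (a_1, a_2, a_1, a_2, 0, 0, 0)$ of the ballistic flow from Proposition~\ref{prop:bgmm-ballistic-noiseless}; and third, reading off the diffusion matrix $\tilde\Sigma$ from the covariance of $\nabla L$ at $\mathbf u_\star$.

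For localizability, the statistics $v_i$ and $m_i$ are linear in the parameters so all of their higher derivatives vanish and the $\sqrt N = \delta_n^{-1/2}$ rescaling is permitted by item~(1) of Definition~\ref{defn:localizable}; the statistic $R_{ij}^\perp$ is quadratic with bounded Hessian and is left unrescaled, so item~(1) is again trivial. The moment bounds in items~(2)--(3) reduce at $\lambda = \infty$ to bounds on sigmoid-weighted gradients of $L$, and follow from $\|\sigma\|_\infty \le 1$ together with the bounded support of $X$. For the drift, since $\tilde{\mathbf u} = \sqrt N(\mathbf u - \mathbf u_\star)$ and the unrescaled drift $\mathbf h$ of Proposition~\ref{prop:bgmm-ballistic-noiseless} vanishes at $\mathbf u_\star$, a first-order Taylor expansion gives
\[
\sqrt N\, \mathbf h\!\left(\mathbf u_\star + \tilde{\mathbf u}/\sqrt N\right) = D\mathbf h(\mathbf u_\star)\, \tilde{\mathbf u} + O(1/\sqrt N),
\]
producing the linear system claimed in the proposition. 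Differentiating the explicit formulae at $(v,m)=(a,a)$ and using the fixed-point identities $\sigma(-C_\alpha) = 2\alpha$ and $\sigma'(-C_\alpha) = 2\alpha(1 - 2\alpha)$ yields precisely the coupling $a_i(\alpha - 2\alpha^2)\sum_k a_k(\tilde v_k + \tilde m_k)$. The $R_{ij}^\perp$ drifts are already linear in $R_{ij}^\perp$ and descend from Proposition~\ref{prop:bgmm-ballistic-noiseless} unchanged.

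For the diffusion matrix, the rescaled Jacobian $\tilde J_n$ carries a factor $\sqrt N$ on the four blown-up rows and none on the $R_{ij}^\perp$ rows; hence $\delta_n \tilde J_n V \tilde J_n^T$ equals $J_n V J_n^T$ on the rescaled block and is $O(1/N)$ on any block involving an $R_{ij}^\perp$ row. Since $\nabla v_i = e_{v_i}$ and $\nabla m_i$ equals $\mu$ in the $W_i$-slot, the rescaled block reduces to the scalar covariances $V_{v_i v_j}$, $V_{v_i W_j}\cdot\mu$, and $\mu^T V_{W_i W_j}\mu$. At $\lambda=\infty$ the conditional law of $X$ given $y$ is a point mass at $(2y-1)\mu$, so $\nabla L$ is Bernoulli: the $y=0$ branch vanishes at $\mathbf u_\star$ because $g(-a_i)=0$, while the $y=1$ branch is deterministic with $\sigma(v\cdot g(WX)) - y = -2\alpha$, giving $\partial_{v_i} L = -2\alpha a_i$ and $\partial_{W_i}L = -2\alpha a_i \mu$. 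A one-line variance-of-a-Bernoulli computation then produces $\alpha^2 a_i a_j$ in each of the three blocks, matching the claimed $\tilde\Sigma$.

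The main technical obstacle is the bookkeeping for the hybrid rescaling: four coordinates carry a factor of $\sqrt N$ and three do not, so both the localizability checks and the vanishing of the off-diagonal diffusion blocks must be tracked component by component. Once this is in place, the $\lambda=\infty$ simplification collapses every Gaussian integral in $\mathbf A_i^\mu, \mathbf A_{ij}^\perp, \mathbf B_{ij}$ to a single atom, and the remaining linearization and Bernoulli variance computations are elementary.
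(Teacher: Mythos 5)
Your proposal follows essentially the same route as the paper: verify localizability of the hybrid-rescaled statistics exactly as in Lemma~\ref{lem:bgmm-localizable} (with the $\sqrt N$ factors absorbed by the $\delta_n^2$ in item (3)), obtain the drift by expanding the $\lambda=\infty$ drift of Proposition~\ref{prop:bgmm-ballistic-noiseless} to first order about $(a,a)$ via the identities $\sigma(-C_\alpha)=2\alpha$, $\sigma'(-C_\alpha)=2\alpha(1-2\alpha)$, and get $\tilde\Sigma$ by evaluating $\delta_n\tilde J V\tilde J^T$ at the fixed point, your ``Bernoulli variance'' of $\nabla L$ being exactly the paper's $\lambda=\infty$ evaluation of $V$ in~\eqref{eq:bgmm-V}. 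This is correct and matches the paper's proof in all essentials, with only cosmetic differences (e.g.\ the mixed $R_{ij}^\perp$ blocks are $O(N^{-1/2})$ rather than $O(1/N)$, which changes nothing).
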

Notice that this diffusion matrix is rank 1, so this diffusion is non-trivial but degenerate even in the rescaled coordinates $(\tilde v_i,\tilde m_i)$.
Moreover, the entries of $\tilde \Sigma$ vanish on the axes $a_1=0$ or $a_2 =0$. In particular, crossing from the unstable quarter ring into the quadrants $v_1 v_2 <0$ where the stable fixed points lie is \emph{impossible} in the noiseless setting, and happens on a much larger timescale at finite $\lambda$.

 \begin{figure}
    \centering
    \subfigure[]{       \includegraphics[width=.23\linewidth]{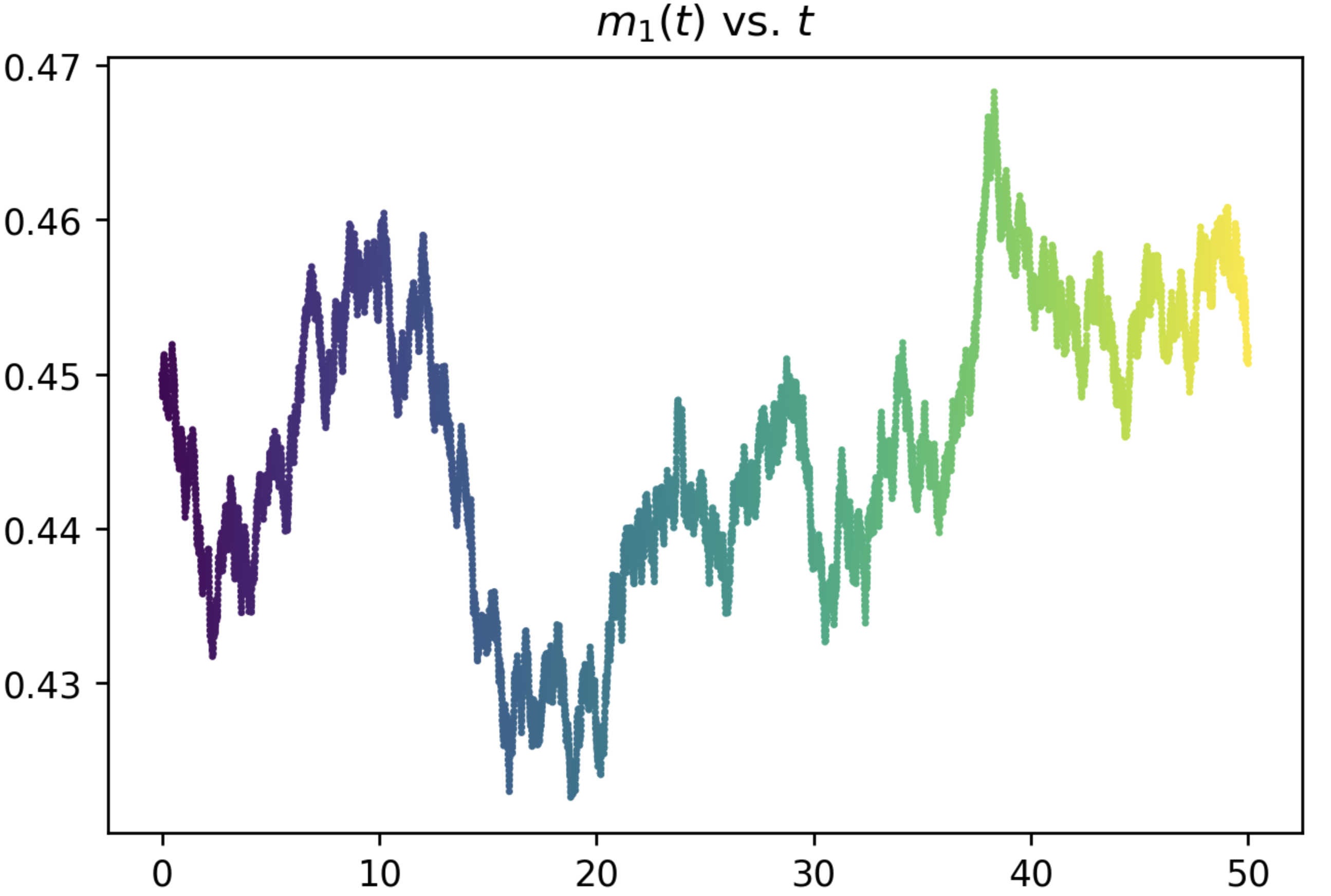}}    
  \subfigure[]{       \includegraphics[width=.23\linewidth]{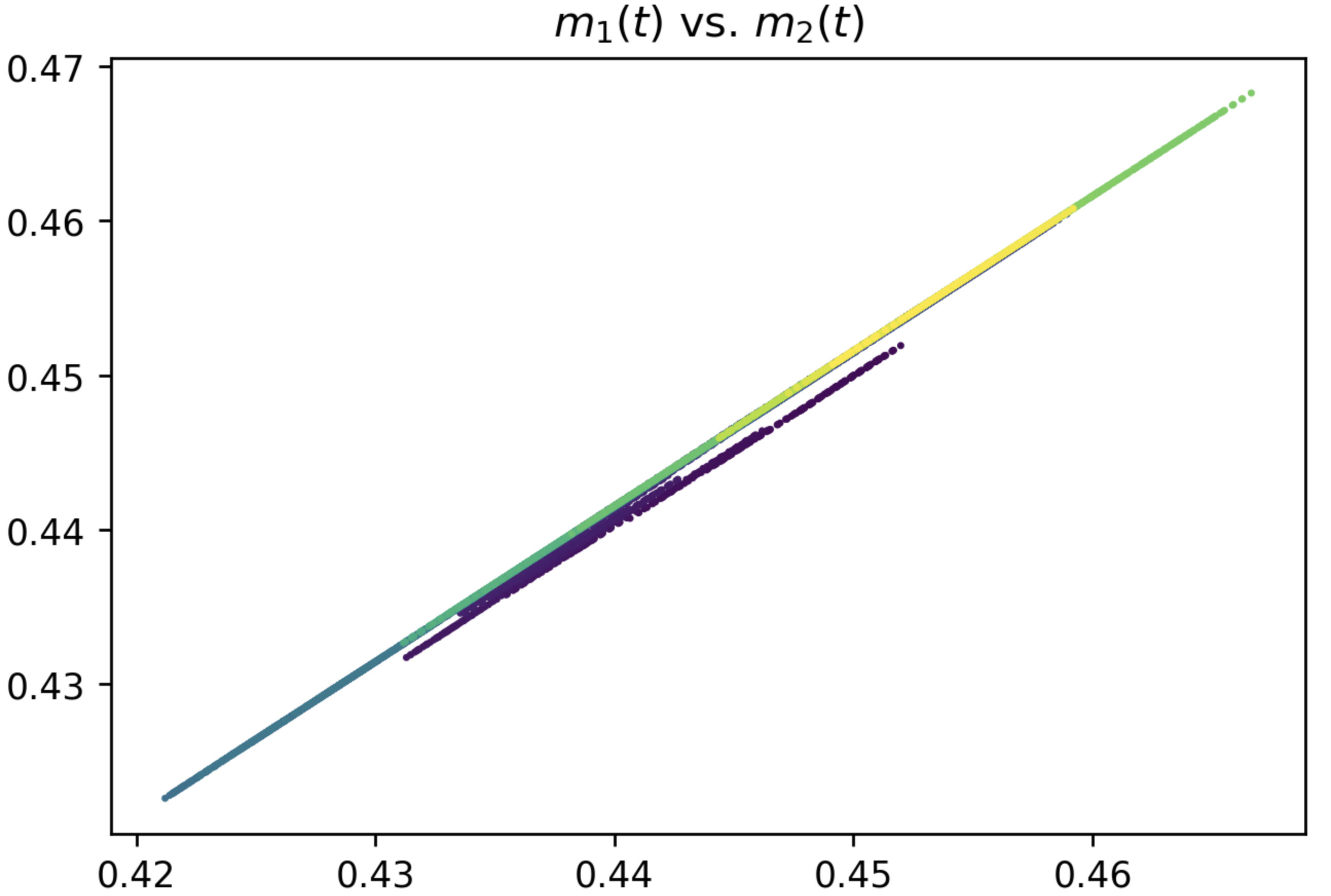}}    
    \subfigure[]{   \includegraphics[width=.23\linewidth]{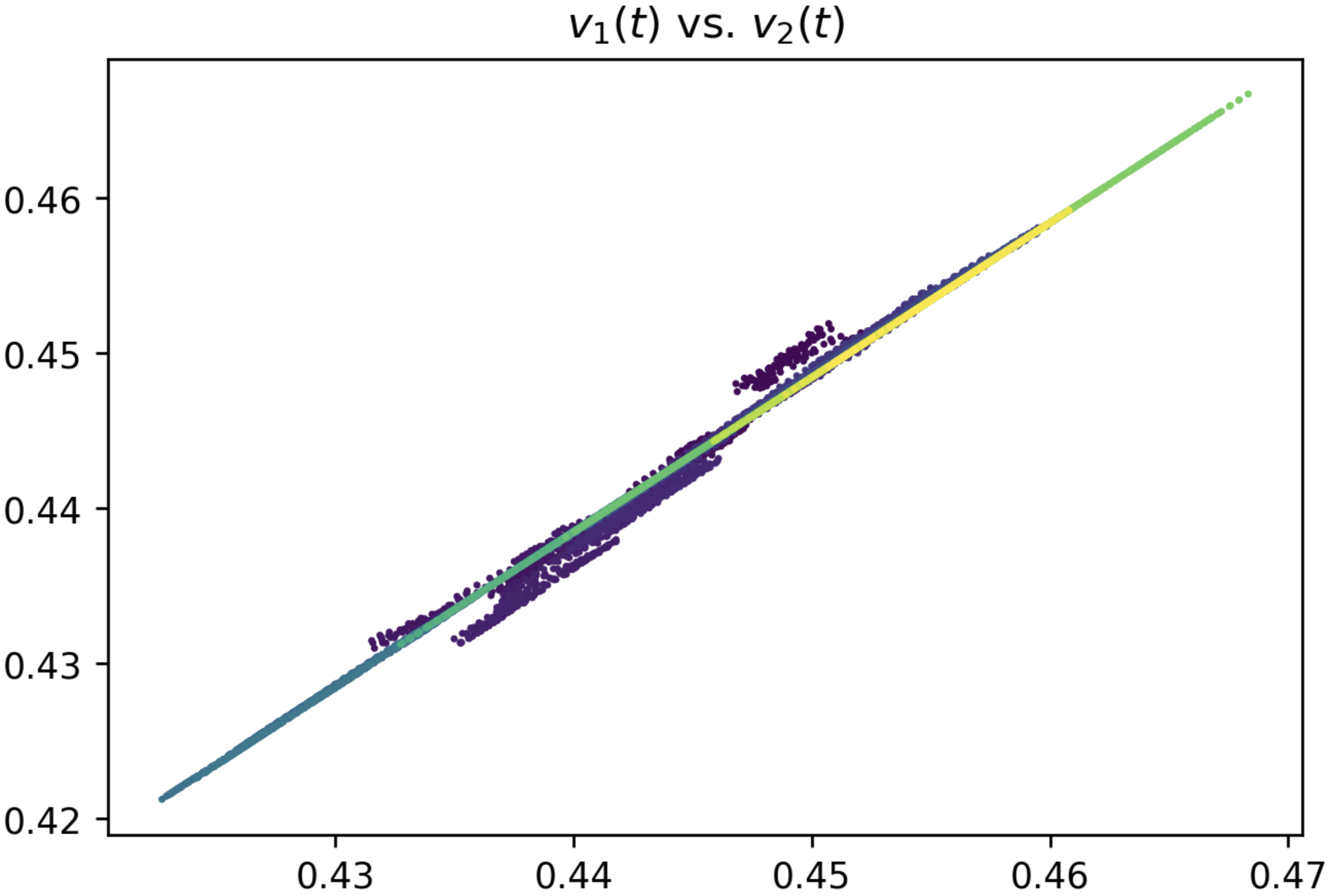}}
  \caption{Binary GMM in dim.\ $N=250$ with $\lambda = 100$ and $\alpha = 0.1$. Diffusive limits for (a) $m_1$ individually, and (b)--(c) the pairs $(m_1,m_2)$ and $(v_1,v_2)$ where the diffusions can be seen to not be of full rank.}\label{fig:bgmm}
\end{figure}

\section{Two-layer networks for the XOR Gaussian mixture}\label{sec:XOR-results}

\subsection{Model and Background}

For our final example,  consider the problem of supervised learning for an XOR-type Gaussian mixture model in $\mathbb R^N$.
Suppose that we are given i.i.d.\ samples of the form $Y=(y,X)$,
where $y$ is $Ber(1/2)$ and
$X$ has the following distribution: if $y=1$ then $X$ is a $\sfrac12$-$\sfrac12$ mixture of $\cN(\mu,I/{\lambda})$ and $\cN(-\mu,I/{\lambda})$ and if $y = 0$ it is a $\sfrac12$-$\sfrac12$ mixture of $\cN(\nu,I/{\lambda})$ and $\cN(-\nu,I/{\lambda})$, 
where $\lambda>0$, and $\mu,\nu$ are orthogonal unit
vectors. Here, $y$ is the class label and $X$ is the data.

This data model is a Gaussian mixture model analogue of the (in)famous XOR problem of Minsky--Papert \cite{minsky2017perceptrons}. 
In particular, it is easy to see that the optimal decision boundary is not expressible by a single-layer neural network
as the data is not linearly separable. That said, it is also straightforward to see that this decision boundary
is realizable by simple two-layer networks.\footnote{In the notation of the following subsection,
this can be realized by taking $K=4$, $W_1 = - W_2 = \mu$, $W_3 = W_4 = \nu$, and  $v_i=c$ for  $i=1,\ldots,4$ for some $c>0$.}

We focus on this example as a demonstration of the applicability of our techniques to the analysis of the training dynamics for two-layer neural networks
on natural data models. While this model is arguably the simplest model requiring a multi-layer network to solve, it nevertheless exhibits very complex
phenomenology. We mention that some of these complexities were also observed in a very similar setup in~\cite{refinetti2021classifying} where ballistic limits from warm starts were derived.

\subsection{Analysis}
Consider the corresponding classification problem
 using a two-layer neural network, taking as our estimator of the class label $\hat{y}(X)$
 to be the natural rounding of $\sigma(v\cdot g(WX))$,
where $\sigma$ and $g$ are the sigmoid and ReLU as in Section~\ref{sec:binary-gmm}.
We take $W$ to be a $K\times N$ matrix and $v$ to be a $K$-vector.

To train the network, we again consider the binary cross-entropy loss with $\ell_2$-penalty. This loss is identical to \eqref{eq:Loss-bgmm} \emph{mutatis mutandis}. For the readers convenience, we recall that the loss is of the form
\[
L\big((v_{i},W_{i})_{i\le K};(y,X)\big)=-yv\cdot g(WX)+\log(1+e^{v\cdot g(WX)})+p(v,W)\,,
\]
where again $\sigma,g$ are applied component wise and again $p(v,W):=(\alpha/2)(\norm{v}^{2}+\norm{W}^{2})$.

In Lemma~\ref{lem:XOR-summary-stats} below, we show that the law of the loss at a point $(v,W)$  
depends only on the following $4K + \binom{K}{2}$ variables: for $1\le i\le j\le K$, 
\begin{align}\label{eq:XOR-GMM-summary-stats}
v_{i}\,,\qquad m_{i}^{\mu}  =W_{i}\cdot\mu\,, \qquad m_{i}^{\nu}  =W_{i}\cdot\nu\,, \qquad
R_{ij}^{\perp} =W_{i}^{\perp}\cdot W_{j}^{\perp}
\end{align}
where $W_i^\perp = W_i - m_i^\mu \mu - m_i^\nu \nu$ is the part perpendicular to $\mu,\nu$. 
Furthermore, this lemma shows that, if $\mathbf{u}_{n}$ given by these variables, then for any fixed $\lambda>0$, the localizability criterion of Definition~\ref{defn:localizable} holds as long as $\delta_{n}=O(1/n)$.
We can then apply Theorem~\ref{thm:main} to obtain limits in both the ballistic and diffusive phases. 
To this end,
we need to define the following auxiliary functions analogous to \eqref{eq:bgmm-A-B} above.
For a point $(v,W) \in \mathbb R^{K+KN}$, define the quantity 
$$
	\mathbf{A}_i = \mathbb E\big[X\mathbf 1_{W_i \cdot X\ge 0} \big(-y+\sigma(v\cdot g(WX))\big)\big]\,,
$$
and let 
\begin{align*}
	\mathbf{A}_i^\mu  = \mu \cdot \mathbf{A}_i\,, \qquad \mathbf{A}_i^\nu = \nu\cdot \mathbf{A}_i\,, \qquad \mathbf{A}_{ij}^\perp = W_j^\perp \cdot \mathbf{A}_i\,.
\end{align*}
Furthermore, let 
\begin{align*}
	\mathbf{B}_{ij} = \mathbb E\big[\mathbf 1_{W_i \cdot X \ge 0} \mathbf 1_{W_j\cdot X\ge 0} \big(-y+\sigma(v\cdot g(WX))\big)^2\big]\,.
\end{align*}
By similar reasoning, it can be shown that these functions are expressible as functions of $\bu_n$ alone (see Section~\ref{sec:xor-proofs} below).
We then find the following effective ballistic dynamics.
\begin{prop}\label{prop:XOR-effective-dynamics-finite-lambda}
	Let $\bu_n$ be as in~\eqref{eq:XOR-GMM-summary-stats} and fix any $\lambda>0$ and $\delta_n = c_\delta/N$. Then $\bu_n(t)$ converges to the solution of the ODE system $\dot{\bu}_t = -\mathbf{f}(\bu_t) + \mathbf{g}(\bu_t)$, initialized from $\lim_n (\bu_n)_* \mu_n$ with 
	\begin{align*}
		f_{v_i} &  = m_i^\mu \mathbf A_i^\mu (\mathbf u) + m_i^\nu \mathbf{A}_i^\nu (\bu) + \mathbf{A}_{ii}^\perp (\bu) + \alpha v_i\,, &  f_{m_i^\mu} & = v_i \mathbf{A}_i^\mu + \alpha m_i^\mu\,, \\
		f_{R_{ij}^{\perp}}  & = v_i \mathbf{A}_{ij}^\perp(\bu) +v_j \mathbf{A}_{ji}^\perp(\bu) + 2\alpha R_{ij}^\perp\,, & f_{m_i^\nu} & = v_i \mathbf{A}_i^\nu + \alpha m_i^\nu\,.
	\end{align*}
	and correctors $g_{v_i} = g_{m_i^\mu} = g_{m_i^\nu} = 0$, and $g_{R_{ij}^\perp} = c_\delta \frac{v_i v_j}{\lambda} \mathbf B_{ij}$ for $1\le i\le j\le K$. 
\end{prop}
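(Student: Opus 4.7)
The proof plan is to apply Theorem~\ref{thm:main} with $\bu_n$ as in~\eqref{eq:XOR-GMM-summary-stats}, mirroring closely the argument for Proposition~\ref{prop:bgmm-ballistic}. Localizability of the triple $(\bu_n, L_n, P_n)$ is already supplied by Lemma~\ref{lem:xor-localizable}, so the task reduces to identifying $\mathbf{f}$, $\mathbf{g}$, and verifying that the prelimit diffusion matrix $\delta_n J V J^T$ vanishes as $n\to\infty$. Since the cross-entropy loss~\eqref{eq:XOR-GMM-loss} has the same algebraic form as~\eqref{eq:Loss-bgmm}, the gradient formulae~\eqref{eq:bgmm-nabla-v-L}--\eqref{eq:bgmm-nabla-W-L} and the expressions for $\nabla\Phi$ in~\eqref{eq:bgmm-nabla-W-Phi} apply verbatim; only the distribution of $X$ conditionally on $y$ (and hence of $\mathbf{A}_i$ and $V$) changes.

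For the population drift, I would contract $\nabla\Phi$ with the rows of the Jacobian $J$ read off from~\eqref{eq:XOR-J}. The $v_i$ row picks out $W_i\cdot\mathbf{A}_i + \alpha v_i$, and decomposing $W_i = m_i^\mu \mu + m_i^\nu \nu + W_i^\perp$ yields the claimed $f_{v_i}$ via the identifications $\mathbf{A}_i\cdot\mu = \mathbf{A}_i^\mu$, $\mathbf{A}_i\cdot\nu = \mathbf{A}_i^\nu$, and $\mathbf{A}_i\cdot W_i^\perp = \mathbf{A}_{ii}^\perp$. The rows for $m_i^\mu$, $m_i^\nu$, and $R_{ij}^\perp$ produce $f_{m_i^\mu}$, $f_{m_i^\nu}$, and $f_{R_{ij}^{\perp}}$ in the same way, using the symmetry $\mathbf{A}_i\cdot W_j^\perp + \mathbf{A}_j \cdot W_i^\perp$ in the $R_{ij}^\perp$ direction.

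The corrector $\mathbf{g}$ is zero for the linear summary statistics $v_i, m_i^\mu, m_i^\nu$, and for $R_{ij}^\perp$ it should be extracted from the entries of $V$ in~\eqref{eq:bgmm-V} (now with the XOR law) indexed by coordinates orthogonal to both $\mu$ and $\nu$. Picking an orthonormal basis with $\mu,\nu$ as its first two vectors, $\delta_n \mathcal{L}_n R_{ij}^\perp = \delta_n \sum_{k\ge 3} V_{W_{ik},W_{jk}}$, which by the same manipulation as in~\eqref{eq:bgmm-corrector-calc} splits into a $\tfrac{1}{N}\|X^{\perp}\|^2$ concentration term giving $\lambda^{-1}\mathbf{B}_{ij}$ and an $\mathbf{A}_i$-correlation term that vanishes. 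The only substantive adaptation is to condition on the class label $y$ and on the binary latent sign indicating which of $\pm\mu$ or $\pm\nu$ was sampled, then apply a $\chi^2$-type concentration bound exactly as in the binary case; because $\|X^\perp\|^2/N \to \lambda^{-1}$ with the same Gaussian fluctuations under each of the four mixture components, the bound goes through uniformly. This yields $g_{R_{ij}^\perp} = c_\delta v_i v_j \mathbf{B}_{ij}/\lambda$.

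For $\Sigma$, every entry of $J V J^T$ is a quadratic form in the bounded gradients of $\bu_n$ contracted against $V$, and using Cauchy--Schwarz with the $L^8$ bound of Lemma~\ref{lem:bgmm-A-bound} (whose XOR analogue follows by splitting into the four mixture components), each such entry is bounded uniformly in $n$ on $\bu_n^{-1}(E_K)$. Multiplying by $\delta_n = O(1/N)$ kills it in the limit. The main technical nuisance I anticipate is bookkeeping the effective dimension in the corrector: one must verify that replacing $N$ by $N-2$ (the codimension of $\mathrm{span}(\mu,\nu)$) produces no obstruction, which is immediate because the $\mathcal{O}(1/N)$ correction is absorbed into the $o(1)$ error already present in the binary computation.
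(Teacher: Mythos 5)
Your proposal is correct and follows essentially the same route as the paper: contracting $\nabla\Phi$ with the rows of the Jacobian~\eqref{eq:XOR-J} to get $\mathbf f$, computing $\delta_n\mathcal L_n R_{ij}^\perp$ via the coordinates orthogonal to $\mathrm{span}(\mu,\nu)$ exactly as in~\eqref{eq:bgmm-corrector-calc} (with the $\|X^\perp\|^2/N$ concentration giving $\lambda^{-1}\mathbf B_{ij}$ and the $\mathbf A_i$-correlation term vanishing), and killing $\delta_n JVJ^T$ by Cauchy--Schwarz with the moment bounds of Lemma~\ref{lem:bgmm-A-bound} adapted to the four-component mixture. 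The minor points you flag (conditioning on the mixture component, and $N$ versus $N-2$) are handled implicitly in the paper the same way you propose.
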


\begin{figure}
    \centering
  \subfigure[$\lambda=10$]{\includegraphics[width=.24\linewidth]{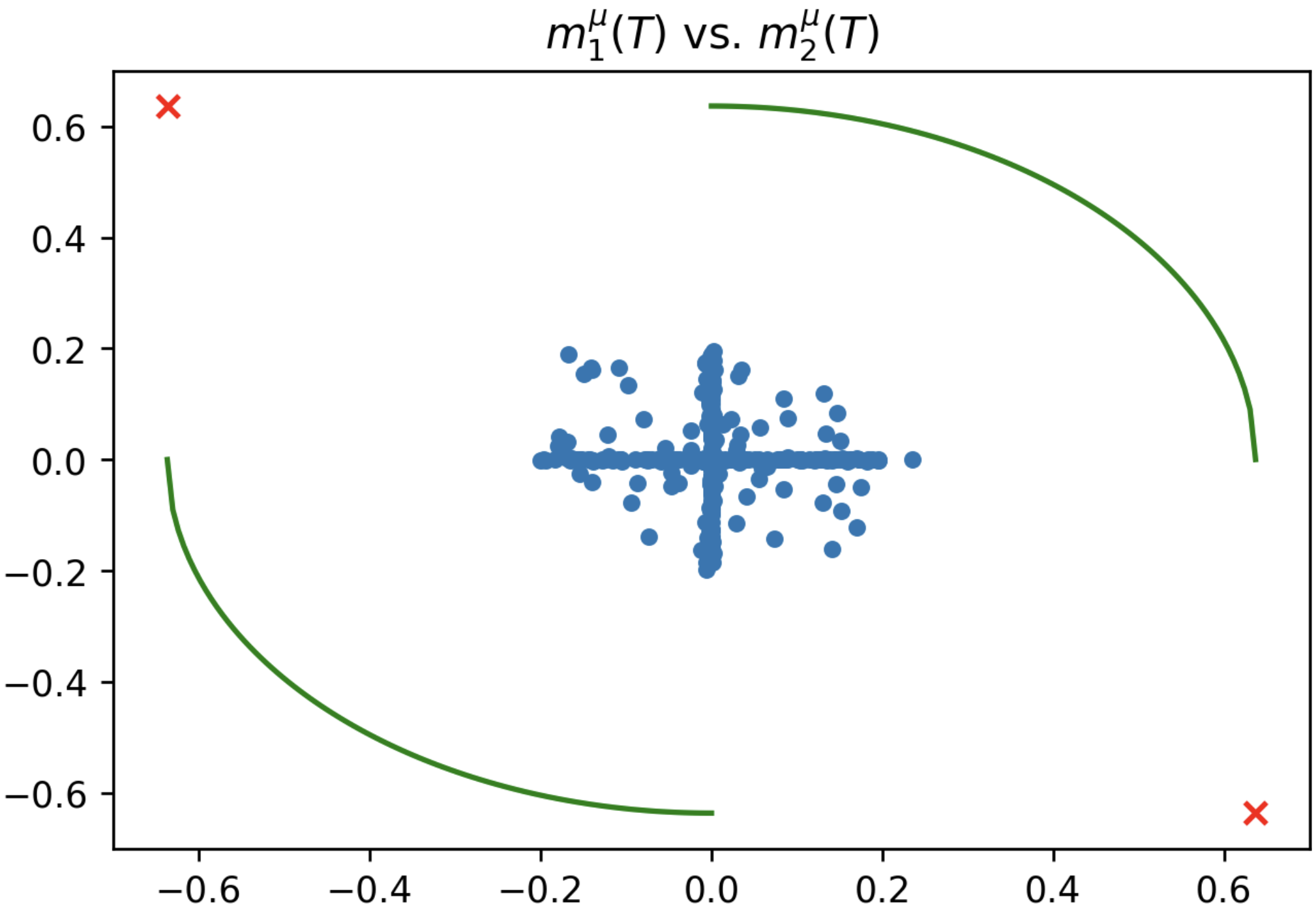}}
  \subfigure[$\lambda =100$]{\includegraphics[width=.24\linewidth]{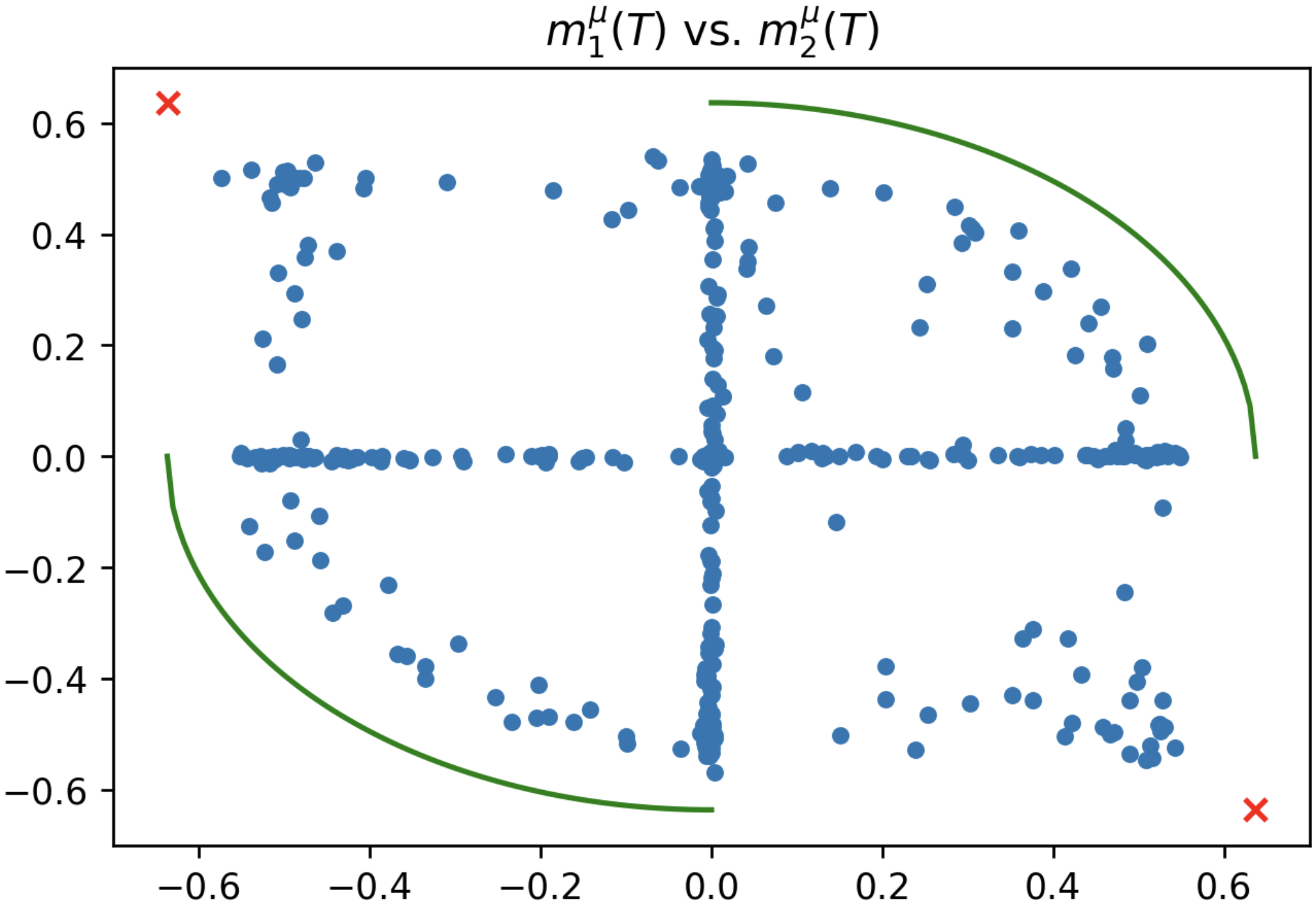}}   
    \subfigure[$\lambda = 500$]{\includegraphics[width=.24\linewidth]{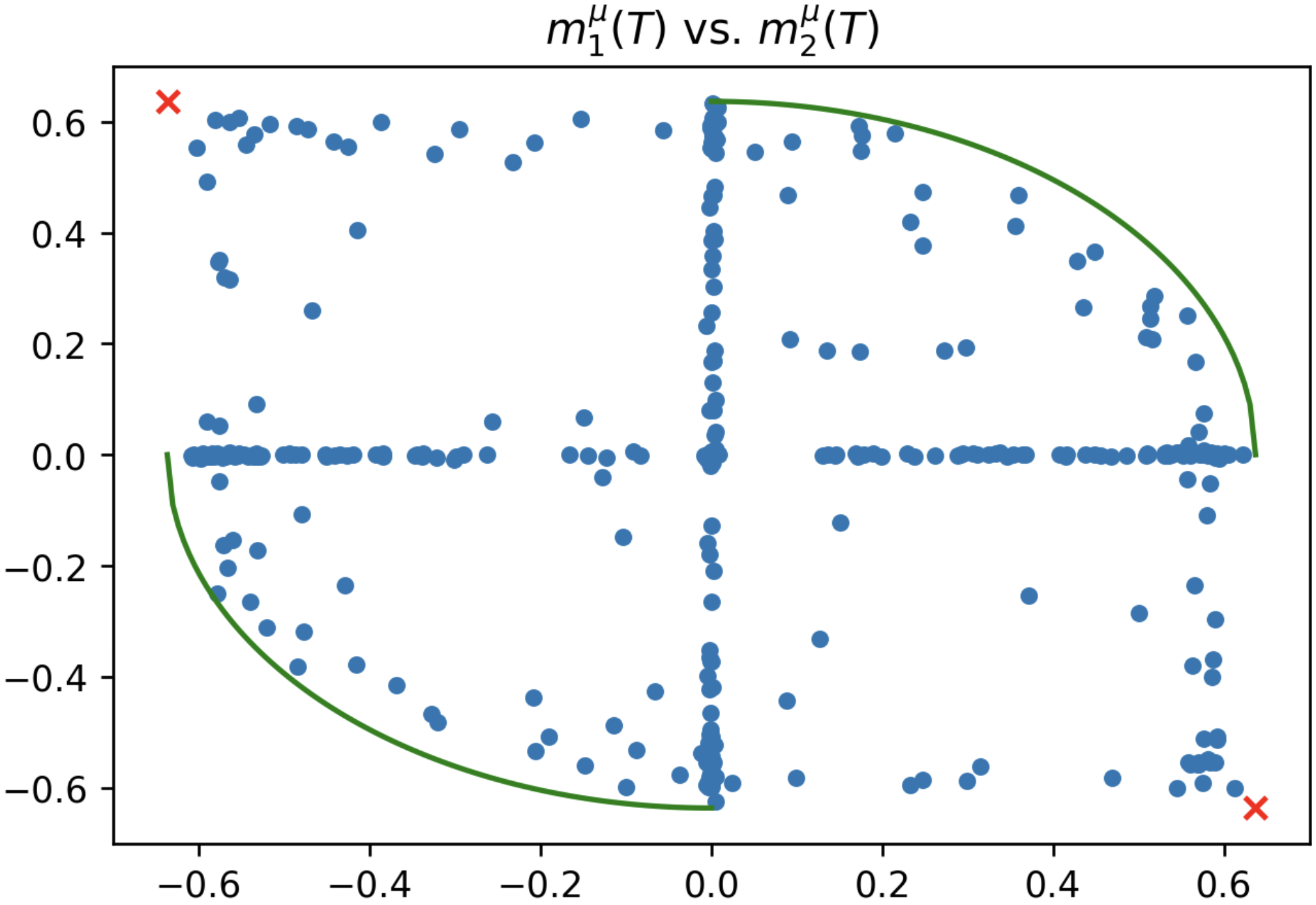}}    
  \subfigure[$\lambda = 1000$]{\includegraphics[width=.24\linewidth]{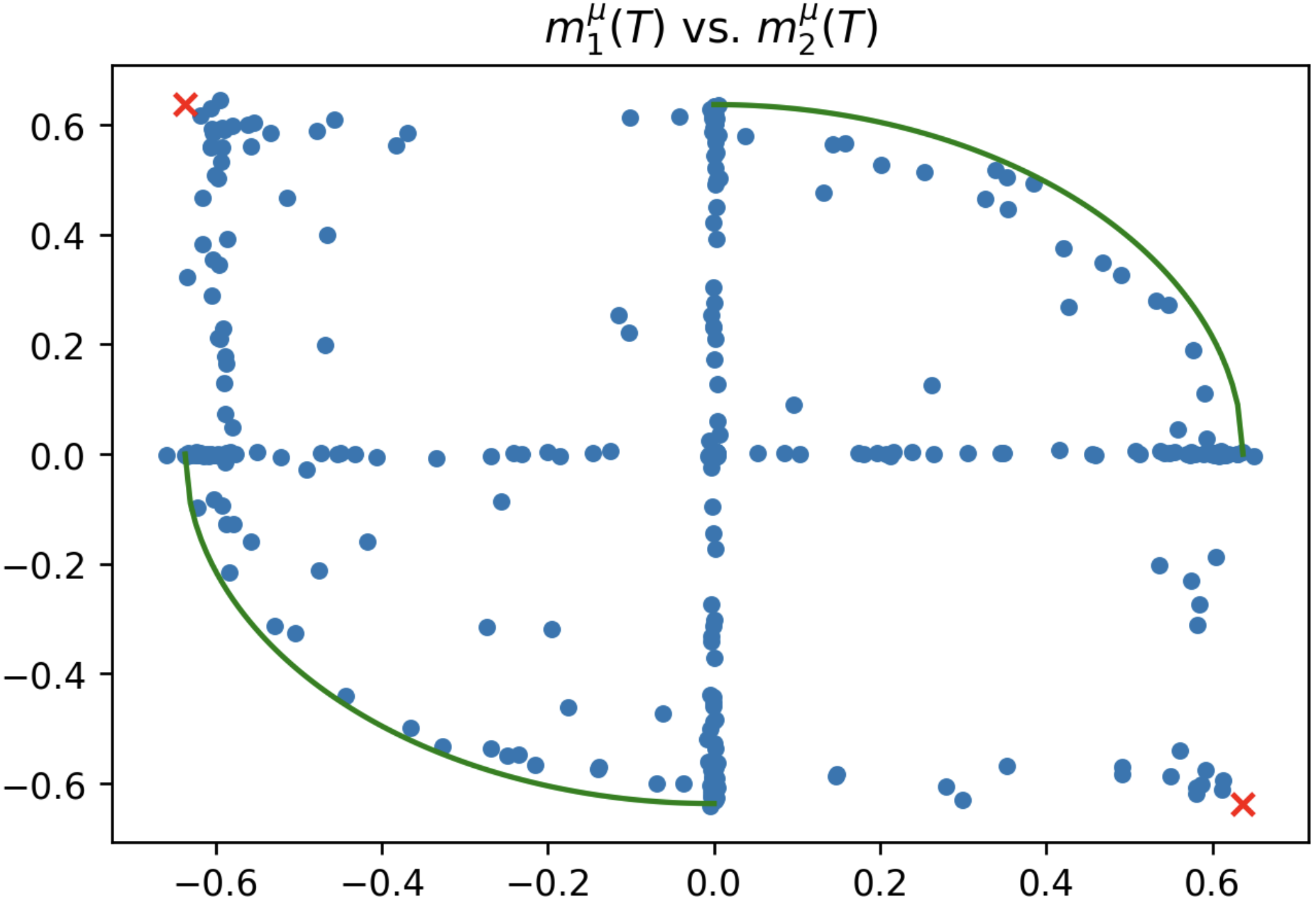}}    
  \caption{XOR in dimension $N=500$ with $\alpha = 0.1$ and $K=4$. Depicted are $(m_1^\mu,m_2^\mu)$ values that 500 runs of SGD converge to after $100N$ steps from a random Gaussian initialization. The ${\color{green!50!black}-}$ and ${\color{red}\times}$ are the unstable and stable fixed points of the $\lambda = \infty$ ballistic effective dynamics. This demonstrates that the fixed points of the limiting effective dynamics have the same qualitative structure at finite $\lambda$ as $\lambda = \infty$, and approach the $\lambda=\infty$ ones as $\lambda$ gets large.}\label{fig:GMM-endpoints-various-lambda.}\label{fig:XOR-endpoints-various-lambda}
\end{figure}

\begin{figure}
    \centering
  \subfigure[$\lambda=10$]{\includegraphics[width=.23\linewidth]{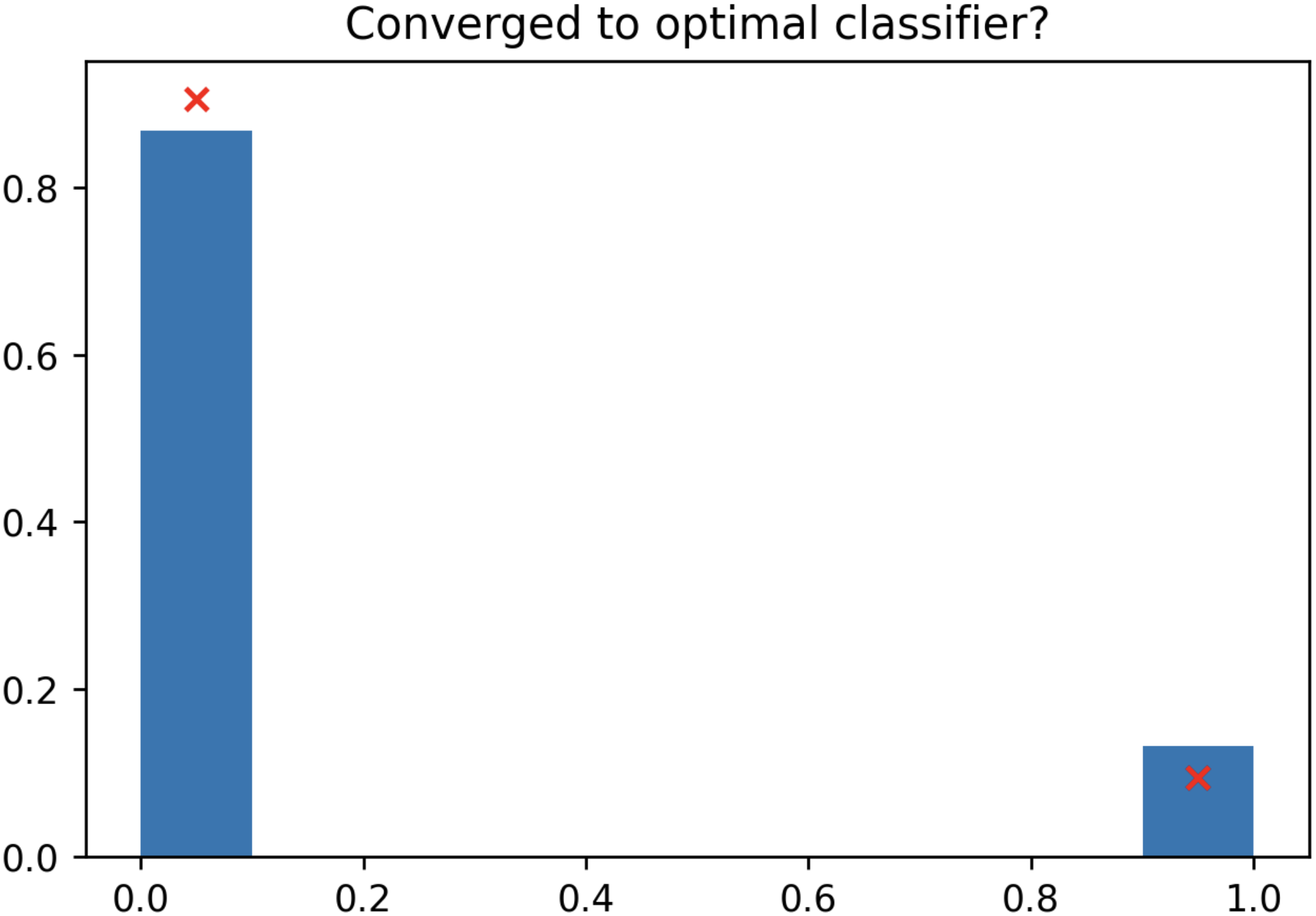}}
  \subfigure[$\lambda =100$]{\includegraphics[width=.23\linewidth]{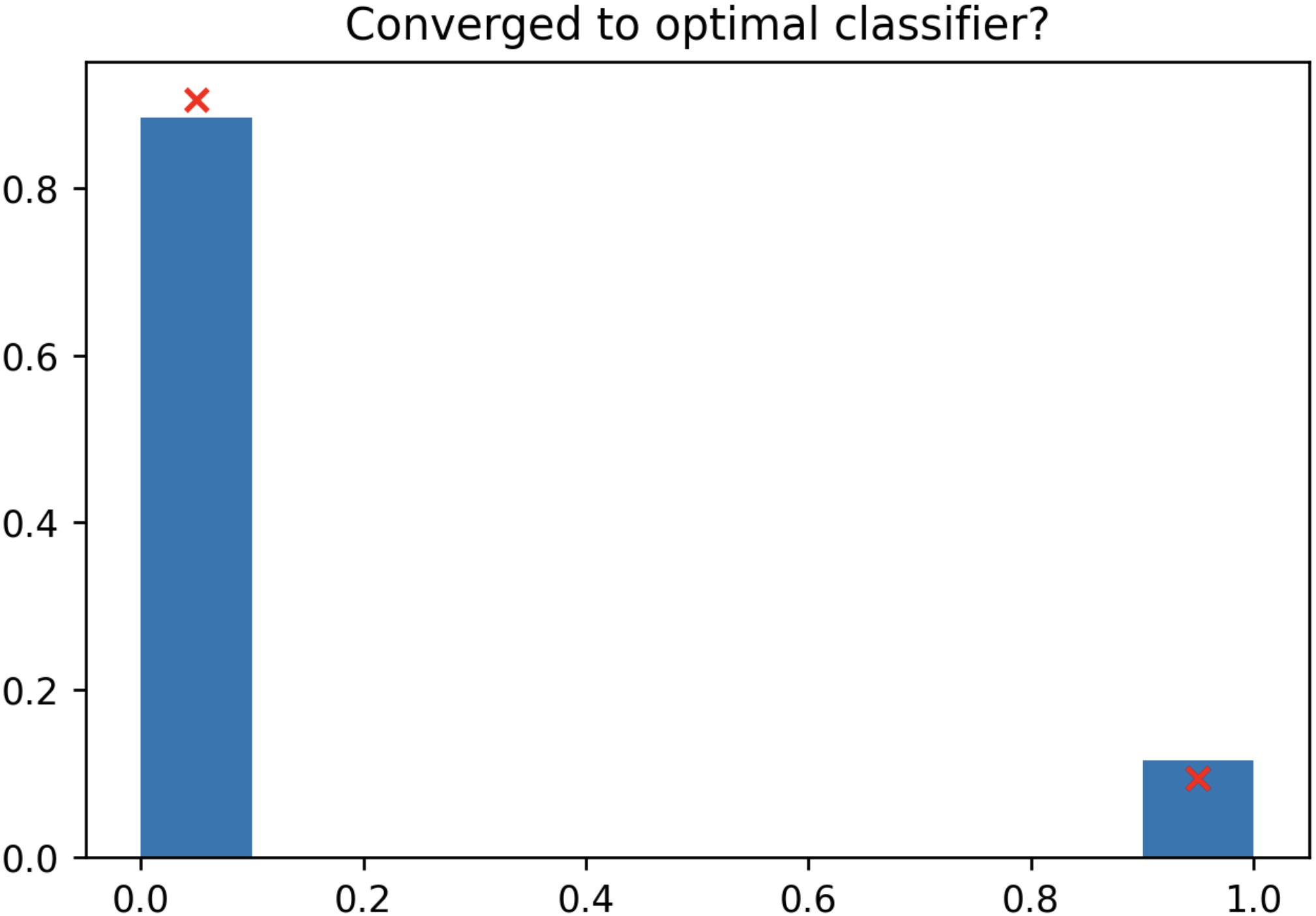}}   
    \subfigure[$\lambda = 500$]{\includegraphics[width=.23\linewidth]{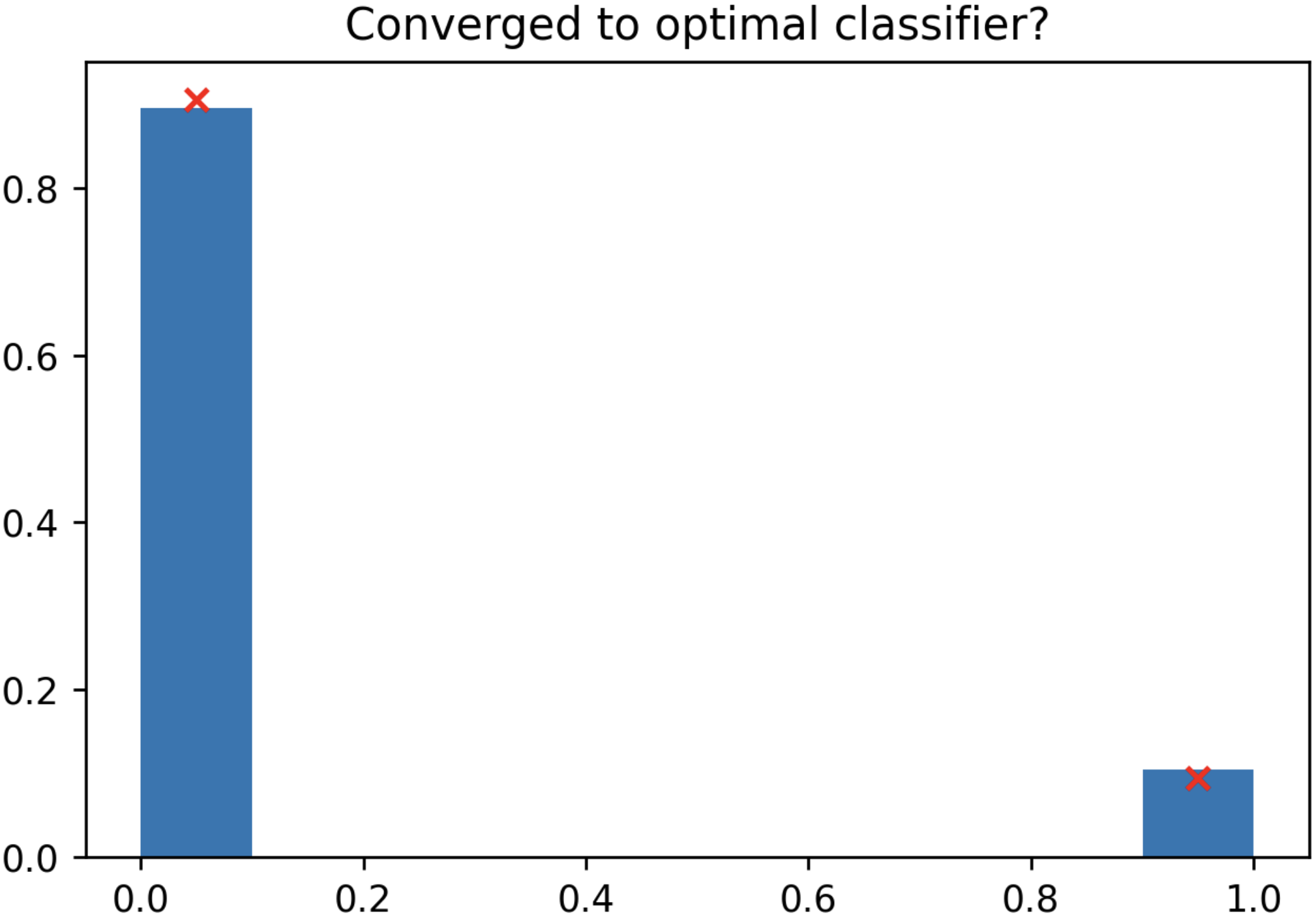}}    
  \subfigure[$\lambda = 1000$]{\includegraphics[width=.23\linewidth]{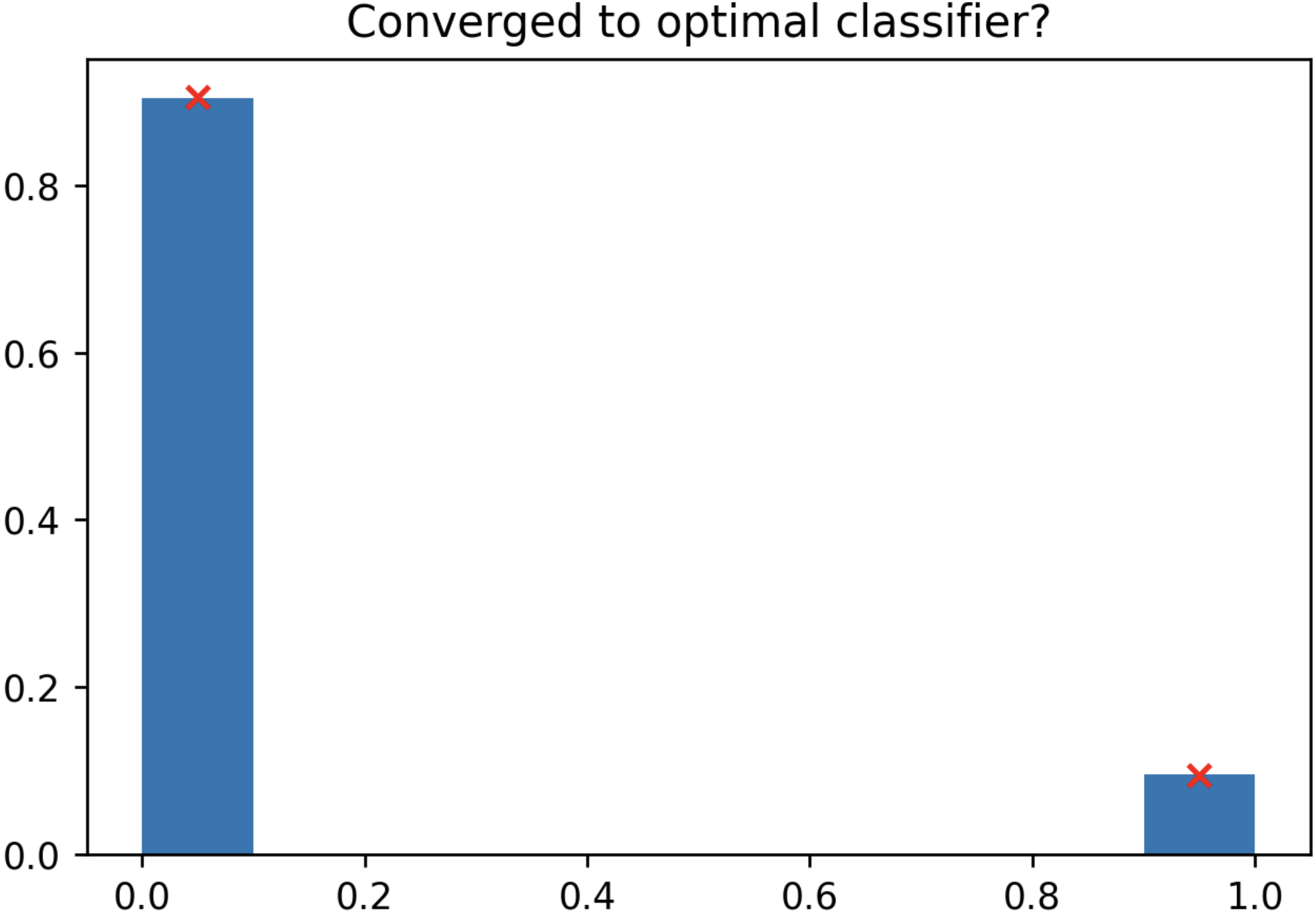}}    
  \caption{XOR in dimension $N=500$ with $\alpha = 0.1$ and $K=4$. The fraction of endpoints (SGD after $100N$ steps from a random Gaussian initialization) with $v$ having two positive entries and two negative entries, and with the consequent  correct signs on $m_i^\mu,m_i^\nu$, corresponding to the stable fixed points of the $\lambda = \infty$ dynamics; it matches the predicted $\frac{29}{32},\frac{3}{32}$ fractions.}\label{fig:XOR-isgood-various-lambda}
\end{figure}

\subsection{Low variance asymptotics}
As with the binary GMM, one can develop the large $\lambda$ limit of these asymptotics after $n\to\infty$.
The effective dynamics in this regime are noticeably more tractable. We defer the precise expressions of these
dynamics to Proposition~\ref{prop:xor-ballistic-ode} below. Let us instead classify the corresponding fixed points.

\begin{prop}\label{prop:fixpoints-xor}
The fixed points of the ODE system of Proposition~\ref{prop:xor-ballistic-ode} are classified as follows. If $\alpha>1/8$, then
the only fixed point is at $\mathbf{u}_{n}=\boldsymbol{0}$. 

If $0<\alpha<1/8$, then let $(I_{0},I_{\mu}^{+},I_{\mu}^{-},I_{\nu}^{+},I_{\nu}^{-})$
be any disjoint (possibly empty) subsets whose union is $\{1,...,K\}$.
Corresponding to that tuple $(I_{0},I_{\mu}^{+},I_{\mu}^{-},I_{\nu}^{+},I_{\nu}^{-})$,
is a set of fixed points that have $R_{ij}^{\perp}=0$ for all
$i,j$, and have 
\begin{enumerate}
\item $m_{i}^{\mu}=m_{i}^{\nu}=v_{i}=0$ for $i\in I_{0}$,
\item $m_{i}^{\mu}=v_{i}>0$ such that $\sum_{i\in I_{\mu}^{+}}v_{i}^{2}=\mbox{logit}(-4\alpha)$
and $m_{i}^{\nu}=0$ for all $i\in I_{\mu}^{+}$,
\item $-m_{i}^{\mu}=v_{i}>0$ such that $\sum_{i\in I_{\mu}^{-}}v_{i}^{2}=\mbox{logit}(-4\alpha)$
and $m_{i}^{\nu}=0$ for all $i\in I_{\mu}^{-}$,
\item $m_{i}^{\nu}=v_{i}<0$ such that $\sum_{i\in I_{\nu}^{+}}v_{i}^{2}=\mbox{logit}(-4\alpha)$
and $m_{i}^{\mu}=0$ for all $i\in I_{\nu}^{+}$,
\item $-m_{i}^{\nu}=v_{i}<0$ such that $\sum_{i\in I_{\nu}^{-}}v_{i}^{2}=\mbox{logit}(-4\alpha)$
and $m_{i}^{\mu}=0$ for all $i\in I_{\nu}^{-}$.
\end{enumerate}
In the $K=4$ case, these form $39$ connected sets of fixed points, and of which $4!=24$ are fixed points that are stable, corresponding to
the possible permutations in which each of $I_{\mu}^{+},I_{\mu}^{-},I_{\nu}^{+},I_{\nu}^{-}$
are singletons. 
\end{prop}
Similar to the binary GMM, in Figures~\ref{fig:XOR-endpoints-various-lambda}--\ref{fig:XOR-isgood-various-lambda}, we demonstrate numerically that the following predicted fixed points from the $\lambda\to\infty$ limit match those arising at finite large $n$ and $\lambda>0$.

{In the $K=4$ case, we can also exactly calculate the probability that the effective dynamics in the ballistic phase 
converges to a stable fixed point (as opposed to an unstable one). 
From a Gaussian initialization $\mu_n$ where $v_i \sim \cN(0,1)$ and $W_i \sim \cN(0,I_N/N)$ independently, 
this converges to~$\sfrac{3}{32}$. We refer the reader to Section~\ref{subsec:K=4-success-prob} for the proof. 

\subsection{Overparametrization in the XOR GMM}\label{subsec:overparametrization-xor}
Since the the derivations of the ballistic limiting equations apply for general $K$, we can also study the probability of ballistic convergence to a stable vs.\ unstable fixed point as one varies $K$. This addresses the regime of overparametrization for the XOR GMM since $K=4$ suffices to express a Bayes--optimal classifier. In this more generic setting, the probability of being in the ballistic domain of attraction of the stable fixed points (corresponding to the Bayes optimal classifiers) is
\begin{align}\label{eq:success-probability-overparametrization}
	\frac{1}{2^K}\sum_{k=2}^{K-2} \binom{K}{k} ( 1- 2^{1-k}) (1-2^{1+k - K})\,,
\end{align}
which goes to $1$ exponentially fast as $K$ grows. This clearly demonstrates the benefits of overparametrizaiton of the landscape in a concrete two-layer network: a random initialization is more likely to to contain the ``right" initial signature (corresponding to none of $I_\mu^+,I_\mu^-,I_\nu^+,I_\nu^-$ being empty at initialization) in order to be in the basin of a Bayes optimal classifier as the width grows, and as long as the right signature is present in the nodes at initialization, the SGD will ballistically converge to a global minimizer of the population loss. This is a rigorous example of the well-known lottery ticket hypothesis of~\cite{LotteryTicket}. Roughly speaking, the lottery ticket hypothesis proposes that the reason for the success of overparametrized networks is that they give more attempts for a sufficiently expressive subnetwork to be initialized well, and succeed at the task on its own.

\subsection{Diffusive limits at unstable fixed points}
As an example of the diffusions that can arise in the rescaled effective dynamics
at the unstable fixed points, let us consider the unstable fixed points in which $v$ has the correct
signature (two positive, two negative) but for each of those we are at a 
corresponding quarter-ring. By way of example, we can set $K=4$, or equivalently focus on a fixed point where all indices beyond the first four have $v_i = m_i^\mu = m_i^\nu =0$. Here, the dynamics effectively becomes
a pair of 2 two-layer GMM's on quarter-rings (as in Section~\ref{sec:binary-gmm}), that are anti-correlated. 
More precisely, let $(a_{1,\mu},a_{2,\mu})$ be such that $a_{1,\mu}^{2}+a_{2,\mu}^{2}=C_{\alpha}$
and $(a_{3,\nu},a_{4,\nu})$ such that $a_{3,\nu}^{2}+a_{4,\nu}^{2}=C_{\alpha}$, for $C_{\alpha}=- \mbox{logit}(4\alpha)$.
Take as fixed points about which we expand
to be $v_{i}=m_{i}^{\mu}=a_{i,\mu}>0$ 
and $v_{i}=m_{i}^{\nu}=a_{i,\nu}<0$ for $i=3,4$.
Namely, we let 
\begin{align*}
\tilde{v}_{i}= & \begin{cases}
\sqrt{N}(v_{i}-a_{i,\mu}) & i=1,2\\
\sqrt{N}(v_{i}-a_{i,\nu}) & i=3,4
\end{cases}\,, \qquad
 \begin{cases}
\tilde{m}_{i}^\mu = \sqrt{N}(m_{i}^\mu-a_{i,\mu}) & i=1,2\\
\tilde{m}_{i}^\nu = \sqrt{N}(m_{i}^\nu-a_{i,\nu}) & i=3,4
\end{cases}\,.
\end{align*}
(Set  $\tilde m_{i}^{\nu}=0$ for $i=1,2$ and $\tilde m_i^\mu = 0$ for $i=3,4$ in $\tilde \bu_n$ effectively removing those variables.)

\begin{figure}
    \centering
    \subfigure[$\lambda = 100$]{       \includegraphics[width=.23\linewidth]{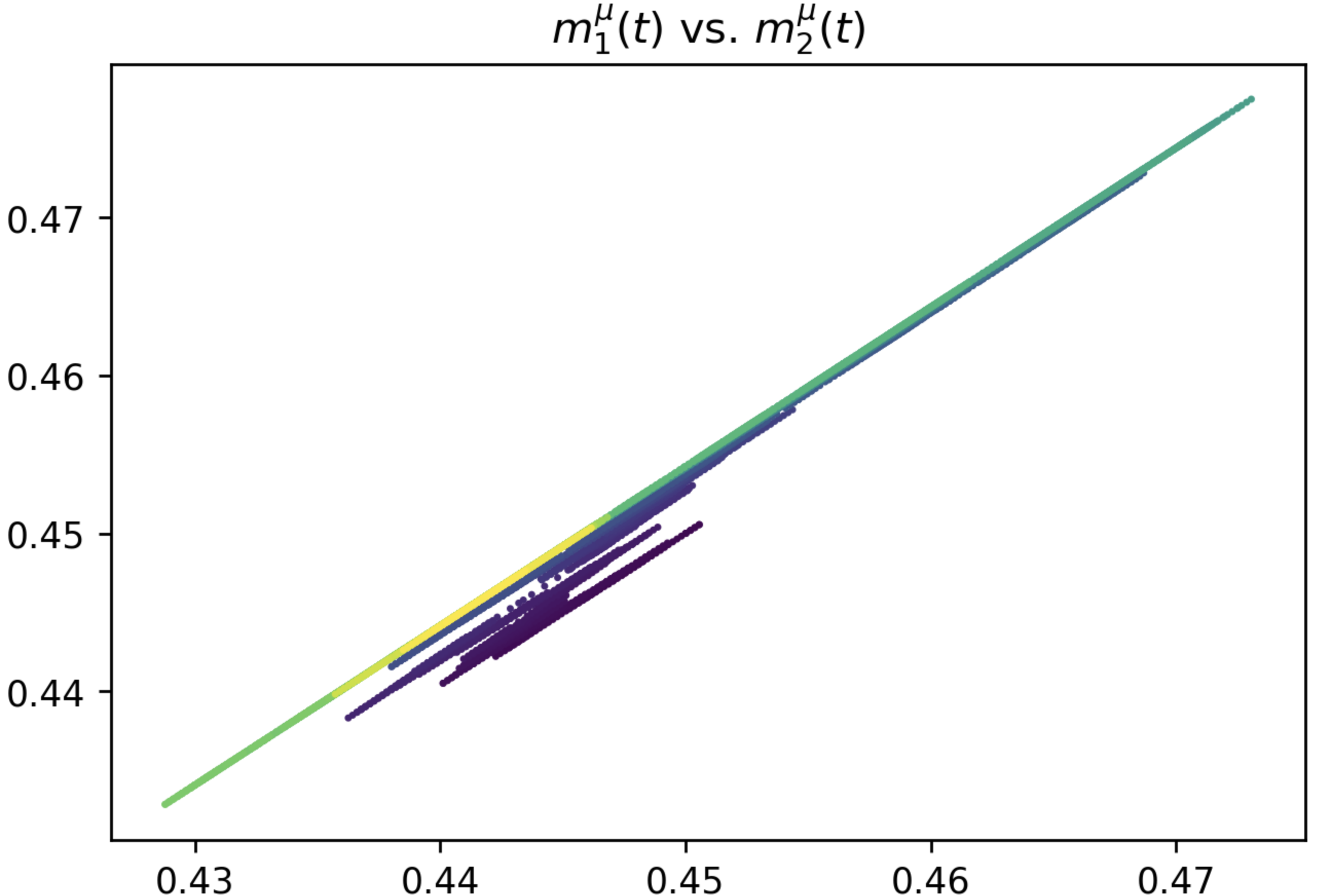}}    
  \subfigure[$\lambda = 100$]{       \includegraphics[width=.23\linewidth]{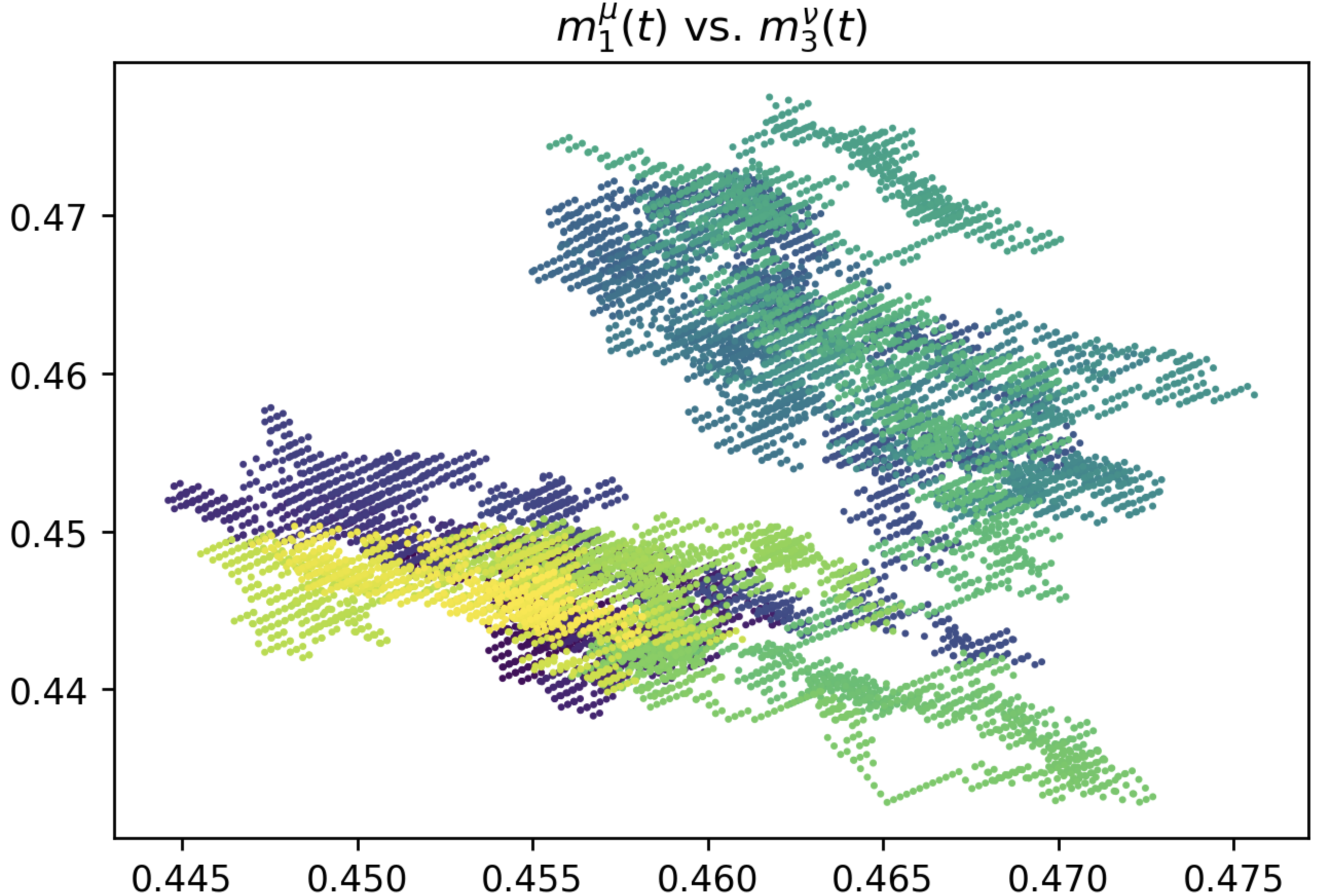}}    
    \subfigure[$\lambda = 1000$]{       \includegraphics[width=.23\linewidth]{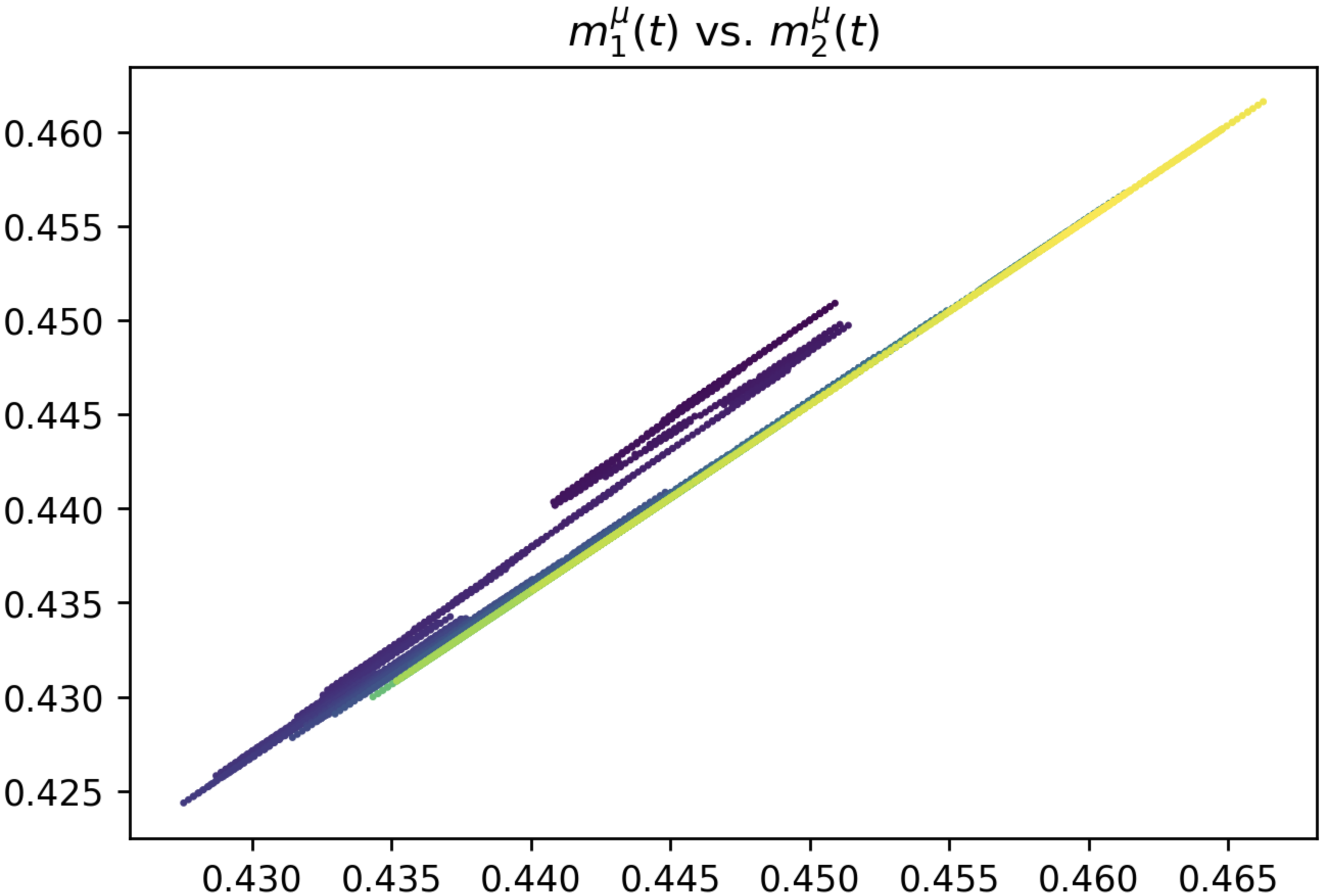}}    
  \subfigure[$\lambda = 1000$]{       \includegraphics[width=.23\linewidth]{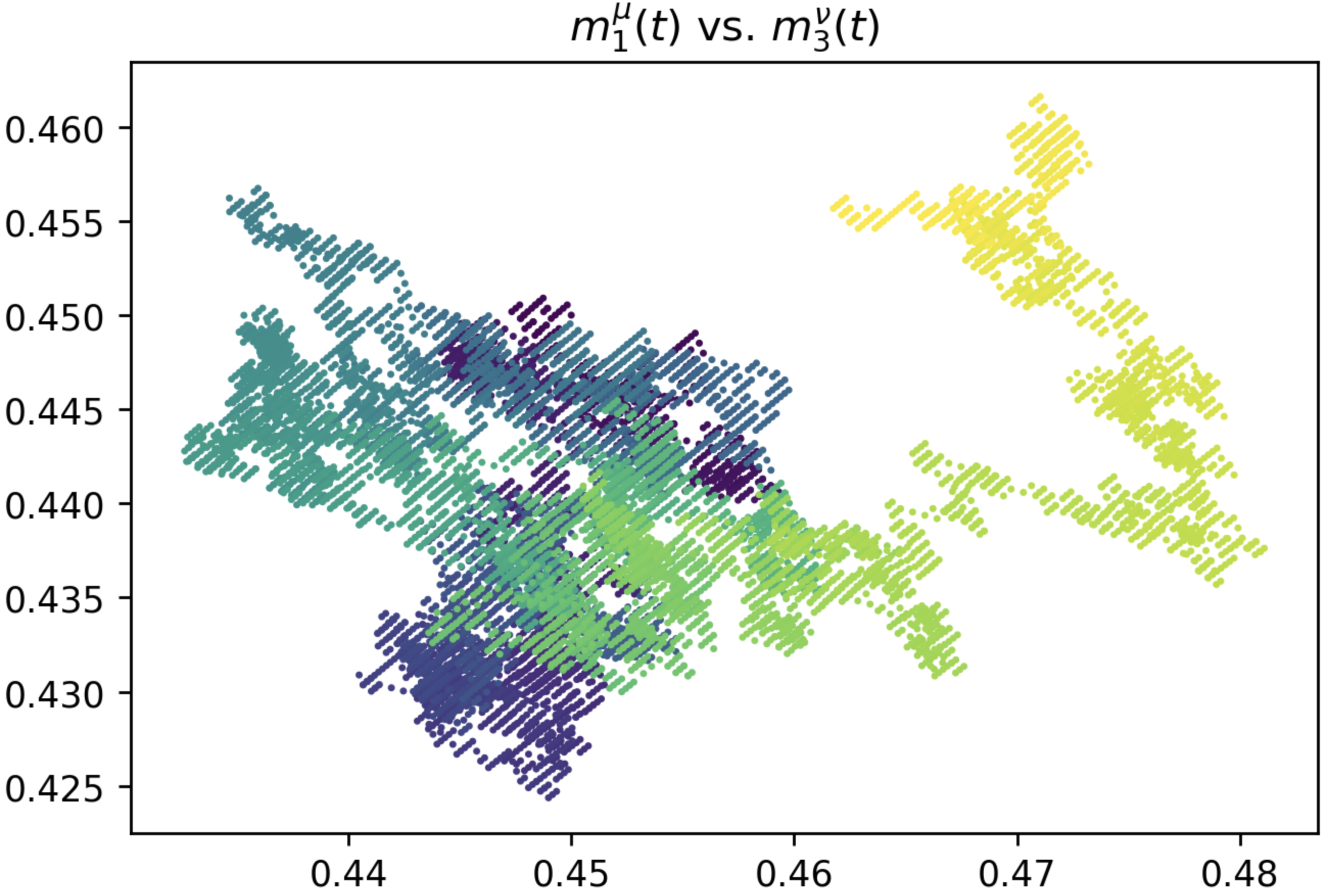}}    
  \caption{XOR GMM in dim.\ $N = 250$ and $\alpha = 0.1$ and $K=4$. (a) and (c) display the degenerate diffusive limits in the regime of Proposition~\ref{prop:XOR-diffusive-noiseless} in $(m_1^\mu,m_2^\mu)$ coordinates at $\lambda = 100$ and $\lambda =1000$. Conversely, (b) and (d) display the diffusive limits in the regime of Proposition~\ref{prop:XOR-diffusive-noiseless} in $(m_1^\mu,m_3^\nu)$, where the limiting diffusions are independent and are of rank 2.} 
    \label{fig:XOR-diffusions}
\end{figure}

\begin{prop}\label{prop:XOR-diffusive-noiseless}
Let $\delta_n = \sfrac{1}{N}$ and let $\tilde{\mathbf{u}}_{n}=(\tilde{v}_{i},\tilde{m}_{i}^{\mu},\tilde{m}_{i}^{\nu},R_{ij}^{\perp})$.
When $\lambda =\infty$, Theorem~\ref{thm:main} can be applied and $\tilde{\mathbf{u}}_{n}(t)$
converges to the solution of the SDE $d\mathbf{\tilde{u}}(t)= - \tilde{\mathbf{h}}(\tilde{\mathbf{u}})dt+\sqrt{\Sigma(\tilde{\mathbf{u}})}d\mathbf{B}_{t}$
where 
\begin{align*}
\tilde{h}_{\tilde{v}_{i}}= & \begin{cases}
\alpha(\tilde{v}_{i}-\tilde{m}_{i}^\mu) - a_{i,\mu}(\alpha-4\alpha^{2})\sum_{k=1,2}a_{k,\mu}(\tilde{v}_{k}+\tilde{m}_{k}^\mu) & i=1,2\\
\alpha(\tilde{v}_{i}-\tilde{m}_{i}^\nu) - a_{i,\nu}(\alpha-4\alpha^{2})\sum_{k=3,4}a_{k,\nu}(\tilde{v}_{k}+\tilde{m}_{k}^\nu) & i=3,4
\end{cases}\,,
\end{align*}
$\tilde h_{\tilde m_i^\mu}$ (resp., $\tilde h_{\tilde m_i^\nu}$) is like $h_{\tilde v_i}$ for $i=1,2$ (resp., $i=3,4$) with $\tilde v_i$ and $\tilde m_i^\mu$ (resp., $\tilde m_i^\nu$) swapped, 
$\tilde{h}_{R_{ij}^{\perp}}= 2 \alpha R_{ij}^{\perp}$, and $\tilde \Sigma$ is the constant rank-2 matrix whose non-zero entries are 
\begin{align*}
\tilde \Sigma_{\tilde{v}_{i}\tilde{v}_{j}} = \tilde \Sigma_{\tilde{m}_{i}^\mu \tilde{m}_{j}^\mu}= \tilde \Sigma_{\tilde{v}_{i}\tilde{m}_{j}^\mu} & = 	3\alpha^{2}a_{i,\mu}a_{j,\mu}\quad \mbox{if }  i,j\in \{1,2\}\,, \\ 
\tilde \Sigma_{\tilde{v}_{i}\tilde{v}_{j}} = \tilde \Sigma_{\tilde{m}_{i}^\nu \tilde{m}_{j}^\nu}= \tilde \Sigma_{\tilde{v}_{i}\tilde{m}_{j}^\nu} & = 	3\alpha^{2}a_{i,\nu}a_{j,\nu}\quad \mbox{if }  i,j\in \{3,4\}\,, \\
\tilde \Sigma_{\tilde{v}_{i}\tilde{v}_{j}}  = \tilde \Sigma_{\tilde{m}_{i}^\mu \tilde{m}_{j}^\nu}= \tilde \Sigma_{m_i^\mu,v_j} = \tilde \Sigma_{\tilde{v}_{i}\tilde{m}_{j}^\nu} & = 	- \alpha^{2}a_{i,\mu}a_{j,\nu}\quad \mbox{if }  i\in \{1,2\}, j\in \{3,4\} \,.
\end{align*}
\end{prop}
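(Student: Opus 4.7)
The plan is to apply Theorem~\ref{thm:main} to $(\tilde{\mathbf{u}}_n, L_n, P_n)$ at $\delta_n = 1/N$, following the template of Proposition~\ref{prop:bgmm-diffusive-noiseless} but now tracking two orthogonal active directions $\mu$ and $\nu$ together with the cross-interactions of the $v_i$'s across them.

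First I would verify $\delta_n$-localizability: for the linear statistics $\tilde v_i, \tilde m_i^\mu, \tilde m_i^\nu$ all second and third derivatives vanish identically, so item (1) of Definition~\ref{defn:localizable} is immediate, while the quadratic $R_{ij}^\perp$ coordinates are covered by the ballistic analysis of Proposition~\ref{prop:xor-ballistic-ode}; the moment estimates of items (2) and (3) follow from Gaussian concentration on $X$ combined with the uniform boundedness of $\sigma$ and $\sigma'$. Next I would compute $(-\cA_n + \delta_n \cL_n)\tilde{\mathbf u}_n$: for the linear coordinates the $\cL_n$ piece is zero, so the drift reduces to $-\sqrt N\,\langle \nabla \Phi, \nabla \mathbf u_n\rangle$ evaluated at the perturbed parameter, and a first-order Taylor expansion of the ballistic drift of Proposition~\ref{prop:xor-ballistic-ode} about the chosen fixed point kills the zeroth-order term and leaves exactly the claimed linear $\tilde h$. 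In the $\lambda = \infty$ limit, conditioning on $y$ and on the active sub-mixture causes $\Phi$ to split into a $\mu$-block in $(v_i,m_i^\mu)_{i=1,2}$ and a $\nu$-block in $(v_i,m_i^\nu)_{i=3,4}$, each of which reduces to the binary-GMM computation underlying Proposition~\ref{prop:bgmm-ballistic-noiseless}; the coefficient $\alpha - 4\alpha^2$ in $\tilde h$ comes from $\sigma'$ evaluated at the XOR fixed-point condition $\sum v_i^2 = -\logit(4\alpha)$.

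Then I would compute $\tilde \Sigma = \lim_N \delta_n J V J^T$. The $\delta_n$ prefactor cancels the $(\sqrt N)^2$ from the Jacobian along the active directions, reducing the problem to a Gaussian integral for $\mathbb E[\nabla H \otimes \nabla H]$ projected onto these directions. The intra-direction blocks ($i,j$ both in $\{1,2\}$ or both in $\{3,4\}$) are essentially those from Proposition~\ref{prop:bgmm-diffusive-noiseless}, adapted to the XOR noise structure. The genuinely new content is the cross block with $i \in \{1,2\}, j \in \{3,4\}$: at the fixed point $W_i$ concentrates along $\pm \mu$ and $W_j$ along $\pm \nu$, so orthogonality of $\mu,\nu$ reduces the Gaussian covariance to a single rank-$1$ contribution proportional to $a_{i,\mu} a_{j,\nu}$, with sign determined by the opposite signs of $v_i, v_j$ at the chosen fixed point. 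The initial data convergence $(\tilde{\mathbf u}_n)_* \mu_n \to \tilde \nu$ is immediate from the construction, and the conclusion then follows from Theorem~\ref{thm:main}.

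The main obstacle will be the careful Gaussian integration yielding $\tilde \Sigma$ in the presence of the ReLU indicators $\mathbf 1_{W_i \cdot X \ge 0}$, and, more conceptually, identifying the rank-$2$ degeneracy. The latter is the qualitative upshot of the computation: in the $\lambda = \infty$ limit each active $W_i$ aligns with a single unit direction $\pm \mu$ or $\pm \nu$, so only the one-dimensional component of $X$ along that direction contributes to $\langle \nabla H, \nabla \tilde u_i\rangle$. The diffusion therefore decomposes into exactly two one-dimensional noise sources, one for $\mu$ and one for $\nu$, producing the rank-$2$ structure of $\tilde \Sigma$ and mirroring the obstruction to crossing between quadrants observed for the binary GMM in Proposition~\ref{prop:bgmm-diffusive-noiseless}.
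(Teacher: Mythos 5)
Your proposal is correct and follows essentially the paper's own route: localizability is inherited from the ballistic analysis (with the extra $\delta_n^2$ factor absorbing the $\sqrt N$ rescaling), the drift is obtained by Taylor-expanding the $\lambda=\infty$ ballistic drift (i.e., the sigmoid about $C_\alpha=-\logit(4\alpha)$, giving the $\alpha-4\alpha^2$ coefficient), and $\tilde\Sigma$ is the $n\to\infty$ limit of $\delta_n \tilde J V \tilde J^T$ computed from the $\lambda=\infty$ covariance, with the $\mu$--$\nu$ cross-block as the only new computation. One small correction to your heuristic: the cross entries $-\alpha^{2}a_{i,\mu}a_{j,\nu}$ arise from the subtracted product of means $(W_i\cdot \mathbf A_i)(W_j\cdot \mathbf A_j)$, equivalently the anti-correlation of the mutually exclusive events that the sample is of $\mu$-type versus $\nu$-type (at $\lambda=\infty$ the gradients are multiples of the centered indicators of these events), rather than from the opposite signs of $v_i,v_j$; indeed, since $a_{j,\nu}<0$, these covariance entries are actually positive.
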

Numerical simulations in Figure~\ref{fig:XOR-diffusions} confirm these degenerate diffusive limits at finite $\lambda$.  

\part{Proofs}

\section{Proof of \prettyref{thm:main}}\label{sec:proof-of-convergence-thm}

In this section, we prove our main convergence result, namely Theorem~\ref{thm:main}. The drift terms can be seen from a Taylor expansion out to second order, with the role played by $\delta_n$-localizability being to justify neglecting certain negligible second order terms, as well as all higher order terms. The identification of the stochastic term is via the classical martingale problem~\cite{StroockVaradhan06} for summary statistics of stochastic gradient descent in the high-dimensional $n\to\infty$ limit.

\subsection*{Notational remark} For ease of notation, in the following we say that $f\lesssim g$ if there is some constant  $C>0$ such that 
$f\leq C g$ and that $f\lesssim_a g$ if there is some constant $C(a)>0$ depending only on $a$ such that
$f\leq C(a) g$. We will often suppress the dependence on $n$ in subscripts, when it is clear from context.

\begin{proof}[\textbf{\emph{Proof of \prettyref{thm:main}}}]
Our aim is to establish $\mathbf{u}_n\to \bu $ weakly as random variables on $C([0,\infty))$ where $\bu$ solves~\eqref{eq:effective-dynamics}. It is equivalent to show the same on $C([0,T])$ equipped with the sup-norm for every $T>0$. 

Let $\tau_K^n$ denote the exit time for the interpolated process $\bu_n(t)$ 
from $E_K^n$. Define its pre-image $E^*_{K,n}: = \bu_n^{-1}(E_K^n)$ and let $L^\infty_{K,n} = L^\infty(E^*_{K,n})$. 
For a function $f$, we use the shorthand $f_\ell$ to denote $f(X_\ell)$. 
By Taylor's theorem, we have that for any $C^3$ function $f$ and any $\ell\leq \tau_{K}^n/\delta$,
\begin{align}\label{eq:f-t-expression}
f_\ell & =f(X_{\ell-1}-\delta\nabla\Phi_{\ell-1}-\delta\nabla H^\ell_{\ell-1}) \nonumber \\
 & =f_{\ell-1} +\delta[A_{\ell}^{f}-A_{\ell-1}^{f}]+\delta[M_{\ell}^{f}-M_{\ell-1}^{f}]+O(\delta^{3}\norm{\nabla^{3}f}_{L^\infty_{K,n}}\cdot \norm{\nabla L}^{3}_{L^\infty_{K,n}})\,,
\end{align}
where $A_{\ell}^f$ and $M_\ell^f$ are defined by their increments as follows:
\begin{align*}
A_{\ell}^f-A_{\ell-1}^f & = {\big( -\cA_n + \delta \cL_n\big) f_{\ell-1}} +  \tfrac{1}{2} \left\langle \nabla\Phi\tensor\nabla\Phi,\nabla^{2}f\right\rangle _{\ell-1}\,, \\
M_{\ell}^{f}-M_{\ell-1}^{f} & = - \big\langle \nabla H^{\ell},\nabla f\big\rangle _{\ell-1} + \delta (\mathcal{E}_{\ell}^{f} - \mathcal E_{\ell-1}^f)\,, \\
\mathcal{E}_{\ell}^{f}  - \mathcal E_{\ell-1}^{f} & = \nabla^{2}f(\nabla\Phi,\nabla H^{\ell})_{\ell-1} +  \tfrac{1}{2} \big\langle \nabla^{2}f,\nabla H^{\ell}\tensor\nabla H^{\ell}-V \big\rangle_{\ell-1}\,,
\end{align*}
for $\cA_n = \langle \nabla \Phi,\nabla \rangle$, $\cL_n= \frac{1}{2} \sum_{i,j} V_{ij} \partial_i \partial_j$ and $V = \mathbb E [ \nabla H \otimes \nabla H]$ as in~\eqref{eq:A-L-operators}. 
Observe that $A_\ell^f$ is previsible (with respect to the filtration generated by $(Y_1,...,Y_{\ell-1})$), and $M_\ell^f$ is a martingale. 
We bound these for $f = u_j$ among $\mathbf{u}_n=(u_{1},...,u_{k})$. 

Recalling Definition~\ref{defn:localizable}, since $\mathbf u_n$ are $\delta_n$-localizable,
the error term in~\eqref{eq:f-t-expression} has
\[
\delta^{3}\sup_{x\in E_{K,n}^*}\E[\norm{\nabla^{3}u_{j}}\cdot\norm{\nabla L}^{3}] \lesssim\delta^{3}\norm{\nabla^{3}u_{j}}_{L^\infty_{K,n}}\left(\norm{\nabla\Phi}^{3}_{L^\infty_{K,n}}+\sup_{E^*_{K,n}} \E\norm{\nabla H}^{3}\right)\lesssim_K \delta^{3/2}\,.
\]
Since $\delta_n$ goes to $0$ as $n\to\infty$,  we may thus write $u_{j}(X_{\ell})$ as 
\[
u_{j}(X_\ell)=u_{j}(0) + \delta\sum_{\ell'\le \ell} \big(A^{u_j}_{\ell'}-A^{u_j}_{\ell'-1}\big)  + \delta\sum_{\ell'\le \ell} \big(M^{u_j}_{\ell'}-M^{u_j}_{\ell'-1}\big)+o(1)\,,
\]
where the last term is $o(1)$ in $L^{1}$ uniformly for $\ell\leq\tau_{K}/\delta$. 
Now let us define for $s\in[0,T]$, 
\begin{align*}
a_{j}'(s) & =A^{u_j}_{[s/\delta]}-A^{u_j}_{[s/\delta]-1} \qquad \mbox{and} \qquad b'_{j}(s) =M^{u_j}_{[s/\delta]}-M^{u_j}_{[s/\delta]-1}
\end{align*}
If we let 
$$a_{j}(s)=\int_{0}^{s}a'_{j}(s')ds'=a_{j}(\delta[s/\delta])+(s-\delta[s/\delta])(A^{u_j}_{[s/\delta]}-A^{u_j}_{[s/\delta]-1})$$
and similarly $b_{j}(s)=\int_{0}^{s}b'_{j}(s')ds'$, then recalling that $\bu_n(s)$ is the linear interpolation of $(u_j([s/\delta]))_{j}$,
we may write  
$$\mathbf{u}_n(s)= \mathbf{u}_n({0}) + \mathbf{a}_n(s)+\mathbf{b}_n(s)+o(1).$$
where $\mathbf a_n(s)  = (a_j(s))_{j}$ and $\mathbf b_n(s) = (b_j(s))_j$. 

We now prove that the sequence $(\mathbf{u}_{n}(s\wedge\tau_{K}^n ))$
is tight in $C([0,T])$ with limit points which
are $(1/4)$-Holder for each $K$. To this end, let us 
define $$\mathbf{v}_n(s)=\mathbf{u}_n({0}) + \mathbf{a}_n(s)+\mathbf{b}_n(s)\,.$$
As the $o(1)$ error above is uniform in $t$, we have that 
\[
\sup_{0\leq s\leq\tau_{K}^n}\norm{\mathbf{u}_n(s)-\mathbf{v}_n(s)}\to 0\,, \qquad \mbox{in $L^1$}\,.
\]
Thus it suffices to show the claimed tightness and Holder properties of limit points for $\mathbf{v}_n$ instead of~$\mathbf{u}_n$. We
aim to show that for all $0\le s,t\le T$, 
\begin{align}\label{eq:v-n-kolmogorov-continuity}
\E\norm{\mathbf{v}_n(s\wedge\tau_{K})-\mathbf{v}_n(t\wedge\tau_{K})}^{4}\lesssim_{K,T}(t-s)^{2},
\end{align}
from which we will get that the sequence $\mathbf{v}_{n}(s\wedge\tau_{K})$
is uniformly $1/4$-H\"older by Kolmogorov's continuity theorem. 
Evidently, for all $s,t$ we have that
\[
\|\mathbf{v}_n(s)-\mathbf{v}_n(t)\| \leq \|\mathbf{a}_n(s)-\mathbf{a}_n(t)\| + \|\mathbf{b}_n(s)-\mathbf{b}_n(t)\|\,.
\]
We control these terms in turn. We will do this coordinate wise and,
for readability, fix some $j\le k$ and let $u=u_j$, $a=a_j$, $b=b_j$ etc.

For the previsible term, we have
\begin{align}\label{eq:previsible-term}
\E\abs{a(s\wedge\tau_{K})& -a(t\wedge\tau_{K})}^{4} \nonumber \\ 
& \lesssim \E \big| \delta \sum_{\ell} \big((-\cA_n + \delta \cL_n) u\big)_\ell  \big|^4 +  \E\big|{ \delta^{2}  \sum_\ell \left\langle \nabla\Phi\tensor\nabla\Phi,\nabla^{2}u\right\rangle_\ell }\big|^{4},
\end{align}
where these sums are over steps $\ell$ ranging from $[s/\delta] \wedge \tau_K/\delta$ to $[t/\delta] \wedge \tau_K/\delta$. 

Let $\mathbf{h} = (h_j)_{j\le k}$ be as in~\eqref{eq:eff-drift}. 
Then
the first term in~\eqref{eq:previsible-term}
satisfies 
\begin{align*}
\E \big| \delta \sum_{\ell} \big((-\cA_n + \delta \cL_n) u\big)_\ell \big|^4 & \lesssim_K \E\abs{\delta\sum_\ell h_{j}(\mathbf u_n)_\ell}^{4}+o((t-s)^{4})\\
 & \leq (t-s)^{4}\left(\norm{h_{j}}_{L^{\infty}(E_{K}^n)}^{4}+o(1)\right) \lesssim_K (t-s)^4
\end{align*}
by continuity of $h_j$. 
For the second term in~\eqref{eq:previsible-term}, 
\begin{align*}
\E\abs{\delta^{2}\sum\left\langle \nabla\Phi\tensor\nabla\Phi,\nabla^{2}u\right\rangle_\ell }^{4} & \leq\delta^{8}\Big(\abs{((t-s)/\delta)}\sup_{x\in E_{K,n}^*}\norm{\nabla\Phi(x)}^{2}\sup_{x\in E_{K,n}^*}\norm{\nabla^{2}u(x)}_{\op}\Big)^{4}
\end{align*}
which is $\lesssim_K \delta^{2}(t-s)^{4}$ 
by items (1)--(2) $\delta_n$-localizability. (Applying this bound for $s=0,t=T$,
the last term in $a$ is vanishing in the limit for each $K$ whenever $\delta_n = o(1)$.)
Combining the above bounds yields
\[
\E\abs{a(s\wedge\tau_{K})-a(t\wedge\tau_{K})}^{4}\lesssim_{K}(t-s)^{4}.
\]

For the martingale term, notice that by Burkholder's inequality, 
\[
\E|b(s\wedge \tau_K) - b(t\wedge \tau_K)|^4 = 
\E\left[\left(\delta\sum(M^{u}_{\ell}-M^{u}_{\ell-1})\right)^{4}\right] \lesssim \E\left[\left(\delta^{2}\sum(M^{u}_{\ell}-M^{u}_{\ell-1})^{2}\right)^{2}\right]\,,
\]
where the sum again runs over steps $\ell$ ranging from $[s/\delta]\wedge \tau_K$ to $[t/\delta]\wedge \tau_K$. 
Repeatedly using the inequality $(x+y+z)^{2}\lesssim x^{2}+y^{2}+z^{2}$,
it suffices to bound the above quantity for each of the three terms
defining the martingale difference $M_\ell^u - M_{\ell-1}^u$ respectively. 

For the first term in that martingale difference, observe that
\begin{align}\label{eq:M-term-1}
\E\Big[\Big(\delta^{2}\sum_\ell \big\langle \nabla H^{\ell},\nabla u\big\rangle _{\ell -1}^{2}\Big)^{2}\Big] & =\delta^{4}\sum_{\ell,\ell'}\E\Big[\big\langle \nabla H^{\ell},\nabla u\big\rangle _{\ell-1}^{2}\big\langle \nabla H^{\ell'},\nabla u\big\rangle _{\ell'-1}^{2}\Big] \nonumber \\
 & \leq\Big(\delta\sum_\ell\Big(\delta^{2}\E\big\langle \nabla H^{\ell},\nabla u\big\rangle _{\ell -1}^{4}\Big)^{1/2}\Big)^{2}  \lesssim_{K}\left(t-s\right)^{2},
\end{align}
where in the second line we used Cauchy-Schwarz and in the last we used item (3) of $\delta_n$-localizability.

For the second term in the martingale difference, 
\begin{align}\label{eq:M-term-2}
\E\Big[\Big(\delta^{4}\sum_\ell & \big(\nabla^{2}u(\nabla\Phi,\nabla H^{\ell})_{\ell-1}\big)^{2}\Big)^{2}\Big]   \nonumber\\
 & \leq\delta^{6}(t-s)^{2} \sup_{x\in E^*_{K,n}} \norm{\nabla^{2}u(x)}_{\op}^4 \cdot\norm{\nabla\Phi(x)}^4 \cdot\E\norm{\nabla H(x)}^{4} \lesssim_K \delta^{2}(t-s)^2\,,
\end{align}
 by items (1)--(2) of $\delta_n$-localizability.  Finally, by the same reasoning, for the third term, 
\begin{align}\label{eq:M-term-3}
\E\Big[\Big(\delta^{4}\sum_\ell & \big\langle \nabla^{2}u,\nabla H^{\ell}\tensor\nabla H^{\ell}-V\big\rangle _{\ell-1}^{2}\Big)^{2}\Big]  \nonumber \\
& \lesssim\delta^{6}(t-s)^{2}\sup_{x\in E^*_{K,n}} \norm{\nabla^{2}u(x)}_{\op}^4 \cdot \E\big[\norm{\nabla H(x)}^{8}\big] \lesssim_K (t-s)^2\,.
\end{align}
All of the above terms are $O((t-s)^{2})$ since $0\le s,t\le T$. Thus we have the claimed~\eqref{eq:v-n-kolmogorov-continuity}, 
and
by Kolmogorov's continuity theorem, $(\mathbf{v}_n(s\wedge \tau_K))_s$, are uniformly
$\sfrac 14$-Holder and thus the sequence is tight with $\sfrac 14$-Holder limit points. 
Notice
furthermore that if we look at $(\mathbf{v}_n(t\wedge \tau_K)-\mathbf{a}_n(t\wedge \tau_K))_t$,
this sequence is also tight and the limits points are continuous martingales.
Let us examine their limiting quadratic variations.

Let $\mathbf v_n^K(t) = \mathbf v_n(t\wedge \tau_K)$ and define $\mathbf a_n^K(t)$ and $\mathbf b_n^K(t)$ analogously. Furthermore, let $\mathbf v^K(t)$, $\mathbf a^K(t)$ and $\mathbf b^K(t)$ be their respective limits which we have shown to exist and be $\sfrac{1}{4}$-Holder. 

We will compute the limiting quadratic variation for $\mathbf{b}^K(t)$. For ease of notation, let $\Delta M^{u_i}_{\ell} = M_{\ell}^{u_i}-M_{\ell-1}^{u_i}$ and $\Delta \mathcal E_\ell^{u_i} = \mathcal E_{\ell}^{u_i} - \mathcal E_{\ell-1}^{u_i}$. 
Notice first that for $1\le i,j\le k$, 
\begin{align*}
	b_{n,i}^{K}(t)b_{n,j}^{K}(t) - \int_{0}^{t} \delta \mathbb E\big[\Delta M^{u_i}_{[s/\delta]\wedge \tau_K} \Delta M_{[s/\delta]\wedge \tau_K}^{u_j} \big] ds\,,
\end{align*}
is a martingale. 
We therefore need to consider the limit as $n\to\infty$ of the integral above. 
Write
\begin{align}\label{eq:martingale-quadratic-variation}
	 \mathbb E[\Delta M_{\ell}^{u_i} \Delta M_{\ell}^{u_j}]  & = \langle \nabla u_i ,V\nabla u_j\rangle +  \delta \mathbb E [ \langle\nabla H^\ell,\nabla u_i \rangle_{\ell - 1} \Delta \mathcal E_{\ell}^{u_j}] + \delta\mathbb E [\Delta \mathcal E_{\ell}^{u_i}\langle \nabla H^\ell,\nabla u_j\rangle_{\ell-1}] \\
	 & \qquad + \delta^2 \mathbb E[\Delta \mathcal E_{\ell}^{u_i} \Delta\mathcal E_{\ell}^{u_j}]\,. \nonumber
\end{align}
 Consider the integrals of $\delta$ times each of these four terms separately. 
For the first term, 
\begin{align*}
\sup_{t\leq T}\Big | \int_{0}^{t}\delta\left\langle \nabla u_{i},V\nabla u_{j}\right\rangle _{[s/\delta]\wedge \tau_K} -  {\Sigma}_{ij}(\mathbf{v}_{n}^K(s))ds \Big | \leq T \sup_{x\in E_{K,n}^*}\abs{\delta\left\langle \nabla u_{i},V \nabla u_{j}\right\rangle(x) -{\Sigma}_{ij}(\mathbf{u}_n(x))}\,,
\end{align*}
goes to zero as $n\to\infty$ by the assumption in~\eqref{eq:diffusion-matrix}. 

We now reason that the integrals of $\delta$ times the other three terms in~\eqref{eq:martingale-quadratic-variation} all go to zero as $n\to\infty$. The second and third are identical: by Cauchy--Schwarz, 
\begin{align*}
	\sup_{x\in E_{K,n}^*} |\delta^2 \mathbb E[ \langle \nabla H,\nabla u_i\rangle \Delta \mathcal E_{\ell}^{u_j} ] |\le \delta^2 \mathbb E[\langle\nabla H,\nabla u_i\rangle^2]^{1/2} \mathbb E[(\Delta \mathcal E_{\ell}^{u_i})^2]^{1/2}\,.
\end{align*}
The first expectation contributes $\delta^{-1/2}$ by the first part of item (3) of localizability. Also, 
\begin{align}\label{eq:delta-E-square-moment}
	\mathbb E[(\Delta \mathcal E_{\ell}^{u_i})^2]^{1/2} \lesssim \mathbb E[\langle \nabla^2 u_i ,\nabla \Phi\otimes \nabla H\rangle^{2}]^{1/2}  + \mathbb E[\langle \nabla^2 u_i ,\nabla H\otimes \nabla H - V\rangle^2]^{1/2}\,.
\end{align}
The first of these terms is at most $\delta^{-1}$ as argued in~\eqref{eq:M-term-2}. The second is $o(\delta^{-3/2})$ by the  second part of item (3) in the definition of localizability. 
As such, we are able to conclude that 
\begin{align*}
	\sup_{t\le T} \Big| \int_0^t \delta^2 \mathbb E[ \langle \nabla H,\nabla u_i\rangle_{[s/\delta]\wedge \tau_K} \Delta \mathcal E_{[s/\delta]\wedge \tau_K}^{u_j} ] ds \Big |\,,
\end{align*} 
goes to zero as $n\to\infty$. 

The integral of $\delta$ times the fourth term in~\eqref{eq:martingale-quadratic-variation} is handled similarly using Cauchy--Schwarz and the bound of $o(\delta^{-3/2})$ on~\eqref{eq:delta-E-square-moment}. 

Altogether, we end up with 
\begin{align*}
	\lim_{n\to\infty} \sup_{i,j\le k} \sup_{t\le T}\Big| \int_{0}^{t} \delta \mathbb E[\Delta M_{[s/\delta]\wedge \tau_K}^{u_i} \Delta M_{[s/\delta]\wedge \tau_K}^{u_j}] ds - \int_{0}^t \Sigma_{ij} (\mathbf{v}_n^K(s))ds\Big| = 0\,.
\end{align*}
Thus, if we consider the continuous martingales given by $\mathbf b^K(t)$, its angle bracket is, by definition, given by 
$$\langle \mathbf b^K\rangle_t =\int_{0}^{t}\mathbf{\Sigma}(\mathbf{v}^{K}(s))ds \,.$$

By Ito's formula for continuous martingales (see, e.g., \cite[Theorem 5.2.9]{EthierKurtz86}),
we have that $f(\mathbf{v}_t)- \int_0^t \mathsf{L} f(\mathbf{v}_s) ds$ is a martingale for all 
 $f\in C_0^\infty(\R^k)$, where 
\[
\mathsf{L} = \frac{1}{2}\sum_{ij=1}^k \Sigma_{ij}\partial_i\partial_j-\sum_{i=1}^{k} h_i \partial_i.
\]
Since, by assumption, $\mathbf{h},\sqrt{\Sigma}$ are locally Lipschitz---and thus Lipschitz on $E_K$---this property uniquely characterizes
the solutions to \eqref{eq:effective-dynamics} (see, e.g., \cite[Theorem 6.3.4]{StroockVaradhan06}).
Thus $\mathbf{v}_K$ converges to the solution of \eqref{eq:effective-dynamics} stopped at $\tau_K$.
By a standard localization argument \cite[Lemmas 11.1.11-12]{StroockVaradhan06}, every limit point $\mathbf{v}(t)$ of $\mathbf{v}_n(t)$ solves the SDE~\eqref{eq:effective-dynamics} (using here that $E_{K}$ is an exhaustion by compact sets of $\R^k$).
\end{proof}

\section{Proofs for matrix and tensor PCA}\label{app:matrix-tensor-pca}
In this section, we prove the results of Section~\ref{sec:matrix-tensor-pca}. We will state them in the more general setting where we add a ridge penalty to the loss, so that for $\alpha\ge 0$ fixed, the loss is given by 
\begin{align}\label{eq:L-tensor-pca}
	 L (x,Y) =  - 2(\langle W,x^{\otimes k}\rangle+\lambda\langle x,v\rangle^{k})+ \norm{x}^{2k} + \tfrac{\alpha}{2} \|x\|^2 +c(Y)\,,
\end{align}
where $c(Y)$ only depends on $Y$. Note that $H(x) = -2\langle W,x^{\otimes k}\rangle$.

Our first aim is to establish Proposition~\ref{prop:matrix-tensor-effective-dynamics}, showing that the summary statistics $\mathbf{u}_{n}=(m,r_{\perp}^{2})$ satisfy the conditions of Theorem~\ref{thm:main} with the desired $\mathbf f, \mathbf g$ and $\Sigma$. We begin by checking localizability for $\mathbf{u}_n$. In what follows, 
for ease of notation we will denote $r^{2}=r_{\perp}^{2}$ and $R^2 = m^2 + r^2$.  In these coordinates, 
\begin{align}\label{eq:Phi-tensor-pca}
\Phi(x)=- 2\lambda m^{k}+ (r^{2}+m^{2})^{k} + \frac{\alpha}{2} (r^2 + m^2)+c'
\end{align}

\begin{lem}
	The distribution of $L(x,Y)$ depends only on $\mathbf{u}_n = (m,r^2)$. Furthermore, if $\lambda$ is fixed and $\delta_n = O(1/n)$, then  $\mathbf{u}_n$ is $\delta_n$-localizable for $E_K$ being the centered balls of radius $K$ in $\mathbb R^2$.  
\end{lem}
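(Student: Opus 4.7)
The plan is to verify the two assertions separately, with most of the work being in the localizability bounds, which reduce to Gaussian moment computations for $\nabla H$.

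For the distributional claim, I would note that the only random part of $L(x,Y)$ is the inner product $\langle W, x^{\otimes k}\rangle$ (everything else is a deterministic function of $x$ and $Y$-independent constants). Since $W$ is an i.i.d.\ Gaussian $k$-tensor, $\langle W, x^{\otimes k}\rangle$ is a centered Gaussian with variance $\|x\|^{2k} = (m^2 + r^2)^k$, so its law is a function of $(m,r^2)$. The remaining terms $-2\lambda m^k$, $(r^2+m^2)^k$, and $\tfrac{\alpha}{2}(r^2+m^2)$ are manifestly functions of $(m,r^2)$.

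For localizability, I would first observe that on $\mathbf{u}_n^{-1}(E_K)$ we have $|m|,r^2 \le K$, hence $\|x\|^2 = m^2 + r^2$ is bounded by a constant $C(K)$. Then I verify each item:
\begin{itemize}
\item \textbf{Item (1):} For $u_1 = m$, $\nabla^2 u_1 = \nabla^3 u_1 = 0$. For $u_2 = r^2 = \|x\|^2 - m^2$, $\nabla^2 u_2 = 2(I - vv^T)$ has operator norm $2$ and $\nabla^3 u_2 = 0$. Both are $O(1)$, so they satisfy the required $O(\delta_n^{-(3-j)/2})$ bound trivially.
\item \textbf{Item (2):} A direct computation from \eqref{eq:Phi-tensor-pca} gives $\nabla\Phi = -2\lambda k m^{k-1} v + (2k R^{2(k-1)} + \alpha)x$, which is bounded by a $C(K)$ on $E_K^*$. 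For $\nabla H(x) = -2k\, W \cdot_{k-1} x^{\otimes(k-1)}$, the entries of $\nabla H$ are jointly Gaussian with $\operatorname{Cov}(\nabla_i H, \nabla_j H) = 4k^2 \|x\|^{2(k-1)} \delta_{ij}$; then $\mathbb E\|\nabla H\|^{8} \lesssim n^4 \|x\|^{8(k-1)} \lesssim_K n^4 = O(\delta_n^{-4})$ by standard Gaussian moment bounds.
\item \textbf{Item (3):} $\langle \nabla H, \nabla u_1\rangle = -2k\langle W, v\otimes x^{\otimes(k-1)}\rangle$ is Gaussian with variance $4k^2\|x\|^{2(k-1)}$, so its fourth moment is $O(1)$. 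Similarly for $\nabla u_2 = 2(x-mv)$, $\langle \nabla H,\nabla u_2\rangle$ is Gaussian with variance $16k^2 r^2\|x\|^{2(k-1)}$, bounded on $E_K^*$. For the second inequality, $\nabla^2 u_1 = 0$ kills that case, and for $u_2$ we have $\langle \nabla^2 u_2, \nabla H\otimes \nabla H - V\rangle = 2\big(\|\nabla H\|^2 - \langle v, \nabla H\rangle^2\big) - 2\,\mathbb E[\cdot]$, whose variance by the above covariance structure is $O(n)$ (a sum of $n-1$ independent centered chi-squares times $4k^2\|x\|^{2(k-1)}$). This is $O(\delta_n^{-1}) = o(\delta_n^{-3})$, as required.
\end{itemize}

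The only non-routine step is the second part of item (3), which relies on the sum-of-chi-squares structure of $\|\nabla H\|^2$ rather than a direct Gaussian bound; I would spell this out by diagonalizing the covariance of $\nabla H$. All remaining bounds are immediate from boundedness of $\|x\|$ on $E_K^*$ and from the fact that $\nabla H$ is Gaussian with variance structure scaling like $\|x\|^{2(k-1)} I$, together with $\delta_n = O(1/n)$.
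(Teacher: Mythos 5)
Your proof is correct in substance and follows essentially the same route as the paper: verify the items of Definition~\ref{defn:localizable} one at a time using $\nabla m=v$, $\nabla r^2=2(x-mv)$, $\nabla^2 m=0$, $\nabla^2 r^2=2(I-vv^T)$, and Gaussian moment bounds for $\nabla H$ on $\mathbf{u}_n^{-1}(E_K)$. One formula you lean on is, however, stated incorrectly: since $H=-2\langle W,x^{\otimes k}\rangle$ with $W$ an i.i.d.\ (not symmetrized) Gaussian tensor, the covariance of $\nabla H$ is not diagonal; as computed in~\eqref{eq:matrix-tensor-V}, $V_{ij}=4k(k-1)x_ix_jR^{2k-4}+4kR^{2k-2}\delta_{ij}$, a rank-one perturbation of a multiple of the identity (for $k=2$, $V_{ii}=8x_i^2+8R^2$, not $16R^2$). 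This slip does not create a gap, because on $\mathbf{u}_n^{-1}(E_K)$ one still has $\|V\|_{op}\le C(K)$ and $\operatorname{tr}V\le C(K)n$, so $\mathbb{E}\|\nabla H\|^8\lesssim_K n^4$ and, for the second half of item (3), the Gaussian quadratic form $\langle\nabla^2 r^2,\nabla H\otimes\nabla H-V\rangle$ has variance $2\operatorname{tr}\big((MV)^2\big)\le 2\|V\|_{op}^2\operatorname{tr}(M^2)=O_K(n)$ with $M=2(I-vv^T)$; but your ``sum of $n-1$ independent centered chi-squares'' picture needs this adjustment, since $I-vv^T$ and $V$ do not commute in general, so diagonalizing the covariance of $\nabla H$ alone does not decouple the form. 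Alternatively, the cruder bound the paper actually uses, $\mathbb{E}[\langle\nabla^2 r^2,\nabla H\otimes\nabla H-V\rangle^2]\lesssim \mathbb{E}\|\nabla H\|^4\lesssim_K n^2=o(\delta_n^{-3})$, already follows from your item-(2) estimate and is all that is required. The other difference from the paper is cosmetic: for $\mathbb{E}\|\nabla H\|^8$ the paper bounds $\|\nabla H\|\le 2k\|W\|_{op}R^{k-1}$ and invokes the operator-norm bound for i.i.d.\ Gaussian $k$-tensors (\cite[Lemma 4.7]{BGJ18a}), whereas you argue directly from the covariance of $\nabla H$; your route is more elementary and equally valid once the covariance is corrected as above, and likewise your sharper $O(1)$ bound in the first half of item (3) is fine (the requirement is only $C(K)\delta_n^{-2}$).
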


\begin{proof}
We check the items in Definition~\ref{defn:localizable} one by one, beginning with item (1). 
Express the derivatives for $\mathbf{u}_n$ as  
\begin{align}\label{eq:matrix-tensor-Jacobian}
\nabla m  =v\,, \qquad \nabla r^{2} =2(x-m v)\,. 
\end{align}
Notice that $\nabla^2 m = 0$, while $\nabla^2 r^2 = 2 (I - vv^T) $, whose operator norm is simply $2$, and $\nabla^\ell u_i=0$ for all $\ell \ge 3$.

For item (2), differentiating~\eqref{eq:Phi-tensor-pca}, $
\nabla\Phi  =\partial_{1}\phi\nabla m+ \partial_{2}\phi\nabla r^{2}$,
where 
\begin{align*}
\partial_{1}\phi & =- 2 \lambda km^{k-1}+ (2kR^{2k-2}+\alpha)m \qquad \partial_{2}\phi = kR^{2k-2}+\tfrac\alpha2.
\end{align*}
Notice that $\left\langle \nabla m,\nabla m\right\rangle =1,\left\langle \nabla m,\nabla r^{2}\right\rangle =0,$
and
$\left\langle \nabla r^{2},\nabla r^{2}\right\rangle =4r^{2}$. 
Consider $$\|\nabla \Phi\| \le |\partial_1 \phi| \|\nabla m\| + |\partial_2 \phi| \|\nabla r^2\|\,;$$ the bounding quantity is evidently a continuous function of $m,r^2$ and therefore as long as $x$ is such that $(m,r^2)\in E_K$, it is bounded by some $C(K)$. Next, if we consider 
$$
\mathbb E[\|\nabla H\|^8] 
\le C_k \mathbb E [ \|W(x,\ldots,x,\cdot)\|^8] \le \mathbb E\|W\|_{\op}^{8} \cdot R^{8k} \le C(k,K) n^{4}
$$
where the bound on the operator norm of an i.i.d.\ Gaussian $k$-tensor can be found, e.g., in~\cite[Lemma 4.7]{BGJ18a}. 
Moving on to item (3), by the same reasoning, for every $w$,  
\begin{align*}
\mathbb E[\langle \nabla H, w \rangle^4] \le 16 k \mathbb E [|W(w,x,\ldots,x)|^4]\le C(k,K) n^2 \|w\|\,.
\end{align*}
If $w=\nabla m = v$ then $\|w\|=1$ and if $w=\nabla r^2 = 2(x-m v)$ then $\|w\|\le C(K)$, so in both cases this is at most $C(k,K)n^2$. Finally, $\nabla^2 u$ is only non-zero if $u=r$ in which case it is $I - vv^T$. Then, 
\begin{align*}
	\mathbb E[\langle \nabla^2 r, \nabla H \otimes \nabla H - V\rangle^2]  \le 2 \mathbb E[\|\nabla H\|^4] \le C(k,K) n^2
\end{align*}
by the second item in the definition of localizability, and evidently the right-hand side is $O(\delta^{-2})$ if $\delta_n = O(1/n)$. 
\end{proof}

\begin{proof}[\textbf{\emph{Proof of Proposition~\ref{prop:matrix-tensor-effective-dynamics}}}]
Having checked localizability for $\mathbf{u}_n$, we apply Theorem~\ref{thm:main}. To compute $\mathbf f$, by the above, 
\begin{align*}
	f_{m} & = -2 \lambda km^{k-1} + (2 kR^{2k-2} + \alpha)m\,, \qquad f_{r^2}  = 2r^2 (2k R^{2k-2} +\alpha)\,.
\end{align*}
We next turn to calculating the corrector. For this, we first calculate the matrix $V = \mathbb E[\nabla H\otimes \nabla H]$. Recalling that $H = -2\langle W, x^{\otimes k}\rangle$ where $W$ is an i.i.d.\ Gaussian $k$-tensor, we have that 
\begin{align}\label{eq:matrix-tensor-V}
	V_{ij}  = \mathbb E [ \partial_i H \partial_j H ] 
	& = 4 k (k-1) x_i x_j  R^{2k-4} + 4 k  R^{2k-2}\mathbf 1\{i=j\}\,.
\end{align}
In particular, for $\delta = c_\delta /n$, we have  $\delta \mathcal L^\delta m  = 0$ and 
\begin{align*}
 \delta \mathcal L^\delta r^2 & 
  = \frac{4c_\delta}{n} k \Big((n-1) R^{2k-2} + (k-1) r^2 R^{2k-4}\Big) 
\end{align*}
from which we obtain in the limit that $n\to\infty$ that $g_m = 0$ and $g_{r^2} = 4 c_\delta k R^{2k-2}$. 

Together, these yield the ODE system of~\eqref{eq:matrix-tensor-ODE},
\begin{align*}
\dot{u}_{1}=  2 u_1 (\lambda ku_{1}^{k-2}-kR^{2k-2} -\alpha)\,,\qquad\qquad
\dot{u}_{2}=  - (4 u_{2} - 4 c_\delta)kR^{2k-2}- 2 \alpha u_2\,.
\end{align*}
which in the $\alpha = 0$ case matches Proposition~\ref{prop:matrix-tensor-effective-dynamics}. 
Finally, to see that $\Sigma = 0$, consider 
\begin{align}\label{eq:matrix-tensor-JVJ^T}
	J V  J^T = \begin{pmatrix} 	4 k(k-1)m^2 R^{2k-4} + 4 kR^{2k-2} & 4 k (k-1) m(R^2 - m)R^{2k-4} \\ 4 k (k-1) m(R^2 - m)R^{2k-4}  & 4 k(k-1)(R^2 - m)^2R^{2k-4}	\end{pmatrix}\,,
\end{align} 
which when multiplied by $\delta = O(1/n)$ evidently vanishes. 
\end{proof}

\subsection{The fixed points of Proposition~\ref{prop:matrix-tensor-effective-dynamics}}
We now turn to analyzing the ODE of Proposition~\ref{prop:matrix-tensor-effective-dynamics}.

\begin{proof}[\textbf{\emph{Proof of Proposition~\ref{prop:matrix-tensor-fixed-points}}}]
At the fixed points of the ODE in Proposition~\ref{prop:matrix-tensor-effective-dynamics},
\begin{align*}
\lambda ku_{1}^{k-1} & =\big(kR^{2k-2}+\alpha\big)u_{1}\,, \qquad \mbox{and} \qquad 2c_\delta kR^{2k-2} = \big(2kR^{2k-2}+\alpha\big)u_{2}\,.
\end{align*}
If $u_{1}=0$, then $R^2 = u_2$ and there are two possible fixed points: either $u_{2}=0$
or $u_{2}$ solves 
\[
ku_{2}^{k-2}(2 c_\delta - 2 u_{2})= \alpha.
\]
Notice that if $k=2$, this has a nontrivial solution of the form
$c_\delta -\frac{\alpha}{2}=u_{2}$, provided $\alpha<\alpha_c(2):= 2 c_\delta$, and if $k>2$,
this has a nontrivial solution provided $\alpha\le\max_{x\ge 0}kx^{k-2}(2c_\delta - 2x)$ at $c_\delta (k-2)x^{k-3}-(k-1)x^{k-2}=0$ i.e.,
$\frac{c_\delta (k-2)}{k-1}=x$. This gives 
\[
\alpha< \alpha_c(k) :=2 c_\delta^{k-1}  k(k-1)^{-(k-1)}(k-2)^{k-2}.
\]
Evidently when we take $\alpha = 0$, then its non-trivial solution is at $u_2 =1$ for all $k\ge 2$. 

Alternatively, if $u_{1}\neq0$ at a fixed point, then we can simplify further and get 
\begin{align*}
\lambda u_{1}^{k-2} & =R^{2k-2}+\alpha/k\,, \qquad \mbox{and} \qquad kR^{2k-2}=(kR^{2k-2}+\alpha)u_{2}\,,
\end{align*}
so that at the fixed point, 
\begin{align*}
u_{1}^{k-2} & = \frac{kR^{2k-2}+\alpha}{\lambda k}\,,\qquad \mbox{and}\qquad u_{2} =\frac{2 c_\delta kR^{2k-2}}{2kR^{2k-2}+\alpha}\,.
\end{align*}
For simplicity of calculations, set $\alpha =0$ as is the case in Proposition~\ref{prop:matrix-tensor-effective-dynamics}. 
Then, we simply get $u_2 = c_\delta$. In the case of $k=2$, we also find that there is a solution if and only if $\lambda>c_\delta$, in which case $R^{2} = \lambda$, from which together with $R^2 = u_1^2 + u_2$, we also get $u_1 = \pm\sqrt{\lambda -c_\delta}$. 

In the general case of $k>2$, we find that $R^{2} = {c_\delta}+\lambda^{ - \frac{2}{k-2}} R^{\frac{4(k-1)}{k-2}}$. 
This has real solutions (all of which have $R\ge u_2 = c_\delta$ as required) whenever $\lambda>\lambda_c(k)$ defined as 
\begin{align}\label{eq:lambda-c-k}
	\lambda_c(k): = \Big(\frac{c_\delta}{k}\Big)^{k/2} \Big(\frac{(2k-2)^{k-1}}{(k-2)^{(k-2)/2}}\Big)\,.
\end{align}
(Interpreting $0^0 =1$, this returns $\lambda_c(2) = c_\delta$.) 
With this $\lambda$, whenever $\lambda>\lambda_c(k)$, the equation for $R^2$ has exactly two real solutions, both of which are at least $c_\delta$ which we can denote by 
\begin{align*}
	\rho_\dagger(k,\lambda) & := \inf\{\rho\ge 1: \lambda^{-\frac{2}{k-2}} \rho^{\frac{2(k-1)}{k-2}} - \rho + c_\delta =0\}\,, \\
	 \rho_\star(k,\lambda) & := \sup\{\rho\ge 1: \lambda^{-\frac{2}{k-2}} \rho^{\frac{2(k-1)}{k-2}} - \rho +c_\delta =0\}\,.
\end{align*}
When $\lambda>\lambda_c(k)$, $\rho_\dagger <\rho_\star$ and when $\lambda = \lambda_c(k)$, the two are equal. Given this, we can then solve for $\tilde u_1$ at the corresponding fixed point, and find that they occur at 
\begin{align}\label{eq:m-dagger-m-star}
	m_\dagger (k,\lambda) = \sqrt{\rho_\dagger -c_\delta}\,, \qquad \mbox{and}\qquad m_\star(k,\lambda) = \sqrt{\rho_\star - c_\delta}\,,
\end{align}
as claimed.
\end{proof}

\subsection{Effective dynamics for the population loss}\label{subsec:tensorPCA-pop-loss-effective-dynamics}
In practice, one is interested in tracking the loss, or ideally, the generalization error. In this subsection, we add the generalization error $\Phi$ to our set of summary statistics and obtain limiting equations for its evolution from~\eqref{eq:Phi-fixed-points}. 

Recalling~\eqref{eq:Phi-tensor-pca}, the fact that $\Phi$ is a localizable summary statistic follows from the facts that $\|\nabla m\|,\|\nabla r^2\|\le C(K)$, and the fact that $\Phi$ is a smooth $n$-independent function of $m,r^2$. 
 
For simplicity of calculations let us stick to $\alpha = 0$. 
\begin{align*}
	f_\Phi = \langle \nabla \Phi,\nabla \Phi\rangle & =  4\lambda^2 k^2 m^{2(k-1)} -8\lambda k^2 m^k R^{2k-2} + 4 k^2 R^{4k-4} m^2 + 4k^2 r^2 R^{4k-4} \\ 
	& = 4k^2  m^2 \big( \lambda^2 m^{2(k-2)} - 2\lambda m^{k-2} R^{2k-2} + R^{4k-4}\big) + 4k^2 r^2 R^{4k-4}\,. 
\end{align*}
Next, consider the corrector for $\Phi$. For this, notice that 
\begin{align*}
	\tfrac{1}{2}\nabla^2 \Phi & = -\lambda k (k-1) m^{k-2} \nabla m^{\otimes 2} +  kR^{2k-2}\nabla m^{\otimes 2} +  k(k-1)R^{2(k-2)} (2 m\nabla m + \nabla r^2)\otimes \nabla m  \\
	&\qquad +  k (k-1) R^{2(k-2)} ( 2m \nabla m \otimes \nabla r^2 + \nabla r^2 \otimes \nabla r^2) + \tfrac{1}{2} \partial_2 \phi \nabla^2 r^2\,.
\end{align*}
Recalling $V$ from~\eqref{eq:matrix-tensor-V}, and taking $\delta = c_\delta /n$, all the terms in $\sum_{ij} V_{ij} \partial_i \partial_j \Phi$ vanish in the limit except the contribution from the $\nabla^2 r^2$, which yields $g_\Phi = \lim_{n\to\infty} \delta \cL^\delta \Phi = 4 c_\delta k^2  R^{4(k-1)}$
Finally, we wish to compute the volatility for the stochastic part of the evolution of $\Phi$. For this, consider $\nabla \Phi V \nabla\Phi^T$ and notice that all the entries of that matrix are continuous functions of $\mathbf{u}_n$ and thus go to zero when multiplied by $\delta = O(1/n)$.

\subsection{Diffusive limits at the equator}\label{app:diff-equator}
In this subsection, we develop the stochastic limit theorems for the rescaled observables about the axis $m=0$. 
Here we take as variables $(\tilde u_{1},\tilde u_{2})=(\sqrt{n}m,r^{2})$. For simplicity of presentation, we take $\alpha =0$ and $c_\delta =1$.

\begin{proof}[\textbf{\emph{Proof of Proposition~\ref{eq:rescaled-tensor-pca}}}]
We begin by checking localizability. The change from the original variables is in the $J$ matrix, in which now $\nabla \tilde u_1 = \sqrt n \nabla m = \sqrt n v$. This does not affect items (1)--(2) of localizability; for item (3), notice that 
$$\mathbb E[\langle \nabla H,\nabla m\rangle^4] =n^2 \mathbb E[\langle \nabla H,v\rangle^4] \le n^2 \mathbb E[W_{1,...,1}^4]  \le C n^2\,.$$
The second part of item (3) is unchanged since $\nabla^2 \tilde u_1 =0$. 

Computing the drifts, 
\begin{align*}
\langle \nabla\Phi,\nabla\tilde u_{1}\rangle= & -2 k \lambda \sqrt n m^{k-1} +2 k \sqrt n R^{2k-2} m = - 2k \lambda n^{ - \frac{k-2}{2}} \tilde u_1^{k-1} + 2k (r^2 + ({\tilde u_1^2}/{n}))^{k-1} \tilde u_1\,, \\
\langle \nabla \Phi,\nabla r^{2}\rangle= & 4 k r^{2} R^{2k-2} = 4 k r^2 (r^2 + ({\tilde u_1^2}/{n}))^{k-1}\,.
\end{align*}
Taking limits as $n\to\infty$, as long as $\lambda$ is fixed in $n$, we see that $\mathbf{f}$ is given by 
\begin{align*}
	f_{\tilde u_1} = \begin{cases} -2 k \lambda  \tilde u_1^{k-1} + 2 k\tilde u_2^{k-1} \tilde u_1 & k=2 \\ 2 k\tilde u_2^{k-1}\tilde u_1 & k\ge 3 \end{cases}\,, \qquad\mbox{and}\qquad f_{\tilde u_2} = 4 k\tilde u_2^k\,.
\end{align*}

We turn to obtaining the correctors in these rescaled coordinates. Evidently $\delta \cL \tilde u_1 =0$ still by linearity of $\tilde u_1$. Following the calculation for the corrector, it is now given by $g_{\tilde u_2} = 4 k \tilde u_2^{k-1}$. 

Next we consider the volatility of the stochastic process one gets in the  limit. Recalling $JVJ^T$ from~\eqref{eq:matrix-tensor-JVJ^T}, and noticing that the rescaling $J\to\tilde J$ multiplies its $(1,1)$-entry by $n$ and its off-diagonal entries by $\sqrt n$, we find that in the new coordinates,  
\begin{align}\label{eq:matrix-tensor-JVJ^T-rescaled}
	\tilde J V  \tilde J^T = \begin{pmatrix} 4 k(k-1)\tilde u_1^2 R^{2k-4} + 4 k n R^{2k-2} & 4 k (k-1) \tilde u_1(R^2 - m)R^{2k-4} \\ 4 k (k-1) \tilde u_1 (R^2 - m)R^{2k-4}  & 4 k(k-1)(R^2 - m)^2 R^{2k-4}	\end{pmatrix}
\end{align} 
Multiplying by $\delta = 1/n$ and taking the limit as $n\to\infty$, the only entry of this matrix that survives is from $\Sigma_{11}$ where we get $\Sigma_{11} = 4 k\tilde u_2^{k-1}$ as claimed.
\end{proof}

Regarding the discussion in the $k\ge 3$ case of~\eqref{eq:diffusive-m-ballistic-r}, when $\lambda_n = \Lambda n^{(k-2)/2}$, observe that the first term in $\langle \Phi,\nabla \tilde u_1\rangle$ above would not vanish and would instead converge to $-4 k\Lambda \tilde u_1^{k-1}$.

\subsection{Diffusive limit for the radius}\label{app:diff-radius}
We now show how to rescale the radial term $r^2$  to obtain a diffusive limit for $r^2$ about $r^2 = 1$. (For readability, we take the case $c_\delta =1$ though an analogous result works for general $ c_\delta$.) To this end, consider $\tilde{\mathbf{u}}_n = (\tilde u_1, \tilde u_2) = (\sqrt{n} m, \sqrt{n} (r^2 -1))$. 
Now $J$ is in terms of   $\nabla \tilde u_1 = \sqrt{n} \nabla m$ and $\nabla \tilde u_2 = \sqrt{n} \nabla u_2$. 
Let us verify localizability for $\tilde{\mathbf{u}}_n$; the only changes as compared to the previous subsection are those entailing $\tilde u_2$. 

For item (1), $\|\nabla^2 \tilde u_2\|_{\op}= O(\sqrt n)$ and $\nabla^3 \tilde u_2 =0$. For the first part of item (3), 
\begin{align*}
	\mathbb E[\langle \nabla H,\nabla \tilde u_2\rangle^4] = n^2 \mathbb E[\langle \nabla H, 2(x-mv)\rangle^4] \lesssim n^2(R^{4k} + m^4) \mathbb E[W_{1,...,1}^4])\lesssim_K n^2\,,
\end{align*}  
where we used in the first inequality that the law of $H$ is rotation invariant and $H$ is a $k$-homogenous function. 
For the second part of item (3), 
\begin{align*}
	 \mathbb E[\langle \nabla^2 \tilde u_2, \nabla H\otimes \nabla H - V\rangle^2] \le n \mbox{Var}(\|\nabla H\|^2)\,.
\end{align*}
We now express 
\begin{align*}
	\mbox{Var}(\|\nabla H\|^2) = \sum_{i} \mbox{Var}((\partial_i H)^2) + \sum_{i\ne j} \mbox{Cov}((\partial_i H)^2,(\partial_j H)^2)\,.
\end{align*}
The $\partial_i H$ are Gaussian with mean zero, and by~\eqref{eq:matrix-tensor-V}, variance $C_k' R^{2(k-2)} x_i^2 + C_k R^{2(k-1)}$ and covariance $C_k x_i x_j R^{2(k-2)}$. Recall the following fact about Gaussians: if $X,Y$ are Gaussians with variances $\sigma^2$ and covariance $t$, then $\mbox{Cov}(X^2,Y^2)\le C t^2\sigma^4$ for some universal constant $C$. Also, $\mbox{Var}(X^2)\le C\sigma^4$. 
Applying this to $\partial_i H$, we get 
\begin{align*}
	\mbox{Var}(\|\nabla H\|^2) \lesssim_K n + \sum_{i, j} x_i^2 x_j^2 \lesssim_K n\,.
\end{align*}
Combined with the above, this gives a bound of $n^2 = O(\delta^{-2})$ on the second part of item (3).

We now calculate the resulting drifts. For $\mathbf{f}$, write 
\begin{align*}
	\cA_n u_1  & = -2 k \lambda \mathbf{1}_{k=2}\tilde u_1^{k-1} + 2k r^{2(k-1)} \tilde u_1= -2 k \lambda \mathbf{1}_{k=2}\tilde u_1^{k-1} + 2k (1+n^{-1/2}\tilde u_2)^{k-1} \\ 
	\cA_n \tilde u_2 & = 4kn^{1/2} r^2(r^2 + (\tilde u_1^2/n))^{k-1} = 4kn^{1/2}(1+n^{-1/2}\tilde u_2)(1+n^{-1/2}\tilde u_2 + n^{-1} \tilde u_1^2)^{k-1} \\ 
	& \qquad \qquad \qquad \qquad \qquad \qquad \quad = 4kn^{1/2} + 4 k^2 \tilde u_2 \mathbf + o(1)
\end{align*}
We next calculate the prelimits of the corrector. 
Evidently $\delta \mathcal L \tilde u_1 = 0$ still by linearity of $\tilde u_1$ and 
\begin{align*}
	\delta \mathcal L \tilde u_2 = \sqrt{n} \delta \mathcal L^\delta r^2  = \frac{4}{\sqrt n} k\Big((n-1)R^{2k-2} + (k-1)(1+n^{-1/2} \tilde u_2)R^{2k-4}\Big) 
\end{align*}
Combining terms and sending $n\to\infty$, we obtain 
\begin{align*}
	f_{\tilde u_1}-g_{\tilde u_1}  = -2 k  \lambda \mathbf{1}_{k=2} \tilde u_1^{k-1} + 2k\,, \qquad\mbox{and}\qquad f_{\tilde u_2} - g_{\tilde u_2}  = 4k \tilde u_2 . 
\end{align*}
It remains to compute the volatility of the stochastic process one gets in the limit. Recalling $JVJ^T$ from~\eqref{eq:matrix-tensor-JVJ^T} and noticing that the rescaling $J$ to $\tilde J$ has now multiplied all four of its entries by $n$, we find that in the new coordinates, 
\begin{align}\label{eq:matrix-tensor-JVJ^T-doubly-rescaled}
	\tilde J V  \tilde J^T = \begin{pmatrix} 4 k(k-1)\tilde u_1^2 R^{2k-4} + 4 k n R^{2k-2} & 4 k (k-1) n^{1/2} \tilde u_1(R^2 - m)R^{2k-4} \\ 4 k (k-1) n^{1/2} \tilde u_1 (R^2 - m)R^{2k-4}  & 4 k(k-1) n (R^2 - m)^2 R^{2k-4}	\end{pmatrix}\,.
\end{align} 
Multiplying by $\delta = 1/n$ and taking the limit as $n\to\infty$, the two entries of this matrix that survive are $\Sigma_{11}$ and $\Sigma_{22}$, where $
	\Sigma_{11} = 4k$ and $\Sigma_{22} = 4k(k-1)$. 
All in all, we obtain \eqref{eq:matrix-tensor-PCA-double-diffusive-limit}.

\section{Proofs for the binary Gaussian mixture model}\label{sec:binary-gmm-proofs}

Recall the cross-entropy loss for the binary GMM with SGD from~\eqref{eq:Loss-bgmm}, and recall the set of summary statistics $\mathbf u_n$ from~\eqref{eq:u-bgmm}. 
 
\begin{lem}\label{lem:summary-stats-bGMM}
The distribution of $L((v,W))$ depends only on $\bu_n$ from~\eqref{eq:u-bgmm}.
In particular, we have that $\Phi(x)=\phi(\bu_n)$ for some
$\phi$. Furthermore, $\bu_n$ satisfy the bounds in item (1) of Definition~\ref{defn:localizable} if $E_K$ is the ball of radius $K$ in $\mathbb R^{2N+2}$.
\end{lem}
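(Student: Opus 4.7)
The plan is to verify the three claims in order: first a distributional reduction via an orthogonal decomposition of $X$, then the immediate consequence for $\Phi$, and finally a direct calculation of the derivatives of the summary statistics.

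For the distributional claim, the key observation is that conditionally on $y$, we may write $X = (2y-1)\mu + \lambda^{-1/2}Z$ with $Z \sim \mathcal N(0,I_N)$, and decompose $Z = (Z\cdot\mu)\mu + Z^\perp$ where $Z\cdot\mu \sim \mathcal N(0,1)$ is independent of the projection $Z^\perp$ onto $\mu^\perp$. Since the sample loss depends on $(v,W,X,y)$ only through the inner products $v_1,v_2$ and $W_i\cdot X$ for $i=1,2$ (and the penalty $p(v,W)$), I would then write
\[
W_i\cdot X = (2y-1)m_i + \lambda^{-1/2}\big(m_i (Z\cdot\mu) + W_i^\perp\cdot Z^\perp\big)\,,
\]
and note that $(W_1^\perp\cdot Z^\perp, W_2^\perp\cdot Z^\perp)$ is centered Gaussian with covariance matrix $(R_{ij}^\perp)_{i,j\in\{1,2\}}$. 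Therefore the joint law of $(W_1\cdot X,W_2\cdot X)$ given $y$ is determined by $(m_1,m_2,R_{11}^\perp,R_{12}^\perp,R_{22}^\perp)$. Combined with the fact that $\|W_i\|^2 = m_i^2 + R_{ii}^\perp$, so $p(v,W)$ is itself a deterministic function of $\bu_n$, this yields that the law of $L((v,W);(y,X))$ depends only on $\bu_n$. The second claim, $\Phi(x)=\phi(\bu_n)$, then follows by taking expectations.

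For item~(1) of Definition~\ref{defn:localizable}, I would compute the Hessians and third derivatives of each coordinate of $\bu_n$ explicitly and show that they are uniformly bounded (indeed vanishing or $n$-independent) in the operator norm, so the desired estimates hold trivially with $C(K)$ independent of $K$. The statistics $v_1,v_2,m_1,m_2$ are linear in the parameter $x=(v,W)$, hence have zero Hessian and third derivative. For the quadratic statistics, write $R_{ii}^\perp = \|W_i\|^2 - m_i^2$, whose Hessian in the $W_i$-block is $2(I_N - \mu\mu^T)$ (with operator norm $2$) and all third derivatives vanish; and $R_{12}^\perp = W_1\cdot W_2 - m_1 m_2$, whose Hessian has only the $W_1$--$W_2$ off-diagonal block equal to $I_N - \mu\mu^T$ (operator norm $1$), with vanishing third derivatives. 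In particular, $\|\nabla^2 u_i\|_{op}\le 2$ uniformly in $x$ and $\nabla^3 u_i \equiv 0$, so item~(1) of localizability holds.

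The only step requiring any real care is the distributional reduction: one must keep track of the fact that it is the law of $L$ (not $L$ itself, which depends on the full sample $X$) that reduces to a function of $\bu_n$, and one must verify that the $(Z\cdot\mu,Z^\perp)$ split is independent so that the Gaussian covariance argument closes. Everything else is a bookkeeping exercise.
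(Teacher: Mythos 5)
Your proposal is correct and follows essentially the same route as the paper: a decomposition of the Gaussian noise into its $\mu$-component and the orthogonal part (whose projections onto $W_1^\perp,W_2^\perp$ are jointly Gaussian with covariance $\lambda^{-1}R_{ij}^\perp$), the observation that $p(v,W)$ and hence the law of $L$ (and $\Phi$) are functions of $\bu_n$ alone, and the explicit computation showing the linear statistics have vanishing Hessians while the $R^\perp$ statistics have Hessians of operator norm at most $2$ and zero third derivatives.
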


\begin{proof}
Let $X_\mu \sim \mathcal N(\mu,I/\lambda)$ and $X_{-\mu} \sim \mathcal N(-\mu, I/\lambda)$. Then, notice that 
\begin{align*}
L((v,W)) \stackrel{d}= \begin{cases} -v\cdot g(WX_\mu)+\log(1+e^{v\cdot g(WX_\mu)})+p(v,W) & \mbox{w. prob. $1/2$}\\ 
		\log(1+e^{v\cdot g(- W X_{\mu})})+p(v,W) & \mbox{w. prob. $1/2$} \end{cases}\,.
\end{align*}
Next, notice that as a vector, $
(W_1 X_\mu, W_2 X_\mu)$ is distributed as $(m_1 + Z_{1,\mu} m_1 + Z_{1,\perp}, m_2 + Z_{2,\mu} m_2 +  Z_{2,\perp})$,
where $Z_{1,\mu},Z_{2,\mu}$ are i.i.d.\ $\cN(0,\lambda^{-1})$, and $Z_{1,\perp},Z_{2,\perp}$ are jointly Gaussian with means zero and covariance 
\begin{align}\label{eq:Z-perp-covariance}
\lambda^{-1} \left[\begin{array}{cc}
R_{11}^{\perp} & R_{12}^\perp\\
R_{12}^\perp & R_{22}^{\perp}
\end{array}\right]
\end{align}
Similarly, the distribution of $WX_{-\mu}$ also only depends on $(m_i, R_{ij}^{\perp})_{i,j}$. Finally, 
\begin{align*}
	p(v,W) = \frac{\alpha}{2} \big(v_1^2 + v_2^2 +m_1^2 + R_{11}^\perp + m_2^2 + R_{22}^{\perp}\big)
\end{align*}
Therefore, at any point $(v,W)$, the law of $L((v,W))$, and thus $\Phi$, is simply a function of $\bu_n(v,W)$. 
To see that the summary statistics satisfy the bounds of item (1) in Definition~\ref{defn:localizable}, write $\nabla = (\partial_{v_1},\partial_{v_2},\nabla_{W_1},\nabla_{W_2})$. Then  
\begin{align}\label{eq:bgmm-J}
J = (\nabla u_\ell)_\ell = 
\left[\begin{array}{ccccccc}
1 & 0& 0& 0& 0& 0& 0\\
0 & 1 & 0& 0& 0& 0& 0\\
0& 0& \mu& 0& W_2^\perp & 2W_1^\perp& 0\\
0& 0& 0& \mu& W_1^\perp & 0& 2W_2^\perp
\end{array}\right]^{\mathsf{T}}
\end{align}
For the higher derivatives, evidently we only have second derivatives in the last 3 variables
each of which is given by a block diagonal matrix where only one block is non-zero and is given by an identity matrix.
The third derivatives of all elements of $\bu_n$ are zero.
\end{proof}

We can now express the loss, the population loss, and their respective derivatives and they (their laws at a fixed point) will evidently only depend on the summary statistics. One arrives at the  following expressions for $\nabla L$ by direct calculation from~\eqref{eq:Loss-bgmm}. 
\begin{align}
\nabla_{v_{i}}L&=(W_{i}\cdot X)\mathbf{1}_{W_i \cdot X\ge 0}\big(-y+\sigma(v\cdot g(WX)\big)+\alpha v_{i} \label{eq:bgmm-nabla-v-L}\\
\nabla_{W_{i}}L&=v_{i} X \mathbf{1}_{W_{i}\cdot X\ge0}\big(-y+ \sigma(v\cdot g(WX)) \big)+\alpha W_{i}\label{eq:bgmm-nabla-W-L}
\end{align}
In what follows, for an arbitrary vector $w\in \mathbb R^N$, we use the notation 
\begin{align}\label{eq:bgmm-A}
	\mathbf{A}_{i}  = \mathbb E\big[ X \mathbf{1}_{W_i\cdot X\ge 0} \big(-y+\sigma(v\cdot g(WX)\big)\big]
\end{align}
(Notice that if $w \in \{\mu, W_i, W_i^\perp\}$, then $\mathbf A_i \cdot w$ is only a function of $\bu_n$ by the same reasoning as used in Lemma~\ref{lem:summary-stats-bGMM}.)
Then, we can also easily express 
\begin{align}
	\nabla_{v_{i}} \Phi = W_i \cdot \mathbf{A}_i+\alpha v_{i} \qquad \nabla_{W_{i}} \Phi =v_{i} \mathbf A_i+\alpha W_{i}\label{eq:bgmm-nabla-W-Phi}
\end{align}
and for $H=L-\Phi$,  
\begin{align}
	\nabla_{v_{i}} H &= W_{i}\cdot \Big(X\mathbf{1}_{W_i \cdot X\ge 0}\big(-y+\sigma(v\cdot g(WX)\big) - \mathbf{A}_i\Big)\,,\label{eq:bgmm-nabla-v-H}\\
\nabla_{W_{i}} H &=v_{i} \Big(X\mathbf{1}_{W_i \cdot X\ge 0}\big(-y+\sigma(v\cdot g(WX)\big) - \mathbf{A}_i\Big)\,. \label{eq:bgmm-nabla-W-H}
\end{align}
Finally, the matrix $V$ can be expressed as follows: 
\begin{align}\label{eq:bgmm-V}
	V_{v_i,v_j} & = \mathbb E\big[ (W_i\cdot X)(W_j\cdot X)\mathbf 1_{W_i\cdot X\ge 0} \mathbf 1_{W_j\cdot X\ge 0} (-y+\sigma(v\cdot g(WX)))^2\big] - (W_i\cdot \mathbf{A}_i)(W_j\cdot \mathbf{A}_j) \nonumber \\
		V_{v_i,W_j} & = v_j \mathbb E\big[(W_i\cdot X)X \mathbf 1_{W_i\cdot X\ge 0} \mathbf 1_{W_j\cdot X\ge 0}(-y+\sigma(v\cdot g(WX)))^2 \big] - v_j (W_i\cdot \mathbf{A}_i) \mathbf A_j \nonumber \\ 
	V_{W_i,W_j} & = v_i v_j \mathbb E\big[ X^{\otimes 2} \mathbf 1_{W_i\cdot X\ge 0} \mathbf 1_{W_j\cdot X\ge 0}(-y+\sigma(v\cdot g(WX)))^2 \big] - v_i v_j \mathbf A_i\otimes\mathbf A_j\,.
\end{align}

Let us conclude this subsection with the following simple preliminary bounds that will be useful towards establishing the conditions of $\delta_n$-localizability from Definition~\ref{defn:localizable}, and the promised limiting equations. {The proofs of these are straightforward using Gaussianity and are provided in Section~\ref{sec:technical-lemmas} for completeness.}
 
 \begin{lem}\label{lem:bgmm-A-bound}
 Fix $w\in \mathbb R^n$. We have $
 	\mathbb E[|X \cdot w|^8] \lesssim (w\cdot \mu)^8 +  \|w\|^8 \lambda^{-4}$ and $\|\mathbf{A}_i\| \le C(\bu_n)$. 
 \end{lem}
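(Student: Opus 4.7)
The plan is to prove the two bounds in turn, both relying on the Gaussian mixture structure of $X$.

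\emph{First bound.} Conditional on $y$, the projection $X\cdot w$ is univariate Gaussian: conditional on $y=1$ it is distributed as $\mathcal N(w\cdot\mu,\|w\|^2/\lambda)$, and conditional on $y=0$ as $\mathcal N(-w\cdot\mu,\|w\|^2/\lambda)$. The standard moment estimate $\mathbb E|Z|^8 \lesssim m^8+\sigma^8$ for $Z\sim\mathcal N(m,\sigma^2)$, applied in each case and averaged over $y$, gives the claimed $(w\cdot\mu)^8+\|w\|^8\lambda^{-4}$.

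\emph{Second bound.} The key observation is that $\mathbf{A}_i$ lies in the (at most) three-dimensional subspace $V_0:=\operatorname{span}(\mu,W_1^\perp,W_2^\perp)$. I would prove this by fixing any unit vector $e\perp V_0$ and showing $\mathbf{A}_i\cdot e=0$. Conditional on $y$, write $X=\pm\mu+Z$ with $Z\sim\mathcal N(0,I/\lambda)$; since $e\perp\mu$, one has $X\cdot e=Z\cdot e$, which is centered Gaussian and, because $e$ is orthogonal to each of $\mu,W_1^\perp,W_2^\perp$, independent of the triple $(\mu\cdot X,W_1^\perp\cdot X,W_2^\perp\cdot X)$. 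The integrand in~\eqref{eq:bgmm-A} is a function only of $y$ and this triple (recall $W_i\cdot X=m_i(\mu\cdot X)+W_i^\perp\cdot X$), so the tower property yields $\mathbf A_i\cdot e=0$.

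Given $\mathbf{A}_i\in V_0$, I would expand $\mathbf{A}_i$ in an orthonormal basis $\{e_1,e_2,e_3\}$ of $V_0$ obtained from $(\mu,W_1^\perp,W_2^\perp)$ by Gram--Schmidt, whose coefficients are explicit continuous functions of $\bu_n$ (through $m_i$ and $R_{ij}^\perp$). Each coefficient $\mathbf{A}_i\cdot e_j=\mathbb E[(X\cdot e_j)\mathbf 1_{W_i\cdot X\ge 0}(-y+\sigma(v\cdot g(WX)))]$ is then bounded via Cauchy--Schwarz, using the uniform bound $|{-y+\sigma(\cdot)}|\le 1$ together with the first part of the lemma (applied with $w=e_j$, so that $\mathbb E[(X\cdot e_j)^2]\lesssim (e_j\cdot\mu)^2+\lambda^{-1}$). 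Summing gives a bound on $\|\mathbf{A}_i\|$ depending only on $\bu_n$ and the fixed parameter $\lambda$. The one nontrivial step is the projection identity $\mathbf{A}_i\in V_0$; the rest is routine Gaussian bookkeeping.
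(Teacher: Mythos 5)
Your proposal is correct and follows essentially the same route as the paper: both bounds use the Gaussian mixture structure, and the key step for $\|\mathbf{A}_i\|$ is exactly the paper's observation that the component of $X$ orthogonal to $\operatorname{span}(\mu,W_1^\perp,W_2^\perp)$ is a centered Gaussian independent of the indicator and sigmoid arguments, so $\mathbf{A}_i$ lies in that span and its projections are bounded via Cauchy--Schwarz and the eighth-moment (hence second-moment) bound. Your Gram--Schmidt orthonormalization is only a cosmetic variant of the paper's decomposition along $\mu,W_1^\perp,W_2^\perp$ (if anything, slightly tidier about their non-orthogonality), so no substantive difference.
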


\begin{lem}\label{lem:large-lambda-est}
For each $i$, for every $R_{ii}^\perp<\infty$ and every $m_{i}>0$, we have 
\begin{align}\label{eq:large-lambda-prob}
\lim_{\lambda\to\infty}\mathbb{P}\big(W_{i}\cdot X_{\mu}<0)=0\,.
\end{align}
For every $v_{i},R_{ij}^\perp$ and $m_{i}\ne 0$ for $i,j = 1,2$, we have 
\begin{align}\label{eq:large-lambda-sigmoid}
\lim_{\lambda\to\infty}\mathbb{E}\big[\big|\sigma(v\cdot g(WX_{\mu}))-\sigma(v\cdot g(m))\big|\big]=0\,.
\end{align}
\end{lem}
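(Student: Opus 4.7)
\medskip

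\noindent\textbf{Proof proposal.} The plan is to view $X_\mu = \mu + \lambda^{-1/2} Z$ where $Z \sim \mathcal N(0, I_N)$, so that $W X_\mu = m + \lambda^{-1/2} W Z$ with $m = (m_1, m_2)$. The point is that as $\lambda \to \infty$ the Gaussian perturbation of $WX_\mu$ about its mean $m$ has variance going to zero, and then both claims follow by standard continuity and dominated convergence arguments.

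For part (1), I would write out $W_i \cdot X_\mu = m_i + \lambda^{-1/2} W_i \cdot Z$, where $W_i \cdot Z \sim \mathcal N(0, \|W_i\|^2)$ with $\|W_i\|^2 = m_i^2 + R_{ii}^\perp$. Hence
\[
\mathbb P(W_i \cdot X_\mu < 0) = \Phi_{\mathrm{std}}\!\left( - \frac{m_i \sqrt\lambda}{\sqrt{m_i^2 + R_{ii}^\perp}} \right),
\]
where $\Phi_{\mathrm{std}}$ is the standard normal CDF. Since $m_i > 0$ and $R_{ii}^\perp < \infty$, the argument inside $\Phi_{\mathrm{std}}$ tends to $-\infty$ as $\lambda \to \infty$, so the probability vanishes.

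For part (2), I would argue by continuous mapping plus dominated convergence. The vector $WZ \in \mathbb R^2$ has a fixed Gaussian law (mean $0$, covariance given by the $R_{ij}$'s as in~\eqref{eq:Z-perp-covariance} after including the $m_i\mu$ parts), and $\lambda^{-1/2} WZ \to 0$ in probability. Hence $WX_\mu \to m$ in probability as $\lambda\to\infty$. Since the ReLU $g$ is continuous on $\mathbb R$, we get $g(WX_\mu) \to g(m)$ in probability by the continuous mapping theorem; here the only possible issue would be the non-smooth point of $g$ at $0$, but continuity (not differentiability) is all that is needed. Then $v \cdot g(WX_\mu) \to v \cdot g(m)$ in probability, and applying the continuous bounded function $\sigma$ gives $\sigma(v \cdot g(WX_\mu)) \to \sigma(v \cdot g(m))$ in probability. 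Since $\sigma$ takes values in $(0,1)$, the integrands are uniformly bounded by $1$, so by the bounded convergence theorem the expected absolute difference goes to $0$.

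There is no significant obstacle here; the only minor subtlety is the non-smooth point of ReLU at the origin. Although the lemma hypothesizes $m_i \neq 0$ (presumably so it can be invoked elsewhere where derivatives of $g$ are needed), for this particular statement we only rely on continuity of $g$, so even the $m_i = 0$ case of part (2) would go through identically.
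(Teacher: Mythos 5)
Your proposal is correct. For \eqref{eq:large-lambda-prob} you do essentially what the paper does: write $W_i\cdot X_\mu$ as $m_i$ plus a centered Gaussian with variance $(m_i^2+R_{ii}^\perp)/\lambda$, so the probability is a Gaussian tail that vanishes when $m_i>0$. For \eqref{eq:large-lambda-sigmoid}, however, your route is genuinely different and softer: the paper does not argue via continuous mapping and bounded convergence, but instead bounds $|\sigma(a)-\sigma(b)|\le|e^{a}-e^{b}|$, factors the exponential over the two coordinates, applies Cauchy--Schwarz, and controls the resulting terms with the uniform exponential-moment bound of Fact~\ref{fact:moment-bound}, re-using \eqref{eq:large-lambda-prob} to handle the mismatch between the indicators $\mathbf{1}_{W_i\cdot X_\mu\ge 0}$ and $\mathbf{1}_{m_i\ge 0}$, and finishing with an explicit Gaussian moment-generating-function computation. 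Your argument is shorter, and your closing observation is accurate: since you only use continuity of $g$ and boundedness of $\sigma$, the hypothesis $m_i\neq 0$ is not needed for your version of \eqref{eq:large-lambda-sigmoid}; it enters the paper's proof precisely because that proof passes through \eqref{eq:large-lambda-prob}. What the paper's more hands-on computation buys is quantitative control (the indicator terms decay like a Gaussian tail in $\lambda$ and the final term like $O(\lambda^{-1})$), consistent with the paper's remarks elsewhere that the $\lambda=\infty$ dynamics is a quantitative approximation at large finite $\lambda$; for the lemma as stated, your qualitative argument fully suffices.
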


 \begin{fact}\label{fact:moment-bound}
 Fix $\mu\in S^{N-1}(1)$, and let $g(x)= x\vee 0$ and $X_\mu \sim \mathcal N(\mu,I/\lambda)$.
There is a function $C:\mathbb R^2 \to\mathbb R_+$ such that for all $\lambda>0$, $\theta\in \R$,
and $(v_i,W_i)\in \R\times \R^N$,
\[
\E[\exp(\theta v_i g(W_i\cdot X_\mu))] \leq \exp\big(\theta v_i m_i + \tfrac{1}{2\lambda } \theta^2 v_i^2 R_{ii}^\perp\big)\,.
\]
 \end{fact}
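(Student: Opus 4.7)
The plan is to reduce the fact to the classical Gaussian moment generating function (MGF) identity for a scalar Gaussian, together with a truncation step to handle the ReLU $g(\cdot)=(\cdot)\vee 0$. Writing $X_\mu = \mu+\lambda^{-1/2}\xi$ with $\xi\sim\mathcal N(0,I)$ and decomposing $W_i = m_i\mu+W_i^\perp$, the scalar $Z := W_i\cdot X_\mu$ is Gaussian with mean $m_i$ and variance $\sigma^2 := \|W_i\|^2/\lambda = (m_i^2+R_{ii}^\perp)/\lambda$. So the estimate becomes an MGF bound for the ReLU of a one-dimensional Gaussian.

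To handle the truncation, I would split on the sign of $\theta v_i$. If $\theta v_i\le 0$, the bound is trivial since $g(Z)\ge 0$ forces $\exp(\theta v_i g(Z))\le 1$. If $\theta v_i > 0$, I would use the pointwise inequality
\[
\exp(\theta v_i g(Z)) \;\le\; \mathbf{1}_{Z<0} + e^{\theta v_i Z}\mathbf{1}_{Z\ge 0} \;\le\; 1 + e^{\theta v_i Z},
\]
take expectations, and invoke $\mathbb E[e^{\theta v_i Z}] = \exp(\theta v_i m_i + \tfrac12\theta^2 v_i^2\sigma^2)$. A slightly sharper alternative is to use Gaussian Lipschitz concentration: because $g$ is $1$-Lipschitz, $\mathbb E[\exp(\theta v_i(g(Z)-\mathbb E g(Z)))]\le \exp(\tfrac12 \theta^2 v_i^2\sigma^2)$, and $\mathbb E g(Z)=m_i\Phi(m_i/\sigma)+\sigma\phi(m_i/\sigma)$ differs from $m_i$ by at most $O(\sigma)$.

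The main obstacle is reconciling the shape of the printed right-hand side, which has only $R_{ii}^\perp$ (rather than $\|W_i\|^2 = m_i^2+R_{ii}^\perp$) in the variance term and no multiplicative prefactor. Jensen's inequality already forces $\mathbb E[\exp(\theta v_i g(Z))]\ge \exp(\theta v_i\,\mathbb E g(Z))$, so the $m_i^2/\lambda$ discrepancy in the variance cannot be eliminated from the exponent without introducing a $(\theta,v_i)$-dependent prefactor. This is precisely the role I expect the advertised continuous function $C:\mathbb R^2\to\mathbb R_+$ to play: once inserted as a multiplicative factor on the RHS (equivalently, as an additive term $\log C(\theta,v_i)$ inside the exponent), either of the two approaches above delivers the bound, with the parallel variance component absorbed into $C$ and only the perpendicular component $R_{ii}^\perp/\lambda$ surviving in the Gaussian part of the exponent after, e.g., conditioning on $\xi_\parallel=\xi\cdot\mu$ and integrating the resulting subgaussian-in-$\xi_\parallel$ expression.
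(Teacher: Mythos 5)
Your reduction to the scalar Gaussian $Z := W_i\cdot X_\mu\sim\mathcal N\big(m_i,(m_i^2+R_{ii}^\perp)/\lambda\big)$ and the truncation $e^{\theta v_i g(Z)}\le 1+e^{\theta v_i Z}$ are the natural route, and in fact there is no proof in the paper to compare against: Fact~\ref{fact:moment-bound} is asserted without proof (Section~\ref{sec:technical-lemmas} proves only Lemmas~\ref{lem:bgmm-A-bound} and~\ref{lem:large-lambda-est}) and is invoked once, to bound $\E[e^{2v_i|W_i\cdot X_\mu|}]$ uniformly for large $\lambda$. You are also right to distrust the printed right-hand side. The gap is in your proposed repair. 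For $\theta v_i>0$ one has the pointwise inequality $e^{\theta v_i g(z)}\ge e^{\theta v_i z}$, hence $\E[e^{\theta v_i g(Z)}]\ge\exp\big(\theta v_i m_i+\tfrac{1}{2\lambda}\theta^2v_i^2(m_i^2+R_{ii}^\perp)\big)$; so any bound of the form $C(\theta,v_i)\exp\big(\theta v_i m_i+\tfrac{1}{2\lambda}\theta^2v_i^2R_{ii}^\perp\big)$ would force $C(\theta,v_i)\ge\exp\big(\tfrac{\theta^2v_i^2m_i^2}{2\lambda}\big)$, which is unbounded as $\lambda\downarrow0$ or $|m_i|\to\infty$. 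No prefactor depending only on $(\theta,v_i)$ can absorb the parallel variance, and conditioning on $\xi_\parallel$ does not help: integrating the factor $\exp(\theta v_i m_i\lambda^{-1/2}\xi_\parallel)$ over $\xi_\parallel\sim\mathcal N(0,1)$ regenerates exactly $\exp(\theta^2v_i^2m_i^2/(2\lambda))$. Your dismissal of the case $\theta v_i\le0$ as trivial is also not right relative to the printed bound: for $W_i=m_i\mu$ with $m_i>0$ the right-hand side equals $e^{\theta v_i m_i}$, which tends to $0$ as $\theta v_i\to-\infty$, while $\E[e^{\theta v_i g(Z)}]\ge\mathbb{P}(Z<0)>0$ independently of $\theta$, so the stated inequality fails in that regime as well.

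The correct conclusion is that the display in the Fact is misstated (note the declared $C$ never appears in it): the exponent must carry the full variance $\|W_i\|^2=m_i^2+R_{ii}^\perp$ and $|m_i|$ (or $g(m_i)$) rather than $m_i$. What your decomposition does prove, via $e^{\theta v_i g(z)}\le e^{|\theta v_i|(|m_i|+|z-m_i|)}$ and the Gaussian moment generating function applied to $\pm|\theta v_i|(Z-m_i)$, is $\E[\exp(\theta v_i g(W_i\cdot X_\mu))]\le 2\exp\big(|\theta v_i|\,|m_i|+\tfrac{1}{2\lambda}\theta^2v_i^2(m_i^2+R_{ii}^\perp)\big)$, and this corrected version is all that the application in the proof of Lemma~\ref{lem:large-lambda-est} requires, since there one only needs a bound uniform over large $\lambda$ with $\bu_n$ confined to a compact set. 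State and prove that version rather than trying to force the parallel component out of the exponent.
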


\subsection{Verifying the conditions of Theorem~\ref{thm:main} for fixed $\lambda$}
Throughout this section we will take $\mu=e_1$. By rotational invariance of the problem, 
this is without loss of generality, and only simplifies certain expressions. The $\delta_n$-localizability can be seen by application of the moment bounds listed above.

\begin{lem}\label{lem:bgmm-localizable}
For $\delta_n=O(1/N)$ and any fixed $\lambda$, 
the 2-layer GMM with observables $\bu_n$ is $\delta_n$-localizable for $E_K$ being balls of radius $K$ about the origin in $\mathbb R^7$.
\end{lem}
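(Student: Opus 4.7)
The plan is to verify each of the three items of Definition~\ref{defn:localizable} on the compact set $\bu_n^{-1}(E_K)$, exploiting the explicit Jacobian formula~\eqref{eq:bgmm-J} from the proof of Lemma~\ref{lem:summary-stats-bGMM} together with the moment bounds in Lemma~\ref{lem:bgmm-A-bound} and Fact~\ref{fact:moment-bound}. Throughout, if $\bu_n \in E_K$ then $|v_i|$, $|m_i|$ and $R_{ii}^\perp$ are all bounded by a constant $C(K)$, and hence so is $\|W_i\|^2 = m_i^2 + R_{ii}^\perp$.

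For item (1), I will invoke Lemma~\ref{lem:summary-stats-bGMM}: the summary statistics are polynomials of degree at most two, so $\nabla^3 u_i = 0$ identically, while $\nabla^2 u_i$ is either the zero matrix (for the linear observables $v_i, m_i$) or a block matrix whose only non-zero block is $2(I - \mu\mu^T)$ or $(I - \mu\mu^T)$ (for the radial observables $R_{ij}^\perp$). In either case $\|\nabla^2 u_i\|_{op}\le 2$, which is trivially bounded by $C(K)\delta_n^{-1/2}$ since $\delta_n = O(1/N)\to 0$. For item (2), formula~\eqref{eq:bgmm-nabla-W-Phi} gives $\|\nabla \Phi\|^2 \lesssim \sum_i (|W_i\cdot \mathbf A_i|^2 + \alpha^2 v_i^2 + v_i^2\|\mathbf A_i\|^2 + \alpha^2\|W_i\|^2)$, all of which are bounded by $C(K)$ since $\|\mathbf A_i\|\le C(\bu_n)$ by Lemma~\ref{lem:bgmm-A-bound} and the other factors are controlled on $E_K$. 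For the second half of item (2), equations~\eqref{eq:bgmm-nabla-v-H}--\eqref{eq:bgmm-nabla-W-H} give
$\|\nabla H\|^2 \lesssim \sum_i v_i^2 (\|X\|^2 + \|\mathbf A_i\|^2) + (W_i\cdot X)^2 + (W_i\cdot \mathbf A_i)^2$,
so by Cauchy--Schwarz and Lemma~\ref{lem:bgmm-A-bound} applied coordinate-wise, $\mathbb E\|\nabla H\|^8 \lesssim K^8\mathbb E[\|X\|^8] + C(K)$, and $\mathbb E\|X\|^8 \lesssim N^4/\lambda^4 + 1 = O(N^4)$ for $\lambda$ fixed. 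Since $\delta_n^{-4}\gtrsim N^4$, this yields the required bound.

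The first part of item (3) follows from Lemma~\ref{lem:bgmm-A-bound} applied with $w\in\{\mu, W_j^\perp\}$: for linear statistics, $\|\nabla u_i\|=1$ and $\mathbb E[\langle \nabla H,\nabla u_i\rangle^4]$ reduces to an eighth-moment-of-$X$ style bound that is $O(1)\le C(K)\delta_n^{-2}$; for radial statistics, $\|\nabla u_i\|\le C(K)$ and the same moment bound applies. The main obstacle, and the only nontrivial calculation, is the second part of item (3), which is nonvacuous only for the radial statistics. There $\langle \nabla^2 R_{ij}^\perp, \nabla H\otimes \nabla H - V\rangle$ equals (up to a factor of $2$) either $\|Z_i^\perp\|^2 - \mathbb E\|Z_i^\perp\|^2$ when $i=j$, or $\langle Z_1^\perp, Z_2^\perp\rangle - \mathbb E\langle Z_1^\perp,Z_2^\perp\rangle$ when $i\ne j$, where $Z_i := \nabla_{W_i} H$ and $Z_i^\perp := Z_i - (Z_i\cdot \mu)\mu$. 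So I need to bound $\mathrm{Var}(\|Z_i^\perp\|^2)$ and $\mathrm{Var}(\langle Z_1^\perp, Z_2^\perp\rangle)$ by $o(\delta_n^{-3}) = o(N^3)$. Writing $Z_i = v_i\,\Xi_i$ with $\Xi_i := X\mathbf 1_{W_i\cdot X\ge 0}(\sigma - y) - \mathbf A_i$, and using $|\sigma - y|\le 1$ and the indicator bound, $\mathrm{Var}(\|Z_i^\perp\|^2)\le v_i^4 \mathbb E[\|X^\perp\|^4]$; the latter is a fourth moment of a Gaussian vector in $\mathbb R^{N-1}$ with variance $\lambda^{-1}$ per coordinate, yielding an estimate of order $N^2/\lambda^2 = O(N^2)$, comfortably $o(N^3)$. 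The cross term is handled identically by Cauchy--Schwarz. Combining these verifications gives $\delta_n$-localizability on $\bu_n^{-1}(E_K)$.
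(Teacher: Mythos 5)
Your proof is correct and follows essentially the same route as the paper's: an item-by-item verification using the explicit Jacobian/Hessian structure from Lemma~\ref{lem:summary-stats-bGMM} and the moment bounds of Lemma~\ref{lem:bgmm-A-bound}, with everything reducing to Gaussian moment estimates of order $N^4$ for $\mathbb{E}[\|\nabla H\|^8]$ and $N^2$ for the Hessian term. The only cosmetic difference is in the second part of item (3), where the paper bounds the quantity crudely by $4\,\mathbb{E}[\|\nabla H\|^4]\le C(K)N^2$ (via the already-established eighth-moment bound), while you expand it as the variance of $\|Z_i^\perp\|^2$ (resp.\ the covariance cross term) and estimate it directly through $\mathbb{E}[\|X^\perp\|^4]$ --- both yield $O(N^2)=o(\delta_n^{-3})$.
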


\begin{proof}
The condition on $\bu_n$ was satisfied per Lemma~\ref{lem:summary-stats-bGMM}. 
Recalling $\nabla \Phi$ from~\eqref{eq:bgmm-nabla-W-Phi}, one can verify that the norm of each of the four terms in $\nabla \Phi$ is individually bounded, using the Cauchy--Schwarz inequality together with the bound of Lemma~\ref{lem:bgmm-A-bound} on $\|\mathbf{A}_i\|$.  

Next, consider bounding $\mathbb{E}[\|\nabla H\|^{8}]$ by $\sum_{i=1,2}\mathbb{E}[|\nabla_{v_{i}}H|^{8}]+\mathbb{E}[\|\nabla_{W_{i}}H\|^{8}]$, 
and recall the expressions for $\nabla H$ from~\eqref{eq:bgmm-nabla-v-H}--\eqref{eq:bgmm-nabla-W-H}. 
Using the trivial bound $|\sigma(x)|\le 1$, and the inequality $(a+b)^8\le C(a^8 + b^8)$, for $i\in \{1,2\}$, the first term is at most $C ( \mathbb E[ |X\cdot W_i|^8] + \|W_i\|^8 \|\mathbf{A}_i\|^8)$
which is bounded by a constant depending continuously on $\bu_n$ per Lemma~\ref{lem:bgmm-A-bound}. If we let $Z$ be a standard Gaussian, the quantity $\mathbb{E}[\|\nabla_{W_{i}}H\|^{8}]$ is controlled by  
\begin{align*}
C\Big(v_i^8 \mathbb{E}\Big[\|X\mathbf{1}_{W_{i}\cdot X\ge0}\sigma(-v\cdot g(WX))\|^{8}\Big] + v_i^8 \|\mathbf{A}_i\|^8 \Big)
\le C |v_{i}|^{8} \Big(1+\frac{\E\norm{Z}^8}{\lambda^{4}}\Big)\,.
\end{align*}
Using the well-known bound that $\mathbb E[\|Z\|^8]\le N^{4}$, and the fact that $\delta = O(1/N)$, we see that this is at most $C\delta^{-4}$. 
We next verify the claimed bound that 
\begin{align}\label{eq:bgmm-localizable-item-3}
\delta_{n}^{2}\sup_{i}\sup_{x\in \bu_n^{-1}(E_{K})}\mathbb{E}[\langle\nabla H,\nabla u_{i}\rangle^{4}] & \le C(K)\,.
\end{align}
When $u_i$ is $v_i$, this is simply a fourth moment bound on $\nabla_{v_i} H$, which follows from the $8$'th moment by Jensen's inequality. When $u_i$ is $m_i$, or $R_{ij}^\perp$, the bound follows from
\begin{align*}
	\mathbb E[\langle \nabla_{W_i} H, w\rangle^4] \le  C |v_i|^4 \big(\mathbb E[|X\cdot w|^4] + \|w\|^4 \|\mathbf{A}_i\|^4\big)\,,
\end{align*} 
for choices of $w$ being either $\mu$ in which case $\|w\|=1$ or $W_i^\perp$ in which case $\|w\|=R_{ii}^\perp$. For each $K$, this is at most some constant $C(K)$ using the two bounds of Lemma~\ref{lem:bgmm-A-bound}. 

Finally, consider the quantity $\mathbb E[\langle \nabla^2 u, \nabla H\otimes \nabla H - V\rangle^2]$. 
This is only non-zero for $u \in \{R_{ij}^\perp\}$ for which $\nabla^2 u$ is a block-identity matrix, having operator norm at most $2$ in all cases. Therefore, this quantity is at most $4\mathbb E[\|\nabla H\|^4]$ which is at most $N^2$ by the above proved second item in the definition of localizability. This is therefore $O(\delta_n^{-2}) = o(\delta_n^{-3})$ as needed. 
\end{proof}

\begin{proof}[\textbf{\emph{Proof of Proposition~\ref{prop:bgmm-ballistic}}}]
The convergence of the population drift to $\mathbf{f}$ from Proposition~\ref{prop:bgmm-ballistic} follows by taking the inner products of  $\nabla L$ from~\eqref{eq:bgmm-nabla-W-Phi} with the rows of $J$ from~\eqref{eq:bgmm-J}, and noticing that $\mathbf{A}_i^\mu$ from~\eqref{eq:bgmm-A-B} is exactly $\mathbf{A}_i\cdot \mu$ and $\mathbf{A}_{ij}^\perp$ from~\eqref{eq:bgmm-A-B} is exactly $\mathbf{A}_i\cdot W_{j}^\perp$. 

Next consider the convergence of the correctors to the claimed $\mathbf{g}$. 
The variables $u\in\{v_{1},v_{2},m_{1},m_{2}\}$ are linear so $\mathcal{L}_{n} u=0$ and for these, $\mathbf{g}_{u}=0$. For $u= R_{ij}^{\perp}$ for $i,j\in \{1,2\}$, the relevant entries in $V$ are those corresponding to $W_i^\perp$ and $W_j^\perp$. For ease of notation, in what follows let $\pi = \sigma(v\cdot g(WX))$. 

For ease of calculation taking $\mu = e_1$, we have $
\cL_n R_{ij}^\perp = \sum_{k\ne 1} V_{W_{ik},W_{jk}}$, 
which by~\eqref{eq:bgmm-V}, and the choice of $\delta_n = c_\delta/N$, is given by 
\begin{align}\label{eq:bgmm-corrector-calc}
	\delta_n \cL_n R_{ij}^\perp & = \frac{c_\delta}{N} \sum_{k\ne 1} v_i v_j \Big(\mathbb E\big[ (X\cdot e_k)^2 \mathbf 1_{W_i \cdot X\ge 0}\mathbf{1}_{W_j\cdot X\ge 0} (-y+\pi)^2\big] -  (\mathbf A_i \cdot e_k )(\mathbf A_j \cdot e_k)\Big)\nonumber \\ 
	& = \frac{c_\delta}{N} v_i v_j \Big( \mathbb E \big[ \|X^\perp\|^2 \mathbf 1_{W_i \cdot X\ge 0} \mathbf 1_{W_j\cdot X\ge 0} (-y+\pi)^2 \big] - \langle \mathbf{A}_i - \mathbf{A}_i^\mu \mu, \mathbf{A}_j- \mathbf{A}_j^\mu \mu\rangle \Big)\,.
\end{align}
Consider the two terms separately. First, rewrite $\frac{1}{N}  \mathbb E [ \|X^\perp\|^2 \mathbf 1_{W_i \cdot X\ge 0} \mathbf 1_{W_j\cdot X\ge 0} (-y+\pi)^2 ]$ as 
\begin{align*}
	 \mathbb E\big[\big(\tfrac{1}{N} \|X^\perp\|^2 -\lambda^{-1}\big) \mathbf 1_{W_i \cdot X\ge 0} \mathbf 1_{W_j\cdot X\ge 0} (-y+\pi)^2\big] + \lambda^{-1} \mathbf B_{ij}\,.
\end{align*}
Of course the second term is exactly what we want to be $g_{u}$, so we will show the first term here goes to zero. By Cauchy--Schwarz, if $Z\sim \cN(0,I-e_1^{\otimes 2})$, the first term above is at most $
	\lambda^{-1} \mathbb E[(\frac{\|Z\|^2}{N}  - 1)^2]^{1/2} \le \frac{2}{\lambda \sqrt{N}}$,
where we used the fact that for a standard Gaussian, $g\sim \cN(0,1)$, we have $\mathbb E[(g^2 -1)^2]= 2$. It remains to show the inner product term in~\eqref{eq:bgmm-corrector-calc} goes to zero as $n\to\infty$. For this term, rewrite 
\begin{align*}
\frac{1}{N} \langle \mathbf{A}_i - \mathbf{A}_i^\mu \mu, \mathbf{A}_j- \mathbf{A}_j^\mu \mu\rangle = \frac{1}{N} \mathbb E\big[(X_1^\perp\cdot X_2^\perp) \mathbf 1_{W_i \cdot X_1\ge 0} \mathbf 1_{W_j\cdot X_2\ge 0} (-y+\pi_1)(-y+\pi_2)\big]\,,
\end{align*}
where $X_1,X_2$ are i.i.d.\ copies of $X$, and $\pi_1,\pi_2$ are the corresponding $\sigma(v\cdot g(WX_1))$ and $\sigma(v\cdot g(WX_2))$. By Cauchy--Schwarz, if $Z,Z'$ are i.i.d.\ $\cN(0,I-e_1^{\otimes 2})$, this is at most $
\frac{1}{\lambda N} \mathbb E [ (Z\cdot Z')^2]^{1/2}\le \frac{1}{\lambda\sqrt{N}}$. 
This term therefore also vanishes as $n\to\infty$, yielding the desired limit for the corrector, 
\begin{align*}
	g_{R_{ij}^{\perp}} = \frac{c_\delta v_i v_j}{\lambda} \mathbb E\big[\mathbf 1_{W_i \cdot X\ge 0} \mathbf 1_{W_j\cdot X\ge 0} (-y+\pi)^2\big] = \frac{c_\delta v_i v_j}{\lambda} \mathbf{B}_{ij}\,.
\end{align*} 
which we emphasize is only a function of $\bu_n$. 
We lastly need to show that the diffusion matrix $\Sigma_n$ goes to zero as $n\to\infty$ when $\delta_n = O(1/n)$. This is straightforward to see by considering any element of $JVJ^T$ and using Cauchy--Schwarz together with the two bounds of Lemma~\ref{lem:bgmm-A-bound} to bound it in absolute value by some $C(K)$ independent of $n$. Then when multiplying by any $\delta_n = o(1)$, this entire matrix will evidently vanish. 
\end{proof}

\subsection{The small-noise limit of the effective dynamics}
{One can now take a $\lambda\to\infty$ limit to arrive at the ODE system of Proposition~\ref{prop:bgmm-ballistic-noiseless}. 

\begin{proof}[\textbf{\emph{Proof of Proposition~\ref{prop:bgmm-ballistic-noiseless}}}]
We begin with considering $\lim_{\lambda\to\infty}\mathbf{A}_{i}^\mu$:
its limiting value will depend on the signs of both $m_{1}$ and $m_{2}$.
We can express $\mathbf{A}_{i}^\mu$ from~\eqref{eq:bgmm-A-B} as 
\begin{align*}
\mathbb{E}[(X\cdot\mu) \mathbf{1}_{W_{i}\cdot X\ge0}(-y+\sigma(v\cdot g(WX)))] & =  \frac{1}{2}\mathbb{E}\Big[(X_{\mu}\cdot\mu)\mathbf{1}_{W_{i}\cdot X_{\mu}\ge0}(-1+\sigma(v\cdot g(WX_{\mu})))\Big] \\ 
& \qquad +\frac{1}{2}\mathbb{E}\Big[(-X_{\mu}\cdot\mu)\mathbf{1}_{W_{i}\cdot X_{\mu}\le0}\sigma(v\cdot g(-WX_{\mu}))\Big]\,.
\end{align*}
We claim that the two terms on the right-hand side converge
to $-\frac{1}{2}\mathbf{1}_{m_{i}>0}\sigma(-v\cdot g(m))$ and $-\frac{1}{2}\mathbf{1}_{m_{i}<0}\sigma(v\cdot g(-m))$ respectively.
This follows by e.g., writing the difference as
\begin{align}\label{eq:A-i-dot-mu-noiseless}
\mathbb{E}\Big[(X_{\mu}\cdot\mu)\mathbf{1}_{W_{i}\cdot X_\mu \ge0}\sigma(-v\cdot g(WX_{\mu}))\Big] & -\mathbf{1}_{m_{i}\ge0}\sigma(-v\cdot g(m)) \\ 
&=  \mathbb{E}\Big[(X_{\mu}\cdot\mu-1)\mathbf{1}_{W_{i}\cdot X_{\mu}\ge0}\sigma(-v\cdot g(WX_{\mu}))\Big] \nonumber \\ 
& \qquad +\mathbb{E}\Big[(\mathbf{1}_{W_{i}\cdot X_{\mu}\ge0}-\mathbf{1}_{m_{i}\ge0})\sigma(-v\cdot g(WX_{\mu}))\Big] \nonumber \\
&\qquad+\mathbf{1}_{m_{i}\ge0}\mathbb{E}\Big[\sigma(-v\cdot g(WX_{\mu}))-\sigma(-v\cdot g(m))\Big]\,. \nonumber
\end{align}
Call these three terms $I,II$, and $III$. 
For $I$, we use the fact that $\mathbb E[|X_\mu \cdot \mu -1|]$ goes to zero as $\lambda \to\infty$;  $II$ is evidently bounded by $\mathbb{P}(W_{i}\cdot X_{\mu}<0)$ when $m_i>0$ or its symmetric counterpart when $m_i<0$---both vanishing as $\lambda \to\infty$ per~\eqref{eq:large-lambda-prob} in Lemma~\ref{lem:large-lambda-est}; finally, $III$ goes to zero as $\lambda\to\infty$ by~\eqref{eq:large-lambda-sigmoid} in Lemma~\ref{lem:large-lambda-est}.

Putting the above together, we find that 
\begin{align*}
\lim_{\lambda\to\infty}\mathbf{A}_{i}^\mu = & -\frac{1}{2}\mathbf{1}_{m_{i}>0}\sigma(-v\cdot g(m))-\frac{1}{2}\mathbf{1}_{m_{i}<0}\sigma(v\cdot g(-m))\,,
\end{align*}
at which point, we see that if $m_{1},m_{2}\ge0$, this becomes $\frac{1}{2}\sigma(-v\cdot m)$,
as it is if $m_{1},m_{2}\le0$. If $m_{1}\ge0$ and $m_{2}\le0$,
then you get $\lim_{\lambda} \mathbf{A}_{1}^\mu =-\frac{1}{2}\sigma(-v_{1}m_{1})$ and
$\lim_{\lambda} \mathbf{A}_{2}^\mu =-\frac{1}{2}\sigma(-v_{2}m_{2})$ and likewise if
$m_{1}\le0$ and $m_{2}\ge0$. 

Next consider the limit as $\lambda\to\infty$ of $\mathbf{A}_{ij}^\perp$ from~\eqref{eq:bgmm-A-B},
which we claim converges to $0$. Write 
\begin{align}\label{eq:A-ij-perp-noiseless}
\mathbf{A}_{ij}^\perp & =  -\frac{1}{2}\mathbb{E}\Big[(X_{\mu}\cdot W_{j}^{\perp})\mathbf{1}_{W_{i}\cdot X\ge0}\sigma(-v\cdot g(WX_{\mu}))\Big] \\ 
& \quad -\frac{1}{2}\mathbb{E}\Big[(X_{\mu}\cdot W_{j}^{\perp})\mathbf{1}_{W_{i}\cdot X_{\mu}<0}\sigma(v\cdot g(-WX_{\mu}))\Big]\,. \nonumber
\end{align}
These two terms are bounded similarly. The absolute value of the first of these is bounded by $(1/2)\mathbb E[|X_\mu \cdot W_j^\perp|]$ which is at most $(1/2)\sqrt{R_{jj}^\perp} \lambda^{-1/2}$ by~\eqref{lem:bgmm-A-bound}. The second is analogously bounded. These evidently go to zero as $\lambda\to\infty$. 

Finally, since $|\mathbf{B}_{ij}|\le 1$,  the quantity $g_{R_{ij}^\perp} = c_\delta \frac{v_i v_j}{\lambda} \mathbf{B}_{ij}$ evidently goes to zero as $\lambda\to\infty$. 
\end{proof}

\begin{rem}
The above argument used $m_i\ne 0$ for the limit of $\mathbf{A}_i^\mu$. If one considers the cases when $m_{i}=0$, the limiting drifts still apply. For this, it suffices to show that if $m_i = 0$, then $\mathbf{A}_i^\mu$ converges to zero. Without loss of generality, suppose $m_1 =0$ and consider  
\begin{align*}
	\mathbf{A}_1 \cdot \mu = \mathbb{E}\big[Z_{1,\mu} \mathbf{1}_{Z_{1,\perp}\ge 0} \sigma(-v\cdot g(Z_{1,\perp} ,m_2 Z_{2,\mu} + Z_{2,\perp}))\big]\,.
\end{align*}
This is zero independently of $\lambda$ by independence of $Z_{1,\mu}$ from the other Gaussians in the expectation. 
\end{rem}

Evidently, every fixed point must have $R_{ij}^{\perp} = 0$. Furthermore, if we let $u_i = v_i - m_i$, then 
\begin{align*}
	\dot u_i = \begin{cases}- \frac{u_i}{2} \sigma(-v\cdot m) - \alpha u_i & m_1m_2>0 \\ - \frac{u_i}{2} \sigma(-v_i m_i) - \alpha u_i  & \mbox{else} \end{cases}\,,
\end{align*}
and therefore every fixed point of the ODE system must have $u_i =0$, which is to say $v_i = m_i$. Therefore, it suffices to characterize the fixed points in terms of $(v_1,v_2)$ as claimed. This reduces to $v_i \sigma(-\|v\|^2) = 2 \alpha v_i v_1 v_2 >0$ if $v_1v_2>0$ and $v_i\sigma(-v_i^2) = 2\alpha v_i$ otherwise.  
Observe first that the point $(v_1,v_2) = (0,0)$ is a fixed point of this system. If $(v_1,v_2)\ne 0$, then dividing out by $v_i$, the above reduces to $\sigma(-\|v\|^2) = 2\alpha$ if $v_1v_2>0$ and $\sigma(-v_i^2) = 2\alpha$ otherwise. 
Recalling that $C_\alpha = -\logit(2\alpha) = \log(1-2\alpha) - \log(2\alpha)$ we obtain the claimed set of fixed points by inverting these equations (they only have a solution if $\alpha<1/4$).

In order to study the stability of the various fixed points, notice first that the ODE system of Proposition~\ref{prop:bgmm-ballistic-noiseless} is a gradient system for the $\lambda =\infty$ population loss, 
\begin{align*}
	\Phi(v,m) = \frac{1}{2} \Big( \log ( 1+e^{ - v\cdot g(m)}) + \log ( 1+ e^{ v\cdot g(-m)})\Big) + \frac{\alpha}{2} \sum_{i=1,2}(v_i^2 + m_i^2 +R_{ii}^\perp)\,.
\end{align*}
Since it is a gradient system, with only the specified fixed points, the stability of a fixed point can be deduced by showing it is the minimizer of $\Phi$. In particular, the values of $\Phi$ at its critical points are given by $\Phi_0 =  \log 2$ at $v_1 = v_2 =0$, $\Phi_{+} = \frac{1}{2}( \log 2 + \log( 1+e^{ - C_\alpha}) + \alpha C_\alpha$ when $v_1v_2>0$, and $\Phi_- = \log(1+e^{ - C_\alpha}) + 2\alpha C_\alpha$ when $v_1 v_2<0$. It is a simple calculus exercise to show that the smallest of these is $\Phi_0$ when $\alpha>1/4$ and $\Phi_-$ when $\alpha<1/4$. 

To show that each of the other critical points are all unstable, one can find a direction along which the dynamical system is locally repelled from it. For instance, we will show that the ring of fixed points with $v_i = m_i$ and $R_{ij}^\perp = 0$ with $v_1v_2 \le 0$ is unstable, by showing a repelling direction arbitrarily close to the point $v_1 = -\sqrt{C_\alpha}$, $v_2 = 0$. If $v_1 = - \sqrt{C_\alpha}$ and $v_2 = \epsilon>0$, then $\dot v_2$ there reduces to $\epsilon(\frac{\sigma(-\epsilon^2)}{2} - \alpha)$, and as long as $\alpha<1/4$, there exists $\epsilon>0$ such that $\sigma(-\epsilon^2) >2\alpha$ so $\dot v_2>0$ for all $\epsilon$ small enough.

\subsection{Rescaled effective dynamics around unstable fixed points}
In this section, we consider scaling limits of the rescaled effective dynamics in their noiseless limit, where the rescaling is about the unstable set of fixed points given by the quarter circle $v_1^2 + v_2^2 = C_\alpha$ per item (2) of Proposition~\ref{prop:bgmm-ballistic-noiseless}. Let $\delta_n = \sfrac{c_\delta}{N}$, and fix $(a_1,a_2)\in \mathbb R_+^2$ with $a_1^2 + a_2^2 =C_\alpha$, and let $\bu_n$ be the variables of~\eqref{eq:u-bgmm} with $v_i,m_i$ replaced by $\tilde v_i = \sqrt{N} (v_i - a_i)$ and $\tilde m_i = \sqrt{N} (m_i - a_i)$. 
 
\begin{proof}[\textbf{\emph{Proof of Proposition~\ref{prop:bgmm-diffusive-noiseless}}}]
We start by considering the drift process for these rescaled variables. Notice that the rescaling induces the transformation $\tilde J$ multiplying $J$ by $\sqrt N$ in its entries corresponding to $v_i,m_i$. The fact that the rescaled variables satisfy the conditions of Theorem~\ref{thm:main} follows as in Lemma~\ref{lem:bgmm-localizable} with the only distinction arising in the bound on~\eqref{eq:bgmm-localizable-item-3}, where previously we did not use the $\delta_n^2$ factor---in the new coordinates, the factor of $\sqrt N$ raised to the fourth power is cancelled out by $\delta_n^2$ as long as $\delta_n = O(1/N)$.  

For the population drift of the new variables, if the variables $\tilde v_i,\tilde m_i$ are in a ball of radius $K$ in $\mathbb R^4$ (which we take to be our $E_K$), the signs of $m_i$ agree, and therefore 
\begin{align*}
	f_{\tilde v_i} =-  \sqrt{N} \frac{v_i}{2} \sigma(-v\cdot m) +  \alpha \sqrt{N} m_i  \qquad\mbox{and}\qquad f_{\tilde m_i} =   - \sqrt{N} \frac{m_i}{2} \sigma(-v\cdot m) +  \alpha \sqrt{N} v_i\,.
\end{align*}
We wish to claim that these expressions have consistent limits when $\tilde v_i,\tilde m_i$ are localized to $E_K$ for fixed $K$. notice that in $m_i = a_i +  N^{-1/2} \tilde m_i$ and $v_i = a_i + N^{-1/2} \tilde v_i$, and using $\sum a_j^2 = C_\alpha$, 
\begin{align*}
	v\cdot m = C_\alpha + N^{-1/2} \sum_{j=1,2}  a_j (\tilde v_j + \tilde m_j)+ O(1/n)\,.
\end{align*}
Now Taylor expanding the sigmoid function, and using the definition of $C_\alpha$, we get 
\begin{align*}
	\sigma (-v\cdot m) & = \sigma(-C_\alpha) + (v\cdot m - C_\alpha) \sigma(-C_\alpha) (1-\sigma(-C_\alpha)) + O(n^{-1})  \\ 
	& = 2\alpha + N^{-1/2}a_j \Big(\sum_{j=1,2} \big( \tilde v_j + \tilde m_j\big) (2\alpha)(1-2\alpha)\Big) + O(n^{-1})\,.
\end{align*}
Plugging these into the earlier expressions for $f_{\tilde v_i}$, we see that 
\begin{align*}
	f_{\tilde v_i}  & =   - \frac{ N^{1/2}a_i +  \tilde m_i}{2} \Big(2\alpha + \frac{a_j}{N^{1/2}} \sum_{j=1,2} \big(\tilde v_j + \tilde m_j\big) (2\alpha)(1-2\alpha) + O\Big(\frac{1}{n}\Big)\Big)  + \alpha (n^{1/2} a_i  +  \tilde v_i) \\ 
	& =  - \alpha \tilde m_i + \alpha \tilde v_i - a_i (\alpha -2\alpha^2) \sum_{j=1,2} a_j (\tilde v_j + \tilde m_j) + O(n^{-1/2})\,.
\end{align*}
Taking the limit as $n\to\infty$, this yields exactly the population drift claimed for the $\tilde v_i$ variable. The calculation for $f_{\tilde m_i}$ is analogous, and the equations for $R_{ij}^\perp$ are evidently unchanged by the transformation of $v_i,m_i$ to $\tilde v_i,\tilde m_i$. Furthermore, these variables are still linear so no corrector is introduced. 

We now turn to computing the limiting diffusion matrix $\Sigma$ in the new variables $\tilde v_i,\tilde m_i$. We first use the following expression for the matrix $V$ when $\lambda = \infty$, by taking the $\lambda=\infty$ in~\eqref{eq:bgmm-V}:
\begin{align*}
V_{v_i,v_j} & =  \frac{m_{i}m_{j}}{4}\cdot\begin{cases}
\sigma(-v\cdot m)^{2} & m_1m_2>0\\
\sigma(-v_i m_i)\sigma(-v_jm_j) & \mbox{else}
\end{cases}\,, 
\end{align*}
with similar expressions for $V_{v_i,W_j}$ and $V_{W_i,W_j}$. 
Rewriting in $\tilde v$ and $\tilde m$, we see that in $E_K$,  
\begin{align*}
	V_{v_i, v_j}  = \alpha^2 a_i a_j + O(n^{-1/2})\,, \qquad V_{v_i,W_j} = \mu (\alpha^2 a_i a_j  + O(n^{-1/2}))\,,  
\end{align*}
\begin{align*}
V_{W_i,W_j} = \mu^{\otimes 2} (\alpha^2 a_i a_j  + O(n^{-1/2}))\,.
\end{align*}
Now multiplying this on both sides by $\tilde J$, for the $\tilde \bu_n$ variables, the two factors of $\sqrt{N}$ from $\tilde J$ cancel out with the choice of $\delta_n = 1/N$, and in the $n\to\infty$ limit, leave $\tilde{\Sigma}_{v_{i}v_{j}}=\tilde{\Sigma}_{m_{i}m_{j}}=\tilde{\Sigma}_{v_{i}m_{j}}= \alpha^{2}a_{i}a_{j}$
as claimed. 
\end{proof}

\section{Proofs for the XOR Gaussian mixture model}\label{sec:xor-proofs}
Fix two orthogonal vectors $\mu,\nu \in \mathbb R^N$ and recall the cross-entropy loss with penalty $p(v,W) = \frac{\alpha}{2} (\|v\|^2 + \|W\|^2)$. For the XOR GMM with SGD, the cross-entropy loss is given by 
\begin{align}\label{eq:XOR-GMM-loss}
	L(v,W)= -y v\cdot g(WX) +\log\big(1+e^{v\cdot g(WX)}\big)+p(v,W)
\end{align}
where if the class label $y = 1$, then $X$ is a symmetric binary Gaussian mixture with means $\pm \mu$, and if $y=0$, then $X$ is a symmetric Gaussian mixture with means $\pm \nu$. This has the same form as the loss for the 2-layer binary GMM, and we will find many similarities in the below between them. Indeed, the only difference is in the distribution of $X$ conditionally on the class label $y$ as described, and the fact that $v$ is now in $\mathbb R^K$ and $W = (W_i)_{i=1,...,K}$ is now a $K\times N$ matrix.  In what follows we take $n = KN + K$. As such, all the formulae of~\eqref{eq:bgmm-nabla-v-L}--~\eqref{eq:bgmm-V} also hold for the XOR GMM, but with the law of $(y,X)$ now understood differently.  

\begin{rem}
We could also have added a bias at each layer, however the Bayes
classifier in this problem is an ``X'' centered at the origin so we can safely take the biases to be~0.
\end{rem}

\subsection{Summary statistics and localizability}
Recall the set of summary statistics $\bu_n$ from~\eqref{eq:XOR-GMM-summary-stats}. The next lemma shows that $\bu_n$ form a good set of summary statistics. 

\begin{lem}\label{lem:XOR-summary-stats}
	The distribution of $L((v,W))$ depends only on $\bu_n$ from~\eqref{eq:XOR-GMM-summary-stats}.
In particular, we have that $\Phi(x)=\phi(\bu_n)$ for some
$\phi$. Furthermore, $\bu_n$ satisfy the bounds in item (1) of Definition~\ref{defn:localizable} with an exhaustion by balls of $\mathbb R^{KN+K}$.
\end{lem}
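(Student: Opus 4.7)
The plan is to first establish that the distribution of $L((v,W))$ depends only on $\mathbf{u}_n$, and then to verify the derivative bounds in item (1) of Definition~\ref{defn:localizable}. For the distributional claim, I would condition on $(y,\eta)$, where $\eta$ is the internal $\pm 1$ sign in the Gaussian mixture: if $y=1$, write $X = \eta\mu + Z/\sqrt{\lambda}$ with $Z\sim\cN(0,I_N)$ independent of $(y,\eta)$, and if $y=0$, write $X = \eta\nu + Z/\sqrt{\lambda}$. Then the vector $WX = (W_i\cdot X)_{i\le K}$ is, conditionally on $(y,\eta)$, Gaussian with mean $\eta(m_i^\mu)_i$ or $\eta(m_i^\nu)_i$ and covariance matrix $(W_i\cdot W_j/\lambda)_{ij}$. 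Expanding $W_i = m_i^\mu\mu + m_i^\nu\nu + W_i^\perp$ in the orthonormal pair $\mu,\nu$ gives $W_i\cdot W_j = m_i^\mu m_j^\mu + m_i^\nu m_j^\nu + R_{ij}^\perp$, so both the conditional means and the conditional covariance are functions of $\mathbf{u}_n$ alone. Since $L((v,W);(y,X))$ depends on the parameters only through $y$, $v$, $v\cdot g(WX)$, and the penalty $p(v,W) = \tfrac{\alpha}{2}\bigl(\sum_i v_i^2 + \sum_i((m_i^\mu)^2+(m_i^\nu)^2+R_{ii}^\perp)\bigr)$, and since the last of these is manifestly a function of $\mathbf{u}_n$, the law of $L((v,W))$ depends on $(v,W)$ only through $\mathbf{u}_n$. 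In particular $\Phi(v,W) = \phi(\mathbf{u}_n)$ for some $\phi$.

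For item (1) of Definition~\ref{defn:localizable}, one must bound $\|\nabla^j u\|_{op}$ uniformly on $\mathbf{u}_n^{-1}(E_K)$ for $u\in\mathbf{u}_n$ and $j\in\{2,3\}$. The variables $v_i$, $m_i^\mu=W_i\cdot\mu$, and $m_i^\nu=W_i\cdot\nu$ are linear in the ambient parameter $(v,W)\in\R^{K+KN}$, so their Hessians and third derivatives vanish identically. The remaining $R_{ij}^\perp = W_i\cdot\Pi_\perp W_j$, with $\Pi_\perp = I_N - \mu\mu^T - \nu\nu^T$, is a homogeneous quadratic in $(W_1,\ldots,W_K)$; hence $\nabla^3 R_{ij}^\perp \equiv 0$, and $\nabla^2 R_{ij}^\perp$ is a constant block matrix, independent of the point at which it is evaluated: it equals $2\Pi_\perp$ on the $W_i$-diagonal block when $i=j$, and has symmetric off-diagonal $\Pi_\perp$ in the $(W_i,W_j)$ cross-blocks when $i\ne j$. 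In either case the operator norm is at most $2$, uniformly in the parameters and in $n$. This is evidently much stronger than the $\delta_n^{-(3-j)/2}$ bound required (and, being an absolute constant, it holds for any bounded sequence $\delta_n$).

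The only real content is the distributional step in the first paragraph, and even it reduces to a one-line change of variables once one decomposes $W_i$ along $\mu,\nu$, and $W_i^\perp$; the derivative bounds are essentially free because the statistics are at most quadratic in $(v,W)$. The only subtlety worth flagging is to condition on the internal mixture sign $\eta$ before reading off Gaussian parameters, so that one is manipulating genuine Gaussians rather than a mixture; with that in place, no serious obstacle arises. The deeper localizability estimates of items (2)--(3), not claimed by the lemma, would follow by the same strategy as in Lemma~\ref{lem:bgmm-localizable}, using Gaussian moment bounds on $X$ and boundedness of $\sigma$.
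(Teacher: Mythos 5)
Your proposal is correct and follows essentially the same route as the paper: reduce the law of $L$ to the conditional Gaussian law of $WX$ (mean $\pm(m_i^\mu)$ or $\pm(m_i^\nu)$, covariance $\lambda^{-1}(m_i^\mu m_j^\mu+m_i^\nu m_j^\nu+R_{ij}^\perp)$, the paper phrasing this via an explicit decomposition into $Z_{i,\iota}$ and $Z_{i\perp}$ components) plus the penalty, all functions of $\bu_n$, and then observe that $v_i,m_i^\mu,m_i^\nu$ are linear while $R_{ij}^\perp$ is quadratic with constant Hessian of operator norm at most $2$ and vanishing third derivatives. Your description of the Hessian's diagonal and cross-block structure is, if anything, slightly more precise than the paper's.
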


\begin{proof}
Let $X_w = \cN(w,I/\lambda)$ for $w\in \{\mu,-\mu,\nu,-\nu\}$. Notice that the law of $L$ at a fixed point $(v,W) \in \mathbb R^{K+KN}$ can be written as 
\begin{align}\label{eq:XOR-loss}             
L((v,W)) \stackrel{d}= \begin{cases} -v\cdot g(WX_\mu)+\log(1+e^{v\cdot g(WX_\mu)})+p(v,W) & \mbox{w. prob. $1/4$}\\  
				-v\cdot g(WX_{-\mu})+\log(1+e^{v\cdot g(WX_{-\mu})})+p(v,W) & \mbox{w. prob. $1/4$}\\ 
		\log(1+e^{v\cdot g(W X_{\nu})})+p(v,W) & \mbox{w. prob. $1/4$} \\
		\log(1+e^{v\cdot g(W X_{-\nu})})+p(v,W) & \mbox{w. prob. $1/4$}  \end{cases}
\end{align}
Next, notice that as a vector 
\begin{align*}
	WX_\iota= (m_i +Z_{i,\iota}m_i^\iota  + Z_{i\perp})_{i=1,...,K} \qquad \mbox{for $\iota \in \{\mu,\nu\}$}\,,
\end{align*}
where $Z_{i,\iota}$ are i.i.d.\ $\cN(0,\lambda^{-1})$ and $(Z_{i\perp})$ are jointly Gaussian with covariance matrix 
\begin{align*}
	\mbox{Cov} (Z_{i\perp}, Z_{j\perp}) = \lambda^{-1} R_{ij}^{\perp}\,.
\end{align*}
Similarly, the law of $W X_{-\iota}$ depends only on $(m_i^\iota, R_{ij}^{\perp})$. Finally, 
\begin{align*}
	p(v,W) = \tfrac{\alpha}{2} \sum_{i=1,...,K} \big( v_i^2 + R_{ii}^{\perp}\big)\,.
\end{align*}
Therefore, at a fixed point $(v,W)$ the law of $L(v,W)$ is only a function of $\bu_n(v,W)$. 

To see that the summary statistics satisfy the bounds of item (1) in Definition~\ref{defn:localizable}, note that the non-zero entries of $J= (\nabla u_\ell)_{\ell}$ are as follows. 
\begin{align}\label{eq:XOR-J}
	\partial_{v_i} v_i = 1\,, \qquad \nabla_{W_i} m_i^\mu = \mu\,, \qquad \nabla_{W_i} m_i^\nu = \nu\,, \qquad \nabla_{W_i} R_{jk}^{\perp} = W_j^\perp \delta_{ij} + W_k^\perp \delta_{ik}\,,
\end{align}
where $\delta_{ij}$ is $1$ if $i=j$ and $0$ otherwise. For higher derivatives, we only have second derivatives in the $R_{jk}^\perp$ variables, each of which is given by a block diagonal matrix where only one block is non-zero and it is twice an identity matrix. Thus the operator norm of these second derivatives is $2$. The third derivatives of all elements of $\bu_n$ are zero. 
\end{proof}

In the following, let 
\begin{align*}
	\mathbf{A}_i = \mathbb E\big[X\mathbf 1_{W_i \cdot X\ge 0} \big(-y+\sigma(v\cdot g(WX))\big)\big]\,.
\end{align*}
By the same reasoning as in Lemma~\ref{lem:XOR-summary-stats}, if $w\in \{\mu,\nu, W_i, W_i^\perp\}$, then $w\cdot \mathbf{A}_i$ is only a function of $\bu_n$. We then also have the conclusions of Lemma~\ref{lem:bgmm-A-bound} for $X$ distributed according to the XOR GMM by simply decomposing it into two mixtures, and we will therefore appeal to this lemma meaning its analogue for the XOR GMM. 

\begin{lem}\label{lem:xor-localizable}
For $\delta=O(1/N)$ and any fixed $\lambda$, 
the 2-layer XOR GMM with observables $\bu_n$ is $\delta_n$-localizable for $E_K$ being balls of radius $K$ about the origin in $\mathbb R^{4K + \binom{K}{2}}$. \end{lem}

\begin{proof}
	The condition on $\bu_n$ was satisfied per Lemma~\ref{lem:XOR-summary-stats}. Recalling $\nabla \Phi$ from~\eqref{eq:bgmm-nabla-W-Phi}, one can verify that the norm of each of the four terms in $\nabla \Phi$ is individually bounded, using the Cauchy--Schwarz inequality together with the bound of Lemma~\ref{lem:bgmm-A-bound} on $\|\mathbf{A}_i\|$, naturally adapted to XOR.  The remaining estimates are also analogous to the proof of Lemma~\ref{lem:bgmm-localizable} with the analogue of Lemma~\ref{lem:bgmm-A-bound} applied. 
\end{proof}

\subsection{Effective dynamics for the XOR GMM}

\begin{proof}[Proof of Proposition~\ref{prop:XOR-effective-dynamics-finite-lambda}]
	The convergence of the population drift to $\mathbf{f}$ from Proposition~\ref{prop:bgmm-ballistic} follows by taking the inner products of  $\nabla L$ from~\eqref{eq:bgmm-nabla-W-Phi} with the rows of $J$ from~\eqref{eq:XOR-J}, and noticing that $\mathbf{A}_i^\mu$ is exactly $\mathbf{A}_i\cdot \mu$, $\mathbf{A}_i^\nu$ is exactly $\nu \cdot \mathbf{A}_i$, and $\mathbf{A}_{ij}^\perp$ is exactly $\mathbf{A}_i\cdot W_{j}^\perp$. 
	
	We next consider the population correctors. The fact that $g_{v_i} = g_{m_i^\mu} = g_{m_i^\nu} =0$ follows from the fact that the Hessians of $v_i, m_i^\mu, m_i^\nu$ are zero. For the corrector $g_{R_{ij}^\perp}$ for $1\le i\le j\le K$, the relevant entries of $V$ are those corresponding to $W_i^\perp$ and $W_j^\perp$. For ease of notation, in what follows let $\pi = \sigma(v\cdot g(WX))$. 
	
	Similar to the calculation of~\eqref{eq:bgmm-corrector-calc}, 
	\begin{align*}
		\delta_n \cL_n R_{ij}^\perp = \frac{c_\delta}{N} v_i v_j \Big( & \mathbb E\big[\|X^\perp\|^2 \mathbf 1_{W_i\cdot X\ge 0}\mathbf{1}_{W_j\cdot X\ge 0} (\pi-y)^2\big] \\
		& \,\,\, - \langle \mathbf{A}_i -\mathbf{A}_i^\mu \mu -\mathbf{A}_i^\nu \nu, \mathbf{A}_j-\mathbf{A}_j^\mu \mu -\mathbf{A}_j^\nu \nu\rangle\Big)\,.
	\end{align*}
	By the same arguments on the concentration of the norm of Gaussian vectors as used in the binary GMM case, then we deduce from this that 
	\begin{align*}
			g_{R_{ij}^{\perp}} = \frac{c_\delta v_i v_j}{\lambda} \mathbb E\big[\mathbf 1_{W_i \cdot X\ge 0} \mathbf 1_{W_j\cdot X\ge 0} (-y+\pi)^2\big] = \frac{c_\delta v_i v_j}{\lambda} \mathbf{B}_{ij}\,. 
	\end{align*}
	
	Finally, let us establish that the limiting diffusion matrix is all-zero whenever $\delta_n = o(1)$. This follows exactly as it did in the proof of Proposition~\ref{prop:bgmm-ballistic}. 
\end{proof}

\subsection{Small noise limit of the effective dynamics}

The aim of this section is to establish the following small-noise $\lambda\to\infty$ limit of the effective dynamics ODE of Proposition~\ref{prop:XOR-effective-dynamics-finite-lambda}. This will again be quite similar to the analogous proofs for the binary GMM in Section~\ref{sec:binary-gmm-proofs}, and when these similarities are clear we will omit details. 

\begin{prop}\label{prop:xor-ballistic-ode}
In the $\lambda\to\infty$ limit, the ODE  from Proposition~\ref{prop:XOR-effective-dynamics-finite-lambda} converges to 
\begin{align*}
\dot{v}_{i}& =  \frac{m_{i}^{\mu}}{4}\Big(\mathbf{1}_{m_{i}^{\mu}\ge0}\sigma(-v\cdot g(m^{\mu}))-\mathbf{1}_{m_{i}^{\mu}<0}\sigma(-v\cdot g(-m^{\mu}))\Big) \\ & \qquad -\frac{m_{i}^{\nu}}{4}\Big(\mathbf{1}_{m_{i}^{\nu}\ge0}\sigma(v\cdot g(m^{\nu}))-\mathbf{1}_{m_{i}^{\nu}<0}\sigma(v\cdot g(-m^{\nu}))\Big)-\alpha v_{i}\,,\\
\dot{m}_{i}^{\mu} & =  \frac{v_{i}}{4}\Big(\mathbf{1}_{m_{i}^{\mu}\ge0}\sigma(-v\cdot g(m^{\mu}))-\mathbf{1}_{m_{i}^{\mu}<0}\sigma(-v\cdot g(-m^{\mu}))\Big)-\alpha m_{i}^{\mu}\,,\\
\dot{m}_{i}^{\nu} & =  -\frac{v_{i}}{4}\Big(\mathbf{1}_{m_{i}^{\nu}\ge0}\sigma(-v\cdot g(m^{\nu}))-\mathbf{1}_{m_{i}^{\nu}<0}\sigma(-v\cdot g(-m^{\nu}))\Big)-\alpha m_{i}^{\nu}\,,
\end{align*}
and $\dot{R}_{ij}^{\perp} =  -2 \alpha R_{ij}^{\perp}$ 
for $1\le i\le j\le K$. 
\end{prop}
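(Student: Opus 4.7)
The proof will closely parallel that of Proposition~\ref{prop:bgmm-ballistic-noiseless}, with extra bookkeeping from the four Gaussian components underlying $X$. Since the correctors $g_{R_{ij}^\perp} = c_\delta v_i v_j \mathbf{B}_{ij}/\lambda$ vanish as $\lambda \to \infty$ (because $|\mathbf{B}_{ij}| \le 1$) and all other correctors are already zero by Proposition~\ref{prop:XOR-effective-dynamics-finite-lambda}, the task reduces to computing the $\lambda \to \infty$ limits of $\mathbf{A}_i^\mu$, $\mathbf{A}_i^\nu$, and $\mathbf{A}_{ij}^\perp$.

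The first step is to condition on $y$ and on the Gaussian component within that class, decomposing each of these three expectations into four equally weighted integrals against $X_\iota \sim \cN(\iota, I/\lambda)$ for $\iota \in \{\mu, -\mu, \nu, -\nu\}$, and applying the distributional identity $X_{-w} \eqdist -X_w$ to fold the $\iota < 0$ terms into the $\iota > 0$ ones. The key structural observation is that, by orthogonality of $\mu, \nu$ and of $W_j^\perp$ to both, the factors $X_{\pm \nu} \cdot \mu$, $X_{\pm \mu} \cdot \nu$ and $X_\iota \cdot W_j^\perp$ are all centered Gaussian with variance $O(1/\lambda)$; hence, by Cauchy--Schwarz and boundedness of $\sigma$, the $X_{\pm \nu}$ contributions to $\mathbf{A}_i^\mu$ (and symmetrically the $X_{\pm \mu}$ contributions to $\mathbf{A}_i^\nu$) vanish in the limit, and $\mathbf{A}_{ij}^\perp \to 0$ for all $i, j$. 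The latter immediately gives $\dot R_{ij}^\perp = -2\alpha R_{ij}^\perp$ as claimed.

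For the surviving $X_{\pm \mu}$ contributions to $\mathbf{A}_i^\mu$ and $X_{\pm \nu}$ contributions to $\mathbf{A}_i^\nu$, I will invoke the direct extension of Lemma~\ref{lem:large-lambda-est} to general $K$ and to both centers $\mu, \nu$ (which follows from the same Gaussian-concentration argument), and expand each expectation as its formal $\lambda = \infty$ value plus three correction terms along the lines of~\eqref{eq:A-i-dot-mu-noiseless}. These corrections are controlled respectively by $L^1$-concentration of $X_\iota \cdot \iota$ at $\pm 1$, by $\mathbb{P}(W_i \cdot X_\iota \gtrless 0) \to \mathbf{1}_{m_i^\iota \gtrless 0}$ when $m_i^\iota \neq 0$, and by $L^1$-convergence of the post-activation sigmoid. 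Collecting signs via $X_{-w} \eqdist -X_w$, and applying the identity $-1 + \sigma(x) = -\sigma(-x)$ on the $y = 1$ branch (while no such sign flip arises on the $y = 0$ branch, where $-y + \sigma = \sigma$), yields the claimed piecewise expressions. The degenerate cases $m_i^\mu = 0$ or $m_i^\nu = 0$ are handled as in the remark following Proposition~\ref{prop:bgmm-ballistic-noiseless}, using independence of the zero-mean Gaussian factor from the relevant indicator and sigmoid. I do not expect any substantive obstacle here; the main care required is the sign and indicator bookkeeping across the four-way decomposition.
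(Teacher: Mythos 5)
Your proposal is correct and follows essentially the same route as the paper's proof: decompose $\mathbf{A}_i$ over the four Gaussian components, kill the cross terms ($X_{\pm\nu}$ against $\mu$, $X_{\pm\mu}$ against $\nu$) and all of $\mathbf{A}_{ij}^\perp$ by orthogonality and the $O(\lambda^{-1/2})$ moment bound, reduce the surviving terms to the binary-GMM expansion of~\eqref{eq:A-i-dot-mu-noiseless} controlled by Lemma~\ref{lem:large-lambda-est}, and note $g_{R_{ij}^\perp}\to 0$ since $|\mathbf{B}_{ij}|\le 1$. The only differences (folding $X_{-w}\eqdist -X_w$ rather than carrying four terms, and explicitly flagging the $m_i^\iota=0$ degenerate case) are cosmetic.
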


\begin{proof}
	Let us begin with convergence of $\mathbf{A}_i^\mu$. We claim that it converges to 
	\begin{align*}
		\lim_{\lambda\to\infty} \mathbf{A}_i^\mu = -\frac{1}{4} \mathbf 1_{m_i^\mu >0} \sigma(-v\cdot g(m^{\mu})) -\frac{1}{4}\mathbf{1}_{m_i^\mu<0} \sigma(v\cdot g(-m))\,.
	\end{align*} 
	In order to see this, expand 
	\begin{align*}
		\mathbf{A}_i &  = \frac{1}{4} \mathbb E\big[- X_\mu \mathbf 1_{W_i \cdot X_\mu \ge 0} (\sigma( - v\cdot g(WX_\mu)))\big] - \frac{1}{4} \mathbb E\big[ X_{-\mu} \mathbf 1_{W_i \cdot X_{-\mu} \ge 0} (\sigma( - v\cdot g(WX_{-\mu})))\big] \\
		& \qquad + \frac{1}{4} \mathbb E\big[X_\nu \mathbf 1_{W_i \cdot X_\nu \ge 0} (\sigma( v\cdot g(WX_\nu)))\big] + \frac{1}{4} \mathbb E\big[ X_{-\nu} \mathbf 1_{W_i \cdot X_{-\nu} \ge 0} (\sigma( v\cdot g(WX_{-\nu})))\big]\,.
	\end{align*}
	The point will be that when taking the inner product with $\mu$, the first two terms here contribute to the limit and the latter two vanish, while when taking the inner product with $\nu$, the first two terms vanish in the $\lambda\to\infty$ limit while the latter two contribute. 
	
	Consider e.g., the first of the four terms above, and inner product with $\mu$. In this case, consider  
\begin{align*}
\mathbb{E}\big[(X_{\mu}\cdot\mu)\mathbf{1}_{W_{i}\cdot X_\mu \ge0}\sigma(-v\cdot g(WX_{\mu}))\big] & -\mathbf{1}_{m_{i}^\mu \ge0}\sigma(-v\cdot g(m^\mu))\,,
\end{align*}
which is precisely the quantity that was exactly shown to go to zero as $\lambda \to\infty$ in~\eqref{eq:A-i-dot-mu-noiseless}. To see that the third and fourth terms above go to zero when taking their inner product with $\mu$, observe that they become
\begin{align*}
	\big|\mathbb E\big[(X_\nu \cdot \mu) \mathbf 1_{W_i\cdot X_\nu \ge 0 } \sigma(v\cdot g(WX_\nu))\big]\big|\le \mathbb E[|X_\nu \cdot \mu|]\,,
\end{align*}
which by orthogonality of $\mu$ and $\nu$ is at most $\lambda^{-1/2}$ by the reasoning of Lemma~\ref{lem:bgmm-A-bound}, therefore vanishing as $\lambda\to\infty$. Together with its analogue for $X_{-\nu}$, this implies the claim for the convergence of $\mathbf{A}_i^\mu$, as well as its analogous limit of $\mathbf{A}_i^\nu$. 

We next consider the limit as $\lambda\to\infty$ of $\mathbf{A}_{ij}^\perp$, which we claim goes to $0$. Using the expansion of $\mathbf{A}_i$ from earlier in this proof, we can consider $\mathbf{A}_{ij}^\perp = \mathbf{A}_i \cdot W_j^\perp$ as four terms having the form of the terms in~\eqref{eq:A-ij-perp-noiseless}, which were there showed to go to zero as $\lambda \to\infty$. Since $W_j^\perp$ here is orthogonal both to $\mu$ and $\nu$, the same proof applies. 

Finally, in order to see that the limit as $\lambda\to\infty$ of $g_{R_{ij}^\perp} = c_\delta \frac{v_i v_j}{\lambda} \mathbf{B}_{ij}$ is zero, which follows from the fact that $|\mathbf{B}_{ij}|\le 1$. 
\end{proof}

\begin{prop}\label{prop:fixpoints-xor}
The fixed points of the ODE system of Proposition~\ref{prop:xor-ballistic-ode} are classified as follows. If $\alpha>1/8$, then
the only fixed point is at $\mathbf{u}_{n}=\boldsymbol{0}$. 

If $0<\alpha<1/8$, then let $(I_{0},I_{\mu}^{+},I_{\mu}^{-},I_{\nu}^{+},I_{\nu}^{-})$
be any disjoint (possibly empty) subsets whose union is $\{1,...,K\}$.
Corresponding to that tuple $(I_{0},I_{\mu}^{+},I_{\mu}^{-},I_{\nu}^{+},I_{\nu}^{-})$,
is a set of fixed points that have $R_{ij}^{\perp}=0$ for all
$i,j$, and have 
\begin{enumerate}
\item $m_{i}^{\mu}=m_{i}^{\nu}=v_{i}=0$ for $i\in I_{0}$,
\item $m_{i}^{\mu}=v_{i}>0$ such that $\sum_{i\in I_{\mu}^{+}}v_{i}^{2}=\mbox{logit}(-4\alpha)$
and $m_{i}^{\nu}=0$ for all $i\in I_{\mu}^{+}$,
\item $-m_{i}^{\mu}=v_{i}>0$ such that $\sum_{i\in I_{\mu}^{-}}v_{i}^{2}=\mbox{logit}(-4\alpha)$
and $m_{i}^{\nu}=0$ for all $i\in I_{\mu}^{-}$,
\item $m_{i}^{\nu}=v_{i}<0$ such that $\sum_{i\in I_{\nu}^{+}}v_{i}^{2}=\mbox{logit}(-4\alpha)$
and $m_{i}^{\mu}=0$ for all $i\in I_{\nu}^{+}$,
\item $-m_{i}^{\nu}=v_{i}<0$ such that $\sum_{i\in I_{\nu}^{-}}v_{i}^{2}=\mbox{logit}(-4\alpha)$
and $m_{i}^{\mu}=0$ for all $i\in I_{\nu}^{-}$.
\end{enumerate}
In the $K=4$ case, these form $39$ connected sets of fixed points, and of which $4!=24$ are fixed points that are stable, corresponding to
the possible permutations in which each of $I_{\mu}^{+},I_{\mu}^{-},I_{\nu}^{+},I_{\nu}^{-}$
are singletons. 
\end{prop}

\begin{proof}
 Evidently, any fixed point must have $R_{ij}^{\perp}=0$ for all $i,j$. Furthermore, the point $v_i = m_i^\mu = m_i^\nu =0$ for $i=1,...,K$ evidently forms a fixed point of the system. Now suppose there is some fixed point with $v_i =0$ for some $i$; in that case, it must be that  $m_i^\mu=0$ and $m_i^\nu=0$. Therefore, we can select a subset $I_0$ of $\{1,...,K\}$ such that $v_i= m_i^\mu =m_i^\nu=0$ for $i\in I_0$. 
 
 For any such choice of $I_0$, consider next, $i\notin I_0$. We first claim that if $v_i > 0$ at a fixed point, then $m_i^\mu \in\{\pm v_i\}$ and $m_i^\nu =0$, whereas if $v_i<0$ then $m_i^\nu \in \{\pm v_i\}$ and $m_i^\mu=0$. To see this, notice that at any fixed point, 
 \begin{align*}
 	4 \alpha m_i^\mu &  = v_i \Big(\mathbf{1}_{m_{i}^{\mu}\ge0}\sigma(-v\cdot g(m^{\mu}))-\mathbf{1}_{m_{i}^{\mu}<0}\sigma(-v\cdot g(-m^{\mu}))\Big)\,,\\
	4\alpha m_i^\nu & =  -v_i \Big(\mathbf{1}_{m_{i}^{\nu}\ge0}\sigma(-v\cdot g(m^{\nu}))-\mathbf{1}_{m_{i}^{\nu}<0}\sigma(-v\cdot g(-m^{\nu}))\Big)\,.
 \end{align*}
Since $\sigma$ is non-negative, if $v_i>0$, the sign of the right-hand side of the first equation is the same as the sign of $m_i^\mu$ so it can have a non-zero solution, while the sign of the right-hand side of the second equation is the opposite of the sign of $m_i^\nu$, so any such fixed point must have $m_i^\nu=0$. To see that $m_i^\mu =\pm v_i$ at such a fixed point, now set $m_i^\nu=0$ and take the fixed point equations for $v_i$ and $m_i^\mu$, dividing one by $v_i$ and the other by $m_i^\mu$ to see that 
\begin{align*}
	4\alpha \frac{v_i}{m_i^\mu} = 4\alpha \frac{m_i^\mu}{v_i}\,, \qquad \mbox{or} \qquad v_i^2 = (m_i^\mu)^2\,,
\end{align*}
as claimed. The fixed points having $v_i<0$ are solved symmetrically. 

Our classification now reduces to understanding the possible values taken by $(v_1,...,v_K)$ given their signs (when non-zero). Fix a partition $(I_0,I_\mu^+,I_\mu^-,I_\nu^+,I_\nu^-)$ of $\{1,...,K\}$ and consider the set of fixed points having $m_i^\mu = m_i^\nu =v_i=0$ for $i\in I_0$, $m_i^\mu =v_i>0$ on $I_\mu^+$ and so on as designated by Proposition~\ref{prop:fixpoints-xor}; by the above any fixed point is of this form. It remains to check that the values of $v_i$ on each of these sets are as described by the proposition. 

In order to see this, fix e.g., $i\in I_\mu^+$. Then, $m_i^\mu = v_i$ and $m_i^\nu =0$, and so the fixed point equations reduce to 
\begin{align*}
	4\alpha v_i = v_i \sigma(-v\cdot g(m^\mu))\,, \qquad \mbox{or} \qquad 4\alpha = \sigma\Big( - \sum_{j\in I_\mu^+} v_j^2\Big)\,,
\end{align*}
since the only coordinates where $g(m^\mu)$ will be non-zero are $j\in I_\mu^+$, where $m_j^\mu = v_j$. Inverting the sigmoid function, this implies exactly the claimed $\sum_{j\in I_\mu^+} v_j^2 = \mbox{logit}(-4\alpha)$. The cases of $I_\mu^-,I_\nu^+,I_\nu^-$ are analogous, concluding the proof.

{The count of the number of connected components of fixed points this forms is sensitive to $K$, so for concreteness let us perform it when $K=4$. We first notice that the fixed point at $(0,...,0)$ is disconnected from all others. Fixed points corresponding to some $(I_0,...,I_\nu^-)$ are part of the same connected component of fixed points if one goes from one to the other by moving an element of $I_{\iota}^\eta$ (for some $\iota \in \{\mu,\nu\}$ and $\eta\in \{\pm\}$ to $I_0$ without making $I_\iota^\eta$ empty, or by moving an element of $I_0$ to a non-empty $I_\iota^\eta$.

We turn now to studying the stability of these various sets of fixed points. Observe that in the $\lambda\to\infty$ limit, the dynamical system of Proposition~\ref{prop:xor-ballistic-ode} is a gradient system for the population loss
\begin{align*}
	\Phi & =  \frac{1}{4} \Big(\log (1+e^{ - v\cdot g(m^\mu)}) + \cdots + \log (1+e^{ - v\cdot g(-m^\nu)})\Big) \\
	& \quad + \frac{\alpha}{2} \sum_{i} (v_i^2 + (m_i^\mu)^2 + (m_i^\nu)^2 + R_{ii}^\perp)\,.
\end{align*}
At a fixed point (which necessarily has $v_i = m_i$, $R_{ii}^\perp = 0$, and is characterized by the partition of $\{1,...,4\}$ into $I_\mu^+,I_\mu^-,I_\nu^+,I_\nu^-$, this reduces to 
\begin{align*}
	\Phi = \frac{1}{4} \Big( \log(1+e^{ - \sum_{i\in I_\mu^+} v_i^2}) + \cdots + \log (1+e^{ - \sum_{i\in I_\nu^-} v_i^2})\Big) + \alpha \sum_{i} v_i^2
\end{align*}
At this point, noticing that $\sum_{i\in I_\mu^+} v_i^2$ is equal to $C_\alpha = -\mbox{logit}(4\alpha)$ if $I_\mu^+$ is non-empty and $0$ if it is empty, and similarly for $I_\mu^-,I_\nu^+,I_\nu^-$, this turns into a simple optimization problem over the number of non-empty $I_\mu^+,I_\mu^-,I_\nu^+,I_\nu^-$. 
Just as in the binary GMM case, it becomes evident that when $\alpha>1/8$, this is minimized at $v_i =0$ for all $i$ (i.e., they are all empty and $I_0 = \{1,...,4\}$, whereas when $\alpha<1/8$ the above is minimized when every one of $I_\mu^+,I_\mu^-,I_\nu^+,I_\nu^-$ are all non-empty. This yields the global minima of $\Phi$ in these coordinates, and ensures the fixed points we claimed were stable are indeed stable. 

To show the instability of any other connected set of fixed points, the reasoning goes just as in the binary GMM case: consider a small perturbation of the specified critical region in the direction of the stable fixed points and it can be seen by examining the drifts directly, that the dynamical system has a repelling direction.}
\end{proof}

\begin{rem}\label{rem:overparametrization-modifications}
When $K>4$, the counting of connected components of fixed points of course changes. However, what is still clear by an identical calculation is that the sets of fixed points minimizing $\Phi$ will still be $(0,...,0)$ when $\alpha>1/8$ and will be all fixed points that have all four of $I_\mu^+,...,I_\nu^-$ being non-empty if $\alpha<1/8$. Notice that when $\alpha<1/8$ and  $K>4$, even the set of stable fixed points become connected to form a single stable manifold.  
\end{rem}

\subsection{\sfrac{3}{32}-probability of ballistic convergence to an optimal classifier}\label{subsec:K=4-success-prob}
We now reason that when $K=4$ the ballistic effective dynamics of Proposition~\ref{prop:xor-ballistic-ode} is such that under an uninformative Gaussian initialization, the probability of being in a basin of attraction of one of the 24 stable fixed points is $3/32$. Begin by noticing that if the first layer weights are initialized as $W_i \sim \mathcal N(0,I_N/N)$ independently for $i=1,...,4$ and the second layer weights $v_i$ are independent standard Gaussians, then the projection onto the coordinate system $(v_i, m_i^\mu, m_i^\nu, R_{ij})$ is given by $$ \lim (\mathbf{u}_n)_*\mu_n =  \mathcal N(0,1)^{\otimes 4} \otimes \delta_0^{\otimes 4} \otimes \delta_0^{\otimes 4}\otimes \delta_{I_4}$$
{{The $\delta$-functions at zero for $m_i^\mu, m_i^\nu$ however cause some trouble because of the indicator functions on the sign of $m_i^\mu$ and $m_i^\nu$ in the equations of Proposition~\ref{prop:xor-ballistic-ode}. 

In order to handle this, we can instead consider the pre-limit as a mixture (over all the possible signings $\epsilon_i^\mu,\epsilon_i^\nu\in \{-1,+1\}$ of $m_i^\mu, m_i^\nu$) of initializations where $m_i^\mu \sim \epsilon_i^\mu |Z|$ for $Z$ being Gaussian of variance $1/N$. For any such signing $\mathbf{\epsilon}$, we take the limit per Proposition~\ref{prop:xor-ballistic-ode} to obtain the limiting ODE's with the indicators taking their values corresponding to the signings $\mathbf{\epsilon}$. Thus the limit with the Gaussian initialization can be thought of as the equal mixture over the same signings $\mathbf{\epsilon}$ of the various ODE's obtained from Proposition~\ref{prop:xor-ballistic-ode} with the various indicators taking values $0$ or $1$.  With that in mind, can interpret the initial $m_i^\mu(0),m_i^\nu(0)$ as random variables that take values $0^+$ and $0^-$ with probability $1/2$ each, the superscript being the signing dictating which indicator should be $1$.}}  

Under the flow of Proposition~\ref{prop:xor-ballistic-ode}, if $v_i(0)$ is positive, then $m_i^\nu$ stays fixed at zero, and if $m_i^\mu(0) = 0^-$ then $m_i^\mu$ becomes negative infinitesimally quickly, whereas if $m_i^\mu(0) = 0^+$ then it becomes positive infinitesimally quickly. At any rate, the sign of $v_i$ never changes to negative from such an initialization, and similarly if $v_i(0)$ is negative, the sign of $v_i$ will never change to positive. As such, in order to have  a chance at being in the basin of attraction of one of the stable fixed points outlined in Proposition~\ref{prop:fixpoints-xor}, it must be the case that two of $(v_i(0))_i$ have positive sign and two of them have negative sign; evidently this has probability $\binom{4}{2}/2^4 = 3/8$. 

Given that two of $v_i(0)$ are positive, and two of them are negative---say without loss of generality that $i=1,2$ are the coordinates in which it is positive, and $i=3,4$ are the coordinates in which it is negative---then the dynamical system for $(v_1,v_2,m_1^\mu,m_2^\mu)$ is exactly the ballistic limit of the two-layer GMM studied in Section~\ref{sec:binary-gmm}, for which we found that the probability of converging to a good classifier is $1/2$. Similarly, the dynamical system for $(v_2,v_4,m_3^\nu,m_4^\nu)$ independently gives a further probability $1/2$ of converging to \emph{its} good classifier. Together, these yield a probability of $3/32$ of converging to one of the $4!$ many optimal classifiers for the XOR GMM. 

\begin{rem}\label{rem:overparametrization-success-probability}
Generically, if $K \ge 4$, by a similar reasoning to the above, in order to fall in the basin of attraction of the stable fixed points, it must be the case that the initialization has some four indices each of which initially belong to $I_\mu^+,I_\mu^-,I_\nu^+,I_\nu^-$. This is the probability that $v_i(0)$ are positive for at least two indices, and negative for at least two indices, and then among the indices at which $v_i(0)$ is positive, there is at least one index where $m_i^\mu$ is positive and one where it is negative, and similarly with $v_i(0)$ negative and $m_i^\nu$. Doing this combinatorial calculation out, we find that the probability of being in a good initialization is exactly the expression in~\eqref{subsec:overparametrization-xor}. This is easily seen to go to $1$ exponentially fast as $K\to\infty$ since the initial choice of $v_i(0)$'s will typically have around $K/2$ positive and $K/2$ negative coordinates, and with exponentially high probability those will have both positive and negative $m_i^\mu$ and $m_i^\nu$. 
\end{rem}

\subsection{Diffusive limit on critical submanifolds}
We now consider scaling limits of the rescaled effective dynamics in their noiseless limit, where the rescaling is about the unstable set of fixed points given by the product of two quarter circles where $I_\mu^+ = \{1,2\}$ and $I_\nu^+ = \{3,4\}$ (if $K>4$, examine the fixed point in which all coordinates after the first four are in $I_0$). In what follows, fix $(a_{1,\mu},a_{2,\mu})\in \mathbb R_+^2$ with $a_{1,\mu}^2 + a_{2,\mu}^2 =C_\alpha$, and $a_{3,\nu}^2 + a_{4,\nu}^2 = C_\alpha$, and  let $\bu_n$ be the variables of~\eqref{eq:u-bgmm} with $v_i,m_i^\mu,m_i^\nu$ replaced by 
\begin{align*}
\tilde{v}_{i}= & \begin{cases}
\sqrt{N}(v_{i}-a_{i,\mu}) & i=1,2\\
 - \sqrt{N}(v_{i}-a_{i,\nu}) & i=3,4
\end{cases}
\end{align*}
and
\begin{align*}
\tilde{m}_{i}^\mu=  \begin{cases}
\sqrt{N}(m_{i}^\mu-a_{i,\mu}) & i=1,2\\
0 & i=3,4
\end{cases}\,, \qquad
\tilde{m}_{i}^\nu=  \begin{cases}
0 & i=1,2\\
\sqrt{N}(m_{i}^\nu-a_{i,\nu}) & i=3,4
\end{cases}\,.
\end{align*}
By the choices of $\tilde m_i^\mu =0$ and $\tilde m_i^\nu=0$, we mean that we formally mean that we remove those variables from $\tilde \bu_n$, and for us now $E_K$ will be the ball of radius $K$ in the other coordinates, and the point $\{0\}$ for $(\tilde m_i^\mu)_{i=3,4}$ and $(\tilde m_i^\nu)_{i=1,2}$. 

\begin{proof}[\textbf{\emph{Proof of Proposition~\ref{prop:XOR-diffusive-noiseless}}}]
	The fact that the rescaled variables $\tilde \bu_n$ satisfy the conditions of Theorem~\ref{thm:main} follows as in Lemma~\ref{lem:xor-localizable} with the only distinction arising in the bound on~\eqref{eq:bgmm-localizable-item-3}, where previously we did not use the $\delta_n^2$ factor, but is still satisfied using $\delta_n = O(1/n)$. 
	
	We next consider the population drift of the new variables $\tilde v_i, \tilde m_i^\mu$ and $\tilde m_i^\nu$. If we take these variables to be in $E_K$, and recall the population drifts etc.\ in the $\lambda = \infty$ setting from Proposition~\ref{prop:xor-ballistic-ode},  for $i=1,2$, we have $f_{\tilde v_i}$ is the $n\to\infty$ limit of 
	\begin{align*}
		 \sqrt{N} \frac{m_i^\mu}{4}\sigma( - v\cdot g(m^\mu)) - \sqrt{N} \alpha v_i
	\end{align*}
	If we then use the expansion 
	\begin{align*}
		v\cdot g(m^\mu) &  = C_\alpha + N^{-1/2} \sum_{j=1,2} a_{j,\mu} (\tilde v_j + \tilde m_j^\mu) + O(1/n) 
	\end{align*}
	from which we obtain 
	\begin{align*}
	\sigma( - v\cdot g(m^\mu)) & = \sigma( - C_\alpha) + \frac{1}{\sqrt{N}}\Big( \sum_{j=1,2} a_{j,\mu} (\tilde v_j + \tilde m_j^\mu)  \Big)(4\alpha)(1-4\alpha) + O(\tfrac{1}{n}) 
	\end{align*}
	Plugging these in, and taking the $n\to\infty$ limit we find that for $i=1,2$, 
	\begin{align*}
		f_{\tilde v_i} = \alpha(\tilde v_i - \tilde m_i^\mu) - a_{i,\mu} (\alpha - 4\alpha^2) \sum_{k=1,2} a_{k,\mu}(\tilde v_k + \tilde m_k^\mu)\,.
	\end{align*}
	By a similar reasoning, for $i=3,4$, we have 
	\begin{align*}
		f_{\tilde v_i} = \alpha(\tilde v_i - \tilde m_i^\nu) - a_{i,\nu} (\alpha - 4\alpha^2) \sum_{k=3,4} a_{k,\nu}(\tilde v_k + \tilde m_k^\nu)\,.
	\end{align*}
	The claimed equations for $f_{\tilde m_i^\mu}$ when $i=1,2$ and $f_{\tilde m_i^\nu}$ when $i=3,4$ hold by analogous reasoning, and the equations for $f_{R_{ij}^{\perp}}$ are evidently unaffected by the change of variables to $\tilde \bu_n$. Regarding the population correctors, they are also unaffected (all zero) since the variables that were changed in $\tilde\bu_n$ are all linear. 
	
	It remains to compute the volatility matrix in the coordinates $v_i, \tilde m_i^\mu,\tilde m_i^\nu$. We first use the following expression for the matrix $V$ when $\lambda = \infty$, by taking $\lambda=\infty$ in~\eqref{eq:bgmm-V}. If $i,j\in \{1,2\}$, then 
	\begin{align*}
		V_{v_i,v_j} = \begin{cases} \frac{3}{16} m_i^\mu m_j^\mu \sigma( - v\cdot m^\mu)^2  & i,j\in \{1,2\} \\ 
					\frac{3}{16} m_i^\nu m_j^\nu \sigma(  v\cdot m^\nu)^2 & i,j\in \{3,4\}	\end{cases}
	\end{align*}
	and if $i\in \{1,2\}$ and $j\in \{3,4\}$, then 
	\begin{align*}
		V_{v_i,v_j} =  - \frac{1}{16} m_i^\mu m_j^\nu \sigma( - v\cdot m^\mu) \sigma(v\cdot m^\nu)
	\end{align*}
	When considering $\Sigma_{v_i,v_j}$ we multiply this by $N$ coming from $\tilde J$ and $\tilde J^T$, but also multiply by $\delta = 1/N$, so that taking the limit as $n\to\infty$, we get
\begin{align*}
	\tilde \Sigma_{v_i,v_j} = \begin{cases} 3\alpha^2  a_{i,\mu} a_{j,\mu} & i,j\in \{1,2\} \\ 
					3\alpha^2  a_{i,\nu} a_{j,\nu}  & i,j\in \{3,4\} \\ 
					 - 3\alpha^2 a_{i,\mu} a_{j,\nu} & i\in \{1,2\}, j\in \{3,4\}	\end{cases}\,.
\end{align*}
	By a similar reasoning, if $i,j\in \{1,2\}$, then 
	\begin{align*}
		V_{v_i,W_j} \cdot \mu =  \frac{3}{16} v_j m_i^\mu \sigma( - v\cdot m^\mu)^2 \qquad i,j\in \{1,2\}  \\
		V_{v_i,W_j} \cdot \nu = \frac{3}{16} v_j m_i^\nu \sigma(  v\cdot m^\nu)^2 \qquad  i,j\in \{3,4\}
	\end{align*}
	and if $i\in \{1,2\}$ and $j\in \{3,4\}$, then 
	\begin{align*}
		V_{v_i, W_j} \cdot \nu =  - \frac{1}{16} v_j m_i^\mu \sigma(- v\cdot m^\mu) \sigma(v\cdot m^\nu)\,.
	\end{align*}
	Taking the limit as $n\to\infty$, we again recover the claimed limiting diffusion matrix, and similar calculations yield the same for $\Sigma_{\tilde m_i^\mu,\tilde m_j^\mu}$, $\Sigma_{\tilde m_i^\nu,\tilde m_j^\nu}$ and $\Sigma_{\tilde m_i^\mu,\tilde m_j^\nu}$, concluding the proof. 
	\end{proof}

	\section{Proofs of technical lemmas for Gaussian mixtures}\label{sec:technical-lemmas}
	In this section, we establish the technical bounds on Gaussian moments in Lemmas~\ref{lem:bgmm-A-bound}--\ref{lem:large-lambda-est}.
	 \begin{proof}[\textbf{\emph{Proof of Lemma~\ref{lem:bgmm-A-bound}}}]
For the first bound, let $Z\sim \cN(0,I)$ and consider 
\begin{align*}
	\mathbb E[ |X\cdot w|^8] = \frac{1}{2} \mathbb E[(w\cdot \mu + \lambda^{-1/2}w\cdot Z)^8]  + \frac{1}{2}  \mathbb E[(-w\cdot \mu + \lambda^{-1/2}w\cdot Z)^8] \,.
\end{align*}
The quantities in the expectations are at most some universal constant times $(w\cdot \mu)^8 + \lambda^{-4} (w\cdot Z)^8$. 
To bound the expectation of the second term here, notice that $w\cdot Z$ is distributed as $\cN(0,\|w\|^2)$ implying the desired. 

The bound on $\mathbf{A}_i$ goes as follows. Evidently it suffices to let $X_\mu = \mu+\lambda^{-1/2} Z$ for $Z \sim \cN(0,I)$, and prove the bound on the norm of  
\begin{align*}
	\mathbb E[ X_\mu \mathbf 1_{W_i \cdot X_\mu \ge 0} (-1 + \sigma(g(W X_\mu)))] = \mathbb E[ (\mu + \lambda^{-1/2} Z) \mathbf 1_{W_i \cdot X_\mu \ge 0} (-1 + \sigma(g(W X_\mu)))]\,.
\end{align*}
Now decompose $Z$ as $Z_\mu \mu + Z_{1,\perp} W_1^\perp + Z_{2,\perp} W_2^\perp + Z_{3}$, where $Z_\mu\sim \cN(0,1)$ is independent of $(Z_{1,\perp},Z_{2,\perp})$ which is distributed as $\cN(0,A)$ with $A$ given by~\eqref{eq:Z-perp-covariance}, which is independent of $Z_3$ distributed as a standard Gaussian vector orthogonal to the subspace spanned by $(\mu, W_{1}^\perp, W_2^\perp)$. By independence of $Z_3$ from the indicator and the argument of the sigmoid, all those terms contribute nothing to the expectation, and therefore,  
\begin{align*}
	\|\mathbf{A}_i\|^2 \le \sum_{w\in \{\mu, W_1^\perp, W_2^\perp\}} \mathbb E [ (X\cdot w)^2 \mathbf 1_{W_i \cdot X\ge 0} (-y+\sigma(g(WX)))] \le (1+R_{11}^{\perp} + R_{22}^{\perp}) (1+\lambda^{-1})\,.
\end{align*}
Here, we used the first inequality of the lemma. This yields the desired. 
\end{proof}

\begin{proof}[\textbf{\emph{Proof of Lemma~\ref{lem:large-lambda-est}}}]
The proof of~\eqref{eq:large-lambda-prob} is easily seen by rewriting the probability in question as  
\[
\mathbb{P}(W_{i}\cdot X_{\mu}<0)=\mathbb{P}\big(\cN(0,\lambda^{-1}) <- m_{i}({m_{i}^{2}+R_{ii}^\perp})^{-1/2}\big)=e^{-m_{i}^{2}\lambda/2 (m_{i}^{2}+R_{ii}^\perp)}\,,
\]
so that as long as $m_{i}>0$ this goes to zero as $\lambda\to\infty$.

We turn to~\eqref{eq:large-lambda-sigmoid}. Consider 
\begin{align*}
\mathbb{E}\big[\big|\sigma(v\cdot g(WX_{\mu}))-\sigma(v\cdot g(m))\big|\big] &\le\mathbb{E}\Big[\big|e^{v\cdot g(WX_{\mu})}-e^{v\cdot g(m)}\big|\Big]\\
&\le\mathbb{E}\big[\big|e^{v_{1}g(W_{1}\cdot X_{\mu})}e^{v_{2}g(W_{2}\cdot X_{\mu})}-e^{v_{1}g(m_{1})}e^{v_{2}g(m_{2})}\big|\big]\,.
\end{align*}
This in turn is bounded by 
\begin{align}\label{eq:sigmoid-difference-two-terms}
\mathbb{E}\big[e^{v_{2}g(W_{2}X_{\mu})}\big|e^{v_{1}g(W_{1}X_{\mu})}-e^{v_{1}g(m_{1})}\big|\big]+e^{v_{1}g(m_{1})}\mathbb{E}\big[\big|e^{v_{2}g(W_{2}X_{\mu})}-e^{v_{2}g(m_{2})}\big|\big]\,.
\end{align}
Applying Cauchy--Schwarz to the first term, it suffices to establish
the following bounds
\begin{align*}
\mathbb{E}\big[e^{2v_{i}g(W_{i}X_{\mu})}\big]\le C\,, \qquad\mbox{and} \qquad \lim_{\lambda\to\infty}\mathbb{E}\big[\big(e^{v_{i}g(W_{i}X_{\mu})}-e^{v_{i}g(m_{i})}\big)^{2}\big]=0\,.
\end{align*}
To demonstrate the first of these inequalities, notice that 
\[
\mathbb{E}\Big[e^{2v_{i}g(W_{i}X_{\mu})}\Big]\le\mathbb{E}\Big[e^{2v_{i}|W_{i}X_{\mu}|}\Big]\le C\,.
\]
uniformly over $\lambda$, per Fact~\ref{fact:moment-bound}. 
For the second desired bound, expand $e^{v_{i}g(W_{i}\cdot X_{\mu})}-e^{v_{i}g(m_{i})}$ as 
\begin{align*}
 \big(e^{v_i (W_i\cdot X_\mu) \mathbf 1_{W_i\cdot X_\mu\ge 0}} - e^{ v_i(W_i\cdot X_\mu) \mathbf 1_{m_i \ge 0}}\big) + \big(e^{ v_i (W_i\cdot X_\mu) \mathbf 1_{m_i \ge 0}} - e^{ v_i m_i \mathbf 1_{m_i\ge 0}} \big)\,.
\end{align*}
It suffices to show the expectation of the square of each of these goes to zero as $\lambda\to\infty$. First, 
\begin{align*}
	\mathbb E\big[\big(e^{v_i (W_i\cdot X_\mu) \mathbf 1_{W_i\cdot X_\mu\ge 0}} - e^{ v_i (W_i \cdot X_\mu) \mathbf 1_{m_i \ge 0}}\big)^2\big] \le (1\vee e^{v_i (W_i\cdot X_\mu)}) \mathbb E[\mathbf 1_{W_i\cdot X_\mu \ge 0} - \mathbf 1_{m_i \ge 0}]\,.
\end{align*}
If $m_i \ne 0$, the expectation on the right goes to zero by~\eqref{eq:large-lambda-prob}. Second, 
\begin{align*}
	\mathbb E\big[\big(e^{ v_i (W_i\cdot X_\mu) \mathbf 1_{m_i \ge 0}} - e^{ v_i m_i \mathbf 1_{m_i\ge 0}} \big)^2\big] \le \mathbb E\big[(e^{v_i (W_i \cdot X_\mu)} - e^{ v_i m_i})^2 \mathbf 1_{m_i\ge 0}\big]\,.
\end{align*}
When $m_i <0$, this is evidently zero; when $m_i >0$, if $G_\lambda \sim \cN(0,I/\lambda)$, this is 
\begin{align*}
	e^{2v_i  m_i} \mathbb E\big[ (e^{v_i (W_i\cdot G_\lambda)} -1)^2\big]\,.
\end{align*}
which goes to zero as $O(\lambda^{-1})$ when $\lambda \to\infty$, by the explicit formula for the moment generating function of the Gaussian $W_i \cdot G_\lambda$, whose variance is $(m_i^2 + R_{ii}^\perp) \lambda^{-1}$.  
\end{proof}

\subsection*{Acknowledgements}
The authors thank the anonymous referees for their useful comments and suggestions. The authors thank F.\ Krzakala, L.\ Zdeborova, and B.\ Loureiro for interesting conversations and suggestions, especially suggesting we investigate the role of overparametrization in the XOR GMM. The authors thank M.\ Sellke for pointing out the relationship to the lottery ticket hypothesis. The authors also thank M.\ Glasgow for a careful reading and helpful suggestions. R.G.\ acknowledges the support of NSF DMS-2246780 and the Miller Institute for Basic Research in Science. A.J.\ acknowledges the support of the Natural Sciences and Engineering Research Council of Canada (NSERC) and the Canada Research Chairs programme. Cette recherche a \'et\'e enterprise gr\^ace, en partie, au 
soutien financier du Conseil de recherches en sciences naturelles et en g\'enie du Canada (CRSNG),  [RGPIN-2020-04597, DGECR-2020-00199], et du Programme des chaires de recherche du Canada.

\bibliographystyle{plain}
\bibliography{Effective-dynamics.bib}
\end{document}